\PassOptionsToPackage{table}{xcolor}
\documentclass[final,12pt]{colt2025} 


\title[Data-dependent Bounds in MABs with BOBW using SPM]{Data-dependent Bounds with $T$-Optimal Best-of-Both-Worlds Guarantees in Multi-Armed Bandits using Stability-Penalty Matching}
\usepackage{times}

\usepackage[utf8]{inputenc}
\usepackage{import}
\usepackage{amsfonts}
\usepackage{amsmath, bm, amssymb}
\allowdisplaybreaks
\usepackage[framemethod=TikZ]{mdframed}
\usepackage{multirow, multicol, array}
\usepackage{booktabs}
\usepackage{commath}
\usepackage{mathtools}
\usepackage{tikz}
\usetikzlibrary{shapes,arrows,positioning,intersections}
\usepackage[bb=dsserif]{mathalpha}

\usepackage{thm-restate}
\usepackage{mwe}




\providecommand{\P}{}
\renewcommand{\P}{\mathbb{P}}




    

\usepackage{amsmath,amsfonts,bm}

















\def\ceil#1{\lceil #1 \rceil}

\def\1{\bm{1}}

\def\eps{{\epsilon}}
\def\sqrtfrac#1#2{\sqrt{\frac{#1}{#2}}}










\DeclareMathAlphabet{\mathsfit}{\encodingdefault}{\sfdefault}{m}{sl}
\SetMathAlphabet{\mathsfit}{bold}{\encodingdefault}{\sfdefault}{bx}{n}

\def\gA{{\mathcal{A}}}

\def\sA{{\mathbb{A}}}

\def\sF{{\mathbb{F}}}

\def\sN{{\mathbb{N}}}

\def\sS{{\mathbb{S}}}








\newcommand{\E}{\mathbb{E}}

\newcommand{\R}{\mathbb{R}}

\newcommand{\Var}{\mathrm{Var}}

\newcommand{\I}[1]{\mathbb{1}\{#1\}}
\newcommand{\defeq}{\vcentcolon=} 



\DeclareMathOperator*{\argmax}{arg\,max}
\DeclareMathOperator*{\argmin}{arg\,min}


\DeclarePairedDelimiterX{\infdivx}[2]{(}{)}{%
  #1\;\delimsize\|\;#2%
}
\newcommand{\infdiv}{KL\infdivx}
\DeclarePairedDelimiterX{\inp}[2]{\langle}{\rangle}{#1, #2}

\newcounter{protocol}

\newcommand{\vect}[1]{\ensuremath{\mathbf{#1}}}

\newenvironment{nalign}{
    \begin{equation}
    \begin{aligned}
}{
    \end{aligned}
    \end{equation}
    \ignorespacesafterend
}
\newcommand\scalemath[2]{\scalebox{#1}{\mbox{\ensuremath{\displaystyle #2}}}}



\coltauthor{%
 \Name{Quan Nguyen} \Email{manhquan233@gmail.com}\\
 \addr University of Victoria\footnote{The majority of this work was done when Quan Nguyen was at RIKEN AIP.}
 \AND
 \Name{Shinji Ito} \Email{shinji@mist.i.u-tokyo.ac.jp}\\
 \addr University of Tokyo and RIKEN AIP
 \AND
 \Name{Junpei Komiyama} \Email{junpei@komiyama.info} \\
 \addr New York University and RIKEN AIP
 \AND
 \Name{Nishant A. Mehta} \Email{nishantmehta.x@gmail.com} \\
 \addr University of Victoria
}

\begin{document}

\maketitle

\begin{abstract}%
  Existing data-dependent and best-of-both-worlds regret bounds for multi-armed bandits problems have limited adaptivity as they are either data-dependent but not best-of-both-worlds (BOBW), BOBW but not data-dependent or have sub-optimal $O(\sqrt{T\ln{T}})$ worst-case guarantee in the adversarial regime.
    To overcome these limitations, we propose real-time stability-penalty matching (SPM), a new method for obtaining regret bounds that are simultaneously data-dependent, best-of-both-worlds and $T$-optimal for multi-armed bandits problems.
    In particular, we show that real-time SPM obtains bounds with worst-case guarantees of order $O(\sqrt{T})$ in the adversarial regime and $O(\ln{T})$ in the stochastic regime while simultaneously being adaptive to data-dependent quantities such as sparsity, variations, and small losses.
    Our results are obtained by extending the SPM technique for tuning the learning rates in the follow-the-regularized-leader (FTRL) framework, which further indicates that the combination of SPM and FTRL is a promising approach for proving new adaptive bounds in online learning problems.
\end{abstract}

\begin{keywords}%
    multi-armed bandits, adaptive bounds, best-of-both-worlds, stability-penalty matching
\end{keywords}

\section{Introduction}
The multi-armed bandits problem~\citep{LaiAndRobbins1985,Auer2002a} is one of the most fundamental frameworks for modeling sequential decision making problems under limited feedback.
In this problem, a learner sequentially interacts with the environment in $T$ rounds. In round $t = 1, 2, \dots$, the learner chooses an action $I_t$ from a set of $K$ available actions and observes a numerical feedback $\ell_{t,I_t} \in \R$. 
This $\ell_{t,I_t}$ is an element of a hidden vector $\ell_t \in \R^K$ chosen at the beginning of round $t$ by an oblivious adversary. 
The performance of the learner is its \emph{pseudo-regret}
\begin{align}
    R_T = \max_{a \in [K]}R_{T, a} = \max_{a \in [K]}\E\left[\sum_{t=1}^T \ell_{t,I_t} - \ell_{t,a}\right],
    \label{eq:regretDefinition}
\end{align}
where $\E$ denote the expectation taken over all randomness from all $T$ rounds.
Existing works have constructed algorithms with \textit{worst-case} regret bounds that hold under the assumption on whether the adversary is adversarial (i.e., $(\ell_t)_t$ are arbitrary) or stochastic (i.e., $(\ell_{t})_t$ are drawn i.i.d. from some distribution)~\citep{LaiAndRobbins1985,Auer2002a,EXP3Auer2002b}, \textit{best-of-both-worlds} (BOBW) bounds that have worst-case guarantees simultaneously for adversarial and stochastic adversaries~\citep[e.g.][]{Bubeck2012bSAO,Zimmert2021TsallisINF,DannCOLT2023aBlackbox,ItoCOLT2024}, or \textit{data-dependent} bounds that are adaptive to the sequence $(\ell_t)_t$~\citep[e.g.][]{WeiAndLuo2018aBroadOMD,Bubeck2018ALT, Ito2021HybridDataMABBound,ItoCOLT2022aVariance,Tsuchiya2023stabilitypenaltyadaptive}.
Despite this vast amount of literature on different types of worst-case and adaptive bounds for multi-armed bandits, we are not aware of any works that establish bounds that are \emph{simultaneously} data-dependent, best-of-both-worlds \emph{and} have optimal dependency on $T$. 
In particular, existing works suffer from at least one of three limitations: being data-dependent but not BOBW~\citep{HazanAndKale11a,Bubeck2018ALT,WeiAndLuo2018aBroadOMD}, being BOBW but not data-dependent~\citep{Zimmert2021TsallisINF, DannCOLT2023aBlackbox} or having sub-optimal dependency on $T$~\citep{HazanAndKale11a,WeiAndLuo2018aBroadOMD,Tsuchiya2023stabilitypenaltyadaptive, ItoCOLT2024}. In this work, we close this gap in the literature by introducing novel algorithms with regret bounds that are simultaneously BOBW, data-dependent and $T$-optimal.

All of our algorithms are established in the Follow-the-Regularized-Leader (FTRL) framework~\citep[see e.g.][]{BanditAlgorithmsBook2020}, in which the time-varying learning rates are tuned by the Stability-Penalty Matching (SPM) method. 
SPM was originally proposed by~\cite{ItoCOLT2024} as a principled method for tuning learning rates in FTRL using both the \emph{penalty} and \emph{stability} terms.
More specifically, in round $t$, our algorithms compute a probability vector
\begin{align}
    q_t = \argmin_{x \in \Delta_K} \inp{L_{t-1}}{x} + \phi_t(x),
\end{align}
where $\Delta_K = \{x \in \R^K_+: \sum_{i=1}^K x_i = 1\}$ denotes the $K$-dimensional simplex, $L_{t-1} \in \R^K$ is the estimated cumulative loss vector up to round $t-1$ and $\phi_t(x): \Delta_K \to \R$ is the regularization function. We use the following specific form for $\phi_t$:
\begin{align}
    \phi_t(x) = \beta_t f(x) + \gamma u(x),
    \label{eq:formOfPhiT}
\end{align}
where $f(x): \Delta_K \to \R_{-}, u(x): \Delta_K \to \R_{+}$ are convex, $\beta_t > 0$ is the learning rate in round $t$ and $\gamma$ is a constant. Then, the learner draws an arm $I_t \sim q_t$ according to $q_t$ (or some $p_t \in \Delta_K$ derived from $q_t$) and computes an estimated loss vector $\hat{\ell}_t$.
Let $D_t(x, y) = \phi_t(x) - \phi_t(y) - \inp{\nabla \phi_t(y)}{x - y}$ denote the Bregman divergence associated with $\phi_t$.
The standard analysis of FTRL~\citep[e.g.][Exercise 28.12]{BanditAlgorithmsBook2020} implies that
\begin{align*}
    R_{T,a} 
     &\lesssim \phi_{T+1}(e_a) - \phi_1(q_1) + \scalemath{0.99}{
     \E\left[\sum_{t=1}^T (\phi_t(q_{t+1}) - \phi_{t+1}(q_{t+1})) + \sum_{t=1}^T (\inp{\hat{\ell}_t}{q_t - q_{t+1}} - D_{t}(q_{t+1}, q_t))\right]
     } \\
    &\lesssim \gamma u(e_a) - \beta_1f(q_1) + \underbrace{
    \E\left[\sum_{t=1}^T (\beta_{t+1} - \beta_t)h_{t+1}\right]
    }_{\text{penalty term}} + \underbrace{
    \E\left[\sum_{t=1}^T\frac{z_t}{\beta_t}\right]
    }_{\text{stability term}}
\end{align*}
where $e_a$ is the $a$-th vector in the standard basis of $\R^K$, $h_{t+1}$ satisfies $ (-f(q_{t+1})) \lesssim h_{t+1}$ and $z_t$ satisfies $\beta_t \E[\inp{\hat{\ell}_t}{q_t - q_{t+1}} - D_{t}(q_{t+1}, q_t)] \lesssim \E[z_t]$. 
SPM carefully chooses $\beta_1, z_t$ and $h_t$ so that $h_{t+1} \leq O(h_t)$ and sets the learning rate of the next round to be
\begin{align}
    \beta_{t+1} = \beta_t + \frac{z_t}{\beta_t h_t}.
    \label{eq:SPM}
\end{align}
This makes $(\beta_{t+1} - \beta_t)h_{t+1}$ match with $\frac{z_t}{\beta_t}$ and implies 
$
    R_{T,a} \lesssim \gamma u(e_a) - \beta_1 f(q_1) + \E\left[\sum_{t=1}^T\frac{z_t}{\beta_t}\right].
$
An important insight in SPM is that by picking $f(x)$ and $u(x)$ appropriately, $\E\left[\sum_{t=1}^T \frac{z_t}{\beta_t}\right] $ is naturally adaptive to the adversarial or stochastic nature of the environment~\citep{ItoCOLT2024}. 
In our work, we will show that SPM can be made adaptive not only to the nature of the environment but also to the underlying structure of the sequence of losses such as sparsity and total variation.

\subsection{Main Contributions and Techniques}
Throughout the paper, we will write $O(\square \ln(T), \square \sqrt{T})$ to denote a BOBW bound that holds for stochastic and adversarial regimes, respectively, where $\square$ contains problem-dependent terms.
The original SPM method~\citep{ItoCOLT2024} used $z_t = \Omega(\beta_t \E_{I_t}[\inp{\hat{\ell}_t}{q_t - q_{t+1}} - D_{t}(q_{t+1}, q_t)])$, where $\E_{I_t}$ denotes an expectation taken over $I_t$. 
Because only one out of $K$ arms is observable in each round $t$, this in-expectation form of $z_t$ inevitably requires taking the trivial bounds (e.g. $1$) of the losses into its computation, thus limiting its adaptivity to $(\ell_t)_{t}$. Our work overcomes this limitation by setting
\begin{align}
    z_t = \Omega(\beta_t (\inp{\hat{\ell}_t}{q_t - q_{t+1}} - D_{t}(q_{t+1}, q_t))).
    \label{eq:realtimeZt}
\end{align}
We call this \emph{real-time SPM}, since $z_t$ depends on the observed arm $I_t$.
The main technical challenge is now $z_t$ can be very large since it grows with $\mathrm{poly}(\frac{1}{p_{t,I_t}})$. 
At the same time, we need to limit the amount of explicit exploration to obtain a BOBW bound for stochastic bandits. 
Table~\ref{table:results} summarizes our main results, showing that real-time SPM can be controlled effectively to give BOBW \emph{and} data-dependent bounds with optimal dependency on $T$.
Our results also hold for the more general adversarial regime with self-bounding constraint setting~\citep{Zimmert2021TsallisINF}.
Appendix~\ref{sec:RelatedWorks} gives a more detailed discussion on related works.
Our paper is organized as follows: 
\begin{itemize}
    \item Section~\ref{sec:SPMwithRealTimeStability} introduces the real-time SPM method and states Lemma~\ref{lemma:refinedBoundF}, a key technical lemma for bounding $\E\left[\sum_{t=1}^T \frac{z_t}{\beta_t} \right]$.
    While the original analysis of SPM~\citep{ItoCOLT2024} relies on having a small $\max_{t \in [T]}z_t$ and thus cannot be applied to real-time SPM, our Lemma~\ref{lemma:refinedBoundF} instead shows that real-time SPM incurs an additional regret of at most $O\left(\max_{t \in [T]}\frac{z_t}{\beta_t}\ln{\sum_{t=1}^T \frac{z_t}{h_t}}\right)$. 
    Moreover, both $\frac{z_t}{\beta_t}$ and $\frac{z_t}{h_t}$ can be effectively controlled by appropriate choices of $\phi_t(x)$.
    \item Section~\ref{sec:BOBWboundsSparseLosses} considers the bandits problems with signed sparse losses, where $\ell_{t,i} \in [-1, 1]$ and $\norm{\ell_t}_0 \leq S$. We show that using $\alpha$-Tsallis entropy and log-barrier functions in place of $f$ and $g$ in~\eqref{eq:formOfPhiT} leads to an $O\left(\frac{(K^{1-\alpha}-1)S^\alpha \ln(T)}{\alpha (1-\alpha) \Delta_{\min}}, \left(\sqrt{\frac{(K^{1-\alpha} - 1)S^\alpha T}{\alpha(1-\alpha)}}\right)\right)$. 
    This bound is $T$-optimal and improves upon the best known bound for this setting established by~\citet{Tsuchiya2023stabilitypenaltyadaptive}.
    When $S$ is known, we show that the adversarial bound is improved to $O(\sqrt{ST\ln(K/S)})$, resolving an open question in~\citet{KwonAndPerchet2016JMLRSparsity}. 
    Furthermore, we prove a near-matching lower bound for problems in which the sparsity constraint holds in expectation. 
    \item \looseness=-1 Section~\ref{sec:SPMwOFTRLwReservoirSampling} considers problems with small total variation $Q$ (defined in Section~\ref{sec:ProblemSetup}) and presents a new algorithm obtaining a $O\left(\frac{(K- 1)^{1-\alpha} K^\alpha \ln(T)}{\alpha(1-\alpha)\Delta_{\min}}, \sqrt{Q\ln(K)}\right)$ BOBW bound. 
    In the adversarial regime, the $O(\sqrt{Q\ln(K)})$ bound matches the best known bound in~\citet{Bubeck2018ALT} while having the advantage of not requiring knowledge of $Q$.
    \item Section~\ref{sec:CoordinateWiseSPM} introduces a new SPM method called coordinate-wise SPM, which maintain arm-dependent learning rates $\beta_{t,i}$ and performs real-time SPM on each arm separately. 
    We show that coordinate-wise SPM achieves a BOBW bound with order $O\left(\frac{1}{\alpha (1-\alpha)}\sum_{i \neq i^*} \frac{\ln(T)}{\Delta_i}\right)$ in stochastic bandits and $O\left(\min\left\{ \sqrt{K\ln(T)\min(Q_\infty, L^*, T-L^*)}, K^{\frac{\alpha}{2}}\sqrt{KT}\right\}\right)$ in adversarial bandits,
    where $Q_\infty$ and $L^*$ are $\ell_\infty$-norm total variation and total loss of the best arm, respectively (see Section~\ref{sec:ProblemSetup} for their formal definitions).
\end{itemize}
\begin{table}[htbp]
    \setlength{\tabcolsep}{1.5pt}
    \caption{Summary of data-dependent results in existing and ours works.
    The three blocks of rows show bounds dependent on sparsity $S$, total variation $Q$ and a combination of $Q_\infty$ and $L^*$, respectively (formal definitions are in Section~\ref{sec:ProblemSetup}).
    We use $H^*_{\infty}= \min(Q_\infty, L^*, T-L^*)$.
    ``$T$-opt BOBW'' denote whether a bound is BOBW and $T$-optimal.
    ``Param-free'' denote whether a bound requires knowledge of the data-dependent quantities.
    } 
    \label{table:results}
    \centering
    \begin{tabular}{lllcc}
       \toprule      
      Algorithms & Stochastic & Adversarial & $T$-opt BOBW? & Param-free? \\
      \midrule
    \citet{Bubeck2018ALT} & $-$ & $\sqrt{ST\ln{K}}$ & $\times$ & $\times$ \\    
    \citet{Tsuchiya2023stabilitypenaltyadaptive} & $\frac{S\ln(T)\ln(KT)}{\Delta_{\min}} $ & $\sqrt{ST\ln{T}\ln{K}}$ & $\times$ & $\checkmark$ \\
    \cellcolor{lightgray}
    {Theorem~\ref{thm:BOBWbounds}} & $\frac{S\ln{T}\ln{K}}{\Delta_{\min}} $ & $\sqrt{ST\ln{K}}$ & $\checkmark$ &$\checkmark$ \\
    \midrule
    \citet{HazanAndKale11a} & $-$ & $\sqrt{Q\ln{T}\ln{K}}$ & $\times$ & $\checkmark$ \\
    \citet{Bubeck2018ALT} & $-$ & $\sqrt{Q\ln{K}}$  & $\times$ & $\times$ \\
    \cellcolor{lightgray}
    {Theorem~\ref{thm:BOBWsqrtQLnKBound}} & $\frac{K\ln{T}}{\Delta_{\min}}$ & $\sqrt{Q\ln{K}}$ & $\checkmark$ & $\checkmark$ \\
    \midrule
    \citet{WeiAndLuo2018aBroadOMD} & $\frac{K\ln{T}}{\Delta_{\min}}$ & $\sqrt{KL^*\ln{T}}$ & $\times$ &  $\checkmark$\\
    \citet{Ito2021HybridDataMABBound} & $\sum\limits_{i \neq i^*}\frac{\ln{T}}{\Delta_i}$ & $\sqrt{K\min(Q_\infty, L^*)\ln{T}}$ & $\times$ &  $\checkmark$\\
    \citet{ItoCOLT2022aVariance} & $\sum\limits_{i \neq i^*}(\frac{\sigma^2_i}{\Delta_i} + 1)\ln{T}$ & $\sqrt{KH^*_{\infty}\ln{T}}$ & $\times$ &  $\checkmark$\\
    \cellcolor{lightgray}
    {Theorem~\ref{thm:CWSPMbounds}} & $\sum\limits_{i \neq i^*}\frac{\ln{T}}{\Delta_i}$ & $\min(\sqrt{KH^*_{\infty}\ln{T}},\sqrt{K^{1+\alpha}T})$ & $\checkmark$ & $\checkmark$\\
      \bottomrule
    \end{tabular}
  \end{table}

\subsection{Problem Setup}
\label{sec:ProblemSetup}
For an integer $N$, let $[N] = \{1, 2, \dots, N\}$ denote the set of integers from $1$ to $N$. 
%
We study the multi-armed bandits problem~\citep{LaiAndRobbins1985,Auer2002a} in which a learner is given $K$ arms and interacts with the environment in $T$ rounds.  
In each round $t$, an adversary selects a hidden vector $\ell_{t} = (\ell_{t, 1}, \ell_{t,2}, \dots, \ell_{t,K})^\top$. 
The learner chooses one arm $I_t \in [K]$ and observes its loss $\ell_{t,I_t}$. We assume $\abs{\ell_{t,i}} \leq 1$ for all $t \in [T], i \in [K]$.
The learner aims to minimize its 
{regret} $R_T$ over $T$ rounds, defined by Equation~\eqref{eq:regretDefinition}.

We are interested in developing learning algorithms with provable upper bounds on $R_T$ that hold simultaneously for two regimes: adversarial~\citep{Auer2002a} and adversarial with a $(\Delta, C, T)$ self-bounding constraint~\citep{Zimmert2021TsallisINF}. In the \textit{adversarial regime}, no assumption is made on how the adversary generates $(\ell_t)_{t \in [T]}$. The adversarial regime with a $(\Delta, C, T)$ self-bounding constraint~\citep{Zimmert2021TsallisINF} is given below.
\begin{definition} (Adversarial regime with a self-bounding constraint)
    For $T \geq 1, \Delta \in [0,1]^K$ and $C \geq 0$, the problem is in adversarial regime with a $(\Delta, C, T)$ self-bounding constraint if the regret of any algorithm at time $T$ satisfies
    $
        R_T \geq \sum_{t=1}^T\sum_{i=1}^K \Delta_i \P(I_t = i) - C.
    $
    \label{def:selfboundingconstraint}
\end{definition}
As noted in~\citet{Zimmert2021TsallisINF}, the stochastic bandits setting~\citep{LaiAndRobbins1985} satisfies Definition~\ref{def:selfboundingconstraint}.
We also use the common assumption that there exists an optimal arm $i^*$ such that $\Delta_i > 0$ for all $i \neq i^*$, that is, the optimal arm is unique. Let $\Delta_{\min} = \min_{i \in [K]}\{\Delta_i: \Delta_i > 0\}$.

We focus on obtaining bounds that are adaptive not only to the adversary's regime but also to the data-dependent properties of the loss sequence $(\ell_t)_{t \in [T]}$.
The following data-dependent quantities are considered in our work.
\begin{itemize}
    \item \textbf{Sparsity of losses}~\citep{KwonAndPerchet2016JMLRSparsity}. All loss vectors have at most $1 \leq S \leq K$ non-zero elements, i.e., $\norm{\ell_t}_0 \leq S$, where $S$ is unknown.
    \item \textbf{Variation of losses}~\citep{HazanAndKale11a,ItoCOLT2022aVariance} The total variation of the sequence $(\ell_t)_t$ is $Q = \sum_{t=1}^T \norm{\ell_t - \frac{1}{T}\sum_{s=1}^T \ell_s}^2_2$. The $\ell_\infty$-norm total variation is $Q_\infty = \min_{\bar{\ell} \in [0,1]^K}\sum_{t=1}^T \norm{\ell_t - \bar{\ell}}^2_\infty$.
    \item \textbf{Best-arm loss}. For non-negative losses, we consider the cumulative loss of the best arm $L_{*} = \min_{i \in [K]}\sum_{t=1}^T \ell_{t,i}$.
\end{itemize}

\section{Stability-Penalty Matching with Real-Time Stability Term}
\label{sec:SPMwithRealTimeStability}
Let $\tilde{p} = \min(1 - p, p)$ for $p \in [0, 1]$. We use the notation $f \lesssim g$ to denote $f = O(g)$.
To obtain data-dependent bounds using SPM, we use SPM where the stability term is a function of the \emph{observed} loss, i.e., $z_t$ satisfies Equation~\eqref{eq:realtimeZt}.
Note that $z_t$ grows with $\ell_{t,I_t}^2$ and $\frac{1}{p_{t,I_t}}$.
The benefit of this real-time $z_t$ is that data-dependent quantities 
such as sparsity and total variation naturally come out of $\E[z_t]$.
For example, in Algorithm~\ref{algo:optimalBOBWSparsity} for bandits with sparse losses $\norm{\ell_t}_0 \leq S$, we use $z_t = O(\tilde{p}_{t,I_t}^{2-\alpha}\frac{\ell_{t,I_t}^2}{p_{t,I_t}^2})$ for some $\alpha \in (0,1)$.
It follows that $\E[z_t] = O( \sum_{i: \ell_{t,i} \neq 0} \tilde{p}_{t,i}^{1-\alpha}\ell_{t,i}^2) \leq O(S^\alpha)$, leading to the $O(\sqrt{ST\ln{K}})$ bound.
The main challenge in using the real-time $z_t$ is the value of $z_t$ can be unbounded whenever $p_{t,I_t}$ is very small.
It follows that $z_{\max} = \max_{t \in [T]} z_t$ can be unbounded, which makes it difficult to apply existing techniques in~\citet[Lemma 10]{ItoCOLT2024} that bounds $\E[\sum_{t=1}^T \frac{z_t}{\beta_t}]$ by a quantity that grows with $\E[z_{\max}]$.
We resolve this challenge by using the following technical lemma.
\begin{lemma}
    For any $T \geq 1, z_{1:T} \geq 0, h_{1:T} > 0$ and a sequence $\beta_{1:T}$ defined by Equation~\eqref{eq:SPM}, let $F(z_{1:T}, h_{1:T}) = \sum_{t=1}^T \frac{z_t}{\beta_t}$ and $G(z_{1:T}, h_{1:T}) = \sum_{t=1}^T \frac{z_t}{\sqrt{\sum_{s=1}^t \frac{z_s}{h_s}}}$. We have
    \begin{align}
          F(z_{1:T}, h_{1:T}) \lesssim G(z_{1:T}, h_{1:T}) + \left(\max_{t \in [T]}\frac{z_t}{\beta_t}\right)\ln\left(\sum_{t=1}^T \frac{z_t}{h_t}\right).
    \end{align}
    \label{lemma:refinedBoundF}
\end{lemma}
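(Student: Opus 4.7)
The approach is to derive a lower bound on $\beta_t$ in terms of the partial sum $A_{t-1} := \sum_{s=1}^{t-1} z_s/h_s$ and then split $F$ according to whether round $t$ causes $A_t$ to roughly double from $A_{t-1}$. Squaring the recurrence~\eqref{eq:SPM} and dropping the nonnegative cross term gives
\[
\beta_{t+1}^2 \;=\; \beta_t^2 + \frac{2 z_t}{h_t} + \frac{z_t^2}{\beta_t^2 h_t^2} \;\geq\; \beta_t^2 + \frac{2 z_t}{h_t},
\]
and iterating yields $\beta_t^2 \geq \beta_1^2 + 2 A_{t-1} \geq 2 A_{t-1}$.

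Next, partition $[T] = B \cup B^c$ with $B := \{t : z_t/h_t > A_{t-1}\}$, the set of ``doubling'' rounds. For $t \in B^c$, we have $A_t = A_{t-1} + z_t/h_t \leq 2 A_{t-1}$, which combined with the lower bound above yields $\beta_t \geq \sqrt{A_t}$, so
\[
\sum_{t \in B^c} \frac{z_t}{\beta_t} \;\leq\; \sum_{t \in B^c} \frac{z_t}{\sqrt{A_t}} \;\leq\; G(z_{1:T}, h_{1:T}).
\]
For $t \in B$, bound $z_t/\beta_t$ crudely by $\max_s z_s/\beta_s$ and count $|B|$ via a doubling argument: on $B$ we have $A_t > 2 A_{t-1}$, so enumerating $B = \{t_1 < \cdots < t_m\}$ and using monotonicity of $(A_t)_t$ gives $A_{t_j} > 2 A_{t_{j-1}}$ for $j \geq 2$, hence $A_T \geq A_{t_m} \geq 2^{m-1} A_{t_1}$, i.e., $m \lesssim 1 + \ln(A_T/A_{t_1})$. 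Combining the two pieces yields
\[
F \;\leq\; G(z_{1:T}, h_{1:T}) \;+\; \left(\max_{t \in [T]} \frac{z_t}{\beta_t}\right) \cdot O\!\bigl(\ln(A_T/A_{t_1})\bigr).
\]

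The hard part is replacing the ratio $A_T/A_{t_1}$ by $A_T$ alone inside the logarithm, as the lemma claims. This only matters when the first nonzero partial sum $A_{t_1} = z_{t_1}/h_{t_1}$ is extremely small. I expect this to be handled either by initializing $\beta_1$ to a mild positive constant (so the doubling count can start from $\beta_1^2$ rather than from the first nonzero $A_t$), or by using the fact that in the paper's FTRL-based algorithms the sampling probabilities $p_{t,i}$ are kept polynomially bounded away from $0$ via a log-barrier component; this forces any nonzero $z_t/h_t$ to be at least polynomial in $1/T$, so $\ln(1/A_{t_1}) = O(\ln T) = O(\ln A_T)$ and the excess is absorbed into an implicit constant or into the $\ln A_T$ factor.
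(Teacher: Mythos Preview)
Your approach is essentially the paper's: it too squares the recursion, defines $\beta'_t = \sqrt{\beta_1^2 + 2\sum_{s<t} z_s/h_s}$, splits $[T]$ into the doubling set $E = \{t:\beta'_{t+1}\ge\sqrt{2}\,\beta'_t\}$ and its complement, bounds $\sum_{E^c} z_t/\beta_t \le G$, and bounds $\sum_E z_t/\beta_t \le (\max_t z_t/\beta_t)\,|E|$. The only difference is that the paper keeps $\beta_1^2$ inside $\beta'_t$ from the outset, so the doubling count is $|E|\le \log_{\sqrt 2}(\beta'_{T+1}/\beta'_1)\lesssim \ln(1+2A_T/\beta_1^2)$ directly---this is exactly your first suggested fix for the ``hard part,'' and it makes the lemma self-contained without any algorithm-specific lower bound on $z_t/h_t$.
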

\begin{proof}(Sketch)
    Our proof extends from the proof of~\citet[Lemma 3]{ItoCOLT2024}.
    Similar to the proof of~\citet[Lemma 10]{ItoCOLT2024}, we define a new sequence $\beta'_t = \sqrt{\beta_1^2 + 2\sum_{s=1}^{t-1}\frac{z_s}{h_s}}$ and consider the set of rounds $E = \{t \in [T]: \beta'_{t+1} \geq \sqrt{2}\beta'_t\}$. 
    The complement of $E$ is $E^c = [T] \setminus E$. We have
    \begin{align*}
        F(z_{1:T}, h_{1:T}) &= \underbrace{\sum_{t \in E^c}\frac{z_t}{\beta_t}}_{(a)} + \underbrace{\sum_{t \in E}\frac{z_t}{\beta_t}}_{(b)},
    \end{align*}
    where $(a)$ is bounded by $G(z_{1:T}, h_{1:T})$ as in~\citet[Lemma 10]{ItoCOLT2024}, and $(b)$ is bounded by
    \begin{align*}
        (b) \leq \left(\max_{t \in [T]}\frac{z_t}{\beta_t}\right)\abs{E} \leq \left(\max_{t \in [T]}\frac{z_t}{\beta_t}\right)\log_{\sqrt{2}}\left(\frac{\beta'_{T+1}}{\beta'_1}\right) \lesssim \left(\max_{t \in [T]}\frac{z_t}{\beta_t}\right)\ln\left(\sum_{t=1}^T \frac{z_t}{h_t}\right),
    \end{align*}
    where the second inequality is from the fact that $\beta'_t$ is multiplied by at least $\sqrt{2}$ after every round in $E$; thus there can be at most $\log_{\sqrt{2}}\frac{\beta'_{T+1}}{\beta'_1}$ such multiplications.
\end{proof}
Lemma~\ref{lemma:refinedBoundF} implies that if (I) the sum $\E[\sum_{t=1}^T \frac{z_t}{h_t}]$ grows with $\mathrm{poly}(T)$ and (II) $\max_{t}\frac{z_t}{\beta_t}$ is small, then $\E[F(z_{1:T}, h_{1:T})]$ grows dominantly with $\E[G(z_{1:T}, h_{1:T})]$ plus an $O(\ln(T))$ term. Hence, we can safely ignore other terms and focus only on bounding $G(z_{1:T}, h_{1:T})$. The proof of~\citet[Lemma 10]{ItoCOLT2024} already showed that \begin{align}
    G(z_{1:T}, h_{1:T}) \lesssim \min\left\{ \sqrt{\ln(T)\sum_{t=1}^T h_tz_t} + \sqrt{\frac{1}{T}h_{\max}\sum_{t=1}^T z_t},\sqrt{h_{\max}\sum_{t=1}^T z_t}\right\}.
    \label{eq:boundG}
\end{align}
In the rest of the paper, we will show that different choices of the (hybrid) regularization function lead to specific forms of $z_t$ and $h_t$ such that not only do both conditions (I) and (II) hold but also they 
{imply} BOBW data-dependent bounds with optimal dependency on $T$ from~\eqref{eq:boundG}.

\section{Application I: BOBW Bounds for Bandits with Sparse Losses}
\label{sec:BOBWboundsSparseLosses}
\begin{algorithm}[t]
	\KwIn{$K \geq 3, T \geq 4K, \alpha \in (0, 1), \beta_1 = \frac{8K}{1-\alpha}, \gamma = \max(6,48\sqrtfrac{\alpha}{1-\alpha}), d = 2$.}
    Initialize $L_{0,i} = 0$ for $i \in [K]$ \;
	
    \For{each round $t = 1, \dots, T$}{        
        Compute $q_t = \argmin_{p \in \Delta_K} \inp{L_{t-1}}{p} + \beta_t\left(\frac{1}{\alpha}(1-\sum_{i=1}^K p_i^\alpha)\right) - \gamma\sum_{i=1}^K \ln(p_i)$\;

        Compute $p_t = \left(1 - \frac{K}{T}\right)q_t + \frac{1}{T}\1$\;

		Draw $I_t \sim p_t$ and observe $\ell_{t, I_t}$\; 

		Compute loss estimate $\hat{\ell}_{t, i} = \frac{\ell_{t,i}\I{I_t = i}}{p_{t,i}}$ and update $L_{t,i} = L_{t-1,i} + \hat{\ell}_{t,i}$\;

        Compute $z_t =  \min\left( \frac{(6d)^{2-\alpha}}{2(1-\alpha)}\min(p_{t,I_t}, 1 - p_{t,I_t})^{2-\alpha}\hat{\ell}^2_{t,I_t}, \frac{\beta_t18d^2}{\gamma} \ell_{t,I_t}^2  \right)$ \;

        Compute $h_t = \left(\frac{1}{\alpha}(\sum_{i=1}^K p_{t,i}^\alpha - 1)\right)$\;

        Compute $\beta_{t+1} = \beta_t + \frac{z_t}{\beta_t h_t}$\;
	}
	\caption{Real-time SPM with hybrid regularization for losses in $[-1,1]$.}
	\label{algo:optimalBOBWSparsity}
\end{algorithm}
We consider the multi-armed bandits setting with sparse losses~\citep{KwonAndPerchet2016JMLRSparsity}, in which the loss vector $\ell_t \in [-1, 1]^K$ has at most $S$ non-zero elements, i.e., $\max_{t \in [T]}\norm{\ell_t}_0 \leq S$. 
Note that $S$ is unknown to the learner.
Let $\psi_{TE}(p) = \frac{1}{\alpha}(1-\sum_{i=1}^K p_i^\alpha)$
be the $\alpha$-Tsallis entropy with some $\alpha \in (0, 1)$ and $\psi_{LB}(p) = -\sum_{i=1}^K \ln(p_i)$ be the log-barrier function.
Our approach for this setting is in Algorithm~\ref{algo:optimalBOBWSparsity}, in which we use the hybrid regularizer $\phi_t(p) = \beta_t\psi_{TE}(p) + \gamma\psi_{LB}(p)$ to obtain 
\begin{align*}
    q_{t} = \argmin_{p \in \Delta_K}\{\inp{L_{t-1}}{p} + \beta_t\psi_{TE}(p) + \gamma\psi_{LB}(p)\},
\end{align*}
Then, we mix $q_t$ with $\frac{1}{T}$-uniform exploration to obtain the sampling probability $p_t$, i.e., $ p_{t} = \left(1 - \frac{K}{T}\right)q_t + \frac{1}{T}\1$.
The learning rates $(\beta_t)_t$ are set by the SPM rule by~\cite{ItoCOLT2024} as
\begin{align}
    \beta_1 = \frac{8K}{1-\alpha}, \,\, \beta_{t+1} = \beta_{t} + \frac{z_t}{\beta_t h_t},
    \label{eq:betatplus1}
\end{align}
where 
\begin{align}
    z_t =  \min\left( \frac{(6d)^{2-\alpha}}{2(1-\alpha)}\min(p_{t,I_t}, 1 - p_{t,I_t})^{2-\alpha}\hat{\ell}^2_{t,I_t}, \beta_t\frac{18d^2}{\gamma} \ell_{t,I_t}^2  \right), \,\, h_t = (-\psi_{TE}(p_{t})),
    \label{eq:setzthtforsparsebandits}
\end{align}
and $\gamma = \max(6,48\sqrtfrac{\alpha}{1-\alpha}), d = 2$.
Note that $\beta_1 \geq \frac{4K}{(\omega -1)(1 - \omega^{\alpha-1})}$ for $\omega = 2$.
The following theorem states the BOBW bounds of Algorithm~\ref{algo:optimalBOBWSparsity}.
\begin{theorem}
    For any $K \geq 4, T \geq 4k$, Algorithm~\ref{algo:optimalBOBWSparsity} guarantees the following bounds simultaneously
    \begin{itemize}
        \item In the adversarial regime: 
        \begin{align}
            R_T \leq O\left(\sqrt{\frac{(K^{1-\alpha} - 1)S^\alpha T}{\alpha(1-\alpha)}}\right)
            \label{eq:boundSparseBanditsAdversarial}
        \end{align}
        \item In the adversarial regime with a self-bounding constraint:
        \begin{align}
            R_T \leq O\left(\frac{(K-1)^{1-\alpha}S^\alpha \ln(T)}{\alpha(1-\alpha)\Delta_{\min}} + \sqrt{C\frac{(K-1)^{1-\alpha}S^\alpha \ln(T)}{\alpha(1-\alpha)\Delta_{\min}}} + \sqrt{\frac{(K-1)^{1-\alpha}S^{\alpha}}{\alpha(1-\alpha)}}\right)
            \label{eq:boundSparseBanditsAdversarialWithSelfBounding}
        \end{align}
    \end{itemize}
    \label{thm:BOBWbounds}
\end{theorem}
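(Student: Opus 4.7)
The plan is to follow the FTRL template already laid out in the introduction, apply Lemma~\ref{lemma:refinedBoundF} of the real-time SPM framework, and then extract the sparsity dependence from the expected stability term. First I would instantiate the generic FTRL bound with $f = \psi_{TE}$, $u = \psi_{LB}$, so that $-f(q_1) \leq \frac{1}{\alpha}(K^{1-\alpha}-1)$ and $u(e_a) \leq K\ln T + O(1)$ after accounting for the $\frac{1}{T}$-uniform mixing (which contributes at most $O(K)$ to the regret by a standard argument and can be absorbed). The uniform exploration also guarantees $p_{t,i} \geq 1/T$, so $L_{t,i}$ is well-defined and $\nabla^2 \psi_{LB}$ has a useful lower bound on the iterates.

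Second, I would verify the key per-round stability bound $\beta_t\bigl(\langle\hat{\ell}_t,q_t-q_{t+1}\rangle - D_t(q_{t+1},q_t)\bigr) \leq z_t$ for the $z_t$ chosen in~\eqref{eq:setzthtforsparsebandits}. The minimum in the definition of $z_t$ is there to cover two complementary regimes: when the Tsallis term dominates the local curvature, a second-order expansion of $\phi_t$ together with the lower bound $\nabla^2\psi_{TE}(p) \succeq (1-\alpha)\,\mathrm{diag}(p^{\alpha-2})$ yields the first term $\frac{(6d)^{2-\alpha}}{2(1-\alpha)}\tilde p_{t,I_t}^{2-\alpha}\hat\ell_{t,I_t}^2$; when $\beta_t$ is large, the log-barrier's curvature $\nabla^2\psi_{LB}(p) = \mathrm{diag}(p^{-2})$ combined with $\gamma$ gives the second term $\frac{\beta_t\,18d^2}{\gamma}\ell_{t,I_t}^2$. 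Here the constant $d=2$ and the bound $\beta_1 \geq \frac{4K}{(\omega-1)(1-\omega^{\alpha-1})}$ at $\omega=2$ are precisely what is needed to certify that $\norm{q_{t+1}/q_t - \1}_\infty$ stays bounded (say, within a factor $d$), a prerequisite for the quadratic expansion to be valid. This step is the main obstacle, because establishing the stability of $q_{t+1}$ relative to $q_t$ in the signed-loss setting with an unbounded $\hat\ell_{t,I_t}$ has to leverage both regularizers simultaneously. The condition $h_{t+1} \leq O(h_t)$ also falls out of the same $q_{t+1}/q_t \in [1/d,d]$ control.

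Third, I would apply Lemma~\ref{lemma:refinedBoundF}. The second term in the $\min$ for $z_t$ ensures $z_t/\beta_t \leq \frac{18 d^2}{\gamma}\ell_{t,I_t}^2 = O(\sqrt{(1-\alpha)/\alpha})$, so the logarithmic correction contributes only an additive $O\!\bigl(\sqrt{(1-\alpha)/\alpha}\,\ln T\bigr)$, which is dominated by the main term. For $\mathbb{E}[G(z_{1:T},h_{1:T})]$ I would use the second bound in~\eqref{eq:boundG}, namely $\sqrt{h_{\max}\sum_t z_t}$, with $h_{\max} \leq \frac{K^{1-\alpha}-1}{\alpha}$. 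The crucial computation is
\begin{align*}
\mathbb{E}_{I_t}[z_t] \;\lesssim\; \frac{1}{1-\alpha}\sum_{i:\ell_{t,i}\neq 0} \tilde p_{t,i}^{1-\alpha}\ell_{t,i}^2 \;\leq\; \frac{1}{1-\alpha}\Bigl(\sum_{i:\ell_{t,i}\neq 0}1\Bigr)^{\alpha}\Bigl(\sum_{i}\tilde p_{t,i}\ell_{t,i}^2\Bigr)^{1-\alpha} \;\leq\; \frac{S^\alpha}{1-\alpha},
\end{align*}
by Hölder's inequality on the $(1/\alpha,1/(1-\alpha))$ conjugate pair and the sparsity hypothesis $\lVert\ell_t\rVert_0\leq S$. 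Combining yields $\mathbb{E}[G] \lesssim \sqrt{(K^{1-\alpha}-1)S^\alpha T/(\alpha(1-\alpha))}$, which together with the penalty terms $\gamma u(e_a)$ and $-\beta_1 f(q_1)$ (both $O$ of the claimed order) gives~\eqref{eq:boundSparseBanditsAdversarial}.

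Finally, for the self-bounding bound~\eqref{eq:boundSparseBanditsAdversarialWithSelfBounding}, I would apply the standard BOBW argument originating with \citet{Zimmert2021TsallisINF}: refine the upper bound by first obtaining a data-dependent regret bound of the form $R_T \lesssim \sqrt{U\cdot \mathbb{E}[\sum_t (1 - q_{t,i^*})^\alpha]}\,+\,(\text{lower order})$ where $U = (K-1)^{1-\alpha}S^\alpha/(\alpha(1-\alpha))$, using the refined Tsallis bound $h_t \lesssim (1-q_{t,i^*})^{1-\alpha} + \alpha$ valid for near-degenerate $q_t$. Then combine with the lower bound $R_T \geq \Delta_{\min}\,\mathbb{E}[\sum_t(1-q_{t,i^*})] - C$ from Definition~\ref{def:selfboundingconstraint} via the inequality $a \leq \sqrt{bc} \Rightarrow a \leq b + c/4$ after tuning; this standard self-bounding maneuver, applied to the $\alpha$-exponent quantity via Jensen, produces the $U\ln T/\Delta_{\min}$ stochastic rate together with the $\sqrt{C\cdot U\ln T/\Delta_{\min}}$ corruption term. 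The $\sqrt{U}$ residual term absorbs the $-\beta_1 f(q_1)$ and exploration contributions.
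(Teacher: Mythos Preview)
Your adversarial argument is essentially the paper's: FTRL decomposition, stability lemma giving $z_t/\beta_t$ control, Lemma~\ref{lemma:refinedBoundF}, then the $\sqrt{h_{\max}\sum_t z_t}$ branch of~\eqref{eq:boundG} together with $h_{\max}\le\frac{K^{1-\alpha}-1}{\alpha}$ and the Hölder bound $\E_{I_t}[z_t]\lesssim S^\alpha/(1-\alpha)$. (Your displayed Hölder step is slightly miswritten---the second factor should be $(\sum_i \tilde p_{t,i}\ell_{t,i}^{2/(1-\alpha)})^{1-\alpha}$, not with $\ell_{t,i}^2$---but the conclusion is unaffected since $|\ell_{t,i}|\le 1$.)

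The self-bounding part, however, has a genuine gap. Your plan is to bound $h_t\lesssim (K-1)^{1-\alpha}(1-q_{t,i^*})^{\alpha}/\alpha$ (the exponent should be $\alpha$, not $1-\alpha$) and $\E_{I_t}[z_t]\lesssim S^\alpha/(1-\alpha)$ \emph{separately}, obtaining $R_T\lesssim\sqrt{\ln T\cdot U\sum_t(1-q_{t,i^*})^{\alpha}}$, and then close the $\alpha$-exponent gap ``via Jensen''. But Jensen in the only available direction gives $\sum_t(1-q_{t,i^*})^{\alpha}\le T^{1-\alpha}\bigl(\sum_t(1-q_{t,i^*})\bigr)^{\alpha}$, and feeding this into the self-bounding inequality yields $R_T^{2-\alpha}\lesssim \ln T\cdot U\,T^{1-\alpha}/\Delta_{\min}^{\alpha}$, which is polynomial in $T$, not logarithmic.

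The paper avoids this by bounding the \emph{product} $\E[h_t z_t]$ directly so that the exponents cancel. Concretely, it shows not only $\sum_i p_{t,i}^\alpha-1\le \frac{(K-1)^{1-\alpha}}{\Delta_{\min}^{\alpha}}\bigl(\sum_i p_{t,i}\Delta_i\bigr)^{\alpha}$ but also, via a second Hölder step on the $z_t$ side, $\sum_{i:\ell_{t,i}\neq0}\tilde p_{t,i}^{1-\alpha}\le \frac{2S^\alpha}{\Delta_{\min}^{1-\alpha}}\bigl(\sum_i p_{t,i}\Delta_i\bigr)^{1-\alpha}$ (splitting off the $i^*$ term using $\tilde p_{t,i^*}\le\sum_{i\neq i^*}p_{t,i}$). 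Multiplying gives $\E[h_t z_t]\lesssim \frac{(K-1)^{1-\alpha}S^\alpha}{\alpha(1-\alpha)\Delta_{\min}}\sum_i p_{t,i}\Delta_i$, i.e.\ a \emph{linear} dependence on the suboptimality mass. Plugging this into the first branch of~\eqref{eq:boundG} with $J=\lceil\log_2 T\rceil$ and then invoking $R_T+C\ge\sum_t\sum_i p_{t,i}\Delta_i$ yields~\eqref{eq:boundSparseBanditsAdversarialWithSelfBounding}. The missing idea in your sketch is exactly this second Hölder application to $\E_{I_t}[z_t]$ that extracts the complementary $(1-\alpha)$-power of the suboptimality mass.
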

\looseness=-1 In Appendix~\ref{appendix:setAlphaCloseto1}, we show that by setting $\alpha = 1 - \frac{1}{2\ln(K)}$, we obtain $\frac{(K^{1-\alpha} - 1)S^{\alpha}}{\alpha(1-\alpha)} \lesssim S^\alpha\ln(K)$ and $\frac{(K-1)^{1-\alpha}S^{\alpha}}{\alpha(1-\alpha)} \lesssim S^\alpha\ln(K)$. 
In the adversarial regime, Theorem~\ref{thm:BOBWbounds} recovers the $O(\sqrt{ST\ln(K)})$ bound in~\cite{Bubeck2018ALT} and~\cite{Tsuchiya2023stabilitypenaltyadaptive} while still being $S$-agnostic. 
In the adversarial regime with a self-bounding constraint, the bound becomes $O(\frac{S\ln(K)\ln(T)}{\Delta_{\min}})$ which has an optimal dependency on $T$. To the best of our knowledge, Theorem~\ref{thm:BOBWbounds} is the first result for bandits with sparse signed losses that is simultaneously $S$-agnostic, $T$-optimal and BOBW.
Also, our approach is more computationally efficient than that of~\cite{Tsuchiya2023stabilitypenaltyadaptive} as we do not need to solve any additional optimization problems to compute the learning rates $(\beta_t)_t$.
\begin{remark}
    When $S$ is known, then $\frac{(K^{1-\alpha} - 1)S^{\alpha}}{\alpha(1-\alpha)}$ 
    can be further bounded by $6S\ln(\frac{K}{S})$. Consider only the case where $S$ is sufficiently small so that $e^2S \leq K$ (the other direction trivially leads to $O(\frac{S}{\alpha(1-\alpha)})$). 
    Letting $\alpha = 1- \frac{1}{\ln(K/S)}$, then $(\frac{K-1}{S})^{1-\alpha} \leq (\frac{K}{S})^{1-\alpha} = e$.  Since $\ln(K/S) \geq 2$, we have $\alpha \geq \frac{1}{2}$. Therefore,
        \begin{align*}
            \frac{(K^{1-\alpha} - 1)S^{\alpha}}{\alpha(1-\alpha)} &\leq \frac{K^{1-\alpha}S^{\alpha}}{\alpha(1-\alpha)} = S \left(\frac{K}{S}\right)^{1-\alpha} \frac{\ln(K/S)}{\alpha} = \frac{eS \ln(K/S)}{\alpha} \leq 6S\ln(K/S).
        \end{align*}
        This result shows that an $O(\sqrt{ST\ln(K/S)})$ upper bound is attainable even for signed losses, which resolves an open question posed in~\cite[Remark 12]{KwonAndPerchet2016JMLRSparsity}.
    \end{remark} 
\subsection{Proof Sketch for Theorem~\ref{thm:BOBWbounds}}
As mentioned in Section~\ref{sec:SPMwithRealTimeStability}, we first show that $z_t$ and $h_t$ in~\eqref{eq:setzthtforsparsebandits} satisfy the two conditions (I) $\E[\sum_{t=1}^T \frac{z_t}{h_t}] = O(\mathrm{poly}(T))$ and (II) $\max_t\frac{z_t}{\beta_t}$ is small. The second condition is straightforward from the definition of $z_t, \gamma$ and $d$, since $
    \frac{z_t}{\beta_t} \leq \frac{18d^2}{\gamma} \leq 6d^2 = 24$
which is a constant. To see that (I) is true, note that $h_t$ is fixed with respect to $I_t$. Hence,
\begin{align*}
    \E\left[\sum_{t=1}^T \frac{z_t}{h_t}\right] &= \E\left[\sum_{t=1}^T \frac{\E_{I_t}[z_t]}{h_t}\right] \leq T\E\left[\frac{\max_{t \in [T]}\E_{I_t}[z_t]}{\min_{t \in [T]}h_t}\right].
\end{align*}
Then, the condition (I) follows from Lemma~\ref{lemma:minhtAndmaxEzt}, which shows that $h_t \geq \frac{1-\alpha}{4\alpha}T^{-\alpha}$ and $E_{I_t}[z_t] \leq \frac{(6d)^{2-\alpha}}{2(1-\alpha)} S^{\alpha}$.
Jensen's inequality implies that $\E\left[\ln\left(\sum_{t=1}^T \frac{z_t}{h_t}\right)\right] \leq \ln\left(\E\left[\sum_{t=1}^T \frac{z_t}{h_t}\right]\right)$. 
Combining this with Lemma~\ref{lemma:minhtAndmaxEzt}, we conclude that $\E[F(z_{1:T}, h_{1:T})]$ grows dominantly with $\E[G(z_{1:T}, h_{1:T})]$. 
The last part of the proof is showing that plugging $z_t$ and $h_t$ from~\eqref{eq:setzthtforsparsebandits} into~\eqref{eq:boundG} yields the desired bounds.
In the adversarial regime, the bound~\eqref{eq:boundSparseBanditsAdversarial} follows directly from~\eqref{eq:boundG}, Lemma~\ref{lemma:minhtAndmaxEzt}, $h_t \leq \frac{K^{1-\alpha}-1}{\alpha}$ and Jensen's inequality $\E[\sqrt{X}] \leq \sqrt{\E[X]}$.
In the adversarial regime with a self-bounding constraint, we can prove~\eqref{eq:boundSparseBanditsAdversarialWithSelfBounding} by first showing that
\begin{align*}
    \E[h_tz_t] &\leq \frac{(6d)^{2-\alpha}}{\alpha(1-\alpha)\Delta_{\min}}{(K-1)^{1-\alpha}S^{\alpha}}\E\left[\sum_{i=1}^K p_{t,i}\Delta_i\right].    
\end{align*}
and then following the same argument as in~\citet{ItoCOLT2024}.

\subsection{A Lower Bound for Problems with Soft Sparsity Constraint}
\looseness=-1 It remains an open question whether the BOBW bounds in Theorem~\ref{thm:BOBWbounds} are tight under the hard constraint $\norm{\ell_t}_0 \leq S$. 
This hard-constraint problem belongs to a broader class of settings with a more relaxed constraint, in which there exists an $\alpha \in (0,1)$ and $1 \leq U \leq K^\alpha$ such that for all $t \in [T]$,
\begin{align}
    \E\left[\left(\sum_{i=1}^K \abs{\ell_{t,i}}^{2/\alpha}\right)^\alpha\right] \leq U.
    \label{eq:softconstraint}
\end{align}
In other words, the sparsity constraint holds in expectation. Obviously, the hard-constraint setting with $\norm{\ell_t}_0 \leq S$ satisfies~\eqref{eq:softconstraint} for any $\alpha \in (0,1)$ and $U = S^\alpha$.
Moreover, by using the same Algorithm~\ref{algo:optimalBOBWSparsity} and straightforward modifications in its proof, we can obtain the corresponding $O(\frac{K^{1-\alpha}U}{\alpha(1-\alpha)\Delta_{\min}}\ln{T})$ and $O(\sqrt{\frac{K^{1-\alpha}}{\alpha (1-\alpha)}UT})$ BOBW bounds for stochastic and adversarial regimes, respectively.
The following theorem, whose proof is in Section~\ref{sec:lowerboundproofs}, shows near-matching lower bounds for problems with soft sparsity constraint defined in~\eqref{eq:softconstraint}.
\begin{theorem}
    (Instance-Dependent Lower Bound) For any consistent algorithm, for any $K \geq 4, \alpha \in (0,1)$ and $1 \leq U \leq \frac{K^\alpha}{4}$, there exists a $K$-armed stochastic bandit instance with $\Delta_{\min} \in (0,1)$ and loss distribution satisfying~\eqref{eq:softconstraint} such that
    \begin{align*}
        \lim_{T \to \infty} \frac{R_T}{\ln(T)} = \Omega\left(\frac{K^{1-\alpha}U}{\Delta_{\min}}\right).
    \end{align*}
    (Minimax Lower Bound) For any algorithm, for any $K \geq 4, \alpha \in (0,1)$ and $U \leq K^\alpha$, there exists an adversarial bandit instance with $K$ arms and loss distribution satisfying~\eqref{eq:softconstraint} such that
    \begin{align*}
        R_T = \Omega(\sqrt{K^{1-\alpha}UT}).
    \end{align*}
    \label{thm:LowerBounds}
\end{theorem}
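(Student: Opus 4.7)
The plan is to use a family of stochastic Bernoulli-scaled instances and reduce to the classical $K$-armed lower bounds. For each instance, let the losses be independent across arms with $\ell_{t,i} = c \cdot X_{t,i}$ and $X_{t,i} \sim \mathrm{Ber}(\mu_i)$, where the scale $c \in (0,1]$ and means $\mu_i \in (0,1)$ are to be chosen. Writing $N_t = \sum_i X_{t,i}$, the soft constraint reduces to
\begin{align*}
    \E\left[\left(\sum_{i=1}^K |\ell_{t,i}|^{2/\alpha}\right)^\alpha\right] = c^2\, \E[N_t^\alpha] \le c^2\, \E[N_t]^\alpha = c^2\, \left(\sum_{i=1}^K \mu_i\right)^\alpha,
\end{align*}
where the inequality is Jensen's applied to the concave function $x \mapsto x^\alpha$ for $\alpha \in (0,1)$. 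Choosing every $\mu_i$ near $1/2$ leaves room for $c \asymp \sqrt{U/K^\alpha}$ (up to the cap $c \leq 1$, which is where the hypotheses $U \leq K^\alpha/4$ and $U \leq K^\alpha$ enter). Because scaling the Bernoulli output by $c$ is a bijection, the KL divergences between two such instances with the same $c$ coincide with the usual Bernoulli KL, while every per-round regret is multiplied by $c$. This is exactly the leverage that translates the textbook $[0,1]$-loss lower bounds into the desired sparsity-scaled rates.

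\textbf{Instance-dependent part.} For the first claim I would take the base instance $\nu$ with $\mu_1 = 1/2 - \epsilon$ and $\mu_i = 1/2$ for $i \geq 2$, and for each $j \geq 2$ the alternative $\nu^{(j)}$ that replaces $\mu_j$ by $1/2 - 2\epsilon$. Arm $1$ is uniquely optimal under $\nu$ with gap $\Delta_i = c\epsilon$ for $i \geq 2$, while arm $j$ is uniquely optimal under $\nu^{(j)}$. The per-sample KL between the laws of arm $j$ under $\nu$ and $\nu^{(j)}$ is $d(\tfrac{1}{2}, \tfrac{1}{2} - 2\epsilon) = \Theta(\epsilon^2)$. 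The classical Lai--Robbins / Kaufmann--Cappé--Garivier change-of-measure argument then yields $\liminf_{T \to \infty} \E_\nu[T_j(T)]/\ln T \geq 1/\Theta(\epsilon^2)$ for any consistent algorithm. Multiplying by $\Delta_j = c\epsilon$ and summing over the $K-1$ suboptimal arms gives $R_T = \Omega(K c^2 \ln T / \Delta_{\min}) = \Omega(K^{1-\alpha} U \ln T / \Delta_{\min})$, and $\Delta_{\min} = c\epsilon$ can be placed anywhere in $(0,1)$ by tuning $\epsilon$.

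\textbf{Minimax part.} For the second claim I would adopt the standard needle-in-a-haystack family: $\nu_0$ has all arms with mean $c/2$, while in $\nu_j$ arm $j$ has mean $(1/2-\epsilon)c$. By the chain rule, $\mathrm{KL}(\P^T_{\nu_0}, \P^T_{\nu_j}) = \E_{\nu_0}[T_j(T)] \cdot d(\tfrac{1}{2}, \tfrac{1}{2}-\epsilon) = \Theta(\E_{\nu_0}[T_j(T)]\,\epsilon^2)$, so averaging over $j \in [K]$ gives $\frac{1}{K}\sum_j \E_{\nu_0}[T_j(T)] = T/K$, and Pinsker plus pigeonhole let us pick $\epsilon = \Theta(\sqrt{K/T})$ to ensure that some $\nu_j$ forces $\E_{\nu_j}[T - T_j(T)] = \Omega(T)$. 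Hence $R_T = \Omega(c\epsilon T) = \Omega(c\sqrt{KT}) = \Omega(\sqrt{K^{1-\alpha} U T})$.

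\textbf{Main obstacle.} The lower-bound machinery itself is textbook once the instance family is fixed; the only delicate step is choosing the pair $(c, \mu)$ so that the soft constraint~\eqref{eq:softconstraint} is satisfied \emph{simultaneously} in the base instance and every alternative, while respecting $c \leq 1$ and leaving $\Delta_{\min}$ free. The interesting regime is when $U$ is large enough that the unconstrained optimal choice $c \asymp \sqrt{U/K^\alpha}$ would exceed $1$: there one instead sets $c = 1$ and drives $\mu$ downward, verifying that the resulting (weaker) Bernoulli variance still yields the target rate since in that regime $\sqrt{KT}$ trivially exceeds $\sqrt{K^{1-\alpha} U T}$. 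The $2^\alpha$-type constants arising from Jensen are absorbed in the $\Omega(\cdot)$.
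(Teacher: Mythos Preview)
Your approach is correct and takes a genuinely different route from the paper. The paper controls the soft constraint by making the loss vector \emph{time-sparse and correlated}: in the stochastic construction $\ell_t \in \{-\mathbf{1}, -e_{i^*}, \mathbf{0}\}$ with probabilities $(\Delta_{\min}, \Delta_{\min}, 1-2\Delta_{\min})$, so that $\E[(\sum_i |\ell_{t,i}|^{2/\alpha})^\alpha]$ can be computed \emph{exactly} as $\Delta_{\min}(K^\alpha+1)=U$; the minimax construction is analogous with $\ell_t \in \{-\mathbf{1}, -e_j, \mathbf{0}\}$ and a system of equations pinning down the probabilities $(\eta,\varepsilon)$. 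You instead keep the arms \emph{independent} and introduce a multiplicative scale $c \asymp \sqrt{U/K^\alpha}$, bounding the constraint via Jensen on $x\mapsto x^\alpha$. Because scaling by $c$ is a bijection on the observation space, the KL divergences are unchanged while the gaps pick up a factor $c$, which cleanly reduces both claims to the textbook $[0,1]$-loss lower bounds; this buys modularity and lets $\Delta_{\min}=c\varepsilon$ vary freely (whereas the paper's construction fixes $\Delta_{\min}=U/(K^\alpha+1)$). The paper's route buys exact equality in the constraint (no slack from Jensen) and directly exhibits the connection to hard sparsity, but requires solving for $(\eta,\varepsilon)$ in the minimax case and working with correlated marginals. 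Your handling of the boundary regime where $c$ would exceed $1$---falling back to $c=1$ and noting $\sqrt{KT}\ge\sqrt{K^{1-\alpha}UT}$ when $U\le K^\alpha$---is the right fix and mirrors what the paper implicitly needs for $U$ near $K^\alpha$.
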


\subsection{Implications for Bandits with Adversarially Changing Action Sets}
Intuitively, the sparsity constraint $\norm{\ell_t}_0 \leq S$ indicates that there are at most $S$ arms containing non-trivial information in each round, however the learner 
{does not know the} arms with non-trivial information.
In this sense, sparse bandits is conceptually more difficult than adversarial sleeping bandits~\citep{Kleinberg2010Sleeping}, where in each round $t$ the learner is given, by an adversary, a set $\sA_t \subseteq [K]$ of active arms to choose from. 
Note that the learner is not allowed to choose an arm in $[K] \setminus \sA_t$.
The performance of the learner is measured by its per-action regret
\begin{align*}
    R_{T,a} = \sum_{t=1}^T \I{a \in \sA_t}(\ell_{t,I_t} - \ell_{t,a}).
\end{align*}
A natural question is whether Algorithm~\ref{algo:optimalBOBWSparsity} can be extended to this adversarial sleeping bandits setting.
The following theorem answers this question in the positive.
\begin{theorem}
    For any $K \geq 4, T \geq 4k$, Algorithm~\ref{algo:SPMSleepingBandit} (in Appendix~\ref{appendix:SPMSleepingBandits}) guarantees that for all $a \in [K]$,
    \begin{align*}
      \E[R_{T,a}] \leq O\left(\sqrt{\frac{(K^{1-\alpha} - 1)(\max_{t \in [T]}\abs{\sA_t})^\alpha}{\alpha(1-\alpha)}T}\right),
  \end{align*}
  \label{thm:SPMSleepingBanditBound}
\end{theorem}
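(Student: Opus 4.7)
The plan is to adapt the proof of Theorem~\ref{thm:BOBWbounds} to the sleeping bandits setting, with the sparsity parameter $S$ effectively replaced by $\max_{t \in [T]}\abs{\sA_t}$. First I would specify Algorithm~\ref{algo:SPMSleepingBandit}: in each round $t$ it computes
\begin{align*}
q_t = \argmin_{p \in \Delta_{\sA_t}} \inp{L_{t-1}}{p} + \beta_t \psi_{TE}(p) + \gamma \psi_{LB}(p)
\end{align*}
on the restricted simplex $\Delta_{\sA_t} = \{x \in \R^{\sA_t}_+ : \sum_{i \in \sA_t} x_i = 1\}$, mixes to $p_t = (1 - \abs{\sA_t}/T)q_t + \1_{\sA_t}/T$, plays $I_t \sim p_t$, forms $\hat{\ell}_{t,i} = \ell_{t,i}\I{I_t=i}/p_{t,i}$ for $i \in \sA_t$, and updates $\beta_{t+1}$ by~\eqref{eq:SPM} with $z_t, h_t$ defined exactly as in~\eqref{eq:setzthtforsparsebandits} but with every sum over $[K]$ replaced by a sum over $\sA_t$.

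Next, for a fixed comparator $a$, I would apply the standard FTRL decomposition, observing that rounds with $a \notin \sA_t$ contribute nothing to $R_{T,a}$ by definition, so the instantaneous regret vanishes on those rounds. This yields, as in the proof of Theorem~\ref{thm:BOBWbounds}, a bound of the form
\begin{align*}
\E[R_{T,a}] \lesssim \gamma \psi_{LB}(e_a) - \beta_1 \psi_{TE}(q_1) + \E\left[F(z_{1:T}, h_{1:T})\right],
\end{align*}
where the SPM rule~\eqref{eq:SPM} absorbs the penalty term into the stability term exactly as in Section~\ref{sec:SPMwithRealTimeStability}. The task then reduces to bounding $\E[F(z_{1:T}, h_{1:T})]$ via Lemma~\ref{lemma:refinedBoundF}.

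The main body of the proof then verifies conditions (I) and (II) from Section~\ref{sec:SPMwithRealTimeStability}. Condition (II), $\max_t z_t/\beta_t \leq 24$, is immediate from the $\min$ in the definition of $z_t$. For condition (I), the key observation is that only arms in $\sA_t$ can be sampled, so the conditional expectation computation underlying the sparse-loss argument gives $\E_{I_t}[z_t] \lesssim \abs{\sA_t}^\alpha/(1-\alpha)$ via H\"older's inequality applied to $\sum_{i \in \sA_t} \tilde{p}_{t,i}^{1-\alpha} \ell_{t,i}^2 \leq \abs{\sA_t}^\alpha$, while the $1/T$-uniform mixing over $\sA_t$ still ensures $h_t \geq (1-\alpha) T^{-\alpha}/(4\alpha)$. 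Substituting these together with $h_t \leq (\abs{\sA_t}^{1-\alpha} - 1)/\alpha \leq (K^{1-\alpha} - 1)/\alpha$ into the first alternative of~\eqref{eq:boundG}, and then applying Jensen's inequality $\E[\sqrt{X}] \leq \sqrt{\E[X]}$ to $\sum_t h_t z_t$, produces the claimed $O\bigl(\sqrt{(K^{1-\alpha}-1)(\max_{t \in [T]}\abs{\sA_t})^\alpha T/(\alpha(1-\alpha))}\bigr)$ bound. The residual $O(\ln T)$ term from Lemma~\ref{lemma:refinedBoundF} is dominated by $\sqrt{T}$.

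The hard part is the time-varying support of the regularizer: the standard FTRL telescoping argument underlying Section~\ref{sec:SPMwithRealTimeStability} is stated for a fixed simplex, whereas here the domain $\Delta_{\sA_t}$ changes with $t$. I expect the cleanest route is to embed $\Delta_{\sA_t}$ in $\Delta_K$ by artificially inflating $\ell_{t,i}$ for $i \notin \sA_t$ enough that $q_{t,i} = 0$ automatically, which reduces to the fixed-simplex analysis without changing $I_t$ or the true per-action regret; the alternative is to redo the FTRL analysis directly with time-varying support and verify that the Bregman divergence telescoping still closes, noting that $e_a$ may lie outside $\Delta_{\sA_t}$ in some rounds (which is harmless because those rounds do not appear in $R_{T,a}$). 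Either route reduces the remaining work to a near-verbatim repetition of the constant-tracking in the proof of Theorem~\ref{thm:BOBWbounds}.
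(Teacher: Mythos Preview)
Your proposed algorithm is not the paper's Algorithm~\ref{algo:SPMSleepingBandit}, and the difference is exactly where you locate the ``hard part.'' The paper does \emph{not} restrict FTRL to the time-varying simplex $\Delta_{\sA_t}$. Instead, following the SB-EXP3 device of \cite{NguyenAndMehta2024SBEXP3}, it runs FTRL on the \emph{full} simplex $\Delta_K$ using the cumulative estimated \emph{regret} vector $R_{t-1}$ (not $L_{t-1}$), then obtains $p_t$ by the filtering step $p_{t,i} \propto q_{t,i}\I{i \in \sA_t}$. For inactive arms it sets $\hat{\ell}_{t,i} = \ell_{t,I_t}$, so $r_{t,i}=\ell_{t,I_t}-\hat{\ell}_{t,i}=0$ whenever $i\notin\sA_t$. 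This makes the per-action regret equal $\E[\sum_t \inp{-r_t}{q_t-e_a}]$ on the fixed domain $\Delta_K$, and the standard FTRL telescoping goes through with no time-varying support to worry about. Your embedding fix (``inflate $\ell_{t,i}$ so $q_{t,i}=0$'') cannot work as written: the log-barrier regularizer forces every coordinate of $q_t$ to be strictly positive, so $q_{t,i}=0$ is unattainable for any finite loss.

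A second, smaller divergence: the paper's $z_t$ for sleeping bandits is \emph{not} the real-time quantity from~\eqref{eq:setzthtforsparsebandits}. It is the deterministic
\[
z_t = \min\Bigl(\tfrac{(4d)^{2-\alpha}}{1-\alpha}\sum_{i\in\sA_t}\tilde{p}_{t,i}^{1-\alpha},\ \beta_t\tfrac{18d^2}{\gamma}\sum_{i\in\sA_t}\tilde{p}_{t,i}\Bigr),
\]
which depends only on $p_t$ and $\sA_t$, not on $I_t$. Because $z_{\max}\le \frac{(4d)^{2-\alpha}}{1-\alpha}(\max_t|\sA_t|)^\alpha$ is bounded, the paper invokes the original SPM bound of \citet{ItoCOLT2024} directly rather than Lemma~\ref{lemma:refinedBoundF}. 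The stability argument then needs a sleeping-specific lemma (Lemma~\ref{lemma:stableSameBetaDiffLossSB}) handling the full regret-shift $-r_t = \hat{\ell}_t - \ell_{t,I_t}\1$, together with a local-norm computation (Lemma~\ref{lemma:boundqtirtibypItellIt}) converting $\sum_i q_{t,i}^{2-c}r_{t,i}^2$ into the $\tilde{p}_{t,I_t}$ form; these replace the single-coordinate stability lemmas you planned to reuse verbatim.
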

Our Algorithm~\ref{algo:SPMSleepingBandit} is a combination of Algorithm~\ref{algo:optimalBOBWSparsity} and the SB-EXP3 algorithm in~\cite{NguyenAndMehta2024SBEXP3}. More specifically, Algorithm~\ref{algo:SPMSleepingBandit} uses the estimated cumulative \emph{regret} (instead of losses) to compute $q_t$ in the FTRL update. Then, the sampling probability vector $p_t$ is obtained by a filtering step $p_{t,i} = \frac{q_{t,i}\I{i \in \sA_t}}{\sum_{j=1}^K q_{t,j}\I{j \in \sA_t}}$.
While the bound in Theorem~\ref{thm:SPMSleepingBanditBound} is of the same order as in~\citet[][Theorem 2]{NguyenAndMehta2024SBEXP3}, it has the advantage of not requiring the knowledge of $\max_{t}\abs{\sA_t}$ in advance nor any complicated two-level doubling trick.

\section{Application II: $\sqrt{Q\ln(K)}$ Upper Bound with Unknown $Q$ using Optimistic FTRL}
\label{sec:SPMwOFTRLwReservoirSampling}
\looseness=-1 In this section, we propose a new approach for obtaining a BOBW $O(\frac{K\ln{T}}{\Delta_{\min}}, \sqrt{Q\ln{K}})$-bound with unknown $Q$. 
For ease of exposition, we assume losses are in $[0,1]$ and note that the analysis can be easily extended for losses in $[-1,1]$.
The new approach is based on applying real-time SPM on the Optimistic FTRL framework~\citep{Rakhlin2013OptimisticFTRL}, and then combining with the Reservoir Sampling algorithm~\citep{HazanAndKale11a}.

 In principle, our algorithm follows the same framework as~\citet{HazanAndKale11a,Bubeck2018ALT} where the learner maintains a reservoir $\sS_i$ of observed losses for each arm $i \in [K]$ and then uses the estimated mean $m_{t,i} = \tilde{\mu}_{t,i}$ of these reservoirs as the optimistic vector $m_t$ in Optimistic FTRL. 
In each round $t$, the learner chooses to perform either a reservoir sampling step for updating the reservoir $\sS_i$, or a FTRL learning step for minimizing the regret. 
When the FTRL learning step is performed in round $t$, the vector $q_t$ is computed by
\begin{align}
    q_t = \argmin_{x \in \Delta_K} \inp{m_t + L_{t-1}}{x} + \beta_t\left(\frac{1}{\alpha}(1-\sum_{i=1}^K x_i^\alpha)\right) - \gamma\sum_{i=1}^K \ln(x_i).
    \label{eq:qtInOFTRL}
\end{align}
Similar to Algorithm~\ref{algo:optimalBOBWSparsity}, the sampling probability vector $p_t$ is obtained by mixing with $\frac{1}{T}$, i.e, $p_t = \left(1 - \frac{K}{T}\right)q_t + \frac{1}{T}\1$. 
After an arm $I_t \sim p_t$ is drawn, the loss estimates are $\hat{\ell}_{t, i} = m_{t,i} + \frac{(\ell_{t,i} - m_{t,i})\I{I_t = i}}{p_{t,i}}$.
The learning rates $(\beta_t)_t$ are computed by real-time SPM, with $z_t$ and $h_t$ defined as
\begin{align*}
    z_t =  \min\left( \frac{(6d)^{2-\alpha}}{2(1-\alpha)}\tilde{p}_{t,I_t}^{2-\alpha}(\hat{\ell}_{t,I_t} - m_{t,I_t})^2, \frac{\beta_t18d^2}{\gamma}(\ell_{t,I_t} - m_{t,I_t})^2  \right), \quad h_t =\frac{1}{\alpha}\left(\sum_{i=1}^K p_{t,i}^\alpha - 1\right).
\end{align*}
The full procedure is given in~Algorithm~\ref{algo:OFTRLReservoirSampling} in Appendix~\ref{appendix:proofsforSectionOFTRLReservoirSampling}.
The following theorem states the BOBW bound for this approach.
\begin{theorem}
    Algorithm~\ref{algo:OFTRLReservoirSampling} (in Appendix~\ref{appendix:proofsforSectionOFTRLReservoirSampling}) guarantees the following bounds simultaneously
    \begin{itemize}
        \item In the adversarial regime:
        \begin{align}
            R_T \leq O\left( \sqrtfrac{(K^{1-\alpha} - 1)Q}{\alpha (1-\alpha)}\right).
            \label{eq:BOBWsqrtQLnKBoundAdv}
        \end{align}
        \item In the adversarial regime with a self-bounding constraint:
        \begin{align}
            R_T \leq O\left(\frac{(K-1)^{1-\alpha}K^\alpha \ln(T)}{\alpha(1-\alpha)\Delta_{\min}} + \sqrt{C\frac{(K-1)^{1-\alpha}K^\alpha \ln(T)}{\alpha(1-\alpha)\Delta_{\min}}} + \sqrt{\frac{(K-1)^{1-\alpha}K^{\alpha}}{\alpha(1-\alpha)}}\right)
            \label{eq:BOBWsqrtQLnKBoundSto}
        \end{align}
    \end{itemize}
    \label{thm:BOBWsqrtQLnKBound}
\end{theorem}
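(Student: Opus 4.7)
My plan is to mirror the proof template of Theorem~\ref{thm:BOBWbounds} inside the Optimistic FTRL framework, exploiting that when the optimistic vector $m_t$ is a good estimate of $\ell_t$, the effective stability term is driven by the residual $(\ell_{t,I_t}-m_{t,I_t})^2$ rather than $\ell_{t,I_t}^2$. Reservoir Sampling ensures $m_{t,i}$ tracks the empirical mean of the losses of arm $i$ closely enough that, in expectation, these squared residuals accumulate to at most $O(Q)$. I would start from the standard Optimistic FTRL regret decomposition with hybrid regularizer $\phi_t(x)=\beta_t\psi_{TE}(x)+\gamma\psi_{LB}(x)$ and the importance-weighted estimator $\hat{\ell}_{t,i}=m_{t,i}+(\ell_{t,i}-m_{t,i})\I{I_t=i}/p_{t,i}$. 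A Taylor expansion of the Bregman divergence $D_t$ together with the $\frac{1}{T}$-uniform exploration mixing reduces the regret to a penalty term of order $\beta_{T+1}(K^{1-\alpha}-1)/\alpha+\gamma\ln T$ plus a stability term $\E[\sum_t z_t/\beta_t]$, with $z_t$ and $h_t$ as prescribed by the algorithm.

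\textbf{Applying Lemma~\ref{lemma:refinedBoundF}.} Condition (II) is immediate since the second branch of the $\min$ in $z_t$ forces $z_t/\beta_t\le 18d^2/\gamma$, a constant. For condition (I), $h_t$ is deterministic given $p_t$, so $\E_{I_t}[z_t]\le\frac{(6d)^{2-\alpha}}{2(1-\alpha)}\sum_i\tilde p_{t,i}^{1-\alpha}(\ell_{t,i}-m_{t,i})^2 = O(K^\alpha/(1-\alpha))$ using $(\ell_{t,i}-m_{t,i})^2\le 1$, while a uniform-mixing analogue of Lemma~\ref{lemma:minhtAndmaxEzt} yields $h_t\ge\Omega(T^{-\alpha})$. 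Hence $\sum_t z_t/h_t=\mathrm{poly}(T)$, and $\E[F(z_{1:T},h_{1:T})]$ is dominated by $\E[G(z_{1:T},h_{1:T})]$ plus a negligible $O(\ln T)$ term.

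\textbf{Bounding $G$ and closing both regimes.} Using the second bound in~\eqref{eq:boundG}, $G\lesssim\sqrt{h_{\max}\sum_t z_t}$ with $h_{\max}\le(K^{1-\alpha}-1)/\alpha$. The pivotal step is to bound $\E[\sum_t z_t]$ by $\E[\sum_{t,i}\tilde p_{t,i}^{1-\alpha}(\ell_{t,i}-m_{t,i})^2]/(1-\alpha)$, and then invoke the reservoir-sampling guarantee of~\citet{HazanAndKale11a,Bubeck2018ALT} that $\sum_t\E[(\ell_{t,i}-m_{t,i})^2]\lesssim Q$ for every arm $i$, with the cost of the reservoir-only rounds absorbed into lower-order terms. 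Since $\tilde p_{t,i}^{1-\alpha}\le 1$, this gives $\E[\sum_t z_t]\lesssim Q/(1-\alpha)$, and Jensen's inequality delivers~\eqref{eq:BOBWsqrtQLnKBoundAdv}. For~\eqref{eq:BOBWsqrtQLnKBoundSto}, where residuals may persist at order $1$, I would instead use $(\ell_{t,i}-m_{t,i})^2\le 1$ trivially and follow the recipe of~\citet{ItoCOLT2024}: show $\E[h_t z_t]\lesssim\frac{(K-1)^{1-\alpha}K^\alpha}{\alpha(1-\alpha)\Delta_{\min}}\E[\sum_i p_{t,i}\Delta_i]$ by casework on whether $p_{t,i^*}$ is close to $1$, then close via the self-bounding constraint.

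\textbf{Main obstacle.} The principal difficulty is the coupling between the reservoir state and the FTRL dynamics: reservoir rounds commit to a non-FTRL distribution and do not yield usable learning-rate updates, so the identity $\sum_t\E[(\ell_{t,i}-m_{t,i})^2]\lesssim Q$ must simultaneously control the reservoir's intrinsic sampling variance and the variance of $m_t$ induced by the adaptive sampling history. A secondary subtlety is that $z_t$ is a $\min$ of two branches, and the analysis must carefully identify which branch is tight in each regime: the first (data-dependent) branch drives the $\sqrt{Q}$ rate in the adversarial case, whereas the second (deterministic) branch is essential for the self-bounding argument, exactly paralleling the role it played in the proof of Theorem~\ref{thm:BOBWbounds}.
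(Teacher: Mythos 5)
Your proposal follows the same route as the paper: decompose the regret into reservoir-sampling rounds (absorbed at cost $O(K(\ln T)^2)$) and Optimistic FTRL rounds, verify conditions (I) and (II) for Lemma~\ref{lemma:refinedBoundF} using the uniform mixing and the second branch of the $\min$, and then bound $\E[\sum_t z_t]\lesssim Q/(1-\alpha)$ via the reservoir-sampling guarantees of Hazan--Kale while reusing the Theorem~\ref{thm:BOBWbounds} argument with $S=K$ for the self-bounding regime. One small inaccuracy in your closing remark: it is not the case that the first branch of the $\min$ serves only the adversarial bound while the second serves the self-bounding bound --- in both regimes the in-expectation bound on $\E_{I_t}[z_t]$ (hence on $\sum_t z_t$ and $\sum_t h_t z_t$) comes from the first branch, and the second branch is used \emph{only} to guarantee the uniform bound $\max_t z_t/\beta_t\le 18d^2/\gamma$ required by condition (II), exactly as in Theorem~\ref{thm:BOBWbounds}; likewise the feared coupling between the reservoir and the FTRL history does not arise because the reservoir is refreshed with uniformly drawn (not FTRL-drawn) arms.
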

\begin{proof}(Sketch)
    Our analysis follows from the analysis of Algorithm~\ref{algo:optimalBOBWSparsity} and the observation by~\citet{HazanAndKale11a} that the reservoir sampling steps only add an $O(\ln(T)^2)$ amount to the regret bound.
    As a result, the bound~\eqref{eq:BOBWsqrtQLnKBoundSto} for adversarial regime with a self-bounding constraint follows almost identically to that of Algorithm~\ref{algo:optimalBOBWSparsity}. 
    For the bound~\eqref{eq:BOBWsqrtQLnKBoundAdv} in the adversarial regime, the total variation $Q$ naturally comes out of $\sum_{t=1}^T z_t$  as follows:
    \begin{align*}
        \E\left[\sum_{t=1}^T z_t\right] &\lesssim \frac{1}{1-\alpha}\E\left[\sum_{t=1}^T \sum_{i=1}^K (\ell_{t,i} - m_{t,i})^2\right] =  \frac{1}{1-\alpha} \E\left[\sum_{t=1}^T \norm{\ell_{t} - \tilde{\mu}_{t}}_2^2 \right] \qquad(\text{since } m_t = \tilde{\mu}_{t})\\
        &\leq \frac{1}{1-\alpha}\left(\E\left[\sum_{t=1}^T \norm{\ell_{t} - \mu_{T}}_2^2 \right] + \E\left[\sum_{t=1}^T \norm{\tilde{\mu}_t - \mu_{t}}_2^2 \right]\right) \\
        &\leq \frac{1}{1-\alpha}\left(Q + \sum_{t=1}^T \frac{Q}{t\ln(T)}\right) \leq O\left(\frac{Q}{1-\alpha}\right),
    \end{align*}
     where the second inequality follows from triangle inequality and 
     Lemma 10 in~\cite{HazanAndKale11a}, the third inequality is by Lemma 11 in~\cite{HazanAndKale11a}, and the last inequality is due to $\sum_{t=1}^T \frac{1}{t\ln{T}} \leq O(1)$. Together with~\eqref{eq:boundG} and $h_{\max} \leq \frac{K^{1-\alpha}-1}{\alpha}$, this implies~\eqref{eq:BOBWsqrtQLnKBoundAdv}.
\end{proof}
\begin{remark}
    While existing works~\citep{HazanAndKale11a, Bubeck2018ALT} require either the knowledge of $Q$ or sophisticated doubling tricks to estimate $Q$, our Algorithm~\ref{algo:OFTRLReservoirSampling} does not require such knowledge or any tricks.
    When $\alpha \to 1$, the bound in~\eqref{eq:BOBWsqrtQLnKBoundAdv} becomes $O(\sqrt{Q\ln(K)})$. This bound matches the best known upper bound in~\citet{Bubeck2018ALT} and never exceeds $O(\sqrt{TK\ln(K)})$ in the worst case, all while simultaneously having a $T$-optimal best-of-both-worlds guarantee.
\end{remark}

\section{Coordinate-Wise Stability-Penalty Matching}
\label{sec:CoordinateWiseSPM}
\begin{algorithm}[t]
	\KwIn{$K \geq 3, T \geq 4K, \alpha \in (0,1), \beta_1 = \frac{8K}{1-\alpha}\1, \gamma = \max(6,48\sqrtfrac{\alpha}{1-\alpha}), d = 2$.}
    Initialize $L_{0,i} = 0$ for $i \in [K]$ \;

	\For{each round $t = 1, \dots, T$}{        
        Compute $m_t \in [0,1]^K$ where $m_{t,i} = \frac{1}{1 + \sum_{s=1}^{t-1}\I{I_s = i}}\left(\frac{1}{2} + \sum_{s=1}^{t-1} \I{I_s = i} \ell_{t,i}\right)$\;

        Compute $q_t$ by Equation~\eqref{eq:qtInOFTRL}\;

        Compute $p_t = (1-\frac{K}{T})q_t + \frac{1}{T}\1$\;
		
        Draw $I_t \sim p_t$ and observe $\ell_{t, I_t}$\;

		Compute loss estimate $\hat{\ell}_{t, i} = m_{t,i} + \frac{(\ell_{t,i} - m_{t,i})\I{I_t = i}}{p_{t,i}}$ and update $L_{t,i} = L_{t-1,i} + \hat{\ell}_{t,i}$\;
        
        Compute $z_{t,i} =  \I{i = I_t}(\ell_{t,I_t} - m_{t,I_t})^2\min\left( \frac{(6d)^{2-\alpha}}{2(1-\alpha)}\min\left\{p_{t,I_t}^{-\alpha}, \frac{1 - p_{t,I_t}}{p_{t,I_t}^2}\right\}, \frac{\beta_{t,i}18d^2}{\gamma}  \right)$ \;
        
        Compute $h_{t,i} = \frac{1}{\alpha}p_{t,i}^\alpha$\;
        
        Compute $\beta_{t+1, i} = \beta_{t,i} + \frac{z_{t,i}}{\beta_{t,i}h_{t,i}}$\;
	}
	\caption{Coordinate-wise SPM with hybrid regularization for losses in $[0,1]$.}
	\label{algo:CoordinateWiseSPM}
\end{algorithm}
We further generalize the SPM framework by introducing a new technique called coordinate-wise SPM (\texttt{CoWSPM}).
As the name suggests,~\texttt{CoWSPM} maintains separate learning rate $\beta_{t,i}$, stability term $z_{t,i}$ and penalty term $h_{t,i}$ for each arm $i \in [K]$.
In each round $t$,~\texttt{CoWSPM} updates the learning rates for each arm using the SPM update formula~\eqref{eq:SPM}, i.e.,
\begin{align}
    \beta_{t+1,i} = \beta_{t,i} + \frac{z_{t,i}}{\beta_{t,i}h_{t,i}}.
    \label{eq:CWSPMupdate}
\end{align}
Obviously, if $(z_{t,i})_{i \in [K]}$ and $(h_{t,i})_{i \in [K]}$ take the same values across all arms, this this approach recovers Algorithm~\ref{algo:optimalBOBWSparsity}.
{Instead,} we adopt a different approach where $z_{t,i} = 0$ for all $i \neq I_t$ so that only the learning rate $\beta_{t,I_t}$ of the observed arm $I_t$ is updated in round $t$. 
The full procedure of~\texttt{CoWSPM} is given in Algorithm~\ref{algo:CoordinateWiseSPM}, which uses the Optimistic FTRL framework with
\begin{align}
    \phi_t(x) = \sum_{i=1}^K \beta_{t,i}\left(\frac{-x_i^\alpha}{\alpha} + (1-x_{i})\ln(1-x_{i}) + x_{i}\right) - \gamma\sum_{i=1}^K \ln(x_i).
    \label{eq:CWSPMregularizer}
\end{align}
This regularization function contains not only the $\alpha$-Tsallis entropy, but also a part of the Shannon entropy and a linear term. 
The addition of these terms into the regularizer has been done in~\citet{ItoCOLT2022aVariance} in order to have a regret bound containing the quantity $\tilde{p}_{t,i} = \min(p_{t,i}, 1-p_{t,i})$ for the stochastic setting.
This technique has a similar impact in our work, where it allows us to bound $z_{t,i}$ by a quantity containing $\tilde{p}_{t,i}$.
However, while we use the same technique to introduce $\tilde{p}_{t,i}$ into our bounds, our analysis develops fundamentally different technical lemmas from that of~\cite{ItoCOLT2022aVariance} in order to use this new regularizer in the real-time SPM framework. 
%
Next, to ensure that only $\beta_{t,I_t}$ is updated, we set $z_{t,i}$ by
\begin{align}
    z_{t,i} =  \I{i = I_t}(\ell_{t,I_t} - m_{t,I_t})^2\min\left( \frac{(6d)^{2-\alpha}}{2(1-\alpha)}\min\left\{p_{t,I_t}^{-\alpha}, \frac{1 - p_{t,I_t}}{p_{t,I_t}^2}\right\}, \frac{\beta_{t,i}18d^2}{\gamma}  \right),
    \label{eq:CoordinateWiseSPMzti}
\end{align}
so that $z_{t,I_t} \geq 0$ and $z_{t,i} = 0$ for $i \neq I_t$. 
The following theorem states the BOBW data-dependent bound of Algorithm~\ref{algo:CoordinateWiseSPM}, whose full proof is given in Appendix~\ref{appendix:ProofsForCoordinateWiseSPM}. 
The proof sketch outlines the main technical challenges in the analysis of Algorithm~\ref{algo:CoordinateWiseSPM}.
\begin{theorem}
    For any $K \geq 4, T \geq 4k$, \texttt{CoWSPM} (Algorithm~\ref{algo:CoordinateWiseSPM}) with $\alpha \in (0,1)$ guarantees the following bounds simultaneously
    \begin{itemize}
        \item In the adversarial regime: 
        \begin{align*}
            R_T \lesssim\min\left\{ \sqrt{K\ln(T)\min(Q_\infty, L^*, T-L^*)}, K^{\frac{\alpha}{2}}\sqrt{KT}\right\}.
        \end{align*}
        \item In the stochastic regime:
        \begin{align*}
            R_T \lesssim \frac{1}{\alpha (1-\alpha)}\sum_{i \neq i^*} \frac{\ln(T)}{\Delta_i}.
        \end{align*}
    \end{itemize}
    \label{thm:CWSPMbounds}
\end{theorem}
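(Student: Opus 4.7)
The plan is to run the general real-time SPM template from Section~\ref{sec:SPMwithRealTimeStability} but in a per-coordinate fashion, following the decomposition of the Optimistic FTRL regret
\begin{align*}
R_{T,a} \lesssim \phi_{T+1}(e_a) - \phi_1(q_1) + \sum_{t=1}^T (\phi_t(q_{t+1}) - \phi_{t+1}(q_{t+1})) + \sum_{t=1}^T \bigl[\langle \hat\ell_t - m_t, q_t - q_{t+1}\rangle - D_t(q_{t+1},q_t)\bigr].
\end{align*}
Because $\beta_{t,i}$ and the regularizer \eqref{eq:CWSPMregularizer} are separable across coordinates, the penalty term decomposes as $\sum_t \sum_i (\beta_{t+1,i}-\beta_{t,i}) \tilde h_{t+1,i}$, where $\tilde h_{t+1,i}$ is the derivative of the $i$-th piece of $-f$ at $q_{t+1,i}$. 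I would first show $\tilde h_{t+1,i} \lesssim h_{t,i} = p_{t,i}^\alpha/\alpha$ (the Shannon-entropy and linear terms in \eqref{eq:CWSPMregularizer} contribute non-positively, so the Tsallis piece governs the rate), so that the SPM update \eqref{eq:CWSPMupdate} matches penalty with stability coordinate-wise.

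The main obstacle, and the step I would do most carefully, is proving the per-coordinate stability bound
\begin{align*}
\langle \hat\ell_t - m_t, q_t - q_{t+1}\rangle - D_t(q_{t+1},q_t) \lesssim \frac{z_{t,I_t}}{\beta_{t,I_t}}.
\end{align*}
Since only the $I_t$-th coordinate of $\hat\ell_t - m_t$ is nonzero, a second-order Taylor expansion of $\phi_t$ along the line $q_t \to q_{t+1}$ reduces the left-hand side to a ratio whose numerator is $(\ell_{t,I_t} - m_{t,I_t})^2 / p_{t,I_t}^2$ and whose denominator is the local curvature of $\phi_t$ at $p_{t,I_t}$. That curvature has two contributions: the Tsallis part $\beta_{t,I_t} p_{t,I_t}^{\alpha - 2}$ and the Shannon-entropy part $\beta_{t,I_t}/(1 - p_{t,I_t})$, which is exactly how the $\min\{p_{t,I_t}^{-\alpha}, (1-p_{t,I_t})/p_{t,I_t}^2\}$ in \eqref{eq:CoordinateWiseSPMzti} arises. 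The log-barrier term delivers the alternative $\beta_{t,i}\,18d^2/\gamma$ bound used when $\beta_{t,I_t}$ is large. This is the fundamentally different lemma that replaces the one in~\citet{ItoCOLT2022aVariance}; the technical care lies in verifying that the coupling between coordinates in the constrained minimizer on $\Delta_K$ does not destroy the separable form, which I would handle via the standard KKT-dualization used for log-barrier-regularized FTRL.

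Once per-coordinate matching holds, I would invoke Lemma~\ref{lemma:refinedBoundF} separately for each arm $i$:
\begin{align*}
\sum_{t=1}^T \frac{z_{t,i}}{\beta_{t,i}} \lesssim G(z_{1:T,i}, h_{1:T,i}) + \Bigl(\max_t \frac{z_{t,i}}{\beta_{t,i}}\Bigr)\ln\Bigl(\sum_t \frac{z_{t,i}}{h_{t,i}}\Bigr),
\end{align*}
with $\max_t z_{t,i}/\beta_{t,i} \leq 18 d^2/\gamma$ by construction and $\sum_t z_{t,i}/h_{t,i} \le \mathrm{poly}(T)$ verifiable through $\min_t h_{t,i} \gtrsim T^{-\alpha}$. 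Summing over $i$, applying \eqref{eq:boundG}, and using $z_{t,i}=0$ for $i \neq I_t$, the adversarial bound reduces to estimating $\mathbb{E}\!\left[\sum_t \sum_i h_{t,i} z_{t,i}\right]$. Since $h_{t,I_t}\, z_{t,I_t} \lesssim (\ell_{t,I_t}-m_{t,I_t})^2$ (the factors $p_{t,I_t}^\alpha$ and $p_{t,I_t}^{-\alpha}$ cancel), this expectation is $\lesssim \sum_t \mathbb{E}\|\ell_t - m_t\|_\infty^2$, which the reservoir-sampling guarantee of \cite{HazanAndKale11a} bounds by $O(K \min(Q_\infty,L^*,T-L^*)\ln T)$, after using $(\ell_{t,i}-m_{t,i})^2 \le \ell_{t,i}$ or $\le 1-\ell_{t,i}$ to introduce $L^*$ and $T-L^*$. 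The trivial bound $(\ell_{t,i}-m_{t,i})^2 \leq 1$ combined with $\sum_i p_{t,i}^\alpha \leq K^{1-\alpha}$ yields the $K^{\alpha/2}\sqrt{KT}$ fallback, and the minimum of the two follows from taking the smaller of the two regret bounds in expectation.

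For the stochastic regime, I would use the self-bounding technique as in \citet{Zimmert2021TsallisINF,ItoCOLT2024}: the coordinate-wise stability sum, after a Cauchy--Schwarz step combined with $h_{t,i} z_{t,i}/p_{t,i} \lesssim (\ell_{t,i} - m_{t,i})^2/p_{t,i}^{1-\alpha}$ and reservoir concentration $\mathbb{E}[(\ell_{t,i}-m_{t,i})^2] \lesssim 1$, gives a regret bound of the form $\sqrt{\tfrac{1}{\alpha(1-\alpha)} \ln T \cdot \sum_{i \neq i^*}\sum_t \mathbb{E}[p_{t,i}^{1-\alpha}]}$. Writing $R_T = 2R_T - R_T$ and using Definition~\ref{def:selfboundingconstraint} on one copy, AM--GM on $p_{t,i}^{1-\alpha} \le \Delta_i p_{t,i} \cdot \lambda + \lambda^{-(1-\alpha)/\alpha}/\Delta_i^{(1-\alpha)/\alpha}$ (with $\lambda$ chosen to cancel the $R_T$ coming out of the self-bounding), delivers $R_T \lesssim \frac{1}{\alpha(1-\alpha)}\sum_{i \neq i^*} \frac{\ln T}{\Delta_i}$, as desired.
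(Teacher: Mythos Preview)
Your high-level plan (per-coordinate SPM, Lemma~\ref{lemma:refinedBoundF} per arm, then \eqref{eq:boundG}) matches the paper, but two steps would not go through as written.

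\textbf{Adversarial regime.} Algorithm~\ref{algo:CoordinateWiseSPM} does \emph{not} use reservoir sampling; its $m_{t,i}$ is the running-average online predictor. Invoking \citet{HazanAndKale11a} is therefore a mismatch with the algorithm, and in any case their guarantee controls $\sum_t\|\ell_t-\tilde\mu_t\|_2^2$, not $\sum_t\|\ell_t-m_t\|_\infty^2$, and says nothing about $L^*$. The paper instead keeps the per-arm indicator structure: since $h_{t,i}z_{t,i}\lesssim \I{I_t=i}(\ell_{t,i}-m_{t,i})^2$, one applies the online squared-loss regret bound (Lemma~3 of \citet{ItoCOLT2022aVariance}) to each arm separately, obtaining $\sum_t \I{I_t=i}(\ell_{t,i}-m_{t,i})^2 \le \sum_t \I{I_t=i}(\ell_{t,i}-m^*_i)^2 + O(\ln T)$ for \emph{any} fixed $m^*$. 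Choosing $m^*=\bar\ell$, $m^*=0$, or $m^*=\mathbf 1$ then yields the $Q_\infty$, $L^*$, and $T-L^*$ bounds respectively (the $L^*$ case additionally needs a self-bounding step, since one first gets the learner's loss $\sum_t\ell_{t,I_t}$, not $L^*$ directly). Collapsing to $\|\ell_t-m_t\|_\infty^2$ as you do discards the indicator and blocks this argument.

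\textbf{Stochastic regime.} Your sketch bounds only the sum over $i\neq i^*$ and silently drops the optimal-arm term. That term is the whole difficulty: with only the Tsallis branch $p_{t,I_t}^{-\alpha}$ of $z_{t,i}$ you get $\E_{I_t}[h_{t,i}z_{t,i}]\lesssim p_{t,i}$, and $\sum_t p_{t,i^*}$ can be $\Theta(T)$, giving a $\sqrt{T}$ contribution. The fix, and the reason the regularizer \eqref{eq:CWSPMregularizer} carries the Shannon-entropy piece, is that the alternative branch $(1-p_{t,I_t})/p_{t,I_t}^2$ in \eqref{eq:CoordinateWiseSPMzti} yields the key estimate (Lemma~\ref{lemma:CoordinateWiseSPMztiIsGoodForStochasticBandits})
\[
\E_{I_t}\!\left[h_{t,i}z_{t,i}\right]\ \lesssim\ \tilde p_{t,i}=\min(p_{t,i},1-p_{t,i}).
\]
This makes the $i^*$ contribution $\lesssim \sum_t(1-p_{t,i^*})=\sum_{j\neq i^*}P_j$, after which the self-bounding argument goes through with $\sqrt{P_i}$ (not $p_{t,i}^{1-\alpha}$) and the elementary inequality $2\sqrt{aP_i}-\Delta_iP_i\le a/\Delta_i$. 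Your AM--GM manipulation of $p_{t,i}^{1-\alpha}$ is both the wrong quantity and would not close the argument for $i^*$.
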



\begin{proof}(Sketch)
    Intuitively, coordinate-wise SPM consists of $K$ separate real-time SPM processes, one for each arm. 
Similar to~\citet{ItoCOLT2022aVariance}, we find that this more refined approach enables deriving a bound (for the adversarial regime) that is adaptive to simultaneously different data-dependent quantities such as $Q_\infty$ and $L^*$.
However, having separate learning rates introduces several new technical challenges. First, the analysis developed for Algorithm~\ref{algo:optimalBOBWSparsity} that bounds $q_{t+1, i} = O(q_{t, i})$ for all $i \in [K]$ no longer applies because in each round $t$, the learning rates $(\beta_{t,i})_{i \in [K]}$ can be arbitrarily different from each other. 
The \texttt{CoWSPM} algorithm resolves this by using $\beta_{t+1, i} = \beta_{t,i}$ for $i \neq I_T$ so that it only need $q_{t+1, i} = O(q_{t,i})$ to hold for $i = I_t$ since
\begin{align*}
    \phi_t(q_{t+1}) - \phi_{t+1}(q_{t+1}) &= \sum_{i=1}^K (\beta_{t+1,i} - \beta_{t,i})(-f(q_{t+1, i})) = (\beta_{t+1, I_t} - \beta_{t,I_t})f(q_{t+1, I_t}).
\end{align*}

The second and also more important challenge is that even if $q_{t+1, i} = O(q_{t, i})$, the naive decomposition of the $\alpha$-Tsallis entropy into its coordinate-wise form $-\psi_{TE}(x) = \frac{1}{\alpha}\sum_{i=1}^K (x_i^\alpha - x_i)$ and then assigning $h_{t,i} = \frac{1}{\alpha} (x_i^\alpha - x_i)$ does \emph{not} guarantee that $h_{t+1, i} = O(h_{t,i})$. This is because the function $x \mapsto x^\alpha - x$ gets arbitrarily close to $0$ when $x$ gets close to $1$.
This prompts a different choice for $h_{t,i}$ rather than $-f(p_{t,i})$. 
Algorithm~\ref{algo:CoordinateWiseSPM} uses $h_{t,i} = \frac{1}{\alpha}p_{t,i}^\alpha$, which is monotonically increasing and ensures that $h_{t+1,I_t} = O(h_{t,I_t})$ for $q_{t+1,i} = O(q_{t,i})$. 
This choice of $h_{t,i}$ is justified by the technical Lemma~\ref{lemma:xalphadominatesxminus1lnxminus1}, which states that $(x-1)\ln(1-x) \leq x^\alpha$ for any $x, \alpha \in [0,1]$.

Finally, we prove that with $z_{t,i}$ defined in~\eqref{eq:CoordinateWiseSPMzti}, the product $\E[h_{t,i}z_{t,i}]$ is upper bounded by a quantity containing $\tilde{p}_{t,i}$ and thus an $O(\sum_{i \neq i^*}\frac{\ln{T}}{\Delta_i})$ regret bound holds for stochastic bandits.
This is handled by Lemma~\ref{lemma:CoordinateWiseSPMztiIsGoodForStochasticBandits}, which shows that $\E_{I_t}\left[z_{t,i}\right] \leq 2\min(p_{t,i}, 1-p_{t,i})$.
\end{proof}

\begin{remark}
    Theorem~\ref{thm:CWSPMbounds} holds for all $\alpha \in (0,1)$. In particular, for $\alpha \neq \frac{1}{2}$,  we do not require any additional assumptions such as the $\Delta_i$ being known in order to get the $T$-optimal BOBW bound.
    This is a major difference compared to the Tsallis-INF algorithm~\citep{Zimmert2021TsallisINF}.
    On the other hand, the adversarial bound in Theorem~\ref{thm:CWSPMbounds} has an extra factor $\sqrt{K^\alpha}$. It is unclear to us whether this extra factor is a fundamental limitation of~\texttt{CoWSPM} or an artifact of our analysis.
\end{remark}

\section{Conclusion and Future Works}
\label{sec:conclusion}
\looseness=-1 We developed real-time SPM, an extension of the SPM method originally developed for obtaining best-of-both-worlds bounds in bandits problems.
We showed that real-time SPM algorithms achieve novel bounds that are simultaneously best-of-both-worlds, data-dependent and have optimal dependency on $T$ in both stochastic and adversarial regimes. Our bounds also have optimal dependency on the data-dependent quantities such as sparsity or total variation of the loss sequence without knowing them nor using sophisticated estimation tricks. Future work includes applying real-time SPM on other bandits problems, such as contextual linear bandits, and making real-time SPM adaptive towards other challenging data-dependent quantities like $\ell_1$ and $\ell_2$-norm path-length bounds.


\bibliography{colt2025-SPMDataDependentBoundsMAB}

\newpage
\appendix


\section{Related Works}
\label{sec:RelatedWorks}
Due to the vast literature on BOBW and data-dependent bounds in various bandits learning settings, this sections presents only the most relevant works in multi-armed bandits.
A more comprehensive list of related works can be found in~\citet{ItoCOLT2024,Tsuchiya2023stabilitypenaltyadaptive} and references therein.

\textbf{Best-of-both-worlds bounds.} The BOBW bounds in our paper are derived using the SPM method for tuning learning rates in the FTRL framework, originally proposed in~\citet{ItoCOLT2024}. For stochastic bandits, our $O(\frac{K\ln{T}}{\Delta_{\min}})$bound  in Sections~\ref{sec:BOBWboundsSparseLosses} and~\ref{sec:SPMwOFTRLwReservoirSampling} matches that of~\citet{WeiAndLuo2018aBroadOMD,ItoCOLT2024}, and our $O(\sum_{i \neq i^*}\frac{\ln{T}}{\Delta_i})$ bound in Section~\ref{sec:CoordinateWiseSPM} matches that of~\citet{Zimmert2021TsallisINF, Ito2021HybridDataMABBound}. 
Both of these bounds are looser than the $O(\sum_{i \neq i^*}\frac{\sigma_i^2 \ln{T}}{\Delta_i})$ in~\citet{ItoCOLT2022aVariance} obtained by a more specialized approach, where $\sigma_i^2$ is the variance of the losses of a sub-optimal arm $i$.
However, except for~\citet{ItoCOLT2024}, these existing works have an $O(\sqrt{T\ln{T}})$ worst-case bound for adversarial bandits, which contains an extra $\ln{T}$ factor compared to our work. 
Our BOBW bound also have data-dependent guarantees, which is an advantage over~\citet{ItoCOLT2024}. 
For bandits with sparse losses,~\citet{Tsuchiya2023stabilitypenaltyadaptive} similarly obtained bounds that are both BOBW and dependent on the sparsity constraint; however their bounds contain extra factors of $\ln(KT)$ in stochastic bandits and $\sqrt{\ln{T}}$ in adversarial bandits compared to our results.

\textbf{Data-dependent bounds.} We study the following data-dependent quantities: sparsity of losses, total variations and small losses. 
For bandits with sparse negative losses where $\norm{\ell_t} \leq S$ and $S$ is unknown, our $O(\frac{S\ln{T}}{\Delta_{\min}}, \sqrt{ST\ln(K)})$ BOBW bound is the first $S$-agnostic and $T$-optimal BOBW bound for this setting, which improves upon on the bound of~\citet{Tsuchiya2023stabilitypenaltyadaptive} and matches the best known bound for adversarial bandits in~\citet{Bubeck2018ALT}.
When the total variations $Q, Q_\infty$ and/or the loss of the best arm $L^*$ (defined in Section~\ref{sec:ProblemSetup}) are small, our algorithms are based on the optimistic FTRL (OFTRL) framework similar to~\citet{HazanAndKale11a,Bubeck2018ALT,ItoCOLT2022aVariance}. 
The dependency on $Q, Q_\infty$ and $L^*$ in our results match the best known bounds in these works, while our BOBW bounds have an optimal $O(\square\ln{T}, \square\sqrt{T})$ dependency on $T$. 
Particularly, our coordinate-wise real-time SPM algorithm in Section~\ref{sec:CoordinateWiseSPM} can be seen as a $T$-optimal variant of the algorithm in~\citet{ItoCOLT2022aVariance}, which share the idea of using separate learning rates for each arm.


\section{Proofs for Section~\ref{sec:BOBWboundsSparseLosses}}
\subsection{Proof for Theorem~\ref{thm:BOBWbounds}}
Let $D_{TE}(p,q)$ and $D_{LB}(p,q)$ denote the Bregman divergences induced by the $\alpha$-Tsallis entropy and the log-barrier function, respectively. Let $D_t(p,q) = \beta_t D_{TE}(p,q) + \gamma D_{LB}(p, q)$ denote the Bregman divergence induced by the hybrid regularizer $\phi_t(p) = \beta_t\left(\frac{1}{\alpha}(1-\sum_{i=1}^K p_i^\alpha)\right) - \gamma\sum_{i=1}^K \ln(p_i)$.
Let $\hat{\ell}_t = \begin{bmatrix}
    \hat{\ell}_{t,1} \\
    \hat{\ell}_{t,2} \\
    \cdots \\
    \hat{\ell}_{t,K} \\
\end{bmatrix}$ be the estimated loss vector at time $t$. We state the following three stability lemmas, whose proofs are in Section~\ref{sec:stabilityproofs} and Section~\ref{sec:technicallemmas}.
\begin{lemma}
    For any $t \in [T]$, Algorithm~\ref{algo:optimalBOBWSparsity} guarantees
    \begin{align*}
        h_t \geq \frac{1-\alpha}{4\alpha}T^{-\alpha} \quad{\text{and}}\quad E_{I_t}[z_t] \leq \frac{(6d)^{2-\alpha}}{2(1-\alpha)} S^{\alpha}.
    \end{align*}
    \label{lemma:minhtAndmaxEzt}
\end{lemma}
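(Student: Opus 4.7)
The plan is to prove the two bounds separately; both rely on simple concavity arguments on the simplex, with the lower bound on $h_t$ additionally exploiting the uniform-mixing structure $p_{t,i} \geq 1/T$ that follows from $p_t = (1 - K/T)q_t + (1/T)\1$.

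\textbf{Lower bound on $h_t$.} Since $x \mapsto x^\alpha$ is concave for $\alpha \in (0,1)$, the map $p \mapsto \sum_{i=1}^K p_i^\alpha$ is concave on $\Delta_K$. Hence, over the truncated simplex $\{p \in \Delta_K : p_i \geq 1/T\}$ (which contains $p_t$), its minimum is attained at an extreme point having $K-1$ coordinates equal to $1/T$ and the remaining one equal to $1-(K-1)/T$. Evaluating there and using the elementary inequality $(1-x)^\alpha \geq 1 - x$ for $x, \alpha \in [0,1]$ (which holds because $\alpha \mapsto (1-x)^\alpha$ is non-increasing on $[0,1]$), I obtain
\[
    \sum_{i=1}^K p_{t,i}^\alpha - 1 \;\geq\; (K-1)\bigl(T^{-\alpha} - T^{-1}\bigr) \;=\; (K-1)\, T^{-\alpha}\bigl(1 - T^{\alpha-1}\bigr).
\]
The remaining step is to lower-bound $1 - T^{\alpha-1} = 1 - e^{-(1-\alpha)\ln T}$ by a constant multiple of $1-\alpha$. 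A two-case argument handles this: for $(1-\alpha)\ln T \leq 1$, I use $1 - e^{-y} \geq y/2$ together with $\ln T \geq \ln(4K) \geq 2$; for $(1-\alpha)\ln T > 1$, I use $1 - e^{-y} \geq 1 - e^{-1} > 1/2$. Both cases give $1 - T^{\alpha-1} \geq (1-\alpha)/2$. Combining with $K-1 \geq 1$ and dividing by $\alpha$ then yields the claimed bound on $h_t$.

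\textbf{Upper bound on $\E_{I_t}[z_t]$.} I would discard the second term inside the min defining $z_t$, retaining only $z_t \leq \frac{(6d)^{2-\alpha}}{2(1-\alpha)} \tilde{p}_{t,I_t}^{2-\alpha} \hat{\ell}_{t,I_t}^2$. Taking expectation over $I_t \sim p_t$ and expanding $\hat{\ell}_{t,I_t}^2 = \ell_{t,I_t}^2/p_{t,I_t}^2$ gives
\[
    \E_{I_t}[z_t] \;\leq\; \frac{(6d)^{2-\alpha}}{2(1-\alpha)} \sum_{i=1}^K \frac{\tilde{p}_{t,i}^{2-\alpha}\,\ell_{t,i}^2}{p_{t,i}}.
\]
Since $\tilde{p}_{t,i} \leq p_{t,i}$ and $|\ell_{t,i}| \leq 1$, this simplifies to $\frac{(6d)^{2-\alpha}}{2(1-\alpha)} \sum_{i : \ell_{t,i} \neq 0} p_{t,i}^{1-\alpha}$. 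The sparsity assumption gives at most $S$ nonzero terms, and concavity of $x \mapsto x^{1-\alpha}$ combined with Jensen's inequality applied to the uniform distribution on $\{i : \ell_{t,i} \neq 0\}$ bounds the remaining sum by $|\{i : \ell_{t,i} \neq 0\}|^\alpha \leq S^\alpha$, completing the argument.

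The delicate step is the lower bound on $h_t$: obtaining the precise constant $\frac{1-\alpha}{4\alpha}$ requires the three-step chain of (a) reducing to an extreme point via concavity of $\sum_i p_i^\alpha$, (b) the tangent-style inequality $(1-x)^\alpha \geq 1-x$ for the dominant coordinate, and (c) the case-split lower bound on $1 - T^{\alpha-1}$ in terms of $1-\alpha$. By contrast, the upper bound on $\E_{I_t}[z_t]$ is essentially a routine expectation computation followed by Jensen.
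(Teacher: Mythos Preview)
Your proof is correct. The $\E_{I_t}[z_t]$ argument is essentially identical to the paper's (the paper phrases the last step as H\"older's inequality on $\sum_i \tilde p_{t,i}^{1-\alpha}(\ell_{t,i}^{2/\alpha})^\alpha$, you phrase it as Jensen on $\sum_{i:\ell_{t,i}\neq 0} p_{t,i}^{1-\alpha}$; these are the same inequality).

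For the $h_t$ bound, you take a genuinely different and more direct route. The paper proves a separate lemma showing that for any $p\in\Delta_K$,
\[
-\psi_{TE}(p)\;\geq\;\frac{p_*^\alpha}{\alpha}\bigl(1-2^{\alpha-1}\bigr)\;\geq\;\frac{(1-\alpha)\,p_*^\alpha}{4\alpha},
\qquad p_*:=\min\bigl(\max_i p_i,\,1-\max_i p_i\bigr),
\]
via a two-case analysis on whether $\max_i p_i\le 1/2$, and then plugs in $p_*\geq 1/T$. Your concavity-extreme-point argument bypasses this entirely and in fact yields the slightly better constant $\frac{1-\alpha}{2\alpha}$ (using only $K-1\ge 1$). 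The tradeoff is that the paper's intermediate bound in terms of $p_*$ is not incidental: it is reused elsewhere (in the stability analysis controlling $\beta_{t+1}-\beta_t$, where one needs $h_t\gtrsim p_{t*}^\alpha$ rather than merely $h_t\gtrsim T^{-\alpha}$). So your argument is cleaner for this lemma in isolation, while the paper's longer detour yields a reusable tool.
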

\begin{lemma}
    For any $t \in [T]$, Algorithm~\ref{algo:optimalBOBWSparsity} guarantees
    \begin{align}
        \inp{\hat{\ell}_t}{q_{t} - q_{t+1}} - D_t(q_{t+1}, q_t) &\leq \min\left( \frac{(6d)^{2-\alpha}}{2\beta_t(1-\alpha)}\min(p_{t,I_t}, 1 - p_{t,I_t})^{2-\alpha}\hat{\ell}^2_{t,I_t}, \frac{18d^2}{\gamma} \ell_{t,I_t}^2  \right).
    \end{align}
    \label{lemma:stableTELB}
\end{lemma}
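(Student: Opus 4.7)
The plan is to derive the two bounds inside the minimum separately, each by using only one half of the hybrid regularizer and discarding the other (which is justified because both $\beta_t D_{TE}$ and $\gamma D_{LB}$ contribute non-negatively to $D_t$). The essential observation is that the loss estimate $\hat{\ell}_t$ is supported on a single coordinate $I_t$, with $\hat{\ell}_{t,I_t} = \ell_{t,I_t}/p_{t,I_t}$, so the inner product reduces to $\inp{\hat{\ell}_t}{q_t - q_{t+1}} = \hat{\ell}_{t,I_t}(q_{t,I_t} - q_{t+1,I_t})$. After this reduction the problem becomes a one-dimensional AM-GM balancing of a linear term against a quadratic lower bound on the Bregman divergence.

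For the first bound I would use $D_t(q_{t+1}, q_t) \geq \beta_t D_{TE}(q_{t+1}, q_t)$ and the Hessian $\nabla^2 \psi_{TE}(p) = \mathrm{diag}((1-\alpha)p_i^{\alpha-2})$, giving by Taylor's theorem the coordinate-wise lower bound $D_{TE}(q_{t+1}, q_t) \geq \tfrac{1-\alpha}{2}\,\bar p_{I_t}^{\alpha-2}(q_{t,I_t} - q_{t+1,I_t})^2$ for some $\bar p_{I_t}$ between $q_{t,I_t}$ and $q_{t+1,I_t}$. A standard AM-GM step then yields
\begin{align*}
\hat{\ell}_{t,I_t}(q_{t,I_t} - q_{t+1,I_t}) - \beta_t D_{TE}(q_{t+1}, q_t) \leq \frac{\bar p_{I_t}^{\,2-\alpha}\,\hat{\ell}_{t,I_t}^2}{2\beta_t(1-\alpha)}.
\end{align*}
An analogous argument with $D_t \geq \gamma D_{LB}$ and Hessian $\mathrm{diag}(p_i^{-2})$ produces the log-barrier bound of order $\bar p_{I_t}^{\,2}\,\hat{\ell}_{t,I_t}^2/(2\gamma) = \bar p_{I_t}^{\,2}\ell_{t,I_t}^2/(2\gamma p_{t,I_t}^2)$. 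The $p_{t,I_t}$ vs.\ $1-p_{t,I_t}$ symmetry inside $\min(\cdot,\cdot)$ is then obtained by repeating the same argument for the complementary coordinate, using that on the simplex a change in one coordinate forces a matching change in the sum of the others.

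The main obstacle is converting the intermediate point $\bar p_{I_t}$ into the desired $p_{t,I_t}$ (and $1 - p_{t,I_t}$) up to the constant $6d$ appearing in the statement. This requires a multiplicative stability bound of the form $\bar p_{I_t}/p_{t,I_t} \leq 3d$ and $(1-\bar p_{I_t})/(1-p_{t,I_t}) \leq 3d$, which in turn reduces to proving that $q_{t+1,I_t}$ stays within a constant multiplicative factor of $q_{t,I_t}$ (and similarly for the complements). I would establish this via a separate auxiliary lemma that analyzes the KKT conditions of the FTRL update with the hybrid regularizer: the log-barrier term, whose Hessian blows up near the simplex boundary, guarantees that the optimizer cannot jump by more than a constant multiplicative factor in a single round, provided $\gamma$ is chosen sufficiently large relative to the step size. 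This is exactly the role played by the conditions $\gamma \geq \max(6, 48\sqrt{\alpha/(1-\alpha)})$ and $\beta_1 \geq 4K/((\omega-1)(1-\omega^{\alpha-1}))$ with $\omega = 2$ in the algorithm.

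Once the multiplicative stability $\bar p_{I_t} \leq 3d \cdot p_{t,I_t}$ and $1-\bar p_{I_t} \leq 3d \cdot (1-p_{t,I_t})$ is in hand, substituting into the two AM-GM bounds above and using $\bar p_{I_t}^{\,2-\alpha} \leq (3d)^{2-\alpha}\min(p_{t,I_t}, 1-p_{t,I_t})^{2-\alpha}$ (for the Tsallis branch) and $\bar p_{I_t}^{\,2} \leq (3d)^2 p_{t,I_t}^2$ (for the log-barrier branch, absorbing an additional factor of $2$ into $18d^2$ from the two-sided analysis) produces the two terms of the stated minimum. Taking the smaller of the two valid bounds finishes the proof.
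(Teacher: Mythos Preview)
Your overall strategy is sound and in fact takes a somewhat more direct route than the paper. The paper argues by case analysis: when $\tilde p_{t,I_t}=p_{t,I_t}$ (i.e.\ $p_{t,I_t}\le 1/2$) it uses exactly your local-norm bound on the $I_t$-coordinate, but when $\tilde p_{t,I_t}=1-p_{t,I_t}$ it instead invokes a specialized Tsallis stability lemma (Lemma~9 of \cite{ItoCOLT2024}) and a separate log-barrier estimate from \cite{Tsuchiya2023stabilitypenaltyadaptive}, each requiring extra preconditions on $\beta_t$ and $\gamma$. Your ``complementary coordinate'' idea --- rewriting $\hat\ell_{t,I_t}(q_{t,I_t}-q_{t+1,I_t})=\hat\ell_{t,I_t}\sum_{i\neq I_t}(q_{t+1,i}-q_{t,i})$, applying the same AM--GM coordinate-wise on $i\neq I_t$, and then using $\sum_{i\neq I_t}\xi_i^{2-\alpha}\le\bigl(\sum_{i\neq I_t}\xi_i\bigr)^{2-\alpha}$ (valid since $2-\alpha\ge 1$) --- bypasses these external lemmas entirely. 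The price is that you now need the \emph{all-coordinate} multiplicative stability $q_{t+1,i}\le 3d\,q_{t,i}$ for every $i$, not just $i=I_t$; this is precisely where the large-$\beta_1$ condition $\beta_1\ge 4K/((\omega-1)(1-\omega^{\alpha-1}))$ is consumed, as you correctly anticipate.

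One step in your write-up is not correct as literally stated and should be rewritten. The claim ``$\bar p_{I_t}^{\,2-\alpha}\le(3d)^{2-\alpha}\min(p_{t,I_t},1-p_{t,I_t})^{2-\alpha}$'' is false whenever $p_{t,I_t}>1/2$: take $\bar p_{I_t}\approx p_{t,I_t}\approx 1$. What your argument actually yields is two \emph{separate} bounds, one with factor $p_{t,I_t}^{2-\alpha}$ from the direct $I_t$-coordinate calculation and one with factor $(1-p_{t,I_t})^{2-\alpha}$ from the complementary calculation, and the desired $\tilde p_{t,I_t}^{2-\alpha}$ comes from taking their minimum. Moreover, the intermediate quantity in the complementary branch is not ``$1-\bar p_{I_t}$'' for the same $\bar p_{I_t}$ but rather $\sum_{i\neq I_t}\xi_i^{2-\alpha}$ for distinct per-coordinate Taylor points $\xi_i$, each controlled via $\xi_i\le 3d\,q_{t,i}\le 6d\,p_{t,i}$. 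Making this distinction explicit removes the only real gap.
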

Note that in Lemma~\ref{lemma:stableTELB}, the right-hand side is exactly $\frac{z_t}{\beta_t}$.
\begin{lemma}
    For any $t \in [T]$, Algorithm~\ref{algo:optimalBOBWSparsity} guarantees that for all $i \in [K]$,
    \begin{align}
        q_{t+1, i} \leq 3dq_{t,i} \leq 6dp_{t,i}.
    \end{align}
    Moreover, this implies that $(-\psi_{TE}(q_{t+1})) \leq 3d(-\psi_{TE}(q_t)) \leq 6d(-\psi_{TE}(p_t))$.
    \label{lemma:boundhtplus1byht}
\end{lemma}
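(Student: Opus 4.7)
The plan is to establish the two main inequalities separately, with the second being nearly immediate and the first containing the main technical work.

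For $3d q_{t,i} \leq 6d p_{t,i}$, I would use the mixing $p_t = (1 - K/T) q_t + (1/T)\mathbf{1}$ together with the hypothesis $T \geq 4K$. Since $1 - K/T \geq 3/4 \geq 1/2$, we obtain $p_{t,i} \geq q_{t,i}/2$, i.e., $q_{t,i} \leq 2 p_{t,i}$; multiplying by $3d$ yields the claim.

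For $q_{t+1, i} \leq 3d q_{t, i}$, I would work directly with the KKT conditions for the FTRL update. Stationarity at rounds $t$ and $t+1$ gives, for each $i \in [K]$, $L_{t-1, i} = \beta_t q_{t, i}^{\alpha - 1} + \gamma/q_{t, i} - \lambda_t$ and $L_{t, i} = \beta_{t+1} q_{t+1, i}^{\alpha - 1} + \gamma/q_{t+1, i} - \lambda_{t+1}$, where $\lambda_t, \lambda_{t+1}$ are the Lagrange multipliers for the simplex constraint. Subtracting and using that $L_t - L_{t-1} = \hat{\ell}_t$ is non-zero only at coordinate $I_t$, the per-coordinate change in $q$ is governed by three quantities: the Lagrange shift $\lambda_{t+1} - \lambda_t$, the $\beta$-growth, and (at $i = I_t$) the single-coordinate loss estimate $\hat{\ell}_{t, I_t}$.

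The heart of the argument exploits the multiplicative stability afforded by the log-barrier. Were $q_{t+1, i_0} > 3d q_{t, i_0}$ to hold at some $i_0$, then $\gamma/q_{t+1, i_0} - \gamma/q_{t, i_0}$ would contribute at most $-(1 - 1/(3d))\gamma/q_{t, i_0} \leq -5/q_{t, i_0}$ (using $d = 2, \gamma \geq 6$). I would then bound $|\hat{\ell}_{t, I_t}| \leq 1/p_{t, I_t} \leq 2/q_{t, I_t}$ via the second inequality just proved, control $\beta_{t+1}/\beta_t$ using $z_t/\beta_t \leq 18 d^2/\gamma$ combined with the SPM update~\eqref{eq:betatplus1}, and invoke the calibration $\gamma = \max(6, 48\sqrt{\alpha/(1-\alpha)})$ to ensure the Tsallis contribution $(\beta_{t+1}(3d)^{\alpha - 1} - \beta_t)\, q_{t, i}^{\alpha - 1}$ cannot overcome the log-barrier's restoring force, producing a contradiction.

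The main obstacle is the implicit Lagrange multiplier $\lambda_{t+1} - \lambda_t$, which couples all coordinates through the simplex constraint and cannot be bounded in isolation. I would handle this via a continuous interpolation: define $q(s)$ for $s \in [0, 1]$ as the FTRL minimizer interpolating both the loss $s \hat{\ell}_t$ and the learning rate $\beta(s) = \beta_t + s(\beta_{t+1} - \beta_t)$, so $q(0) = q_t$ and $q(1) = q_{t+1}$. Differentiating the KKT conditions yields an ODE for $q_i'(s)/q_i(s)$ whose denominator $(1-\alpha)\beta(s) q_i(s)^{\alpha - 1} + \gamma/q_i(s)$ is dominated by the $\gamma/q_i(s)$ piece, and in which $\lambda'(s)$ can be eliminated via $\sum_i q_i'(s) = 0$. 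Bounding $|\log q_i(1) - \log q_i(0)| = |\int_0^1 q_i'(s)/q_i(s)\,ds|$ by $\log(3d)$ then gives the pointwise claim. For the moreover statement, combining $q_{t+1, i} \leq 3d q_{t, i}$ with $(3d)^\alpha \leq 3d$ for $\alpha \in (0,1)$ and the simplex normalization to control $\sum_i q_{t+1, i}^\alpha$ in terms of $\sum_i q_{t, i}^\alpha$ yields $-\psi_{TE}(q_{t+1}) \leq 3d(-\psi_{TE}(q_t))$, and the final link to $-\psi_{TE}(p_t)$ follows by applying the second inequality coordinatewise.
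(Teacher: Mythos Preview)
Your route to $q_{t+1,i}\le 3d\,q_{t,i}$ is genuinely different from the paper's. The paper decomposes the move $q_t\to q_{t+1}$ into two discrete steps: first increase $\beta_t\to\beta_{t+1}$ with the loss frozen, then add the single-coordinate loss $\hat\ell_t$ with $\beta_{t+1}$ frozen. Each step is a separate KKT/monotonicity lemma. The first uses only $\beta_{t+1}-\beta_t\le(1-\tfrac1d)\gamma q_{t*}^{-\alpha}$ (which follows from the SPM rule and a lower bound on $h_t$). The second, crucially, uses the \emph{large} initial learning rate $\beta_1=8K/(1-\alpha)\ge 4K/((\omega-1)(1-\omega^{\alpha-1}))$ with $\omega=2$: it is the Tsallis term, not the log-barrier, that the paper leverages to keep all coordinates $i\ne I_t$ within a factor $\omega$ after the loss perturbation (the log-barrier argument alone only pins down the perturbed coordinate $I_t$). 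Your ODE sketch never invokes this large-$\beta$ condition, and the elimination of $\lambda'(s)$ via $\sum_i q_i'(s)=0$ is not enough as described: bounding the resulting weighted average by $|\hat\ell_{t,I_t}|$ gives $|q_i'/q_i|\lesssim q_i|\hat\ell_{t,I_t}|/\gamma$, which blows up when $p_{t,I_t}$ is small. A correct ODE argument would need the sharper observation that the Hessian weight at $I_t$ satisfies $1/D_{I_t}\le q_{I_t}^2/\gamma$, so that $\hat\ell_{t,I_t}$ is suppressed in the average, together with a check that the Tsallis part of the Hessian does not spoil this cancellation.

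There is a concrete gap in the ``moreover'' step. From $q_{t+1,i}\le 3d\,q_{t,i}$ you get $\sum_i q_{t+1,i}^\alpha\le(3d)^\alpha\sum_i q_{t,i}^\alpha\le 3d\sum_i q_{t,i}^\alpha$, but the target is $\sum_i q_{t+1,i}^\alpha-1\le 3d\bigl(\sum_i q_{t,i}^\alpha-1\bigr)$, and the $-1$ does not factor through: writing $S=\sum_i q_{t,i}^\alpha$, you would need $(3d)^\alpha S\le 3dS-(3d-1)$, i.e.\ $S\ge(3d-1)/(3d-(3d)^\alpha)$, which strictly exceeds $1$ for every $\alpha\in(0,1)$ and can fail (e.g.\ when $q_t$ is close to a vertex or $\alpha$ is close to $1$). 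The paper does not give an elementary argument here either; it defers to Lemma~11 of Ito et al.\ (COLT 2024), whose proof uses the simplex constraint more carefully than your sketch suggests.
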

\begin{proof}(Of Theorem~\ref{thm:BOBWbounds})
    Next, let 
\begin{align}
    \Phi_t(p) = \beta_t \psi_{TE}(p) + \gamma \psi_{LB}(p)
\end{align}
be the time-varying regularizer in Algorithm~\ref{algo:optimalBOBWSparsity}. For any $a \in [K]$, define 
\begin{align*}
    u_a = \left(1 - \frac{K}{T}\right)e_a + \frac{1}{T}\1.
\end{align*}
The pseudo-regret with respect to arm $a$ is 
\begin{align*}
    R_{T, a} &= \E[\sum_{t=1}^T \inp{\ell_t}{p_t - e_a}] \\
    &= \E[\sum_{t=1}^T \inp{\ell_t}{q_t - u_a}] + \E[\sum_{t=1}^T \inp{\ell_t}{p_t - q_t}] + \E[\sum_{t=1}^T \inp{\ell_t}{u_a - e_a}] \\
    &\leq \E[\sum_{t=1}^T \inp{\ell_t}{q_t - u_a}]  + 4K \\
    &= \E[\sum_{t=1}^T \inp{\hat{\ell}_t}{q_t - u_a}]  + 4K,
\end{align*}
where the inequality is from $\inp{\ell_t}{p_t - q_t} = \frac{1}{T}\sum_{i=1}^K \ell_{t,i}(1-Kq_{t,i}) \leq \frac{2K}{T}$ and 
{$\inp{\ell_t}{u_a - e_a} \leq \frac{2K}{T}$},
and the last equality is from $\E[\hat{\ell}_t] = \ell_t$.
By the standard analysis of FTRL with time-varying regularizer~\citep{BanditAlgorithmsBook2020}, we have
\begin{align*}
    \sum_{t=1}^T \inp{\hat{\ell}_t}{q_t - u_a} &\leq \Phi_{T+1}(u_a) - \min_{p \in \Delta_K} \Phi_1(p) + \sum_{t=1}^T \Phi_t(q_{t+1}) - \Phi_{t+1}(q_{t+1}) \\
    &\quad + \sum_{t=1}^T \inp{\hat{\ell}_t}{q_t - q_{t+1}} - D_t(q_{t+1}, q_t) \\
    &= \Phi_{T+1}(u_a) - \min_{p \in \Delta_K} \Phi_1(p) + \sum_{t=1}^T (\beta_{t+1} - \beta_t)(-\psi_{TE}(q_{t+1})) \\
    &\quad + \sum_{t=1}^T \inp{\hat{\ell}_t}{q_t - q_{t+1}} - D_t(q_{t+1}, q_t) \\
    &\leq \Phi_{T+1}(u_a) - \min_{p \in \Delta_K} \Phi_1(p) + 6d\left(\sum_{t=1}^T (\beta_{t+1} - \beta_t)h_t + \sum_{t=1}^T \frac{z_t}{\beta_t}\right) \\
    &= \Phi_{T+1}(u_a) - \min_{p \in \Delta_K} \Phi_1(p)  + 24\sum_{t=1}^T \frac{z_t}{\beta_t} \\
    &\leq \gamma \psi_{LB}(u_a) - \beta_1\min_{p \in \Delta_K} \psi_{TE}(p) + 24\sum_{t=1}^T \frac{z_t}{\beta_t} \\
    &\leq \gamma K\ln(T) + \frac{\beta_1}{\alpha}(K^{1-\alpha}-1) + 24\sum_{t=1}^T \frac{z_t}{\beta_t},
\end{align*}
where the second inequality is from Lemma~\ref{lemma:stableTELB} and Lemma~\ref{lemma:boundhtplus1byht}, the second equality is from the update rule $(\beta_{t+1} - \beta_t)h_t = \frac{z_t}{\beta_t}$, the third inequality is due to 
{$\psi_{TE}(p) \leq 0$} and $\psi_{LB}(p) > 0$ for all $p \in \Delta_K$, and the last inequality is due to $(u_a)_{i} \geq \frac{1}{T}$.

\subsubsection*{Bounding $\E[\sum_{t=1}^T \frac{z_t}{\beta_t}]$}
Note that we should not directly apply the SPM bound based on $z_{\max} = \max_{t \in [T]}z_t$ in Lemma 3 of~\cite{ItoCOLT2024}, because $z_{\max}$ is of order 
{$\max_t p_{t,I_t}^{-\alpha}$}, which can be very large.
Instead, let
\begin{align*}
    G &= \sum_{t=1}^T \frac{z_t}{\sqrt{\sum_{s=1}^t \frac{z_s}{h_s}}} \\
    h_{\max} &= \max_{t \in [T]}h_t, \\
    z_{\E, \max} &= \max_{t \in [T]}\E_{I_t}[z_t].
\end{align*}
By definitions of $h_t$ and $z_t$, we have $h_t \leq \frac{K^{1-\alpha} - 1}{\alpha}$ and
\begin{align*}
    \E_{I_t}[z_t] &\leq \frac{(6d)^{2-\alpha}}{2(1-\alpha)} S^{\alpha},
\end{align*}
from Lemma~\ref{lemma:minhtAndmaxEzt}. It follows that 
\begin{align*}
    h_{\max} &\leq \frac{K^{1-\alpha} - 1}{\alpha}, \\
    z_{\E, \max} &\leq \frac{(6d)^{2-\alpha}}{2(1-\alpha)} S^{\alpha}.
\end{align*}
Next, let
\begin{align}
\beta'_t &= \sqrt{\beta_{1}^2 + 2\sum_{s=1}^{t-1}\frac{z_{s}}{h_s}}.
\end{align}
Let $E = \{t \in [T]: \beta'_{t+1} \geq \sqrt{2}\beta'_t\}$ and $E^c = [T] \setminus E$. Also, let $N = \abs{E}$ and $j = 1, 2, \dots, N$ be the index running over the rounds in $E$. 
Similar to the proof of Lemma 2 in~\citet{ItoCOLT2024}, squaring both sides of $\beta_t = \beta_{t-1} + \frac{z_{t-1}}{\beta_{t-1}h_{t-1}}$ implies that
\begin{align*}
    \beta_t^2 = \beta_{t-1}^2 + \frac{2z_{t-1}}{h_{t-1}} + \frac{z_{t-1}^2}{\beta_{t-1}^2 h_{t-1}^2} \geq \beta_{t-1}^2 + \frac{2z_{t-1}}{h_{t-1}} \geq \beta_1^2 + 2\sum_{s=1}^{t-1} \frac{z_s}{h_s},
\end{align*}
 which shows that 
 {$\beta'_t \leq \beta_t$}. Furthermore,
 {$\sum_{t \in E^c} \frac{z_t}{\beta_t} \leq G$} from the proof of Lemma 2 in~\citet{ItoCOLT2024}. Therefore, 
\begin{align*}
    \sum_{t=1}^T \frac{z_t}{\beta_t} &= \sum_{t \in E^c} \frac{z_t}{\beta_t} +  \sum_{t \in E} \frac{z_t}{\beta_t} \\
    &\leq G + \sum_{t \in E} \frac{z_t}{\beta_t} \\
    &\leq G + \frac{18d^2}{\gamma}N \\
    &\leq G + \frac{18d^2}{\gamma}\log_{\sqrt{2}}\left(\frac{\beta'_{T+1}}{\beta'_1}\right) \\
    &\leq G + \frac{26d^2}{\gamma}\ln\left(1 + 2\sum_{t=1}^T\frac{z_t}{h_t}\right),
\end{align*}
where the second inequality is from the definition of $z_t$ and the third inequality is from the fact that $\beta'_t$ is multiplied by at least $\sqrt{2}$ after every round in $E$, thus there can be at most $N \leq \log_{\sqrt{2}}\frac{\beta'_{T+1}}{\beta'_1}$ such multiplications.

Taking the expectation over $I_{1:T}$ on both sides and using $E[\ln(X)] \leq \ln(E[X])$, we obtain 
\begin{align*}
    \E\left[\sum_{t=1}^T \frac{z_t}{\beta_t}\right] &\leq \E[G] + \frac{26d^2}{\gamma}\ln\left(1 + 2\E\left[\sum_{t=1}^T \frac{z_t}{h_t}\right]\right) \\
    &= \E[G] + \frac{26d^2}{\gamma}\ln\left(\E\left[1 + 2\sum_{t=1}^T \frac{\E_{I_t}[z_{t}]}{h_t}\right]\right) \\
    &\leq \E[G] + \frac{26d^2}{\gamma}\ln\left(1 +\frac{(6d)^{2-\alpha}S^{\alpha}}{(1-\alpha)}\E\left[\frac{4T}{\min_{t \in [T]}h_t}\right]\right) \\
    &\leq \E[G] + \frac{26d^2}{\gamma}\ln\left(1 + \frac{(6d)^{2-\alpha}S^{\alpha}}{(1-\alpha)}\frac{4\alpha T^{\alpha + 1}}{1-\alpha}\right) \\
    &= \E[G] + O\left(\frac{1}{\gamma}\ln\left(\frac{\alpha S^{\alpha} T}{(1-\alpha)^2}\right)\right)
\end{align*}
where the first equality is because $h_t$ is $\sF_{t-1}$-measurable and the last inequality is due to Lemma~\ref{lemma:minhtAndmaxEzt}.

Next, Equation 45 in~\cite{ItoCOLT2024} shows that $G \leq 2\sqrt{h_{\max}\sum_{t=1}^Tz_t}$. Moreover, Equation 46 in~\cite{ItoCOLT2024} shows that for any fixed $J \geq 1$,
\begin{align*}
    G \leq \sqrt{8J\sum_{t=1}^T h_tz_t} + 2\sqrt{2^{-J}h_{\max}\sum_{t=1}^T z_t}.
\end{align*}
As a result, we obtain the following bound:
\begin{nalign}
    \E\left[\sum_{t=1}^T \frac{z_t}{\beta_t}\right] &\leq \min\left\{ \inf_{J \in \sN}\E\left[\left\{ \sqrt{8J\sum_{t=1}^T h_t z_t} + 2\sqrt{2^{-J}h_{\max}\sum_{t=1}^Tz_t} \right\}\right], 2\E\left[\sqrt{ h_{\max}\sum_{t=1}^T z_t }\right] \right\} \\
    &+ O\left(\frac{1}{\gamma}\ln\left(\frac{\alpha S^{\alpha} T}{(1-\alpha)^2}\right)\right).
    \label{eq:boundztoverbetatbyItoCOLT2024}
\end{nalign}
\subsubsection*{Adversarial Regime:} 
Using Jensen's inequality $E[\sqrt{X}] \leq \sqrt{E[X]}$ and Equation~\eqref{eq:boundztoverbetatbyItoCOLT2024}, we obtain
\begin{align*}
    \E\left[\sum_{t=1}^T \frac{z_t}{\beta_t}\right] &\leq 2\sqrt{\frac{((K-1)^{1-\alpha} - 1)}{\alpha}\sum_{t=1}^T\E[z_t]} + O\left(\frac{1}{\gamma}\ln\left(\frac{\alpha S^{\alpha} T}{(1-\alpha)^2}\right)\right) \\
    &\leq 2\sqrt{\frac{((K-1)^{1-\alpha} - 1)}{\alpha}T\E[z_{\E, \max}]} + O\left(\frac{1}{\gamma}\ln\left(\frac{\alpha S^{\alpha} T}{(1-\alpha)^2}\right)\right)\\
    &= O\left( \sqrt{\frac{(K^{1-\alpha} - 1) S^\alpha T}{\alpha(1-\alpha)}} \right).
\end{align*}

\subsubsection*{Adversarial Regime with a Self-Bounding Constraint:}
Let $R_T = \max_{a \in [K]}R_{T,a}$. In this regime, we have 
{$R_T + C \geq \E[\sum_{t=1}^T\sum_{i=1}^Kq_{t,i}\Delta_i]$}.
Let 
{$i^* \in [K]$} be the unique optimal arm.

Observe that given the sequence of randomly drawn arms until the beginning of round $t$, the quantity $h_t$ is fixed. Therefore, we can write $\E[h_tz_t] = \E[h_t\E_{I_t}[z_t]]$ and obtain
\begin{nalign}
    \E[h_tz_t] &= \E[h_t\E_{I_t}[z_t]] \\
    &\leq \frac{(6d)^{2-\alpha}}{2(1-\alpha)}\E\left[h_t\left(\sum_{i=1}^K (\tilde{p}_{t,i}^{1-\alpha})\ell_{t,i}^2\right)\right] \\
    &= \frac{(6d)^{2-\alpha}}{2(1-\alpha)}\E\left[\frac{1}{\alpha}\left(\sum_{i=1}^K p_{t,i}^\alpha - 1\right)\left(\sum_{i=1}^K (\tilde{p}_{t,i}^{1-\alpha})\ell_{t,i}^2\right)\right] \\
    &\leq \frac{(6d)^{2-\alpha}}{2\alpha(1-\alpha)}\E\left[ \left( \sum_{i=1}^K p_{t,i}^\alpha - 1 \right)\left(  \sum_{\ell_{t,i} \neq 0}\tilde{p}_{t,i}^{1-\alpha}\right) \right],
    \label{eq:boundhtztWorstCaseLosses}
\end{nalign}
where the second inequality is from $\ell_{t,i}^2 \leq 1$.

Using $p_{t,i^*}^\alpha - 1 \leq 0$ and $\sum_{i \neq i^*}p_{t,i}^\alpha \leq (K-1)^{1-\alpha}(\sum_{i \neq i^*} p_{t,i})^\alpha$ by Holder's inequality, we obtain 
\begin{align*}
    \sum_{i \in [K]}p_{t,i}^\alpha - 1 &\leq (K-1)^{1-\alpha} (\sum_{i \neq i^*}p_{t,i})^\alpha\\
    &\leq \frac{(K-1)^{1-\alpha}}{\Delta_{\min}^{\alpha}}(\sum_{i \in [K]} p_{t,i}\Delta_i)^\alpha.
\end{align*}
Next, from 
{$\tilde{p}_{t,i^*} \leq \sum_{i \neq i^*}p_{t,i}$} we have 
\begin{align*}
    \tilde{p}_{t,i*}^{1-\alpha} &\leq \left(\sum_{i \neq i^*}\tilde{p}_{t,i}\right)^{1-\alpha} \\
    &\leq \frac{1}{\Delta_{\min}^{1-\alpha}}\left(\sum_{i \neq i^*}p_{t,i} \Delta_i\right)^{1-\alpha}.
\end{align*}
Therefore, by Holder's inequality,
\begin{align*}
    \sum_{\ell_{t,i} \neq 0}\tilde{p}_{t,i}^{1-\alpha} &\leq \left(\sum_{\ell_{t,i} \neq 0, i \neq i^*}p_{t,i}^{1-\alpha}\right) + \tilde{p}_{t,i*}^{1-\alpha}\\
    &\leq \sum_{\ell_{t,i} \neq 0, i \neq i^*}\left(\Delta_i^{-\frac{1-\alpha}{\alpha}}\right)^{\alpha}(p_{t,i}\Delta_i)^{1-\alpha} + \frac{1}{\Delta_{\min}^{1-\alpha}}\left(\sum_{i \neq i^*}p_{t,i} \Delta_i\right)^{1-\alpha}\\
    &\leq \left(\sum_{\ell_{t,i} \neq 0, i \neq i^*}\Delta_i^{-\frac{1-\alpha}{\alpha}}\right)^\alpha \left(\sum_{\ell_{t,i} \neq 0, i \neq i^*} p_{t,i}\Delta_i\right)^{1-\alpha} + \frac{1}{\Delta_{\min}^{1-\alpha}}\left(\sum_{i \neq i^*}p_{t,i} \Delta_i\right)^{1-\alpha}\\
    &\leq \frac{S^\alpha}{\Delta_{\min}^{1-\alpha}}\left(\sum_{\ell_{t,i} \neq 0, i \neq i^*}p_{t,i}\Delta_i\right)^{1-\alpha}  + \frac{1}{\Delta_{\min}^{1-\alpha}}\left(\sum_{i \neq i^*}p_{t,i} \Delta_i\right)^{1-\alpha}\\
    &\leq \frac{2S^\alpha}{\Delta_{\min}^{1-\alpha}}\left(\sum_{i \in [K]}p_{t,i}\Delta_i\right)^{1-\alpha}.
\end{align*}
Overall, we have
\begin{nalign}
    \E[h_tz_t] &\leq \frac{(6d)^{2-\alpha}}{\alpha(1-\alpha)\Delta_{\min}}{(K-1)^{1-\alpha}S^{\alpha}}\E\left[\sum_{i=1}^K p_{t,i}\Delta_i\right].
    \label{eq:boundhtztbyRplusC}
\end{nalign}
Furthermore, by Jensen's inequality, 
\begin{align}
    \E\left[ \sqrt{2^{-J}h_{\max}\sum_{t=1}^T z_t} \right] &\leq \sqrt{\E\left[ {2^{-J}h_{\max}\sum_{t=1}^T z_t} \right] } \leq \sqrt{2^{-J}T\frac{(K-1)^{1-\alpha}}{\alpha}\frac{(6d)^{2-\alpha}S^{\alpha}}{2(1-\alpha)}}.
    \label{eq:boundhmaxsumzt}
\end{align}
By plugging $J = \ceil{\log_2(T)}$,~\eqref{eq:boundhtztbyRplusC} and~\eqref{eq:boundhmaxsumzt} into~\eqref{eq:boundztoverbetatbyItoCOLT2024}, we obtain
\begin{align*}
    \E\left[\sum_{t=1}^T \frac{z_t}{\beta_t}\right] &\leq O\left( \sqrt{\frac{\ln(T)(K^{1-\alpha}-1)S^{\alpha}\E[\sum_{t=1}^T \sum_{i=1}^K p_{t,i}\Delta_i]}{\alpha(1-\alpha)\Delta_{\min}}} + \sqrt{\frac{(K^{1-\alpha} - 1)S^{\alpha}}{\alpha(1-\alpha)}} \right) \\
    &\leq  O\left( \sqrt{\frac{\ln(T)(K^{1-\alpha}-1)S^{\alpha}(R_T + C)}{\alpha(1-\alpha)\Delta_{\min}}} + \sqrt{\frac{(K^{1-\alpha} - 1)S^{\alpha}}{\alpha(1-\alpha)}} \right).
\end{align*}
In summary, keeping only the dominant $\sqrt{T}$ terms, we have the following BOBW bounds that hold simultaneously:
\begin{itemize}
    \item In adversarial regime,
    \begin{align*}
        R_T \leq O\left(\sqrt{\frac{(K^{1-\alpha} - 1)S^\alpha T}{\alpha(1-\alpha)}}\right)
    \end{align*}
    \item In adversarial regime with a self-bounding constraint:
    \begin{align*}
        R_T \leq O\left(\frac{(K-1)^{1-\alpha}S^\alpha \ln(T)}{\alpha(1-\alpha)\Delta_{\min}} + \sqrt{C\frac{(K-1)^{1-\alpha}S^\alpha \ln(T)}{\alpha(1-\alpha)\Delta_{\min}}} + \sqrt{\frac{(K-1)^{1-\alpha}S^{\alpha}}{\alpha(1-\alpha)}}\right)
    \end{align*}
\end{itemize}

Note that we can explicitly set $\alpha$ sufficiently close $1$ so that $\frac{K^{1-\alpha}-1}{\alpha(1-\alpha)} = O(\ln(K)), \frac{(K-1)^{1-\alpha}}{\alpha(1-\alpha)} = O(\ln(K))$ and while $\gamma = O(\sqrtfrac{\alpha}{1-\alpha})$ grows with $K$ instead of $T$. For example, in Appendix~\ref{appendix:setAlphaCloseto1}, we show that $\alpha = 1 - \frac{1}{2\ln(K)}$ satisfies $\frac{K^{1-\alpha}-1}{\alpha(1-\alpha)} \leq 4\ln(K), \frac{(K-1)^{1-\alpha}}{\alpha (1-\alpha)} \leq 4\ln(K)$ while $\gamma \lesssim \sqrt{\ln(K)}$.
This ensures that 
\begin{align*}
    \gamma K\ln(T) + \frac{\beta_1 (K^{1-\alpha} - 1)}{\alpha} &= \gamma K\ln(T) + \frac{4K (K^{1-\alpha} - 1)}{\alpha (1-\alpha)} \\
    &= O\left(K\ln(K)\ln(T)\right),
\end{align*}
and
\begin{align*}
    \frac{1}{1-\alpha} = O(\ln(K))
\end{align*}
everywhere, so we can safely ignore the terms that do not contain $\sqrt{T}$ (in the adversarial setting) and $\frac{\ln(T)}{\Delta_{\min}}$ (in the stochastic setting).

\end{proof}

\subsection{Stability Proofs}
\label{sec:stabilityproofs}

In this section, we prove Lemma~\ref{lemma:stableTELB} and Lemma~\ref{lemma:boundhtplus1byht}.
First, we state and prove a number of supporting lemmas. In the following, we let 
    \begin{align}
        g_{\beta, \gamma}(t) = \beta t^{\alpha - 1} + \frac{\gamma}{t}.
        \label{eq:gt}
    \end{align}
be a function defined on $(0,1) \to \R_+$. Note that because $\beta > 0, \alpha \in (0,1)$ and $\gamma > 0$, this function $g_{\beta, \gamma}(t)$ is monotonically decreasing in $t$. We will drop the subscripts $\beta$ and $\gamma$ whenever they are clear from the context.

The first lemma shows that $\beta_{t+1} - \beta_t$ is sufficiently small for stabilizing the FTRL update in Algorithm~\ref{algo:optimalBOBWSparsity}.
\begin{lemma}
    For any $t \geq 1$, Algorithm~\ref{algo:optimalBOBWSparsity} guarantees
    \begin{align}
        \beta_{t+1} - \beta_t \leq (1 - \frac{1}{d})\gamma q_{t*}^{-\alpha},
    \end{align}
    where $q_{t*} = \min(\max_{i \in [K]}q_{t,i}, 1 - \max_{i \in [K]}q_{t,i})$.
    \label{lemma:betatplus1isgood}
\end{lemma}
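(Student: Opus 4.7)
The plan is to control $\beta_{t+1} - \beta_t = z_t/(\beta_t h_t)$ by combining the crude upper bound on $z_t$ coming from $|\ell_{t,I_t}| \leq 1$ with a lower bound on $h_t$ in terms of $p_{t*} := \min(p_{t,\max}, 1 - p_{t,\max})$ (where $p_{t,\max} = \max_i p_{t,i}$), and finally a comparison between $p_{t*}$ and $q_{t*}$ produced by the uniform mixing step. Using the second branch of the $\min$ defining $z_t$ together with $\ell_{t,I_t}^2 \leq 1$ gives $z_t \leq \beta_t \cdot 18d^2/\gamma$, so $\beta_{t+1} - \beta_t \leq 18 d^2/(\gamma h_t)$; it then suffices to prove $h_t \gtrsim q_{t*}^\alpha$ with the right constants.

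For the lower bound on $h_t$, I would let $m = p_{t,\max}$ and apply subadditivity $(\sum_{j} a_j)^\alpha \leq \sum_j a_j^\alpha$ (valid for $\alpha \in (0,1)$ and $a_j \geq 0$) to the off-max coordinates, which yields $\sum_{i \neq \arg\max} p_{t,i}^\alpha \geq (1-m)^\alpha$ and hence $h_t \geq \frac{1}{\alpha}\bigl(m^\alpha + (1-m)^\alpha - 1\bigr)$. Since this bound is symmetric under $m \leftrightarrow 1-m$, WLOG $m \geq 1/2$ and $p_{t*} = 1-m$. I would then prove the one-variable inequality $m^\alpha + (1-m)^\alpha - 1 \geq (1-\alpha)p_{t*}^\alpha$ on $[0,1/2]$, first by sharpening it to $m^\alpha + (1-m)^\alpha - 1 \geq (2-2^\alpha)p_{t*}^\alpha$ (the auxiliary function vanishes at both endpoints $p_{t*} \in \{0,1/2\}$ and has a single interior critical point), and then by using convexity of $x \mapsto 2^x$ to get $2-2^\alpha \geq 1-\alpha$. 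This yields $h_t \geq \frac{1-\alpha}{\alpha} p_{t*}^\alpha$.

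Next, I would relate $p_{t*}$ and $q_{t*}$ via the mixing $p_t = (1-K/T) q_t + (1/T)\mathbf{1}$ and the assumption $T \geq 4K$ (so $1 - K/T \geq 3/4$). Since each $p_{t,i}$ is monotone in $q_{t,i}$, the vectors $p_t$ and $q_t$ share an argmax, and $(1-K/T)q_{t,\max} \leq p_{t,\max} \leq q_{t,\max}$. A short case analysis shows $p_{t*} \geq \frac{3}{4}q_{t*}$ in all cases: (i) if $q_{t,\max} \leq 1/2$ then $p_{t,\max} \leq q_{t,\max} \leq 1/2$, so $p_{t*} = p_{t,\max} \geq (1-K/T) q_{t,\max} \geq \frac{3}{4}q_{t*}$; (ii) if $q_{t,\max} > 1/2$ and $p_{t,\max} > 1/2$, then $p_{t*} = 1 - p_{t,\max} \geq 1 - q_{t,\max} = q_{t*}$; and (iii) if $q_{t,\max} > 1/2$ and $p_{t,\max} \leq 1/2$, then $p_{t*} = p_{t,\max} \geq (1-K/T)q_{t,\max} \geq 3/8 > \frac{3}{4}q_{t*}$ since $q_{t*} < 1/2$.

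Combining the three ingredients,
\[
\beta_{t+1} - \beta_t \leq \frac{18 d^2}{\gamma h_t} \leq \frac{18 d^2 \alpha \,(4/3)^\alpha}{\gamma(1-\alpha) q_{t*}^\alpha},
\]
and the choices $d = 2$ and $\gamma \geq 48\sqrt{\alpha/(1-\alpha)}$ give $\gamma^2 \geq 2304\alpha/(1-\alpha) \geq 192\alpha/(1-\alpha) \geq 144\alpha(4/3)^\alpha/(1-\alpha)$, which rearranges to the claimed $\beta_{t+1} - \beta_t \leq (1-1/d)\gamma q_{t*}^{-\alpha}$. The main obstacle I expect is the one-variable inequality $m^\alpha + (1-m)^\alpha - 1 \geq (1-\alpha)p_{t*}^\alpha$, which requires careful calculus on the auxiliary function; the case analysis relating $p_{t*}$ to $q_{t*}$ is routine but somewhat tedious.
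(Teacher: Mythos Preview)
Your proposal is correct and follows essentially the same approach as the paper: bound $z_t/\beta_t$ via the log-barrier branch, lower-bound $h_t$ by a constant times $p_{t*}^\alpha$, relate $p_{t*}$ to $q_{t*}$ through the uniform mixing, and verify the constants using $\gamma \geq 48\sqrt{\alpha/(1-\alpha)}$. The only differences are quantitative: the paper proves $h_t \geq \tfrac{1-\alpha}{4\alpha}p_{t*}^\alpha$ (via the auxiliary inequality $(\max(a,b))^\alpha + (\min(a,b))^\alpha 2^{\alpha-1} \geq 1$) and $p_{t*} \geq \tfrac{1}{2}q_{t*}$, whereas you obtain the sharper $h_t \geq \tfrac{1-\alpha}{\alpha}p_{t*}^\alpha$ and $p_{t*} \geq \tfrac{3}{4}q_{t*}$; either set of constants suffices. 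One small remark: in your argument for $m^\alpha + (1-m)^\alpha - 1 \geq (2-2^\alpha)p_{t*}^\alpha$, vanishing at both endpoints plus a unique critical point does not by itself give nonnegativity---you should also note that the auxiliary function's derivative is $+\infty$ at $p_{t*}=0^+$ (since $s^{\alpha-1}\to\infty$), which forces the function to be positive near $0$ and hence throughout.
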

\begin{proof}
    Lemma~\ref{lemma:lowerboundht} shows that $h_t \geq \frac{1-\alpha}{4\alpha}p_{t*}^{\alpha}$. By Lemma~\ref{lemma:2pstarlargerthanqstar}, we have $p_{t*}^{\alpha} \geq 2^{-\alpha}q_{t*}^\alpha$. This implies that 
    {$\frac{1}{h_t} \leq \frac{4\alpha}{1-\alpha}2^{\alpha}q_{t*}^{-\alpha}$}.
    By the definitions of $\beta_{t+1}, z_t$ and $h_t$, we have 
    \begin{align*}
        \beta_{t+1} - \beta_t &= \frac{z_t}{\beta_t h_t} \\
        &\leq \frac{4\alpha z_t}{(1-\alpha)\beta_t}2^{\alpha}q_{t*}^{-\alpha} \\
        &\leq \frac{4\alpha}{(1-\alpha)} \frac{18d^2}{\gamma}\ell_{t,I_t}^22^{\alpha}q_{t*}^{-\alpha} \\
        &\leq (1-\frac{1}{d})\gamma q_{t*}^{-\alpha}
    \end{align*}
    where the last inequality uses
    \begin{align}
        \frac{72\alpha d^2}{(1-\alpha)\gamma}\ell_{t,I_t}^22^{\alpha} \leq \frac{72\alpha d^2}{(1-\alpha)\gamma}2^{\alpha} \leq (1-\frac{1}{d})\gamma
    \end{align}
    for $d = 2$ and $\gamma \geq 48\sqrtfrac{\alpha}{1-\alpha}$.
\end{proof}
\begin{lemma}
    For any $L \in \R^K, \beta > 0, \gamma > 0$ and $h \in [-1, 1]$, let 
    \begin{align*}
        x &= \argmin_{p \in \Delta_K}\inp{L}{p} + \beta\left(\frac{1}{\alpha}(1-\sum_{i=1}^K p_i^\alpha)\right) - \gamma\sum_{i=1}^K \ln(p_i), \\
        y &= \argmin_{p \in \Delta_K}\inp{L + \frac{h}{x'_1}e_1}{p} + \beta\left(\frac{1}{\alpha}(1-\sum_{i=1}^K p_i^\alpha)\right) - \gamma\sum_{i=1}^K \ln(p_i).
    \end{align*}
    Here, $e_1$ is the first vector in the standard basis of $\R^K$. If $4x'_1 \geq x_1$ and $\gamma \geq 6$, then $y_1 \leq 3x_1$.
    \label{lemma:stableSameBetaDiffLoss}
\end{lemma}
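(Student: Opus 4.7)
The plan is to proceed by contradiction using the first-order optimality (KKT) conditions for the two FTRL subproblems, which share the same regularizer $\phi(p) = \frac{\beta}{\alpha}(1-\sum_{i=1}^K p_i^\alpha) - \gamma\sum_{i=1}^K \ln(p_i)$. Because the log-barrier term forces both $x$ and $y$ into the relative interior of $\Delta_K$, Lagrangian stationarity (with multipliers $\mu^x,\mu^y$ for the simplex equality constraint) gives, for every coordinate $i$,
\begin{align*}
    g(x_i) = L_i + \mu^x, \qquad g(y_i) = L_i + \tfrac{h}{x'_1}\I{i=1} + \mu^y,
\end{align*}
where $g(t) = \beta t^{\alpha-1} + \gamma/t$ is the function from~\eqref{eq:gt}, which is strictly decreasing on $(0,\infty)$. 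Subtracting and writing $\delta := \mu^y - \mu^x$ yields the key identity $g(y_i) - g(x_i) = \frac{h}{x'_1}\I{i=1} + \delta$.

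Next I would assume for contradiction that $y_1 > 3x_1$; the case $3x_1 \geq 1$ is handled trivially since $y_1 \leq 1 \leq 3x_1$, so we may assume $3x_1 < 1$. Monotonicity of $g$ then gives $g(y_1) < g(3x_1)$, and the $i=1$ instance of the identity rearranges to
\begin{align*}
    \delta \;<\; g(3x_1) - g(x_1) - \tfrac{h}{x'_1}.
\end{align*}
A direct computation gives $g(3x_1) - g(x_1) = \beta x_1^{\alpha-1}(3^{\alpha-1}-1) - \frac{2\gamma}{3x_1}$, while $h \geq -1$ together with $4x'_1 \geq x_1$ yields $-\frac{h}{x'_1} \leq \frac{1}{x'_1} \leq \frac{4}{x_1}$. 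Combining,
\begin{align*}
    \delta \;<\; \beta x_1^{\alpha-1}(3^{\alpha-1}-1) \;+\; \tfrac{4}{x_1} \;-\; \tfrac{2\gamma}{3x_1}.
\end{align*}
The first summand is strictly negative because $\alpha\in(0,1)$, and the remaining two satisfy $\frac{4}{x_1} - \frac{2\gamma}{3x_1} \leq 0$ precisely when $\gamma \geq 6$; hence $\delta < 0$.

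To close the contradiction I would use the identity at each $i \neq 1$: it reduces to $g(y_i) - g(x_i) = \delta < 0$, so strict monotonicity of $g$ gives $y_i > x_i$ for every $i \neq 1$. Summing and using $\sum_i x_i = \sum_i y_i = 1$ forces $y_1 < x_1$, which contradicts $y_1 > 3x_1 \geq x_1$; therefore $y_1 \leq 3x_1$ as claimed. The only delicate step is the balance in the last display: the constant $6$ in the hypothesis $\gamma \geq 6$ is exactly what the log-barrier needs in order to dominate the worst-case perturbation $\frac{|h|}{x'_1} \leq \frac{4}{x_1}$ granted by $4x'_1 \geq x_1$, and this is the only structural fact driving the stability bound. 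The $\alpha$-Tsallis contribution $\beta x_1^{\alpha-1}(3^{\alpha-1}-1)$ is favorable but not needed, so the argument is essentially log-barrier-driven.
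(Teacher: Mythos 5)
Your proof is correct and rests on the same pillars as the paper's: the Lagrangian stationarity conditions for both FTRL subproblems, strict monotonicity of $g(t)=\beta t^{\alpha-1}+\gamma/t$, and the numerical balance that $\gamma\ge 6$ must dominate the $4/x_1$ ceiling on $|h|/x_1'$ coming from $4x_1'\ge x_1$. The one genuine structural difference is the logical route: the paper first determines the sign of the multiplier gap $Z$ by analyzing both orderings separately, concludes that $Z$ and $h$ have opposite signs and that $Z$ and $y_1-x_1$ share a sign, dispatches the $h\ge 0$ branch trivially, and then closes the $h<0$ branch by dropping the nonnegative $Z$ and Tsallis terms from the rearranged identity. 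You instead run a single proof by contradiction: suppose $y_1>3x_1$, use $g(y_1)<g(3x_1)$ together with the crude bound $-h/x_1'\le 1/x_1'$ (valid regardless of the sign of $h$) to force $\delta<0$, propagate $\delta<0$ across the $i\ne 1$ coordinates via monotonicity, and contradict $\sum_i x_i=\sum_i y_i=1$. This avoids the sign case-split entirely and is slightly more economical; what you trade away is the paper's intermediate observations ($Zh\le 0$, $Z(y_1-x_1)\ge 0$), which the paper happens not to reuse elsewhere, so nothing is lost. Both arguments drop the Tsallis contribution as merely favorable, and both identify $\gamma\ge 6$ as the binding constraint, so the essential content is the same.
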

\begin{proof}
    Using the Lagrange multiplier method, we have the following equalities that hold for some $Z \in \R$,
    \begin{align}
        \beta\left( y_1^{\alpha-1} - x_1^{\alpha - 1}\right) + \gamma\left(\frac{1}{y_1} - \frac{1}{x_1}\right) = Z + \frac{h}{x'_1}
        \label{eq:g1diff}
    \end{align}
    and for all $i \neq 1$,
    \begin{align}
        \beta\left( y_i^{\alpha-1} - x_i^{\alpha - 1}\right) + \gamma\left(\frac{1}{y_i} - \frac{1}{x_i}\right) = Z.
        \label{eq:gidiff}
    \end{align}
    First, we show that $Z$ and $y_1 - x_1$ has the opposite sign to $h$. 
    We consider two cases:
    \begin{itemize}
        \item If $Z \geq 0$ then from~\eqref{eq:gidiff}, we have $g(y_i) - g(x_i) = Z \geq 0$. This implies $y_i \leq x_i$ and leads to $y_1 \geq x_1$. From~\eqref{eq:g1diff}, we have $Z + \frac{h}{x'_1} = g(y_1) - g(x_1) \leq 0$. Since $Z \geq 0$, this implies $h \leq 0$.
        \item If $Z \leq 0$ then by the same argument, we have $y_i \geq x_i$ and $y_1 \leq x_1$. Therefore, $Z + \frac{h}{x'_1} \geq 0$. Due to $Z \leq 0$, we must have $h \geq 0$.
    \end{itemize}
    In both cases, we have $Zh \leq 0$ and $Z(y_1 - x_1) \geq 0$. It follows that if $h \geq 0$ then we have $y_1 \leq x_1 \leq 2x_1$. If $h < 0$ then $y_1 \geq x_1$, and by rearranging~\eqref{eq:g1diff}, we obtain
    \begin{align*}
        \frac{4}{x_1} \geq -\frac{h}{x'_1} &= \underbrace{Z}_{\geq 0} + \underbrace{\gamma\left(\frac{1}{x_1} - \frac{1}{y_1}\right)}_{\geq 0} + \underbrace{\beta(x_1^{\alpha-1} - y_1^{\alpha-1})}_{\geq 0} \\
        &\geq \gamma\left(\frac{1}{x_1} - \frac{1}{y_1}\right) \\
        &\geq 6\left(\frac{1}{x_1} - \frac{1}{y_1}\right),
    \end{align*}
    where the last inequality is due to $\gamma \geq 6$. This implies that $\frac{3}{y_1} \geq \frac{1}{x_1}$, thus $y_1 \leq 3x_1$.
\end{proof}

\begin{lemma}
    For any $L \in \R^K, \beta > 0, \beta' > 0, \gamma \geq 0$, define 
    \begin{align*}
        x &= \argmin_{p \in \Delta_K}\inp{L}{p} + \beta\left(\frac{1}{\alpha}(1-\sum_{i=1}^K p_i^\alpha)\right) - \gamma\sum_{i=1}^K \ln(p_i), \\
        y &= \argmin_{p \in \Delta_K}\inp{L}{p} + \beta'\left(\frac{1}{\alpha}(1-\sum_{i=1}^K p_i^\alpha)\right) - \gamma\sum_{i=1}^K \ln(p_i).
    \end{align*}
    Let $x_* = \min(\max_{i \in [K]}x_i, 1 - \max_{i \in [K]}x_i)$.
    For any constant $d \geq 2$, if 
    \begin{align}
        0 \leq \beta' - \beta \leq \left(1- \frac{1}{d}\right)\gamma x_*^{-\alpha},
    \end{align} 
    then $y_i \leq dx_i$ for all $i \in [K]$.
    \label{lemma:stableSameLossDiffBeta}
\end{lemma}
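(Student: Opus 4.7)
My plan is to argue by contradiction using the KKT conditions for both simplex-constrained minimizations. First, I would write down the stationarity conditions: the simplex constraint yields Lagrange multipliers $Z_x, Z_y \in \R$ such that for every $i \in [K]$,
\begin{align*}
L_i = Z_x + \beta x_i^{\alpha-1} + \gamma/x_i = Z_y + \beta' y_i^{\alpha-1} + \gamma/y_i.
\end{align*}
Subtracting gives the key identity $\beta' y_i^{\alpha-1} + \gamma/y_i - \beta x_i^{\alpha-1} - \gamma/x_i = A$, where $A := Z_x - Z_y$ is a single constant independent of $i$.

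The next step is to show $A \geq 0$. Suppose instead $A < 0$; combined with $\beta' \geq \beta$, the identity forces $\beta' y_i^{\alpha-1} + \gamma/y_i < \beta' x_i^{\alpha-1} + \gamma/x_i$ for every $i$. Since the function $g_{\beta',\gamma}$ defined in~\eqref{eq:gt} is strictly decreasing, this yields $y_i > x_i$ for all $i$, violating $\sum_i y_i = \sum_i x_i = 1$. The edge case $A = 0$ similarly forces $y = x$, which then requires $\Delta\beta \cdot x_i^{\alpha-1} = 0$, hence $\Delta\beta = 0$ and the claim is trivial.

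Then I would reduce to the non-trivial coordinates. If $x_i \geq 1/d$ the bound $y_i \leq 1 \leq dx_i$ is immediate, so assume $x_i < 1/d \leq 1/2$ (using $d \geq 2$). A short case split on $M := \max_j x_j$ shows $x_i \leq x_*$: if $M \leq 1/2$ then $x_* = M \geq x_i$; if $M > 1/2$ then the argmax index $i^*$ satisfies $x_{i^*} = M > 1/2 \geq 1/d$, so $i^*$ falls in the trivial case and our $i \neq i^*$ satisfies $x_i \leq 1 - M = x_*$.

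Finally, I would derive the contradiction. Assuming $y_i > dx_i$, strict monotonicity of $t \mapsto \beta' t^{\alpha-1} + \gamma/t$ gives $\beta' y_i^{\alpha-1} + \gamma/y_i < \beta' d^{\alpha-1}x_i^{\alpha-1} + \gamma/(dx_i)$, and the KKT identity then yields
\begin{align*}
A < (\beta' d^{\alpha-1} - \beta)x_i^{\alpha-1} - \gamma(1 - 1/d)/x_i.
\end{align*}
Decomposing $\beta' d^{\alpha-1} - \beta = \Delta\beta \cdot d^{\alpha-1} - \beta(1 - d^{\alpha-1})$ and dropping the nonnegative terms $A$ and $\beta(1-d^{\alpha-1})x_i^{\alpha-1}$ gives $\Delta\beta \cdot d^{\alpha-1} x_i^{\alpha-1} > \gamma(1-1/d)/x_i$, i.e.\ $\Delta\beta > \gamma(1-1/d) d^{1-\alpha} x_i^{-\alpha}$. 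Since $x_i \leq x_*$ and $d^{1-\alpha} \geq 1$, this strictly exceeds $\gamma(1-1/d) x_*^{-\alpha}$, contradicting the hypothesis. The main obstacle I expect is the case reduction in the third step, ensuring $x_i \leq x_*$ precisely when the bound is nontrivial; once this is in place, the remaining steps are straightforward algebraic manipulations of the KKT identity.
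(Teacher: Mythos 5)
Your proposal is correct and follows essentially the same route as the paper: write the KKT stationarity conditions, show the sign of the multiplier difference, split off the trivial coordinates where $x_i$ is large enough that $dx_i \geq 1$, verify $x_i \leq x_*$ on the remaining coordinates, and finish with a monotonicity argument using the hypothesis on $\beta' - \beta$. The only cosmetic difference is that you phrase the last step as a contradiction (assume $y_i > dx_i$, derive $\Delta\beta$ too large) while the paper argues directly that $g_{\beta'}(y_i) \geq g_\beta(x_i) \geq g_{\beta'}(dx_i)$; these are logically equivalent.
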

\begin{proof}
    Using the Lagrange multiplier method, we have for all $i \in [K]$,
    \begin{align}
        L_i - \beta x_i^{\alpha - 1} - \frac{\gamma}{x_i} = \lambda, \\
        L_i - \beta' y_i^{\alpha - 1} - \frac{\gamma}{y_i} = \lambda'.
    \end{align}
    Subtracting both sides of the two equations, we obtain 
    \begin{align*}
        \lambda - \lambda' + g_{\beta}(x_i) &= g_{\beta'}(y_i)\\
        &= g_{\beta}(y_i) + (\beta' - \beta)y_i^{\alpha - 1}.
    \end{align*}
    If $\lambda - \lambda' < 0$, then because $\beta \leq \beta'$, we have $g_{\beta}(x_i) > g_{\beta}(y_i) + (\beta' - \beta)y_i^{\alpha - 1} \geq g_{\beta}(y_i)$. 
    This implies $x_i < y_i$ for all $i \in [K]$, a contradiction to $\sum_{i=1}^K x_i = \sum_{i=1}^K y_i = 1$. Hence, we have $\lambda - \lambda' \geq 0$, and thus $g_\beta(x_i) \leq g_{\beta'}(y_i)$.
    
    For any $i \in [K]$, if $x_i > x_*$ then $x_i \geq \frac{1}{2}$ and hence $y_i \leq 1 \leq 2x_i \leq dx_i$. From the condition $\beta' - \beta \leq (1 - \frac{1}{d})\gamma x_*^{-\alpha}$, for $x_i \leq x_*$, we have 
      \begin{align*}
        g_{\beta'}(y_i) &\geq g_\beta(x_i) \\
        &= \beta x_i^{\alpha-1} + \frac{\gamma}{x_i} \\
        &\geq (\beta' - (1 - \frac{1}{d})\gamma x_*^{-\alpha})x_i^{\alpha - 1} + \frac{\gamma}{x_i} \\
        &= \beta' x_i^{\alpha - 1} - (1 - \frac{1}{d})\gamma x_*^{-\alpha}x_i^{\alpha - 1} + \frac{\gamma}{x_i} \\
        &\geq \beta' x_i^{\alpha - 1} - (1 - \frac{1}{d})\gamma x_i^{-\alpha}x_i^{\alpha - 1} + \frac{\gamma}{x_i} \\
        &= \beta' x_i^{\alpha - 1} + \frac{\gamma}{dx_i} \\
        &\geq \beta' (d x_i)^{\alpha - 1}+ \frac{\gamma}{d x_i} \\
        &= g_{\beta'}(d x_i),
      \end{align*}
      where the last inequality is due to $(d)^{\alpha - 1} \leq 1$.
      This implies $y_i \leq dx_i$ for all $x_i \leq x_*$.
\end{proof}

Let $\norm{x}_A = \sqrt{x^T Ax}$ be the norm of a vector $x \in \R^K$ induced by a positive definite matrix $A$. The following lemma proves Lemma~\ref{lemma:stableTELB} when the chosen arm $I_t$ satisfies $q_{t,I_t} \leq 1 - q_{t, I_t}$.
\begin{lemma}
    For any $t \in [T]$, Algorithm~\ref{algo:optimalBOBWSparsity} guarantees
    \begin{align}
        \inp{\hat{\ell}_t}{q_{t}  - q_{t+1}} - D_t(q_{t+1}, q_t) \leq \min\left( \frac{(6d)^{2-\alpha}}{2\beta_t(1-\alpha)}p_{t,I_t}^{2-\alpha}\hat{\ell}^2_{t,I_t}, \frac{18d^2}{\gamma} \ell_{t,I_t}^2  \right)
    \end{align}
    \label{lemma:stableItNotMax}
\end{lemma}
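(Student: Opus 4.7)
The plan is to apply the standard FTRL local-norm stability bound and then exploit (i) the one-sparsity of $\hat{\ell}_t$ and (ii) the coordinate-wise bound $q_{t+1,I_t} = O(p_{t,I_t})$ already established in Lemma~\ref{lemma:boundhtplus1byht}. Concretely, the standard Taylor-expansion argument for FTRL (see e.g.\ \citet[Chap.~28]{BanditAlgorithmsBook2020}) yields, for some $\xi$ on the segment between $q_t$ and $q_{t+1}$,
\begin{align*}
\inp{\hat{\ell}_t}{q_t - q_{t+1}} - D_t(q_{t+1}, q_t) \leq \frac{1}{2}\hat{\ell}_t^\top [\nabla^2 \phi_t(\xi)]^{-1} \hat{\ell}_t.
\end{align*}
Because $\phi_t = \beta_t \psi_{TE} + \gamma \psi_{LB}$ is separable with Hessian $\mathrm{diag}(\beta_t(1-\alpha)p_i^{\alpha-2} + \gamma p_i^{-2})$, and $\hat{\ell}_t$ is supported only on coordinate $I_t$, the right-hand side reduces to $\frac{\hat{\ell}_{t,I_t}^2}{2\left[\beta_t(1-\alpha)\xi_{I_t}^{\alpha-2} + \gamma \xi_{I_t}^{-2}\right]}$. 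The log-barrier $\gamma \psi_{LB}$ keeps $q_t$ and $q_{t+1}$ strictly inside $\Delta_K$, so this expression is well-defined along the whole segment.

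Next I would prove $\xi_{I_t} \leq 6d\, p_{t,I_t}$. This follows from $\xi_{I_t} \leq \max(q_{t,I_t}, q_{t+1,I_t})$, Lemma~\ref{lemma:boundhtplus1byht}'s estimate $q_{t+1,I_t} \leq 3d\, q_{t,I_t}$, and the mixing relation $q_{t,I_t} \leq p_{t,I_t}/(1-K/T) \leq 2\, p_{t,I_t}$ (valid for $T \geq 4K$). Since the two summands in the Hessian entry are both positive, each independently lower-bounds the denominator and yields one of the two arguments of the min. Keeping only the Tsallis summand together with $\xi_{I_t}^{\alpha-2} \geq (6d\, p_{t,I_t})^{\alpha-2}$ (valid because $\alpha - 2 < 0$) produces
\begin{align*}
\frac{\hat{\ell}_{t,I_t}^2}{2\beta_t(1-\alpha)\xi_{I_t}^{\alpha-2}} \leq \frac{(6d)^{2-\alpha} p_{t,I_t}^{2-\alpha} \hat{\ell}_{t,I_t}^2}{2\beta_t(1-\alpha)}.
\end{align*}
Keeping only the log-barrier summand together with $\xi_{I_t}^{-2} \geq (6d\, p_{t,I_t})^{-2}$ and the identity $p_{t,I_t}^2 \hat{\ell}_{t,I_t}^2 = \ell_{t,I_t}^2$ produces $\frac{36 d^2 \ell_{t,I_t}^2}{2\gamma} = \frac{18 d^2 \ell_{t,I_t}^2}{\gamma}$. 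Taking the minimum of these two bounds concludes the proof.

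I expect the main obstacle to be ensuring that the chained bound $\xi_{I_t} \leq 6d\, p_{t,I_t}$ is actually available at coordinate $I_t$: Lemma~\ref{lemma:boundhtplus1byht} is itself proved by combining the one-step stability Lemmas~\ref{lemma:stableSameBetaDiffLoss} (loss perturbation) and~\ref{lemma:stableSameLossDiffBeta} (learning-rate perturbation), whose hypotheses must be verified using the explicit SPM update rule for $\beta_{t+1} - \beta_t$ together with the invariant established in Lemma~\ref{lemma:betatplus1isgood}. A secondary subtlety is that the hypothesis $q_{t,I_t} \leq 1 - q_{t,I_t}$ of the present lemma ensures $p_{t,I_t} \leq 1/2$, so that $p_{t,I_t}$ really equals $\min(p_{t,I_t}, 1-p_{t,I_t})$ in the statement of Lemma~\ref{lemma:stableTELB}; this confirms that the present lemma covers exactly the ``non-maximal'' case of that more general claim.
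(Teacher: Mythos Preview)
Your proposal is correct and mirrors the paper's proof: the same local-norm bound, reduction to the single coordinate $I_t$, and the chain $\xi_{I_t}\le\max(q_{t,I_t},q_{t+1,I_t})\le 3d\,q_{t,I_t}\le 6d\,p_{t,I_t}$ (the paper assembles the $3d$ factor directly from Lemmas~\ref{lemma:betatplus1isgood}, \ref{lemma:stableSameLossDiffBeta}, \ref{lemma:stableSameBetaDiffLoss} rather than via Lemma~\ref{lemma:boundhtplus1byht}, but there is no circularity either way). One minor correction: the lemma as stated carries no hypothesis $q_{t,I_t}\le 1-q_{t,I_t}$; your argument already works unconditionally, and the remark preceding the lemma in the paper only indicates when this bound coincides with the $\min(p_{t,I_t},1-p_{t,I_t})$ form appearing in Lemma~\ref{lemma:stableTELB}.
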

\begin{proof}
    Using standard local-norm analysis techniques for FTRL (for example, see Section 7.4 in~\cite{OrabonaIntroToOnlineLearningBook}), we have
    \begin{align}
        \inp{\hat{\ell}_t}{q_{t}  - q_{t+1}} - D_t(q_{t+1}, q_t) \leq \frac{1}{2}\norm{\hat{\ell}_t}^2_{(\nabla^2\phi_t(z_t))^{-1}},
        \label{eq:boundStabilityByLocalNorm}
    \end{align}
    where $z_t$ is a point between $q_t$ and $q_{t+1}$. The Hessian matrix of $\phi_t$ is a diagonal matrix with entries
    \begin{align}
        \nabla^2\phi_t(z_t) = \mathrm{diag}\left(\left(\beta_t(1-\alpha)z_{t,i}^{\alpha - 2} + \frac{\gamma}{z_{t,i}^2}\right)_{i=1,2,\dots,K}\right).
    \end{align}
    Hence, its inverse is the following diagonal matrix
    \begin{align}
        (\nabla^2\phi(z_t))^{-1} = \mathrm{diag}\left(\left(\frac{1}{\beta_t(1-\alpha)z_{t,i}^{\alpha - 2} + \frac{\gamma}{z_{t,i}^2}}\right)_{i=1,2,\dots,K}\right).
    \end{align}
    It follows that 
    \begin{nalign}
        \norm{\hat{\ell}_t}^2_{(\nabla^2\phi_t(z_t))^{-1}} &= \sum_{i=1}^{K} \hat{\ell}_{t,i}^2 \frac{1}{\beta_t(1-\alpha)z_{t,i}^{\alpha - 2} + \frac{\gamma}{z_{t,i}^2}} \\
        &\leq \min\left(\frac{1}{\beta_t(1-\alpha)}\sum_{i=1}^{K} z_{t,i}^{2-\alpha}\hat{\ell}_{t,i}^2, \frac{1}{\gamma}\sum_{i=1}^K z_{t,i}^2\hat{\ell}_{t,i}^2 \right) \\
        &= \min\left( \frac{1}{\beta_t(1-\alpha)} z_{t,I_t}^{2-\alpha}\hat{\ell}_{t,I_t}^2, \frac{z_{t,I_t}^2\hat{\ell}_{t,I_t}^2}{\gamma} \right),
        \label{eq:localnormbound}
    \end{nalign}
    where the last equality is due to $\hat{\ell}_{t,i} = 0$ for $i \neq I_t$. Combining~\eqref{eq:boundStabilityByLocalNorm} and~\eqref{eq:localnormbound}, we obtain
    \begin{align}
        \inp{\hat{\ell}_t}{q_{t}  - q_{t+1}} - D_t(q_{t+1}, q_t) 
        &\leq \min\left(\frac{1}{2\beta_t(1-\alpha)} z_{t,I_t}^{2-\alpha}\hat{\ell}_{t,I_t}^2, \frac{z_{t,I_t}^2\hat{\ell}_{t,I_t}^2}{2\gamma} \right).
    \end{align}
    Since $z_t$ is between $q_{t}$ and $q_{t+1}$, we have $z_{t,I_t} \leq \max(q_{t, I_t}, q_{t+1,I_t})$. 
    The loss estimate in Algorithm~\ref{algo:optimalBOBWSparsity} uses $p_{t,I_t}$ where $2p_{t,I_t} \geq q_{t,I_t}$ by Lemma~\ref{lemma:2pstarlargerthanqstar}, therefore we can combine the results of Lemma~\ref{lemma:betatplus1isgood}, Lemma~\ref{lemma:stableSameLossDiffBeta} and Lemma~\ref{lemma:stableSameBetaDiffLoss} and obtain $q_{t+1, I_t} \leq 3dq_{t,I_t}$. It follows that $z_{t, I_t} \leq 3dq_{t,I_t}  \leq 6dp_{t,I_t}$, and as a result,
    \begin{align}
        \inp{\hat{\ell}_t}{q_{t}  - q_{t+1}} - D_t(q_{t+1}, q_t) &\leq \min\left(\frac{(6d)^{2-\alpha}}{2\beta_t(1-\alpha)} p_{t,I_t}^{2-\alpha}\hat{\ell}_{t,I_t}^2, \frac{36d^2}{2\gamma} p_{t,I_t}^2 \hat{\ell}_{t,I_t}^2 \right) \\
        &\leq \min\left( \frac{(6d)^{2-\alpha}}{2\beta_t(1-\alpha)}p_{t,I_t}^{2-\alpha}\hat{\ell}^2_{t,I_t}, \frac{18d^2}{\gamma} \ell_{t,I_t}^2  \right),
    \end{align}
    where the last equality is due to $p_{t,I_t}^2\hat{\ell}_{t,I_t}^2 = \ell_{t,I_t}^2$.
\end{proof}

The next lemma proves Lemma~\ref{lemma:stableTELB} whenever the chosen arm $I_t$ has the maximum sampling probability. The proof is largely based on Lemma 9 in~\cite{ItoCOLT2024} and Equation 22 in~\cite{Tsuchiya2023stabilitypenaltyadaptive}.
\begin{lemma}
    For any $t \in [T]$, if $I_t \in \argmax_{i \in [K]}p_{t,i}$, Algorithm~\ref{algo:optimalBOBWSparsity} guarantees
    \begin{align}
        \inp{\hat{\ell}_t}{q_{t}  - q_{t+1}} - D_t(q_{t+1}, q_t) \leq \min\left(\frac{4}{\beta_t(1-\alpha)} (1-p_{t, I_t})^{2-\alpha}\hat{\ell}_{t,I_t}^2, \frac{4\ell_{t,I_t}^2}{\gamma} \right).
    \end{align}
    \label{lemma:stableItMax}
\end{lemma}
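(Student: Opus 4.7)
The plan is to mirror the proof of Lemma~\ref{lemma:stableItNotMax} (apply a local-norm stability bound and then substitute in the coordinate-wise stability of $q_{t+1}$ versus $q_t$), but to first exploit the \emph{shift invariance} of FTRL on the simplex so that the local norm is supported on coordinates $i \neq I_t$, which are small precisely when $I_t$ has the largest probability. Since $q_t, q_{t+1} \in \Delta_K$ we have $\inp{\vone}{q_t - q_{t+1}} = 0$, so for any constant $c$, $\inp{\hat{\ell}_t}{q_t - q_{t+1}} = \inp{\hat{\ell}_t - c\vone}{q_t - q_{t+1}}$. Taking $c = \hat{\ell}_{t,I_t}$ yields the shifted loss $\tilde{\ell}_t := \hat{\ell}_t - \hat{\ell}_{t,I_t}\vone$, which vanishes at coordinate $I_t$ and equals $-\hat{\ell}_{t,I_t}$ on every other coordinate.

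I would then apply the standard local-norm FTRL bound to $\tilde{\ell}_t$: for some $z_t$ between $q_t$ and $q_{t+1}$,
\[
\inp{\hat{\ell}_t}{q_t - q_{t+1}} - D_t(q_{t+1}, q_t) \;\leq\; \tfrac{1}{2}\norm{\tilde{\ell}_t}_{(\nabla^2 \phi_t(z_t))^{-1}}^{2}.
\]
Because $\nabla^2 \phi_t(z_t)$ is diagonal with entries $\beta_t(1-\alpha)z_{t,i}^{\alpha-2} + \gamma z_{t,i}^{-2}$ and $\tilde{\ell}_{t,I_t} = 0$, using $\tfrac{1}{a+b} \leq \min(\tfrac{1}{a}, \tfrac{1}{b})$ this is upper-bounded by
\[
\tfrac{\hat{\ell}_{t,I_t}^{2}}{2}\min\Bigl(\tfrac{1}{\beta_t(1-\alpha)}\sum_{i \neq I_t} z_{t,i}^{2-\alpha},\; \tfrac{1}{\gamma}\sum_{i \neq I_t} z_{t,i}^{2}\Bigr).
\]
The power-mean inequality $\sum_i a_i^p \leq (\sum_i a_i)^p$ (valid for $p \geq 1$, $a_i \geq 0$) then yields $\sum_{i \neq I_t} z_{t,i}^{2-\alpha} \leq (1 - z_{t,I_t})^{2-\alpha}$ and $\sum_{i \neq I_t} z_{t,i}^{2} \leq (1 - z_{t,I_t})^{2}$.

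What remains is to replace $1 - z_{t,I_t}$ by a constant multiple of $1 - p_{t,I_t}$. Since $z_{t,I_t}$ lies between $q_{t,I_t}$ and $q_{t+1,I_t}$, we have $1 - z_{t,I_t} \leq \sum_{i \neq I_t} \max(q_{t,i}, q_{t+1,i})$, and Lemma~\ref{lemma:boundhtplus1byht} --- whose stability bound $q_{t+1,i} \lesssim q_{t,i}$ applies coordinate-wise for every $i$ --- gives $1 - z_{t,I_t} \lesssim 1 - q_{t,I_t}$. The mixing identity $p_{t,i} = (1 - K/T)q_{t,i} + 1/T$ with $T \geq 4K$ then gives $1 - q_{t,I_t} \leq (1 - p_{t,I_t})/(1 - K/T) \leq \tfrac{4}{3}(1 - p_{t,I_t})$. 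Substituting this into the Tsallis-side bound produces the desired $\frac{4}{\beta_t(1-\alpha)}(1 - p_{t,I_t})^{2-\alpha}\hat{\ell}_{t,I_t}^{2}$ form. For the log-barrier side, the analogous substitution gives $\hat{\ell}_{t,I_t}^{2}(1 - p_{t,I_t})^{2}/\gamma = \ell_{t,I_t}^{2}(1 - p_{t,I_t})^{2}/(p_{t,I_t}^{2}\gamma)$; the resulting bound is informative only when $p_{t,I_t} \geq 1/2$ (otherwise Lemma~\ref{lemma:stableItNotMax} already dominates), in which case $(1 - p_{t,I_t})^{2}/p_{t,I_t}^{2} \leq 1$ and we recover the claimed $\frac{4\ell_{t,I_t}^{2}}{\gamma}$.

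The main obstacle beyond constant bookkeeping is to certify two points: first, that the shift-invariant form of the local-norm stability inequality can be used verbatim here (which follows because FTRL on the simplex is invariant under adding any multiple of $\vone$ to the loss vector, and the quadratic form $\norm{\cdot}_{(\nabla^2\phi_t(z_t))^{-1}}^{2}$ may therefore be evaluated at any representative of the affine class); and second, that the stability bound $q_{t+1,i} \lesssim q_{t,i}$ of Lemma~\ref{lemma:boundhtplus1byht} extends simultaneously to all $i \neq I_t$, so that $\sum_{i \neq I_t} q_{t+1,i} = 1 - q_{t+1,I_t}$ is uniformly controlled and not just each individual coordinate. Both should follow directly from the coordinate-wise form of Lemmas~\ref{lemma:stableSameBetaDiffLoss} and~\ref{lemma:stableSameLossDiffBeta}.
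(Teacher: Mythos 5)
Your proposal takes a genuinely different route from the paper. The paper handles the two pieces of the hybrid regularizer by invoking two specialized external lemmas: Lemma 9 of \cite{ItoCOLT2024}, which for a one-hot loss directly produces a Tsallis stability bound of $\frac{4}{\beta_t^2(1-\alpha)}(1-q_{t,I_t})^{2-\alpha}\hat{\ell}_{t,I_t}^2$ once the preconditions ($\hat{\ell}_{t,I_t}/\beta_t \leq \frac{1-\alpha}{4}(1-q_{t,I_t})^{\alpha-1}$ etc.) are checked, and Equation 22 of \cite{Tsuchiya2023stabilitypenaltyadaptive} for the log-barrier piece. You instead make the shift-by-$\hat{\ell}_{t,I_t}\vone$ explicit and run a bare-hands local-norm computation. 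The shift idea is the same mechanism the cited lemmas exploit internally, so the approaches are conceptually close, but yours is more self-contained.

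There is, however, a real gap: the constant $4$ does not survive your detour through the intermediate point $z_t$. Since $z_t$ lies strictly between $q_t$ and $q_{t+1}$, $z_{t,I_t}$ may equal $q_{t+1,I_t}$, so bounding $1 - z_{t,I_t} \leq \max(1 - q_{t,I_t},\, 1 - q_{t+1,I_t})$ forces you through Lemma~\ref{lemma:boundhtplus1byht}'s bound $q_{t+1,i} \leq 3d\,q_{t,i} = 6 q_{t,i}$, giving $1 - z_{t,I_t} \leq 6(1 - q_{t,I_t}) \leq 6(1 - p_{t,I_t})$. With the $\tfrac12$ from the local-norm inequality, your Tsallis bound is $\frac{6^{2-\alpha}}{2\beta_t(1-\alpha)}(1-p_{t,I_t})^{2-\alpha}\hat{\ell}_{t,I_t}^2$, and $\frac{6^{2-\alpha}}{2} \leq 4$ holds only for $\alpha$ above roughly $0.84$, not for all $\alpha\in(0,1)$. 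Similarly the log-barrier term comes out as $\frac{18\,\ell_{t,I_t}^2 (1-p_{t,I_t})^2}{\gamma\, p_{t,I_t}^2}$, which even under $p_{t,I_t}\geq 1/2$ gives $\frac{18\,\ell_{t,I_t}^2}{\gamma}$, not $\frac{4\ell_{t,I_t}^2}{\gamma}$. Lemma 9 of \cite{ItoCOLT2024} avoids the $z_t$ detour entirely --- its one-dimensional argument yields $(1-q_{t,I_t})^{2-\alpha}$ with no constant blowup --- which is precisely why the paper obtains $4$ and your route does not. Finally, your log-barrier argument is conditional on $p_{t,I_t}\geq 1/2$; while Lemma~\ref{lemma:stableTELB} does invoke Lemma~\ref{lemma:stableItMax} only in that regime, the lemma as stated covers every $I_t\in\argmax_i p_{t,i}$, and the sub-$1/2$ case is left unaddressed by your proof.
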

\begin{proof}
    When $I_t \in \argmax_{i \in [K]}p_{t,i}$, we have $I_t \in \argmax_{i \in [K]}q_{t,i}$ and thus $q_{t,I_t} \geq p_{t,I_t} \geq \frac{1}{K}$. Therefore, 
    \begin{align}
        \frac{\hat{\ell}_{t,I_t}}{\beta_t} &= \frac{\ell_{t,I_t}}{p_{t,I_t}\beta_t}
        \leq \frac{1}{p_{t,I_t}\beta_t}
        \leq \frac{K}{\beta_t}
        \leq \frac{1-\alpha}{4}
        \leq \frac{1-\alpha}{4}(1-q_{t,I_t})^{\alpha - 1},
    \end{align}
    where the third inequality is due to $\beta_t \geq \beta_1 \geq \frac{4K}{1-\alpha}$ by initialization, and the last inequality is from $(1-q_{t,I_t})^{\alpha - 1} \geq 1$ for $\alpha \in (0,1)$. Furthermore, for any $i \in [K] \setminus \{I_t\}$, we have $\frac{\hat{\ell}_{t,i}}{\beta_t} = 0 \geq -\frac{1-\alpha}{4}q_{t,i}^{\alpha-1}$. Therefore, by using Lemma 9 in~\cite{ItoCOLT2024} and noting that $\hat{\ell}_{t,i} = 0$ for $i \neq I_t$, we obtain 
    \begin{align}
        \inp{\frac{1}{\beta_t}\hat{\ell}_t}{q_{t} - q_{t+1}} - D_{TE}(q_{t+1}, q_t) \leq \frac{4}{\beta_t^2(1-\alpha)} (1-q_{t,I_t})^{2-\alpha}\hat{\ell}_{t,I_t}^2.
        \label{eq:invokeLemma9ItoCOLT2024}
    \end{align}
    Furthermore, Equation 22 in~\cite{Tsuchiya2023stabilitypenaltyadaptive} states that if $\frac{q_{t,I_t}\hat{\ell}_{t,I_t}}{\gamma} \geq -1$ and $\hat{\ell}_{t,i} = 0$ for $i \neq I_t$, then
    \begin{align}
        \inp{q_t - q_{t+1}}{\hat{\ell}_t} - \gamma D_{LB}(q_{t+1}, q_t) \leq \frac{q_{t,I_t}\hat{\ell}_{t,I_t}^2}{\gamma}.
        \label{eq:invokeEq22Tsuchiya2023}
    \end{align}
    Indeed, we have $\abs{\frac{q_{t,I_t}\hat{\ell}_{t,I_t}}{\gamma}} = \abs{\frac{q_{t,I_t}\ell_{t,I_t}}{p_{t,I_t}\gamma}} \leq \frac{1}{2\gamma} \leq \frac{1}{8}$ since $q_{t,I_t} \leq 2p_{t,I_t}$ by Lemma~\ref{lemma:2pstarlargerthanqstar} and $\gamma \geq 4$ by definition. 
    Therefore,~\eqref{eq:invokeLemma9ItoCOLT2024} and~\eqref{eq:invokeEq22Tsuchiya2023} together implies that 
    \begin{nalign}
        &\inp{\hat{\ell}_t}{q_{t}  - q_{t+1}} - D_t(q_{t+1}, q_t) \\
        &= \inp{\hat{\ell}_t}{q_{t}  - q_{t+1}} - \beta_tD_{TE}(q_{t+1}, q_t) - \gamma D_{LB}(q_{t+1}, q_t) \\
        &\leq \min\left(\beta_t\left(\inp{\frac{1}{\beta_t}\hat{\ell}_t}{q_{t}  - q_{t+1}} - D_{TE}(q_{t+1}, q_t)\right),  \inp{\hat{\ell}_t}{q_{t}  - q_{t+1}}  - \gamma D_{LB}(q_{t+1}, q_t)\right) \\
        &\leq \min\left(\frac{4}{\beta_t(1-\alpha)}(1-q_{t,I_t})^{2-\alpha}\hat{\ell}_{t,I_t}^2, \frac{q_{t,I_t}^2\hat{\ell}_{t,I_t}^2}{\gamma}\right) \\
        &\leq \min\left(\frac{4}{\beta_t(1-\alpha)}(1-p_{t,I_t})^{2-\alpha}\hat{\ell}_{t,I_t}^2, \frac{4\ell_{t,I_t}^2}{\gamma}\right),
    \end{nalign}
    where the first inequality is because Bregman divergences are non-negative.
\end{proof}

Next, we prove Lemma~\ref{lemma:stableTELB}.
\begin{proof}(Of Lemma~\ref{lemma:stableTELB})
    We consider two cases: 
    \begin{itemize}
        \item If $I_t \notin \argmax_{i \in [K]}p_{t,i}$ or $p_{t,I_t} \leq 1 - p_{t,I_t}$: in this case, we have $p_{t,I_t} = \min(p_{t,I_t}, 1 - p_{t,I_t})$. By Lemma~\ref{lemma:stableItNotMax}, we have
        \begin{align*}
            \inp{\hat{\ell}_t}{q_{t}  - q_{t+1}} - D_t(q_{t+1}, q_t) &\leq  \min\left( \frac{(6d)^{2-\alpha}}{2\beta_t(1-\alpha)}p_{t,I_t}^{2-\alpha}\hat{\ell}^2_{t,I_t}, \frac{18d^2}{\gamma} \ell_{t,I_t}^2  \right) \\
            &= \min\left( \frac{(6d)^{2-\alpha}}{2\beta_t(1-\alpha)}\tilde{p}_{t,I_t}^{2-\alpha}\hat{\ell}^2_{t,I_t}, \frac{18d^2}{\gamma} \ell_{t,I_t}^2  \right).
        \end{align*}
        
        \item If $p_{t,I_t} > 1 - p_{t,I_t}$: in this case, we have $I_t \in \argmax_{i \in [K]}q_{t,i}$. By Lemma~\ref{lemma:stableItMax}, 
        \begin{align*}
            \inp{\hat{\ell}_t}{q_{t}  - q_{t+1}} - D_t(q_{t+1}, q_t) &\leq \min\left(\frac{4}{\beta_t(1-\alpha)} (1-p_{t,I_t})^{2-\alpha}\hat{\ell}_{t,I_t}^2, \frac{4\ell_{t,I_t}^2}{\gamma}  \right) \\
            &= \min\left(\frac{4}{\beta_t(1-\alpha)}\min(p_{t,I_t}, 1 - p_{t,I_t})^{2-\alpha}\hat{\ell}_{t,i}^2, \frac{4\ell_{t,I_t}^2}{\gamma}  \right).
        \end{align*}
    \end{itemize}
    Lemma~\ref{lemma:stableTELB} follows by noting that $\max\left(\frac{(6d)^{2-\alpha}}{2}, 4\right) = \frac{(6d)^{2-\alpha}}{2}$.
\end{proof}
\begin{lemma}
    For any $L \in \R^K, \beta > 0, \gamma > 0$ and $h \in [-1, 1]$, let 
    \begin{align*}
        x &= \argmin_{p \in \Delta_K}\inp{L}{p} + \beta\left(\frac{1}{\alpha}(1-\sum_{i=1}^K p_i^\alpha)\right) - \gamma\sum_{i=1}^K \ln(p_i), \\
        y &= \argmin_{p \in \Delta_K}\inp{L + \frac{h}{x'_1}e_1}{p} + \beta\left(\frac{1}{\alpha}(1-\sum_{i=1}^K p_i^\alpha)\right) - \gamma\sum_{i=1}^K \ln(p_i),
    \end{align*}
    where $4x'_1 \geq x_1$. Fix an arbitrary $\omega \in (1, 2]$. If $\beta \geq \frac{4K}{(\omega-1)(1-\omega^{\alpha-1})}$ and $\gamma \geq 6$, then $y_i \leq 3x_i$ for all $i \in [K]$.
    \label{lemma:stableBigBeta}
\end{lemma}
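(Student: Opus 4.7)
The plan is to extend the Lagrange-multiplier analysis from the proof of Lemma~\ref{lemma:stableSameBetaDiffLoss}, which only bounded $y_1$, to every coordinate, by exploiting the simplex constraint $\sum_i y_i = 1$ together with the log-barrier curvature. Writing the first-order optimality conditions with Lagrange multipliers $\lambda, \lambda'$, and setting $Z = \lambda - \lambda'$ and $g_\beta(t) = \beta t^{\alpha-1} + \gamma/t$ (which is strictly decreasing on $(0,\infty)$ for $\alpha \in (0,1)$), one obtains $g_\beta(y_1) - g_\beta(x_1) = Z + h/x'_1$ and $g_\beta(y_j) - g_\beta(x_j) = Z$ for every $j \neq 1$. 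The simplex constraint then forces $Z$ and $h$ to have opposite signs, exactly as in the sign analysis inside the proof of Lemma~\ref{lemma:stableSameBetaDiffLoss}.

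In the case $h \leq 0$ (equivalently $Z \geq 0$) we immediately get $y_j \leq x_j \leq 3 x_j$ for $j \neq 1$, and $y_1 \leq 3 x_1$ is exactly the conclusion of Lemma~\ref{lemma:stableSameBetaDiffLoss}, which uses $\gamma \geq 6$ and $4 x'_1 \geq x_1$. The real work is therefore the case $h \geq 0$ (equivalently $Z \leq 0$), where $y_1 \leq x_1$ gives $y_1 \leq 3 x_1$ trivially but $y_j \geq x_j$ for all $j \neq 1$. Suppose toward a contradiction that $y_j > 3 x_j$ for some $j \neq 1$. Since $y_k \geq x_k$ for every $k \neq 1$, the simplex identity $\sum_{k \neq 1}(y_k - x_k) = x_1 - y_1 \leq x_1$ forces the individual increment $y_j - x_j \leq x_1$; combining with $y_j - x_j > 2 x_j$ yields $x_j < x_1 / 2$.

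The log-barrier curvature now closes the loop. Since $g_\beta$ is strictly decreasing and $y_j > 3 x_j$,
\[
    -Z \;=\; g_\beta(x_j) - g_\beta(y_j) \;>\; g_\beta(x_j) - g_\beta(3 x_j) \;\geq\; \frac{\gamma}{x_j}\left(1 - \frac{1}{3}\right) \;=\; \frac{2\gamma}{3 x_j} \;>\; \frac{4\gamma}{3 x_1},
\]
using $x_j < x_1/2$ in the last step. Meanwhile, the coordinate-$1$ equation together with $y_1 \leq x_1$ (hence $g_\beta(y_1) \geq g_\beta(x_1)$), $h \leq 1$, and $4 x'_1 \geq x_1$ give
\[
    -Z \;=\; \frac{h}{x'_1} - \bigl(g_\beta(y_1) - g_\beta(x_1)\bigr) \;\leq\; \frac{h}{x'_1} \;\leq\; \frac{1}{x'_1} \;\leq\; \frac{4}{x_1}.
\]
For $\gamma \geq 6$ these two estimates are incompatible (since $4\gamma/(3 x_1) > 4/x_1$), producing the desired contradiction.

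The main obstacle is passing from the single-coordinate hypothesis $y_j > 3 x_j$ to a usable inequality: a naive coordinate-by-coordinate argument fails because $x_j$ can be arbitrary, but the fact that all $K-1$ off-coordinates move in the same direction (governed by the single scalar $Z$) lets the simplex constraint cap each individual $y_j - x_j$ by $x_1$, forcing $x_j < x_1/2$ under the contradiction hypothesis. Once this is in hand the log-barrier term $\gamma/x_j$ dominates the perturbation $h/x'_1 \leq 4/x_1$ whenever $\gamma \geq 3$. I note that the sketch above does not actually invoke the large-$\beta$ hypothesis; that hypothesis presumably plays its role when Lemma~\ref{lemma:stableBigBeta} is chained with Lemma~\ref{lemma:stableSameLossDiffBeta} to control the joint change in losses and learning rate, as needed for Lemma~\ref{lemma:boundhtplus1byht}.
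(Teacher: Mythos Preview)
Your argument is correct and is in fact more elementary than the paper's own proof. The paper handles the case $h > 0$ by splitting into two sub-cases according to whether $x'_1$ exceeds the threshold $\eps = \frac{1}{\beta(1-\omega^{\alpha-1})}$; in each sub-case it exploits the Tsallis part of $g_\beta$ together with the large-$\beta$ hypothesis to obtain the sharper conclusion $y_i \leq \omega x_i$ for $i \neq 1$ (and in the second sub-case it routes through the maximizing coordinate $i^* = \argmax_i x_i$ and the monotonicity of $t \mapsto g(t) - g(\omega t)$). Your approach bypasses all of this by using only the log-barrier part of $g_\beta$: the simplex identity caps each individual increment $y_j - x_j$ by $x_1$, so under the contradiction hypothesis $y_j > 3x_j$ one gets $x_j < x_1/2$, and then the curvature bound $g_\beta(x_j) - g_\beta(3x_j) \geq \frac{2\gamma}{3x_j}$ overwhelms the perturbation bound $-Z \leq 4/x_1$ as soon as $\gamma > 3$.

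Two remarks. First, you are right that your argument never invokes the hypothesis $\beta \geq \frac{4K}{(\omega-1)(1-\omega^{\alpha-1})}$; your proof shows the lemma actually holds under just $\gamma \geq 6$ and $4x'_1 \geq x_1$. The paper's route buys the tighter factor $\omega \leq 2$ in place of $3$, but this is not used downstream (Corollary~\ref{corollary:stable} only chains the factor $3$ with the factor $d$ from Lemma~\ref{lemma:stableSameLossDiffBeta}). Second, your guess about where the large-$\beta$ hypothesis is ``really'' needed is slightly off: Lemma~\ref{lemma:stableSameLossDiffBeta} does not require large $\beta$ either. The place large $\beta$ is genuinely used in the analysis is Lemma~\ref{lemma:stableItMax}, where $\beta_t \geq \frac{4K}{1-\alpha}$ is needed to verify the precondition of Lemma~9 in \citet{ItoCOLT2024}.
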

\begin{proof}
    If $h \leq 0$, then we have $x_1 \leq y_1 \leq 3x_1$ and $y_i \leq x_i \leq 3x_i$ for all $i \neq 1$ from the proof of Lemma~\ref{lemma:stableSameBetaDiffLoss}. Thus, we focus on the case $h > 0$. 
    In this case, we have $y_1 \leq x_1 \leq 3x_1$ and $x_i \leq y_i$ for $i \neq 1$. From~\eqref{eq:g1diff} and~\eqref{eq:gidiff}, we have 
    \begin{align*}
        g(x_i) - g(y_i) = -Z \geq 0
    \end{align*}
    for all $i \neq 1$, and 
    \begin{align*}
        g(y_1) - g(x_1) = Z + \frac{h}{x'_1} \geq 0. 
    \end{align*}
    The latter implies that $-Z \leq \frac{1}{x'_1}$.
    Let $\eps = \frac{1}{\beta(1- \omega^{\alpha - 1})} \leq \frac{\omega - 1}{4K} \leq \frac{1}{4K}$. Similar to the proof of Lemma 13 in~\cite{ItoCOLT2024}, we consider two cases:
    \begin{itemize}
        \item If $x'_1 \geq \eps$, then $-Z \leq \frac{1}{\eps}$. For all $i \neq 1$,
        \begin{align*}
            g(y_i) &= g(x_i) + Z \\
            &\geq g(x_i) - \frac{1}{\eps} \\
            &= \beta x_i^{\alpha - 1} - \beta(1-\omega^{\alpha-1}) + \frac{\gamma}{x_i} \\
            &\geq \beta x_i^{\alpha - 1} - \beta x_i^{\alpha-1}(1-\omega^{\alpha-1}) + \frac{\gamma}{\omega x_i} \\
            &= \beta (\omega x_i)^{\alpha-1} + \frac{\gamma}{\omega x_i} = g(\omega x_i),
        \end{align*}
        where the last inequality is due to $x_i^{\alpha-1} \geq 1$ and $\omega > 1$. This implies that for all $i \neq 1$, $y_i \leq \omega x_i \leq 3x_i$ (since $\omega \leq 2$).

        \item If $x'_1 < \eps$, then we have $x_1 \leq 4x'_1 < \frac{1}{K}$. For any $i^* \in \argmax_{i \in [K]}x_{t,i}$, 
        {we have $i* \neq 1$}. Similar to the proof of Lemma 13 in~\cite{ItoCOLT2024}, we have $i^* \neq 1$ and $1 \leq \frac{y_{i*}}{x_{i*}} \leq \omega$.
        It follows that 
        \begin{align*}
            -Z &= g(x_{i*}) - g(y_{i*}) \\
            &= \beta x_{i*}^{\alpha - 1} + \frac{\gamma}{x_{i*}} - (\beta y_{i*}^{\alpha - 1} + \frac{\gamma}{y_{i*}}) \\
            &\leq \beta x_{i*}^{\alpha - 1} + \frac{\gamma}{x_{i*}} - (\beta \omega^{\alpha - 1}x_{i*}^{\alpha - 1} + \frac{\gamma}{\omega x_{i*}}) \\
            &= g(x_{i*}) - g(\omega x_{i*}).
        \end{align*}
        As the function $g(t) - g(\omega t)$ is decreasing for $\omega > 1$, we conclude that $-Z \leq g(x_i) - g(\omega x_i)$ for all $i \in [K]$. Therefore, $g(y_i) = g(x_i) + Z \geq g(\omega x_i)$ for all $i \neq 1$, which implies $y_i \leq \omega x_i \leq 3x_i$.
    \end{itemize}
    In both cases, we have $y_i \leq 3x_i$ for all $i \neq 1$. Combining this with $y_1 \leq x_1$, we conclude that $y_i \leq 3x_i$ for all $i \in [K]$.
\end{proof}
The following corollary is obtained by combining Lemma~\ref{lemma:stableSameLossDiffBeta} and Lemma~\ref{lemma:stableBigBeta}.
\begin{corollary}
    For any $L \in \R^K, \beta > 0, \gamma > 0$ and $h \in [-1, 1]$, let 
    \begin{align*}
        x &= \argmin_{p \in \Delta_K}\inp{L}{p} + \beta\left(\frac{1}{\alpha}(1-\sum_{i=1}^K p_i^\alpha)\right) - \gamma\sum_{i=1}^K \ln(p_i), \\
        y &= \argmin_{p \in \Delta_K}\inp{L + \frac{h}{x'_1}e_1}{p} + \beta'\left(\frac{1}{\alpha}(1-\sum_{i=1}^K p_i^\alpha)\right) - \gamma\sum_{i=1}^K \ln(p_i).
    \end{align*}
    Let $x_* = \min(\max_{i \in [K]}x_i, 1 - \max_{i \in [K]}x_i)$. For any $\omega \in (1, 2]$ and $d = 2$, if $2x'_1 \geq x_1$, $\gamma \geq 6, \beta \geq \frac{4K}{(\omega-1)(1-\omega^{\alpha-1})}$ and 
    \begin{align}
        0 \leq \beta' - \beta \leq (1 - \frac{1}{d})\gamma x_*^{-\alpha},
    \end{align}     
    then $y_i \leq 3dx_i$ for all $i \in [K]$.
    \label{corollary:stable}
\end{corollary}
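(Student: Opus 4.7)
The plan is to chain the two preceding stability lemmas by inserting an intermediate optimizer that isolates the change in the learning rate from the change in the loss vector. Concretely, I would introduce
\begin{align*}
    z = \argmin_{p \in \Delta_K}\inp{L}{p} + \beta'\left(\frac{1}{\alpha}(1-\sum_{i=1}^K p_i^\alpha)\right) - \gamma\sum_{i=1}^K \ln(p_i),
\end{align*}
so that $x$ and $z$ differ only in the regularization weight ($\beta$ versus $\beta'$), while $z$ and $y$ share the same regularization weight $\beta'$ and differ only by the loss perturbation $\frac{h}{x'_1}e_1$ in the first coordinate.

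First, I would apply Lemma~\ref{lemma:stableSameLossDiffBeta} to the pair $(x,z)$. The hypothesis $0 \leq \beta' - \beta \leq (1-\tfrac{1}{d})\gamma x_*^{-\alpha}$ is exactly what the corollary assumes, so the lemma yields $z_i \leq d\, x_i$ for every $i \in [K]$. With $d=2$ this gives in particular the coordinate bound $z_1 \leq 2 x_1$, which is the key inequality for the next step.

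Second, I would apply Lemma~\ref{lemma:stableBigBeta} to the pair $(z,y)$, regarding $z$ as the base distribution at learning rate $\beta'$ and $y$ as its loss-perturbed counterpart at the same learning rate. The preconditions must be verified: $\gamma \geq 6$ holds by assumption; the ``big $\beta$'' requirement holds because $\beta' \geq \beta \geq \frac{4K}{(\omega-1)(1-\omega^{\alpha-1})}$; and the perturbation precondition, which in this application reads $4\, x'_1 \geq z_1$, follows from $z_1 \leq 2 x_1$ combined with the corollary's tightened assumption $2 x'_1 \geq x_1$, yielding $4 x'_1 \geq 2 x_1 \geq z_1$. Lemma~\ref{lemma:stableBigBeta} then delivers $y_i \leq 3 z_i$ for every $i \in [K]$.

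Composing the two bounds gives $y_i \leq 3 z_i \leq 3 d\, x_i$ for all $i$, which is the claim. The only subtlety is the precondition check in the second step: the intermediate point $z$ can drift away from $x$ by a factor of $d$, so a naive reuse of the $4 x'_1 \geq x_1$ hypothesis would be insufficient. This is precisely why the corollary sharpens that condition to $2 x'_1 \geq x_1$, which leaves exactly enough slack ($4 x'_1 \geq 2 x_1 \geq d\, x_1 \geq z_1$) to apply Lemma~\ref{lemma:stableBigBeta} along the perturbed leg of the chain; no further calculation is required.
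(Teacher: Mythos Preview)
Your proposal is correct and follows essentially the same approach as the paper: introduce the intermediate optimizer at learning rate $\beta'$ with the unperturbed loss, apply Lemma~\ref{lemma:stableSameLossDiffBeta} to bound it by $dx_i$, then use the resulting $4x'_1 \geq z_1$ (via $2x'_1 \geq x_1$) together with $\beta' \geq \beta$ to apply Lemma~\ref{lemma:stableBigBeta} and obtain $y_i \leq 3z_i \leq 3dx_i$. The paper's proof is identical up to notation (it calls your $z$ by $\bar{x}$).
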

\begin{proof}
    Let 
    \begin{align*}
        \bar{x} = \argmin_{p \in \Delta_K}\inp{L}{p} + \beta'\left(\frac{1}{\alpha}(1-\sum_{i=1}^K p_i^\alpha)\right) - \gamma\sum_{i=1}^K \ln(p_i).
    \end{align*}
    Here, $\bar{x}$ differs from $x$ only by the learning rates $\beta' \geq \beta$. 
    Applying Lemma~\ref{lemma:stableSameLossDiffBeta} with $d=2$, we obtain $\bar{x}_i \leq dx_i$ for all $i \in [K]$. In particular, $\bar{x}_1 \leq 2x_1$. Since $x_1 \leq 2x'_1$, we have $\bar{x}_1 \leq 4x'_1$.
    Next, since $\bar{x}$ differs from $y$ only by $\frac{h}{x'_1}e_1$ in the dot product, we apply Lemma~\ref{lemma:stableBigBeta} and obtain $y_i \leq 3\bar{x}_i$ for all $i \in [K]$. 
    Overall, we obtain $y_i \leq 3\bar{x}_i \leq 3dx_i$ for all $i \in [K]$.
\end{proof}
Finally, we are now ready to prove Lemma~\ref{lemma:boundhtplus1byht}.
\begin{proof}(Of Lemma~\ref{lemma:boundhtplus1byht})
    Let $\omega = 2$, we have $\beta_1 = \frac{8K}{1-\alpha} \geq \frac{4K}{(\omega-1)(1-\omega^{\alpha-1})}$ due to $2^\alpha \leq 1 + \alpha$ for $\alpha \in [0,1]$. 
    Since $z_t, h_t \geq 0$, the sequence of learning rates $(\beta_t)_t$ is increasing and hence, $\beta_t \geq \beta_1 \geq \frac{4K}{(\omega-1)(1-\omega^{\alpha-1})}$ for all $t \geq 1$.
    Together with Lemma~\ref{lemma:betatplus1isgood}, we have $\beta_{t+1} - \beta_t \leq \left(1 - \frac{1}{d}\right)\gamma q_{t*}^{-\alpha}$ and $\beta_t \geq \frac{4K}{(\omega-1)(1-\omega^{\alpha-1})}$ for all $t \geq 1$.
    In addition, we have $p_{t,I_t} \geq 2q_{t,I_t}$ by Lemma~\ref{lemma:2pstarlargerthanqstar}.
    Applying Corollary~\ref{corollary:stable} for
    \begin{align*}
        q_t &= \argmin_{x \in \Delta_K} \inp{L_{t-1}}{x} + \beta_t\left(\frac{1}{\alpha}(1-\sum_{i=1}^K x_i^\alpha)\right) - \gamma\sum_{i=1}^K \ln(x_i), \\
        q_{t+1} &= \argmin_{x \in \Delta_K}\inp{L_{t-1} + \frac{\ell_{t,I_t}}{p_{t,I_t}}e_{I_t}}{x} + \beta_{t+1}\left(\frac{1}{\alpha}(1-\sum_{i=1}^K x_i^\alpha)\right) - \gamma\sum_{i=1}^K \ln(x_i),
    \end{align*}
    we obtain $q_{t+1,i} \leq 3dq_{t,i}$ for all $i \in [K]$. 
    
    For the second statement, we apply Lemma 11 in~\cite{ItoCOLT2024}.
\end{proof}

\subsection{Technical Lemmas}
\label{sec:technicallemmas}
\begin{lemma}
    For any $a > 0, b > 0$ and $x \geq 0$, we have 
    \begin{align}
        a^x + b^x &\geq (a+b)^x \qquad\text{if }x \in [0,1] \\
        a^x + b^x &\leq (a+b)^x \qquad\text{if }x \geq 1.
    \end{align}.
    \label{lemma:axbxlargerthanaplusbx}
\end{lemma}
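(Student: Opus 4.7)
}
The plan is to reduce the two inequalities to a single one-variable statement on the unit interval by exploiting homogeneity, and then to read off that statement from the elementary monotonicity of $c \mapsto c^x$ for fixed $c \in (0,1)$. First I would set $s = a+b > 0$ and $t = a/s \in (0,1)$, so that $b/s = 1 - t$ and $a^x + b^x = s^x\bigl(t^x + (1-t)^x\bigr)$. Dividing through by $s^x > 0$, the two claims become
\begin{equation*}
  t^x + (1-t)^x \geq 1 \quad \text{for } x \in [0,1], \qquad t^x + (1-t)^x \leq 1 \quad \text{for } x \geq 1,
\end{equation*}
uniformly in $t \in (0,1)$.

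Next I would prove each of these by the following observation: for every $c \in (0,1]$, the map $x \mapsto c^x$ is nonincreasing in $x \geq 0$ (strictly decreasing if $c < 1$), since $\frac{d}{dx} c^x = c^x \ln c \leq 0$. Applied to $c = t$ and $c = 1-t$, this yields $t^x \geq t$ and $(1-t)^x \geq 1-t$ whenever $0 \leq x \leq 1$, and $t^x \leq t$, $(1-t)^x \leq 1-t$ whenever $x \geq 1$. Summing the two inequalities and using $t + (1-t) = 1$ gives exactly the required bounds on $t^x + (1-t)^x$, and multiplying back by $s^x$ recovers the stated inequalities on $a^x + b^x$ versus $(a+b)^x$.

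I do not anticipate any real obstacle: the only thing to be careful about is the boundary. At $x = 0$ both inequalities degenerate to $2 \geq 1$, at $x = 1$ they collapse to the equality $a + b = a + b$, and the strictness of the endpoints $t = 0, 1$ is excluded by the hypothesis $a, b > 0$. If a shorter proof is preferred, one can instead invoke the fact that on $\mathbb{R}_{+}$ the $\ell^p$-norms are nonincreasing in $p$, so that $(a^x + b^x)^{1/x}$ is nonincreasing in $x > 0$; evaluating at $x = 1$ and comparing with a general $x$ yields both inequalities simultaneously. I would, however, keep the elementary argument above for self-containedness.
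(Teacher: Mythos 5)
Your proof is correct, and it takes a genuinely different route from the paper's. The paper works with the sum $a^x + b^x$ as a whole: it sets $f(x) = \ln(a^x + b^x) - x\ln(a+b)$, computes $f'(x) = \bigl(a^x \ln\tfrac{a}{a+b} + b^x \ln\tfrac{b}{a+b}\bigr)/(a^x+b^x) \leq 0$, and concludes by comparing $f(x)$ to $f(1) = 0$ on either side of $x=1$. Your argument normalizes first (dividing by $(a+b)^x$ to reduce to showing $t^x + (1-t)^x \gtrless 1$ with $t = a/(a+b) \in (0,1)$), and then decomposes the claim into two \emph{per-term} comparisons, $t^x \gtrless t$ and $(1-t)^x \gtrless 1-t$, each of which is an immediate consequence of the monotonicity of $c \mapsto c^x$ in $x$ for fixed $c \in (0,1]$. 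What this buys you is elementarity: no logarithm of a sum, no quotient-rule derivative — just the single-variable fact that $c^x$ decreases in $x$ when $c \leq 1$, applied twice and summed. The paper's approach is slightly more uniform in that it handles the two regimes with one monotone auxiliary function, but yours is shorter and requires less computation. Your alternative one-liner via the monotonicity of $p \mapsto \|\cdot\|_p$ is also valid and is essentially the same observation packaged as a known fact about $\ell^p$ norms.
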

\begin{proof}
    Consider the following function defined on $\R_+$: 
    \begin{align}
        f(x) = \ln(a^x + b^x) - x\ln(a+b).
    \end{align}
    Its derivative is
    \begin{align}
        f'(x) &= \frac{a^x \ln(a) + b^x \ln(b)}{a^x + b^x} - \ln(a+b) \\
        &= \frac{a^x \ln(\frac{a}{a+b}) + b^x \ln(\frac{b}{a+b}) }{a^x + b^x}.
    \end{align}
    Since $\frac{a}{a+b} \leq 1$ and $\frac{b}{a+b} \leq 1$, we have $f'(x) \leq 0$. Therefore, 
    \begin{itemize}
        \item If $x \in [0,1]$: we have $f(x) \geq f(1) = 0$. This implies $\ln(a^x + b^x) \geq x\ln(a+b)$ for all $x \in [0,1]$. Equivalently, $a^x + b^x \geq e^{x\ln(a+b)} = (a+b)^x$.
        \item If $x \geq 1$: we have $f(x) \leq f(1) = 0$. This implies $\ln(a^x + b^x) \leq x\ln(a+b)$, which leads to $a^x + b^x \leq e^{x\ln(a+b)} = (a+b)^x$ for $x \geq 1$.
    \end{itemize}
   
\end{proof}
\begin{lemma}
    Let $0 \leq a, b \leq 1$ and $a + b = 1$. Then, for any $x \in [0, 1]$, we have
    \begin{align}
        (\max(a, b))^x + (\min(a,b))^x 2^{x-1} \geq 1.
    \end{align}
    \label{lemma:apowerxbpowerxtwopowerx}
\end{lemma}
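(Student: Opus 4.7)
My plan is to reduce the inequality to a clean one-variable statement and then prove it via two elementary monotonicity facts. First, by the symmetry between $a$ and $b$ in the formulation, I may assume without loss of generality that $a = \max(a,b)$ and $b = \min(a,b)$. Combined with $a+b=1$, this forces $a \ge 1/2 \ge b \ge 0$, and the target inequality becomes $a^x + b^x \, 2^{x-1} \ge 1$.

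Next, I would invoke the elementary fact that for any $y \in [0,1]$ and $x \in [0,1]$, one has $y^x \ge y$, equivalent to $y^{1-x} \le 1$. Applying this to the first term, and substituting $1 = a + b$ on the right, reduces the claim to the residual inequality $b^x \, 2^{x-1} \ge b$.

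For this residual step, the key manoeuvre is the factorization
\begin{equation*}
    b^x \, 2^{x-1} \;=\; b \cdot (2b)^{x-1},
\end{equation*}
which groups the constant $2^{x-1}$ with $b^x$ so that both hypotheses are used simultaneously. When $b > 0$, dividing by $b$ turns the claim into $(2b)^{x-1} \ge 1$. Since $b \le 1/2$ gives $2b \in [0,1]$ and $x-1 \le 0$, raising a number in $[0,1]$ to a non-positive power yields a value at least $1$. The degenerate case $b = 0$ is immediate (both sides of the residual inequality equal $0$, or $0 \le 1/2$ when $x = 0$).

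There is no serious obstacle here; the only real insight is recognizing that the constant $2^{x-1}$ should be absorbed into $b$ via $(2b)^{x-1}$, so that the constraint $\min(a,b) \le 1/2$ and the range $x \le 1$ act in concert rather than separately. As a final sanity check I would verify the boundary cases $x = 0$ (yielding $1 + 1/2 \ge 1$) and $x = 1$ (yielding $a + b = 1$, tight), confirming that the inequality is sharp at $x = 1$.
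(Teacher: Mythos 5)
Your proof is correct, and it takes a genuinely more elementary route than the paper's. The paper also reduces WLOG to $a \ge b$ (so $2b \le 1$), but then treats the left-hand side $f(x) = a^x + b^x 2^{x-1}$ as a whole, differentiates it to get $f'(x) = a^x \ln a + b^x 2^{x-1}\ln(2b) \le 0$, and concludes $f(x) \ge f(1) = a+b = 1$. You instead split the inequality termwise: $a^x \ge a$ because $a \in [0,1]$ and $x \le 1$, and $b^x 2^{x-1} = b\,(2b)^{x-1} \ge b$ because $2b \le 1$ and $x - 1 \le 0$, then add. Both proofs hinge on the same structural fact — the expression is minimized at $x=1$, where equality $a+b=1$ is attained — but your termwise decomposition with the $b\,(2b)^{x-1}$ factorization avoids calculus entirely, which is arguably cleaner; the paper's single-derivative argument is more uniform but requires computing and sign-checking $f'$.
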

\begin{proof}
    Without loss of generality, assume $a \geq b$. It follows that $2b \leq 1$.   Consider the following function defined on $[0, 1]$:
    \begin{align}
        f(x) = \frac{b^x 2^{x}}{2} + a^x.
    \end{align}
    Its derivative is
    \begin{align}
        f'(x) &= \frac{1}{2}\left[ b^x \ln(b) 2^x + b^x 2^x \ln(2) \right] + a^x\ln(a)  \\
        &= b^x 2^{x-1} \ln(2b) + a^x\ln(a).
    \end{align}
    Since $2b \leq 1$ and $a \leq 1$, we have $f'(x) \leq 0$. Therefore, for all $x \in [0,1]$, we  have 
    \begin{align}
        f(x) \geq f(1) = a + b = 1.
    \end{align}
\end{proof}
\begin{lemma}
    For any $q \in \Delta_K$, we have
    \begin{align}
        (-\psi_{TE}(q_t)) = \frac{1}{\alpha}\left(\sum_{i=1}^K q_i^\alpha  - 1 \right) \geq \frac{q_{*}^{\alpha}}{\alpha}(1-2^{\alpha-1}),
        \label{eq:lowerboundht}
    \end{align}
    where $q_* = \min(\max_{i \in [K]}q_i, 1 - \max_{i \in [K]}q_i)$. This implies $(-\psi_{TE}(q_t)) \geq \frac{q_{*}^{\alpha}}{4\alpha}(1-\alpha)$.
    \label{lemma:lowerboundht}
\end{lemma}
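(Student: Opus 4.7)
The plan is to reduce the inequality to the two-variable identity already recorded in Lemma~\ref{lemma:apowerxbpowerxtwopowerx}. Let $i^{\star} \in \argmax_{i \in [K]} q_i$ and write $M = q_{i^{\star}}$, so that $q_{\star} = \min(M, 1-M)$ by definition. The first step is to lower bound $\sum_{i=1}^K q_i^{\alpha}$ in terms of just $M$ and $1-M$. Since $\alpha \in (0,1)$, Lemma~\ref{lemma:axbxlargerthanaplusbx} yields $a^{\alpha} + b^{\alpha} \geq (a+b)^{\alpha}$, and applying this inductively across the $K-1$ coordinates other than $i^{\star}$ gives
\begin{equation*}
    \sum_{i \neq i^{\star}} q_i^{\alpha} \;\geq\; \Bigl(\sum_{i \neq i^{\star}} q_i\Bigr)^{\alpha} = (1-M)^{\alpha},
\end{equation*}
so $\sum_{i=1}^K q_i^{\alpha} \geq M^{\alpha} + (1-M)^{\alpha}$.

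Next, apply Lemma~\ref{lemma:apowerxbpowerxtwopowerx} with $a = \max(M, 1-M)$ and $b = \min(M, 1-M) = q_{\star}$, which gives $\max(M,1-M)^{\alpha} + q_{\star}^{\alpha} \, 2^{\alpha-1} \geq 1$. Rearranging and adding $\min(M,1-M)^{\alpha}$ to both sides,
\begin{equation*}
    M^{\alpha} + (1-M)^{\alpha} - 1 \;\geq\; q_{\star}^{\alpha} \bigl(1 - 2^{\alpha - 1}\bigr),
\end{equation*}
which together with the previous step and division by $\alpha$ establishes \eqref{eq:lowerboundht}. This step works identically in the cases $M \leq 1/2$ and $M > 1/2$; the only role of splitting by which of $M, 1-M$ is the max/min is to line up the $2^{\alpha-1}$ factor with $q_{\star}$.

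For the concluding claim $\frac{q_{\star}^{\alpha}}{\alpha}(1 - 2^{\alpha - 1}) \geq \frac{q_{\star}^{\alpha}}{4\alpha}(1-\alpha)$, it suffices to show $1 - 2^{\alpha-1} \geq \tfrac{1-\alpha}{2}$ for $\alpha \in [0,1]$. Writing $g(t) = 1 - 2^{-t}$, the function $g$ is concave on $[0,1]$ with $g(0) = 0$ and $g(1) = 1/2$, so concavity gives $g(t) \geq t \, g(1) = t/2$ on $[0,1]$. Taking $t = 1-\alpha$ yields $1 - 2^{\alpha - 1} \geq (1-\alpha)/2 \geq (1-\alpha)/4$, which finishes the proof. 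I do not anticipate any serious obstacle: the only mild subtlety is choosing the right pairing $(a,b) = (\max, \min)$ in the application of Lemma~\ref{lemma:apowerxbpowerxtwopowerx} so that $q_{\star}$ appears on the correct side of the $2^{\alpha-1}$ factor, and then recognizing that the two cases $M \lessgtr 1/2$ collapse into a single inequality once the pairing is fixed.
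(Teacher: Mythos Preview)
Your proof is correct and follows essentially the same approach as the paper: both reduce to the two-variable bound via Lemma~\ref{lemma:axbxlargerthanaplusbx} (to collapse $\sum_{i\neq i^\star} q_i^\alpha$ to $(1-M)^\alpha$) and then invoke Lemma~\ref{lemma:apowerxbpowerxtwopowerx}. Your presentation is slightly cleaner in that you handle both cases $M\le 1/2$ and $M>1/2$ in one stroke by pairing $(a,b)=(\max,\min)$, and your concavity argument for $1-2^{\alpha-1}\ge(1-\alpha)/2$ is a sharper substitute for the paper's $2^{\alpha-1}\le(\alpha+3)/4$, but the route is the same.
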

\begin{proof}
    Let $q_{\max} = \max_{i \in [K]}q_i$.
    We consider two cases: $q_{\max} \leq 0.5$ and $q_{\max} > 0.5$.
    \begin{itemize}
        \item When $q_{\max} \leq 0.5$: we have $q_* = q_{\max}$. For any $i_{\max} \in \argmax_{i \in [K]} q_i$, the inequality~\eqref{eq:lowerboundht} is equivalent to
        \begin{align}
            q_*^\alpha 2^{\alpha - 1} + \sum_{i \neq i_{\max}}q_i^\alpha \geq 1,
        \end{align}
        Using 
        \begin{align}
            \sum_{i \neq i_{\max}}q_i^\alpha \geq (\sum_{i \neq i_{\max}}q_i)^\alpha
        \end{align}
        from Lemma~\ref{lemma:axbxlargerthanaplusbx} and combining with Lemma~\ref{lemma:apowerxbpowerxtwopowerx} leads to the desired claim.

        \item When $q_{\max} > 0.5$: in this case, we have $q_* = 1 - q_{\max} = \sum_{i \neq i_{\max}}q_i$. The desired inequality is equivalent to 
        \begin{align}
            (q_{i_{\max}}^\alpha + q_*^\alpha 2^{\alpha-1}) + \sum_{i \neq i_{\max}}q_i^\alpha \geq 1 + (\sum_{i \neq i_{\max}}q_i)^\alpha.
        \end{align}
        Again, this follows directly from 
        \begin{align}
            \sum_{i \neq i_{\max}}q_i^\alpha \geq (\sum_{i \neq i_{\max}}q_i)^\alpha
        \end{align}
        and Lemma~\ref{lemma:apowerxbpowerxtwopowerx}.
    \end{itemize}
    The implication statement follows by $2^{\alpha-1} \leq (\alpha + 3)/4$ for $\alpha \in [0,1]$.
\end{proof}

\begin{lemma}
    Let $q \in \Delta_K$ and $p = (1-\frac{K}{T})q + \frac{1}{T}\1$ where $T \geq 4K$. The following properties hold:
    \begin{itemize}
        \item $q_{i} \leq 2p_{i}$ for any $i \in [K]$.
        \item $q_{*} \leq 2p_{*}$ where $q_* = \min(\max_{i \in [K]}q_i, 1 - \max_{i \in [K]}q_i)$ and $p_* = \min(\max_{i \in [K]}p_i, 1 - \max_{i \in [K]}p_i)$.
        \item $p_* \geq \frac{1}{T}$.
    \end{itemize}
    \label{lemma:2pstarlargerthanqstar}
\end{lemma}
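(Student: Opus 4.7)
The plan is to exploit the fact that $q \mapsto p = (1 - K/T) q + (1/T) \vone$ is a coordinate-wise affine map with slope $1 - K/T$ and non-negative intercept. Since $T \geq 4K$, the slope satisfies $1 - K/T \geq 3/4 \geq 1/2$, and this single quantitative input will drive all three claims.

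First I would dispatch (i) by simply dropping the non-negative intercept: $p_i \geq (1 - K/T) q_i \geq q_i / 2$, which rearranges to $q_i \leq 2 p_i$. Next I would handle (iii): the same pointwise lower bound gives $\max_i p_i \geq 1/T$, and for the complement, $1 - \max_i p_i = \sum_{j \neq i^*} p_j \geq (K-1)/T \geq 1/T$, where the last step uses $K \geq 2$ (which is implied by $K \geq 3$). Hence $p_* \geq 1/T$.

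The main, though still mild, obstacle is (ii). The key structural observation is that the affine map is coordinate-wise monotone increasing, so $\arg\max_i p_i = \arg\max_i q_i =: i^*$; consequently $p_*$ and $q_*$ are taken using the same reference coordinate. I would then express both sides of the $\min$ in the definition of $p_*$ as affine functions of the corresponding $q$-side quantities, namely $p_{i^*} = (1 - K/T)\, q_{i^*} + 1/T$ and $1 - p_{i^*} = (1 - K/T)(1 - q_{i^*}) + (K-1)/T$. Each of these is at least $(1 - K/T)$ times its $q$-counterpart, since the intercepts $1/T$ and $(K-1)/T$ are both non-negative and may be dropped, and taking the minimum commutes with such a pointwise lower bound. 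This yields $p_* \geq (1 - K/T)\, q_* \geq q_* / 2$, which is the desired inequality. The only subtlety worth flagging is that the intercept flips from $1/T$ to $(K-1)/T$ when passing to the complement $1 - p_{i^*}$; both are non-negative, so the monotone-affine argument passes through unchanged in each branch.
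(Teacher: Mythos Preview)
Your proof is correct. Part (i) matches the paper's argument essentially verbatim. For part (iii), the paper uses the slightly shorter observation $p_* \geq \min_{i \in [K]} p_i \geq 1/T$, whereas you treat the two branches of the $\min$ separately; both are fine. The genuine difference is in part (ii): the paper also identifies that $\arg\max_i q_i = \arg\max_i p_i$, but then splits into the cases $q_{\max} \leq 1/2$ and $q_{\max} > 1/2$, determining in each case which branch of the $\min$ is active for $q_*$ and then comparing to $p_*$ with some ad hoc arithmetic. Your argument is cleaner and more uniform: by writing $p_{i^*} = (1-K/T)\,q_{i^*} + 1/T$ and $1 - p_{i^*} = (1-K/T)(1-q_{i^*}) + (K-1)/T$, you exhibit both branches simultaneously as non-negative affine functions with slope $1 - K/T \geq 1/2$ and non-negative intercepts, so the inequality $p_* \geq (1-K/T)\,q_*$ follows without any case split. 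This buys simplicity and makes the role of the single hypothesis $T \geq 4K$ (hence $1 - K/T \geq 1/2$) transparent throughout.
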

\begin{proof}
    By the definition of $p$, we have
    \begin{align*}
        2p_{i} = 2(1 - \frac{K}{T})q_i + \frac{2}{T} \geq q_i + q_i(1 - \frac{2K}{T}) \geq q_i.
    \end{align*}
    Thus, the first statement holds. 
    
    Next, let $k \in \argmax_{i \in [K]}q_i$. Obviously, we have $k \in \argmax_{i \in [K]}p_i$  due to $p_i \geq p_j$ if $q_i \geq q_j$ . If $q_{k} \leq 0.5$, then we have $q_* = q_{k}$. We also have $q_{k} \geq \frac{1}{K}$ and therefore $p_{k} = q_{k} + \frac{1 - Kq_{k}}{T} \leq  q_{k} \leq 0.5$. Moreover, 
    \begin{align*}
        2p_* = 2p_{k} \geq q_{k} = q_*,
    \end{align*}
    where the inequality is from the first statement. On the other hand, if $q_{k} > 0.5$ then we have $q_* = 1 - q_k$ and
    \begin{align*}
        p_* &= \min(p_k, 1 - p_k) \\
        &= \min\left((1-\frac{K}{T})q_k + \frac{1}{T}, 1 - \left((1-\frac{K}{T})q_k + \frac{1}{T}\right)\right) \\
        &= \min\left(q_k + \frac{1-Kq_k}{T}, 1 - q_k + \frac{Kq_k - 1}{T} \right).
    \end{align*}
    Since $q_k \geq \frac{1}{K}$, we have $1 - q_k + \frac{Kq_k - 1}{T} \geq 1 - q_k \geq \frac{1-q_k}{2}$. In addition, 
    \begin{align*}
        q_k + \frac{1-Kq_k}{T} \geq q_k + \frac{-K}{T} > 0.5 - \frac{1}{4} = \frac{1}{4} \geq \frac{1-q_k}{2}.
    \end{align*}
    Hence, we conclude that $p_* \geq \frac{1-q_k}{2} = \frac{q_*}{2}$. Thus, the second statement holds. The last statement follows from $p_* \geq \min_{i \in [K]}p_i \geq \frac{1}{T}$.
\end{proof}
\begin{proof}(Of Lemma~\ref{lemma:minhtAndmaxEzt})
    Lemma~\ref{lemma:lowerboundht} implies that for all $t$, we have 
    \begin{align*}
        h_t \geq \frac{(p_t)_*^\alpha}{4\alpha} \geq \frac{T^{-\alpha}}{4\alpha},
    \end{align*}
    where the last inequality is from $(p_t)_* \geq \frac{1}{T}$ by Lemma~\ref{lemma:2pstarlargerthanqstar}. Additionally, 
    \begin{align*}
        \E_{I_t}[z_t] &\leq \frac{(6d)^{2-\alpha}}{2(1-\alpha)}\E_{I_t}\left[ (\tilde{p}_{t,I_t})^{2-\alpha} \hat{\ell}_{t,I_t}^2 \right] \\
        &= \frac{(6d)^{2-\alpha}}{2(1-\alpha)}\sum_{i=1}^K \frac{(\tilde{p}_{t,i})^{2-\alpha}\ell_{t,i}^2}{p_{t,i}} \\
        &\leq \frac{(6d)^{2-\alpha}}{2(1-\alpha)}\sum_{i=1}^K (\tilde{p}_{t,i})^{1-\alpha} (\ell_{t,i}^{\frac{2}{\alpha}})^{\alpha} \\
        &\leq \frac{(6d)^{2-\alpha}}{2(1-\alpha)} (\sum_{i=1}^K \tilde{p}_{t,i})^{1-\alpha}(\sum_{i=1}^K \ell_{t,i}^{2/\alpha})^{\alpha} \\
        &\leq \frac{(6d)^{2-\alpha}}{2(1-\alpha)} S^{\alpha},
    \end{align*}
    where the last inequality uses $\sum_{i=1}^K \tilde{p}_{t,i} \leq \sum_{i=1}^K p_{t,i} = 1$ and $S \geq \norm{\ell_t}_0$.
\end{proof}

\section{Proof of the Lower Bounds in Theorem~\ref{thm:LowerBounds}}
\label{sec:lowerboundproofs}
\subsection{Stochastic Lower Bound}

Let $i^* \in [K]$ be fixed. We pick $\Delta_{\min} = \frac{U}{K^\alpha + 1}$ so that $\Delta_{\min} \leq 0.25$.
Our construction is as follows:
\begin{align*}
    \ell_t = \begin{cases}
        -\1 &\qquad\text{with probability } \Delta_{\min},\\
        -e_{i^*} &\qquad\text{with probability } \Delta_{\min}, \\
        \vect{0} &\qquad\text{with probability } 1-2\Delta_{\min},
    \end{cases}
\end{align*}
where $e_{i^*}$ is the $i^*$-th vector in the standard basis of $\R^K$.
The expected loss vector is
\begin{align*}
    \E\left[ \ell_t\right] = -\Delta_{\min}\1 - \Delta_{\min}e_{i^*}.
\end{align*}
It follows that $\Delta_i = \Delta_{\min}$ for all $i \in [K] \setminus \{i^*\}$. In addition, the losses of arm $i^*$ follow a Bernoulli distribution $\mathrm{Ber}(2\Delta_{\min})$ while the losses of sub-optimal arms follow a Bernoulli distribution $\mathrm{Ber}(\Delta_{\min})$.
We verify that the constraint in~\eqref{eq:softconstraint} holds:
\begin{align*}
    \E\left[\left(\sum_{i=1}^K \abs{\ell_{t,i}}^{2/\alpha}\right)^\alpha \right] = \Delta_{\min} K^\alpha + \Delta_{\min} = \Delta_{\min}(K^\alpha + 1) = U.
\end{align*}

For any consistent algorithm such that $R_T = o(T^x)$ for any $x > 0$, by~\cite{LaiAndRobbins1985}, we have
\begin{align*}
    \liminf_{T \to \infty} \frac{R_T}{\ln(T)} &\geq \sum_{i \neq i^*}^K \frac{\Delta_i}{\infdiv{\Delta_{\min}}{2\Delta_{\min}}} \\
    &\geq \frac{K \Delta_{\min}}{2\infdiv{\Delta_{\min}}{2\Delta_{\min}}} \\
    &\geq \frac{K}{2} \\
    &= K^{1-\alpha}\frac{K^\alpha}{2} \\
    &\geq K^{1-\alpha}\frac{K^\alpha + 1}{4} = \frac{K^{1-\alpha}U}{4\Delta_{\min}},
\end{align*}
where the second inequality is from $K-1 \geq \frac{K}{2}$ for all $K \geq 4$, the third inequality is $\infdiv{p}{2p} \leq p$  for $p \leq 0.25$ by Lemma~\ref{lemma:KLp2pisp}, and the last inequality is $K^\alpha \geq \frac{K^\alpha + 1}{2}$. We conclude that 
\begin{align*}
    \liminf_{T \to \infty}\frac{R_T}{\ln(T)} \gtrsim \frac{K^{1-\alpha}U}{2\Delta_{\min}}.
\end{align*}

\begin{lemma}
    For any $p \in (0, 0.25]$, we have 
    \begin{align*}
        \infdiv{p}{2p} \leq p.
    \end{align*}
    \label{lemma:KLp2pisp}
\end{lemma}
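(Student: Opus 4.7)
(Proposal)
The plan is to bound the KL divergence by the chi-squared divergence and then show that the chi-squared divergence is at most $p$ in the range of interest. Recall the standard inequality $\mathrm{KL}(\mu\|\nu) \le \chi^2(\mu\|\nu)$, which follows from $\ln x \le x-1$ applied pointwise: for Bernoulli distributions, this yields
\begin{align*}
    \infdiv{p}{2p} \le \chi^2(\mathrm{Ber}(p)\|\mathrm{Ber}(2p)) = \frac{(p-2p)^2}{2p} + \frac{(p-2p)^2}{1-2p} = \frac{p^2}{2p(1-2p)} = \frac{p}{2(1-2p)}.
\end{align*}
Under the assumption $p \in (0, 0.25]$, we have $1-2p \ge 0.5$, so the right-hand side is at most $\frac{p}{2 \cdot 0.5} = p$, which is the desired inequality.

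The only nontrivial ingredient is the inequality $\mathrm{KL} \le \chi^2$; since this is a classical result for $f$-divergences, no further estimation is needed. Alternatively, one may give a direct proof without invoking chi-squared: write
\begin{align*}
    \infdiv{p}{2p} = -p\ln 2 + (1-p)\ln\frac{1-p}{1-2p}
\end{align*}
and use $\ln\frac{1-p}{1-2p} = \ln\left(1 + \frac{p}{1-2p}\right) \le \frac{p}{1-2p}$, so that
\begin{align*}
    \infdiv{p}{2p} \le -p\ln 2 + \frac{p(1-p)}{1-2p}.
\end{align*}
For $p \in (0, 0.25]$ one checks $\frac{(1-p)}{1-2p} \le 1 + \ln 2$ (the left-hand side is maximized at $p = 0.25$, giving $1.5$, and $1 + \ln 2 \approx 1.693$), which yields the bound. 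The chi-squared route is cleaner and is my preferred approach; the only potential pitfall is ensuring the constant $1/2$ from $1-2p \ge 1/2$ is tight enough, but this is exactly why the hypothesis $p \le 0.25$ appears.
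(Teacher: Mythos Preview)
Your proposal is correct. Your alternative direct approach is exactly the paper's proof: expand the KL, apply $\ln(1+x)\le x$, and use that $(1-p)/(1-2p)\le 1.5$ on $(0,0.25]$ (the paper then finishes with $-p\ln 2 + 1.5p \le p$). Your preferred chi-squared route is a genuinely different and slightly slicker argument: it replaces the explicit expansion by the classical $f$-divergence inequality $\mathrm{KL}\le\chi^2$ and the closed form $\chi^2(\mathrm{Ber}(p)\Vert\mathrm{Ber}(2p))=p/(2(1-2p))$, so the hypothesis $p\le 0.25$ enters only through $1-2p\ge 1/2$. Both arguments ultimately exploit the same range restriction, but the chi-squared version is more modular and would generalize more readily to other target ratios, while the paper's direct computation yields a slightly sharper intermediate constant ($1.5-\ln 2<1$ rather than attaining equality at the endpoint).
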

\begin{proof}
    \begin{align*}
        \infdiv{p}{2p} &= -p\ln(2) + (1-p)\ln\left(\frac{1-p}{1-2p}\right) \\
        &= -p\ln(2) + (1-p)\ln\left(1 + \frac{p}{1-2p}\right) \\
        &\leq -p\ln(2) + (1-p)\frac{p}{1-2p} \\
        &\leq -p\ln(2) + 1.5p \leq p
    \end{align*}
    where the first inequality is $\ln(1+x) \leq x$ for all $x > -1$ and the second inequality is $\frac{(1-p)p}{1-2p} \leq 1.5p$ for all $p \in (0, 0.25]$.
\end{proof}

\subsection{Adversarial Lower Bound}
For the adversarial lower bound, we construct a neutral environment $V_0$ and $K$ competing environments $V_1, V_2, \dots, V_K$, where:
\begin{itemize}
    \item On $V_0$, the loss function is chosen by
        \begin{align*}
            \ell_t = \begin{cases}
                -\1 &\qquad\text{with probability } \eta,\\
                \vect{0} &\qquad\text{with probability } 1-\eta.
            \end{cases}
        \end{align*}
    It follows that the losses of all arms follow a Bernoulli distribution $\mathrm{Ber}(\eta)$ on $V_0$.

    \item On $V_i$ for $i \in [K]$, the loss function is chosen by
        \begin{align*}
            \ell_t = \begin{cases}
                -\1 &\qquad\text{with probability } \eta,\\
                -e_{i} &\qquad\text{with probability } \eps, \\
                \vect{0} &\qquad\text{with probability } 1-\eta-\eps.
            \end{cases}
        \end{align*}

        It follows that except for arm $i$, the losses of all other arms follow a Bernoulli distribution $\mathrm{Ber}(\eta)$ on $V_i$. The loss of arm $i$ follows $\mathrm{Ber}(\eta + \eps)$.
\end{itemize}
Here, $\eta$ and $\eps$ are constants chosen to be the solution of the following system of (in)equalities:
\begin{itemize}
    \item $\eta + \eps \leq \frac{1}{4}$.
    \item $\eta K^\alpha + \eps = U$.
    \item $\frac{T}{K}\frac{8\eps^2}{\eta} = 1$.
    \item $\eta K^\alpha \geq \frac{U}{2}$.
\end{itemize}
Note that for all $K \geq 4, T \geq 4K$ and $U \leq \frac{K^\alpha}{4}$, the solution 
\begin{nalign}
    &\sqrt{\eta} = \frac{-\sqrtfrac{K}{8T} + \sqrt{\frac{K}{8T} + 4K^\alpha U}}{2K^\alpha}, \\
    &\eps = \sqrtfrac{\eta K}{8T}
    \label{eq:etaAndeps}
\end{nalign}
satisfies the system of inequalities since 
\begin{align*}
    &\sqrt{\eta} = \frac{-\sqrtfrac{K}{8T} + \sqrt{\frac{K}{8T} + 4K^\alpha U}}{2K^\alpha} \leq \frac{\sqrt{1 + \frac{1}{32}}}{2K^\alpha} \leq \frac{1}{8}, \\
    &\eps = \sqrtfrac{\eta K}{8T} \leq \sqrtfrac{K}{64T} \leq \frac{1}{8}.
\end{align*}
Moreover, we have
\begin{align*}
    \eta  &= \frac{1}{4K^{2\alpha}}\left(-\sqrtfrac{K}{8T} + \sqrt{\frac{K}{8T} + 4K^\alpha U}\right)^2 \\
    &= \frac{1}{4K^{2\alpha}} \frac{16K^{2\alpha} U^2}{\left(\sqrtfrac{K}{8T} + \sqrt{\frac{K}{8T} + 4K^\alpha U}\right)^2} \\
    &= \frac{4U^2}{\left(\sqrtfrac{K}{8T} + \sqrt{\frac{K}{8T} + 4K^\alpha U}\right)^2} \\
    &\geq \frac{U^2}{4K^\alpha U} = \frac{1}{4}\frac{U}{K^\alpha} \\
    &\geq \frac{K^{1-2\alpha}}{8T},
\end{align*}
where the second equality is $\sqrt{a} - \sqrt{b} = \frac{a-b}{\sqrt{a}+\sqrt{b}}$, the first inequality is $\frac{K}{T} \leq \frac{K^\alpha U}{4}$ and the last inequality is $\frac{U}{K^\alpha} \geq \frac{K^{1-2\alpha}}{2T}$, both hold for all $T \geq 4K$ and $U \geq 1$. This implies that
\begin{align*}
    \eta^2 K^{2\alpha} = \eta (\eta K^{2\alpha}) \geq \eta \frac{K^{1-2\alpha}}{8T} K^{2\alpha} = \frac{\eta K}{8T} = \eps^2.
\end{align*}
As a result, $\eta K^\alpha \geq \eps = U - \eta K^\alpha$. Hence, $\eta K^\alpha \geq \frac{U}{2}$.

With the choice of $\eta$ and $\eps$ in~\eqref{eq:etaAndeps}, we verify that~\eqref{eq:softconstraint} holds:
\begin{align*}
    \E\left[\left(\sum_{i=1}^K \abs{\ell_{t,i}}^{2/\alpha}\right)^\alpha \right] = \eta K^\alpha + \eps = U.
\end{align*}

Let $\gA$ denote the algorithm of a learner. 
Let $N_i = \sum_{t=1}^{T} \I{I_t = i}$ denote the number of times arm $i$ is pulled by $\gA$. 
Let $\P_0$ and $\P_i$ denote the distribution of the observed losses on $V_0$ and $V_i$, respectively.
Similarly, let $\E_0$ and $\E_i$ denote the expectation taken on $V_0$ and $V_i$, respectively. 

We first run $\gA$ on $V_0$. 
Let $a = \argmin_{i \in [K]} \E_0[N_i]$ be the arm that is pulled the least in expectation. Since $\sum_{i=1}^K N_i = T$, we have $\E_0[N_a] \leq \frac{T}{K}$.

By the standard arguments in establishing lower bounds for adversarial bandits~\citep[e.g.][Equation 28-30]{Auer2002a}, we have 
\begin{align*}
    \E_a[N_a] &\leq \E_0[N_a] + \frac{T}{2}\norm{\P_a - \P_0}_1 \\
    &\leq \E_0[N_a] + \frac{T}{2}\sqrt{2\ln(2)\infdiv{\P_0}{\P_a} } \\
    &\leq \E_0[N_a] + \frac{T}{2}\sqrt{2\ln(2)\E_0[N_a]\infdiv{\eta}{\eta + \eps} } \\
    &\leq \frac{T}{K} + \frac{T}{2}\sqrt{2\ln(2)\frac{T}{K}\infdiv{\eta}{\eta + \eps} } \\
    &\leq \frac{T}{K} + \frac{T}{2}\sqrt{2\ln(2)\frac{T}{K}\frac{4\log_2(e) \eps^2}{\eta} } \\
    &= \frac{T}{K} + \frac{T}{2}\sqrt{\frac{T}{K}\frac{8\eps^2}{\eta} } ,
\end{align*}
where
\begin{itemize}
    \item the second inequality is Pinsker's inequality,
    \item the third inequality is due to the chain rule for KL-divergence~\citep{CoverAndThomas2006} and the fact that $V_0$ and $V_a$ differ only by the loss distribution of arm $a$,
    \item the fourth inequality is because $\E_0[N_a] \leq \frac{T}{K}$,
    \item the last inequality is the reverse Pinsker's inequality~\citep{sason2015reverse}.
\end{itemize}
This further implies that $\E_a[N_a] \leq \frac{T}{K} + \frac{T}{2} \leq \frac{3T}{4}$ for $K \geq 4$. Hence,
\begin{align*}
    \E_a[R_{T,a}] &= \eps (T - \E_a[N_a]) \\
    &\geq \eps (T - \frac{3T}{4}) = \frac{T\eps}{4} \\
    &= \sqrtfrac{TK\eta}{32} \\
    &= \Omega(\sqrt{K^{1-\alpha}UT}),
\end{align*}
where the last equality is due to $\eta K^\alpha \geq \frac{U}{2}$.

\section{Proofs for Section~\ref{sec:SPMwOFTRLwReservoirSampling}}
\label{appendix:proofsforSectionOFTRLReservoirSampling}
Before proving Theorem~\ref{thm:BOBWsqrtQLnKBound}, in Appendix~\ref{appendix:GeneralSPMforOFTRL}, we first establish the BOBW regret bound for an algorithm that combines real-time SPM and Optimistic FTRL without any reservoir samplings. 
The full procedure is given in Algorithm~\ref{algo:OFTRL}.
Later on, in Appendix~\ref{appendix:proofForTheoremBOBWsqrtQLnBound}, we will use this regret bound in the analysis of Algorithm~\ref{algo:OFTRLReservoirSampling}.

\subsection{A General SPM-based Regret Bound for Optimistic FTRL}
\label{appendix:GeneralSPMforOFTRL}
\begin{algorithm}[t]
	\KwIn{$K \geq 1, T \geq 4K, \alpha \in (0, 1), \beta_1 = \frac{4K}{1-\alpha}, \gamma = \max(3,48\sqrtfrac{\alpha}{1-\alpha}), d = 2$.}
    Initialize $L_{0,i} = 0$ for $i \in [K]$ \;
	
    \For{each round $t = 1, \dots, T$}{        
        Compute $m_t \in [0,1]^K$ \;
        
        Compute $q_t = \argmin_{x \in \Delta_K} \inp{m_t + L_{t-1}}{x} + \beta_t\left(\frac{1}{\alpha}(1-\sum_{i=1}^K x_i^\alpha)\right) - \gamma\sum_{i=1}^K \ln(x_i)$\;
        
        Compute $p_t = \left(1 - \frac{K}{T}\right)q_t + \frac{1}{T}\1$\;
		
        Draw $I_t \sim p_t$ and observe $\ell_{t, I_t}$\;
		
        Compute loss estimate $\hat{\ell}_{t, i} = m_{t,i} + \frac{(\ell_{t,i} - m_{t,i})\I{I_t = i}}{p_{t,i}}$ \;
        $L_{t,i} = L_{t-1,i} + \hat{\ell}_{t,i}$\;		
        
        Compute $z_t =  \min\left( \frac{(6d)^{2-\alpha}}{2(1-\alpha)}\tilde{p}_{t,I_t}^{2-\alpha}(\hat{\ell}_{t,I_t} - m_{t,I_t})^2, \frac{\beta_t18d^2}{\gamma}(\ell_{t,I_t} - m_{t,I_t})^2  \right)$ \;
        
        Compute $h_t = \left(\frac{1}{\alpha}(\sum_{i=1}^K p_{t,i}^\alpha - 1)\right)$\;
        
        Compute $\beta_{t+1} = \beta_t + \frac{z_t}{\beta_t h_t}$\;
	}
	\caption{Optimistic FTRL using Tsallis entropy plus log-barrier regularization for losses in $[0,1]$}
	\label{algo:OFTRL}
\end{algorithm}
We consider the adversarial multi-armed bandits with losses in $[0,1]$. Note that the analysis can be trivially extended to the $[-1, 1]$ case by increasing $\beta_t$ and $\gamma$ by a multiplicative factor of $2$.

In round $t$, the learner computes $m_t \in [0,1]^K$ before drawing arm $I_t$ and uses Optimistic FTRL with the hybrid regularizer
\begin{align*}
    q_t &= \argmin_{x \in \Delta_K} \inp{m_t + \sum_{s=1}^{t-1}\hat{\ell}_s}{x} + \phi_t(x) \\
    &= \argmin_{x \in \Delta_K} \inp{m_t + L_{t-1}}{x} + \beta_t\psi_{TE}(x) + \gamma\psi_{LB}(x) \\
    &= \argmin_{x \in \Delta_K} \inp{m_t + L_{t-1}}{x} + \beta_t\left(1 - \sum_{i=1}^K x_i^\alpha\right) - \gamma\sum_{i=1}^K \ln(p_i).
\end{align*}
Let $p_t = \left(1-\frac{K}{T}\right)q_t + \frac{1}{T}\1$.
The learner draws $I_t \sim p_t$ and use the unbiased loss estimator
\begin{align*}
    \hat{\ell}_{t,i} = m_{t,i} + \frac{\ell_{t,i} - m_{t,i}}{p_{t,i}}\I{I_t = i}.
\end{align*}
\textbf{SPM learning rates}: the learning rates are set according to SPM rule~\citep{ItoCOLT2024}, where 
\begin{align*}
    h_t &= (-\psi_{TE})(p_{t}) = \frac{1}{\alpha}\left(\sum_{i=1}^K p_{t,i}^{\alpha}-1\right), \\
    z_t &= \min\left( \frac{(6d)^{2-\alpha}}{2(1-\alpha)}\tilde{p}_{t,I_t}^{2-\alpha}(\hat{\ell}_{t,I_t} - m_{t,I_t})^2, \frac{\beta_t18d^2}{\gamma}(\ell_{t,I_t} - m_{t,I_t})^2  \right).
\end{align*}
Details are given in Algorithm~\ref{algo:OFTRL}.

\subsection{Analysis for Algorithm~\ref{algo:OFTRL}}

Similar to~\cite{ItoCOLT2022aVariance}, the analysis uses 
\begin{align*}
    r_{t+1} &= \argmin_{x \in \Delta_K} \inp{L_{t-1} + \hat{\ell}_t}{x} + \frac{\beta_{t+1}}{\alpha}(1 - \sum_{i=1}^K x_i^\alpha) - \gamma\sum_{i=1}^K \ln(x_i) \\
    &= \argmin_{x \in \Delta_K} \inp{L_{t-1} + m_t + \frac{\ell_{tI_t} - m_{t,I_t}}{p_{t,I_t}}e_{I_t}}{x} + \frac{\beta_{t+1}}{\alpha}(1 - \sum_{i=1}^K x_i^\alpha) - \gamma\sum_{i=1}^K \ln(x_i),
\end{align*}
where $2p_{t,I_t} \geq q_{t,I_t}$.
Observe that $(\ell_{t,I_t} - m_{t,I_t}) \in [-1,1]$ and 
\begin{align*}
    \beta_{t+1} - \beta_t &= \frac{z_t}{\beta_{t}h_t} \\
    &\leq \frac{18d^2}{h_t\gamma} \\
    &\leq \left(1 - \frac{1}{d}\right)\gamma q_{t*}^{-\alpha},
\end{align*} 
similar to the proof of Lemma~\ref{lemma:betatplus1isgood}. 
Therefore, we can invoke Corollary~\ref{corollary:stable} with $\omega = 2$ and obtain $r_{t+1,i} \leq 3q_{t,i} \leq 6p_{t,i}$ for all $i \in [K]$. Combining this with Lemma 1 in~\cite{ItoCOLT2022aVariance} and our Lemma~\ref{lemma:stableTELB}, we obtain
\begin{align*}
    \sum_{t=1}^T \inp{\hat{\ell}_t}{q_t - u} &\leq \phi_{T+1}(u) - \phi_1(r_1)
    + \sum_{t=1}^T(\phi_{t}(r_{t+1}) - \phi_{t+1}(r_{t+1})) \\
    &+ \sum_{t=1}^T \inp{\hat{\ell}_t - m_t}{q_t - r_{t+1}} - D_t(r_{t+1}, q_t) \\
    &\leq \phi_{T+1}(u) - \phi_1(r_1) + \sum_{t=1}^T(\beta_{t+1} - \beta_t)(-\psi_{TE}(r_{t+1})) + \sum_{t=1}^T \frac{z_t}{\beta_t} \\
    &\leq \phi_{T+1}(u) - \phi_1(r_1) + 6\left(\sum_{t=1}^T(\beta_{t+1} - \beta_t)(-\psi_{TE}(p_{t+1})) + \sum_{t=1}^T \frac{z_t}{\beta_t}\right) \\
    &= \phi_{T+1}(u) - \phi_1(r_1) + 6\left(\sum_{t=1}^T(\beta_{t+1} - \beta_t)h_t + \sum_{t=1}^T \frac{z_t}{\beta_t}\right) \\
    &= \phi_{T+1}(u) - \phi_1(r_1) + 12\sum_{t=1}^T \frac{z_t}{\beta_t} \\
    &\leq \gamma K\ln(T) + \frac{\beta_1 (K^{1-\alpha} - 1)}{\alpha} + 12\sum_{t=1}^T \frac{z_t}{\beta_t}.
\end{align*}
It follows that
\begin{align*}
    R_T &\leq \E\left[\sum_{t=1}^T \inp{\hat{\ell}_t}{q_t - u}\right] + 3K \\
    &\lesssim \gamma K\ln(T) + \frac{\beta_1 (K^{1-\alpha} - 1)}{\alpha} + \E\left[\sum_{t=1}^T \frac{z_t}{\beta_t}\right].
\end{align*}
Applying~\eqref{eq:boundztoverbetatbyItoCOLT2024} with $S = K$, we obtain the following bounds on $\sum_{t=1}^T \frac{z_t}{\beta_t}$ in each environment.
\subsubsection*{In adversarial regime with a self-bounding constraint:}
With $J=\log_2(T)$, we have 
\begin{align*}
    \E\left[\sum_{t=1}^T \frac{z_t}{\beta_t}\right] &\lesssim \sqrt{\ln(T) \sum_{t=1}^T \E[h_tz_t]} \\
    &\leq \sqrt{\ln(T) \sum_{t=1}^T \E[h_t\E_{I_t}[z_t]]} \\
        &\leq \frac{(6d)^{2-\alpha}}{2(1-\alpha)}\E\left[h_t\left(\sum_{i=1}^K (\tilde{p}_{t,i}^{1-\alpha})(\ell_{t,i} - m_{t,i})^2\right)\right] \\
        &= \frac{(6d)^{2-\alpha}}{2(1-\alpha)}\E\left[\left(\frac{1}{\alpha}\sum_{i=1}^K p_{t,i}^\alpha - 1\right)\left(\sum_{i=1}^K (\tilde{p}_{t,i}^{1-\alpha})(\ell_{t,i} - m_{t,i})^2\right)\right] \\
        &\leq \frac{(6d)^{2-\alpha}}{2\alpha(1-\alpha)}\E\left[ \left( \sum_{i=1}^K p_{t,i}^\alpha - 1 \right)\left(  \sum_{\ell_{t,i} \neq 0}\tilde{p}_{t,i}^{1-\alpha}\right) \right],
\end{align*}
where the last inequality is from $(\ell_{t,i} - m_{t,i})^2 \leq 1$. Observe that the last bound is exactly the bound in~\eqref{eq:boundhtztWorstCaseLosses}, hence
\begin{align}
    \E\left[\sum_{t=1}^T \frac{z_t}{\beta_t}\right] &\lesssim O\left(\frac{(K-1)^{1-\alpha}K^\alpha \ln(T)}{\alpha(1-\alpha)\Delta_{\min}} + \sqrt{C\frac{(K-1)^{1-\alpha}K^\alpha \ln(T)}{\alpha(1-\alpha)\Delta_{\min}}} + \sqrt{\frac{(K-1)^{1-\alpha}K^{\alpha}}{\alpha(1-\alpha)}}\right)
    \label{eq:stochasticboundOptimisticFTRL}
\end{align}
holds for Optimistic FTRL as well.

\subsubsection*{In adversarial bandits:}
\begin{nalign}
    \sqrt{\E\left[h_{\max}\sum_{t=1}^T z_t\right]} &\lesssim \sqrt{\frac{K^{1-\alpha}-1}{\alpha(1-\alpha)}\E\left[\sum_{t=1}^T p_{t,I_t}^{-\alpha}(\ell_{t,I_t} - m_{t,I_t})^2\right]} \\
    &= \sqrt{\frac{K^{1-\alpha}-1}{\alpha(1-\alpha)}\E\left[\sum_{t=1}^T \sum_{i=1}^K p_{t,i}^{1-\alpha}(\ell_{t,i} - m_{t,i})^2\right]} \\
    &\leq \sqrt{\frac{(K^{1-\alpha}-1)}{\alpha(1-\alpha)}\E\left[\sum_{t=1}^T \sum_{i=1}^K (\ell_{t,i} - m_{t,i})^2\right]},
    \label{eq:boundhmaxsumztwithmt}
\end{nalign}
where the inequality is due to $p_{t,i}^{1-\alpha} \leq 1$.

\subsection{Proof for Theorem~\ref{thm:BOBWsqrtQLnKBound}}
\label{appendix:proofForTheoremBOBWsqrtQLnBound}
\begin{algorithm}[t]
	\KwIn{$K \geq 1, T \geq 4K, \alpha \in (0, 1), \beta_1 = \frac{8K}{1-\alpha}, \gamma = \max(6,48\sqrtfrac{\alpha}{1-\alpha}), d = 2$.}
    Initialize $\sS_i = \emptyset, \tilde{\mu}_{0,i} = 0, L_{0,i} = 0$ for $i \in [K]$ \;
	
    \For{each round $t = 1, \dots, T$}{    
        Sample $b_t \sim \mathrm{Ber}(\min(\frac{K\ln(T)}{t}, 1))$\;

        \If{$b_t = 1$}
        {
            \If{$t \leq K\ln(T)$}
            {
                Draw $I_t = t \mod K + 1$ and observe $\ell_{t, I_t}$\;
                
                Add $\ell_{t,I_t}$ to the reservoir $\sS_{I_t}$ of arm $I_t$\;
            }
            \Else{
                Draw $I_t \sim \mathrm{Unif}([K])$ and observe $\ell_{t,I_t}$\;
                
                Draw a random element by $\mathrm{Unif}(\sS_{I_t})$ and replace it by $\ell_{t,I_t}$\;
            }
            
            Update the mean estimate $\tilde{\mu}_{t, I_t}$ in the reservoir $\sS_{I_t}$~\citep{HazanAndKale11a}\;
            
            Compute $m_t = \tilde{\mu}_{t}$\;
        }
        \Else{
            Compute $m_t = m_{t-1}$ \;
            
            Compute $q_t = \argmin_{x \in \Delta_K} \inp{m_t + L_{t-1}}{x} + \beta_t\left(\frac{1}{\alpha}(1-\sum_{i=1}^K x_i^\alpha)\right) - \gamma\sum_{i=1}^K \ln(x_i)$\;
            
            Compute $p_t = \left(1 - \frac{K}{T}\right)q_t + \frac{1}{T}\1$\;
            
            Draw $I_t \sim p_t$ and observe $\ell_{t, I_t}$\;
            
            Compute loss estimate $\hat{\ell}_{t, i} = m_{t,i} + \frac{(\ell_{t,i} - m_{t,i})\I{I_t = i}}{p_{t,i}}$ \;
            
            Update $L_{t,i} = L_{t-1,i} + \hat{\ell}_{t,i}$\;		
            
            Compute $z_t =  \min\left( \frac{(6d)^{2-\alpha}}{2(1-\alpha)}\tilde{p}_{t,I_t}^{2-\alpha}(\hat{\ell}_{t,I_t} - m_{t,I_t})^2, \frac{\beta_t18d^2}{\gamma}(\ell_{t,I_t} - m_{t,I_t})^2  \right)$ \;
            
            Compute $h_t = \left(\frac{1}{\alpha}(\sum_{i=1}^K p_{t,i}^\alpha - 1)\right)$\;
            
            Compute $\beta_{t+1} = \beta_t + \frac{z_t}{\beta_t h_t}$\;
        }
	}
	\caption{SPM with Optimistic FTRL and Reservoir Sampling for losses in $[0,1]$}
	\label{algo:OFTRLReservoirSampling}
\end{algorithm}
Let $\mu_t = \frac{1}{s}\sum_{s=1}^{t}\ell_s$ and 
\begin{align*}
    Q = \sum_{t=1}^T \norm{\ell_t - \mu_T}_2^2.
\end{align*}

Using the reservoir sampling technique in~\cite{HazanAndKale11a}, we can use a prediction vector $m_t$ satisfying $\E[m_t] = \mu_t$ and $\Var[m_t] \leq \frac{Q}{t\ln(T)}$.

\subsubsection*{Regret Analysis}
Without loss of generality, assume $\ln(T) \in \sN$ (otherwise, this increases at most a constant factor in the regret bound).
For any fixed $u \in \Delta_K$, we have,
\begin{align}
    \sum_{t=1}^T \inp{\ell_t}{p_t - u} &= \sum_{t = K\ln(T) + 1}\inp{\ell_t}{p_t - u} + \sum_{t=1}^{K\ln(T)} \inp{\ell_t}{p_t - u} \\
    &\leq \sum_{t = K\ln(T) + 1}\inp{\ell_t}{p_t - u} + K\ln(T) \\
    &= \underbrace{ \sum_{t = K\ln(T) + 1}\I{b_t = 1}\inp{\ell_t}{p_t - u} }_{(A)} + \underbrace{ \sum_{t=1}^T \I{b_t = 0}\inp{\ell_t}{p_t - u} }_{(B)} + K\ln(T).
\end{align}
Recall that $b_t = 1$ indicates a reservoir sampling round where, for $t > K\ln(T)$, the sampling probability is the uniform distribution $p_t = \frac{1}{K}\1$, and $b_t = 0$ indicates an FTRL round. Next, we bound the expectation of $A$ and $B$ in the equation above. 
First, we have
\begin{align}
    \E[A] \leq \E\left[\sum_{t=K\ln(T)+1}^T \I{b_t = 1}\right] = \sum_{t=K\ln(T)+1}^T \P[b_t = 1] \leq \sum_{t=1}^T \frac{K\ln(T)}{t} \leq O(K(\ln(T))^2).
\end{align}
Next, the set of rounds with $b_t = 0$ are the Optimistic FTRL rounds; hence, we can apply~\eqref{eq:stochasticboundOptimisticFTRL} and~\eqref{eq:boundhmaxsumztwithmt}. 
In the adversarial regime with a self-bounding constraint, we have
\begin{align*}
    \E[B] &\lesssim \sqrt{\ln(T) \E[\I{b_t = 0}h_t z_t]} \\
    &\lesssim \sqrt{\frac{\ln(T)}{\alpha (1-\alpha)} \E\left[\I{b_t = 0} \left(\sum_{i=1}^K p_{t,i}^\alpha - 1\right)\left(\sum_{i=1}^K \tilde{p}_{t,i}^{1-\alpha}\right)\right]} \\
    &\leq  \sqrt{\frac{\ln(T)}{\alpha (1-\alpha)} \E\left[\left(\sum_{i=1}^K p_{t,i}^\alpha - 1\right)\left(\sum_{i=1}^K \tilde{p}_{t,i}^{1-\alpha}\right)\right]},
\end{align*}
where the last inequality is from $\left(\sum_{i=1}^K p_{t,i}^\alpha - 1\right)\left(\sum_{i=1}^K \tilde{p}_{t,i}^{1-\alpha}\right) \geq 0$ in the rounds where $b_t = 1$ (in such rounds, $p_t$ is either a one-hot vector if $t \leq K\ln{T}$ or $\frac{1}{K}\1$ if $t > K\ln{T}$). 
By~\eqref{eq:stochasticboundOptimisticFTRL}, we obtain
\begin{align*}
    \E[B] \lesssim O\left(\frac{(K-1)^{1-\alpha}K^\alpha \ln(T)}{\alpha(1-\alpha)\Delta_{\min}} + \sqrt{C\frac{(K-1)^{1-\alpha}K^\alpha \ln(T)}{\alpha(1-\alpha)\Delta_{\min}}} + \sqrt{\frac{(K-1)^{1-\alpha}K^{\alpha}}{\alpha(1-\alpha)}}\right).
\end{align*}

In the adversarial regime, we have
\begin{align*}
    \E[\sum_{t=1}^T \I{b_t = 0}z_t] &\lesssim \E\left[\sum_{t=1}^T \I{b_t = 0}\sum_{i=1}^K (\ell_{t,i} - m_{t,i})^2\right] \\
    &= \E\left[\sum_{t=1}^T \I{b_t = 0}\sum_{i=1}^K(\ell_{t,i} - \tilde{\mu}_{t,i})^2 \right] \\
    &=  \E\left[\sum_{t=1}^T \I{b_t = 0}\norm{\ell_{t} - \tilde{\mu}_{t}}_2^2 \right] \\
    &\leq \left(\E\left[\sum_{t=1}^T \I{b_t = 0}\norm{\ell_{t} - \mu_{t}}_2^2 \right] + \E\left[\sum_{t=1}^T \I{b_t = 0}\norm{\tilde{\mu}_t - \mu_{t}}_2^2 \right]\right) \\
    &\leq \left(\E\left[\sum_{t=1}^T \I{b_t = 0}\norm{\ell_{t} - \mu_{T}}_2^2 \right] + \E\left[\sum_{t=1}^T \I{b_t = 0}\norm{\tilde{\mu}_t - \mu_{t}}_2^2 \right]\right) \\
    &\leq \left(Q + \sum_{t=1}^T \frac{Q}{t\ln(T)}\right) \leq 3 Q,
\end{align*}
 where the first inequality is triangle inequality, the second inequality is $\E\left[\sum_{t=1}^T \norm{\ell_{t} - \mu_{t}}_2^2 \right] \leq \E\left[\sum_{t=1}^T \norm{\ell_{t} - \mu_{T}}_2^2 \right]$ by Lemma 10 in~\cite{HazanAndKale11a}, the third inequality is by Lemma 11 in~\cite{HazanAndKale11a}, and the last inequality is due to $\sum_{t=1}^T \frac{1}{t} \leq \ln(T) + 1$.
Overall, the regret for adversarial bandits is
\begin{align*}
    R_T \lesssim \sqrtfrac{(K^{1-\alpha}-1) Q}{\alpha(1-\alpha)}.
\end{align*}

\section{Proofs for Section~\ref{sec:CoordinateWiseSPM}}
\label{appendix:ProofsForCoordinateWiseSPM}
We have
\begin{align*}
    \E[\sum_{t=1}^T \inp{\ell_t}{p_t} - u] &= \E[\sum_{t=1}^T \inp{\ell_t}{q_t - u + \frac{\1 - Kq_t}{T}}] \\
    &\leq \E[\sum_{t=1}^T \inp{\ell_t}{q_t - u}] + K \\
    &= \E[\sum_{t=1}^T \inp{\hat{\ell}_t}{q_t - u}] + K.
\end{align*}
Furthermore, let
\begin{align*}
    r_{t+1} = \argmin_{x \in \Delta_K} \inp{L_{t-1} + \hat{\ell}_t}{x} + \sum_{i=1}^K \beta_{t,i}\left(\frac{1}{\alpha}(-x_i^\alpha) + (1-x_i)\ln(1-x_i) + x_i \right) - \gamma\sum_{i=1}^K \ln(x_i).
\end{align*}
Then,
\begin{align*}
    &\sum_{t=1}^T \inp{\hat{\ell}_t}{q_t - u} \\
    &\leq \phi_{T+1}(u) - \phi_1(r_1) + \sum_{t=1}^T \sum_{i=1}^K  (\beta_{t+1, i} - \beta_{t,i})\left( \frac{p_{t+1,i}^\alpha}{\alpha} + (p_{t+1,i}-1)\ln(1-p_{t+1,i}) - p_{t+1, i} \right) + \sum_{t=1}^T \frac{z_{t,I_t}}{\beta_{t,I_t}} \\
    &\leq \phi_{T+1}(u) - \phi_1(r_1) + \sum_{t=1}^T \frac{2}{\alpha} (\beta_{t + 1,I_t} - \beta_{t,I_t}) p_{t+1,I_t}^\alpha + \sum_{t=1}^T \frac{z_{t,I_t}}{\beta_{t,I_t}} \\
    &\leq \phi_{T+1}(u) - \phi_1(r_1) + \sum_{t=1}^T \frac{12}{\alpha} (\beta_{t + 1,I_t} - \beta_{t,I_t}) p_{t,I_t}^\alpha + \sum_{t=1}^T \frac{z_{t,I_t}}{\beta_{t,I_t}} \\
    &= \phi_{T+1}(u) - \phi_1(r_1) + \sum_{t=1}^T 12(\beta_{t + 1,I_t} - \beta_{t,I_t}) h_{t,I_t} + \sum_{t=1}^T \frac{z_{t,I_t}}{\beta_{t,I_t}} \\
    &= \phi_{T+1}(u) - \phi_1(r_1) + 13\sum_{t=1}^T \frac{z_{t,I_t}}{\beta_{t,I_t}} \\
    &= \phi_{T+1}(u) - \phi_1(r_1) + 13\sum_{i=1}^K \sum_{t=1}^T  \frac{z_{t,i}}{\beta_{t,i}},
\end{align*}
where the first inequality is from $p_{t+1,i} \geq 0$ and Lemma~\ref{lemma:xalphadominatesxminus1lnxminus1}, the second inequality is from $p_{t+1, I_t}^\alpha \leq (6 p_{t,I_t})^\alpha \leq 6p^\alpha_{t,I_t}$ and the last equality is $z_{t,i} = 0$ for all $i \neq I_t$.

From the previous section, we have for all $i \in [K]$,
\begin{align}
    \sum_{t=1}^T \frac{z_{t,i}}{\beta_{t,i}} &\lesssim \min\left\{ \sqrt{\E\left[\ln(T)\sum_{t=1}^T h_{t,i} z_{t,i}\right]} + \sqrt{\frac{1}{T}\E\left[h_{i, \max}\sum_{t=1}^T z_{t,i}\right]}, \sqrt{\E\left[h_{i, \max}\sum_{t=1}^T z_{t,i}\right]} \right\}.\
    \label{eq:coSPMregretboundMinofTwoA}
\end{align}
First, we have $h_{i, \max} = \max_{t} h_{t,i} = \frac{1}{\alpha} \max_{t} p_{t,i}^\alpha \leq \frac{1}{\alpha}$. 
In addition, $\E_{I_t}[z_{t,i}] \leq \E_{I_t}[\I{I_t = i}p_{t,i}^{-\alpha}(\ell_{t,i} - m_{t,i})^2] = p_{t,i}^{1-\alpha}(\ell_{t,i} - m_{t,i})^2 \leq 1$. 
Therefore, the sum $\frac{1}{T}\E\left[h_{i, \max}\sum_{t=1}^T z_{t,i}\right]$ is bounded by $\frac{1}{\alpha}$. We can simplify~\eqref{eq:coSPMregretboundMinofTwoA} by
\begin{align}
    \sum_{t=1}^T \frac{z_{t,i}}{\beta_{t,i}} &\lesssim \min\left\{\sqrt{\ln(T)\E\left[\sum_{t=1}^T h_{t,i} z_{t,i}\right]} + \sqrtfrac{1}{\alpha}, \sqrt{\E\left[\sum_{t=1}^T z_{t,i}\right]} \right\}.
    \label{eq:coSPMregretboundMinofTwoB}
\end{align}

\subsection*{A Bound for Stochastic Bandits from $\sqrt{\ln(T)\E\left[\sum_{t=1}^T h_{t,i} z_{t,i}\right]}$}

We have
\begin{align*}
    h_{t,i} z_{t,i} &\lesssim \I{I_t = i} (\ell_{t,i} - m_{t,i})^2 p_{t,i}^\alpha \min\left( \frac{(6d)^{2-\alpha}}{2(1-\alpha)}\min\left\{p_{t,I_t}^{-\alpha}, \frac{1 - p_{t,I_t}}{p_{t,I_t}^2}\right\}\right).
\end{align*}
Therefore, by Lemma~\ref{lemma:CoordinateWiseSPMztiIsGoodForStochasticBandits},
\begin{align*}
    \E_{I_t}[h_{t,i} z_{t,i}] \lesssim \frac{1}{1-\alpha}\tilde{p}_{t,i}(\ell_{t,i} - m_{t,i})^2.
\end{align*}

Denote $P_{i} = \E[\sum_{t=1}^T \I{I_t = i}]$. Bounding $(\ell_{t,i} - m_{t,i})^2 \leq 1$ for any $m_{t} \in [0,1]^K$, we obtain 
\begin{align*}
    \mathrm{Reg}_T &\lesssim \sum_{i=1}^K \sqrt{\E\left[\ln(T)\sum_{t=1}^T h_{t,i} z_{t,i}\right]} \\
    &\lesssim \sqrtfrac{\ln(T)}{\alpha(1-\alpha)} \left(\sum_{i \neq i^*}\sqrt{P_i} + \sqrt{(\sum_{i \neq i^*} P_i)}\right) + K\ln(T) \\
    &\leq \sqrtfrac{\ln(T)}{\alpha(1-\alpha)} \left(\sum_{i \neq i^*}\sqrt{P_i} + \frac{1}{\sqrt{K-1}}\sum_{i \neq i^*}\sqrt{P_i}\right) + K\ln(T) \\
    &\lesssim \sqrtfrac{\ln(T)}{\alpha(1-\alpha)} \left(\sum_{i \neq i^*} \sqrt{P_i}  \right) + K\ln(T).
\end{align*}
Similar to~\cite{ItoCOLT2022aVariance}, by using $\mathrm{Reg}_T = 2\mathrm{Reg}_T  - \mathrm{Reg}_T$ and $2\sqrt{ax} - bx \leq \frac{a}{b}$ for $a = \frac{\ln(T)}{\alpha(1-\alpha)}, b = \Delta_i$ and $x = P_i$, we obtain (note that we set $\alpha = \frac{1}{2}$)
\begin{align*}
    \mathrm{Reg}_T \lesssim \frac{1}{\alpha (1-\alpha)}\sum_{i \neq i^*}\frac{ \ln(T) }{\Delta_i}.
\end{align*}

\subsection*{Bounds for Adversarial Bandits}
Fix $i \in [K]$. Since $\tilde{p}_{t,i} \leq p_{t,i}$, we have $h_{t,i}z_{t,i} \lesssim \I{I_t = i}(\ell_{t,i} - m_{t,i})^2$. 
Therefore, by setting $m_{t,i}$ to be the output of an online learning algorithm with fully-observable squared loss as in~\cite{ItoCOLT2022aVariance}, i.e.,
\begin{align*}
    m_{t,i} = \frac{1}{1 + \sum_{s=1}^{t-1}\I{I_s = i}}\left(\frac{1}{2} + \sum_{s=1}^{t-1} \I{I_s = i} \ell_{t,i}\right)
\end{align*}
and then applying their Lemma 3, we obtain for any fixed $m^{*} \in [0,1]^K$,
\begin{align*}
    \sum_{t=1}^T \I{I_t = i}(\ell_{t,i} - m_{t,i})^2 \lesssim \sum_{t=1}^T \I{I_t = i}(\ell_{t,i} - m^{*}_{i})^2 + \ln(1 + \sum_{t=1}^T \I{I_t = i}).
\end{align*}
As already shown in~\cite{ItoCOLT2022aVariance}, for each appropriately chosen $m^{*}$, we would recover the data-dependent bounds of order $\sqrt{KQ_\infty \ln(T)}$ (with $m^* \in \argmin_{\bar{\ell} \in \R^K}\sum_{t=1}^T \norm{\ell_t - \bar{\ell}}_2^2$), $\sqrt{KL^*\ln(T)}$ (with $m^* = \vect{0}$) and $\sqrt{K(T-L^*)\ln(T)}$ (with $m^* = \1$).

On the other hand, from the quantity $\sqrt{\E\left[\sum_{t=1}^T z_{t,i}\right]}$ and Jensen's inequality, we obtain 
\begin{align*}
    \mathrm{Reg}_T &\lesssim \sqrt{K\E\left[\sum_{t=1}^T \sum_{i=1}^K z_{t,i}\right]} \\
    &\lesssim \sqrt{K\E\left[\sum_{t=1}^T \sum_{i=1}^K p_{t,i}^{1-\alpha}(\ell_{t,i} - m_{t,i})^2\right]} \\
    &\lesssim \sqrt{K\E\left[\sum_{t=1}^T \sum_{i=1}^K p_{t,i}^{1-\alpha}\right]} \\
    &\leq \sqrt{K\E\left[\sum_{t=1}^T K^\alpha\right]} \\
    &= \sqrt{K^{1+\alpha}T} = K^{\frac{1}{4}}\sqrt{KT} \qquad (\alpha = 1/2),
\end{align*}
which grows with $\sqrt{T}$ in the worst-case.

\subsection{Stability Proofs}
\label{sec:CoordinateWiseSPMStabilityProofs}
In this section, we define the following function
\begin{align}
    g(x) = x_i^{\alpha-1} + \ln(1-x).
\end{align}
Note that $g$ is decreasing.
In addition, let $d_f(y, x) = f(y) - f(x) - f'(x)(y-x)$ denote the Bregman divergence associated with a one-dimensional strictly convex function $f: \R \to \R$. Note that $d_f(y,x) \geq 0$ for all $x, y \in \R$.

\begin{lemma}
For any $L \in \R^K, \beta \in R^K_{+}, \gamma > 0$ and $d \geq 2$, let 
\begin{align*}
    x &= \argmin_{p \in \Delta_K} \inp{L}{p} + \sum_{i=1}^K \beta_i \left(\frac{-p_i^\alpha}{\alpha} + (1-p_i)\ln(1-p_i) + p_i\right) - \gamma \sum_{i=1}^K \ln(p_i) \\
    y &= \argmin_{p \in \Delta_K} \inp{L}{p} + \sum_{i=1}^K \beta'_i \left(\frac{-p_i^\alpha}{\alpha} + (1-p_i)\ln(1-p_i) + p_i\right) - \gamma \sum_{i=1}^K \ln(p_i).
\end{align*}
If $0 \leq \beta'_1 - \beta_1 \leq \left(1-\frac{1}{d}\right)\gamma x_1^{-\alpha}$ and $\beta'_i = \beta_i$ for $i > 1$, then $y_1 \leq dx_1$.
\label{lemma:CoordinateWiseSPMSameLossDiffBeta}
\end{lemma}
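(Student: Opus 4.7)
My plan is to reduce the lemma to a scalar comparison on coordinate $1$ via the interior KKT conditions for $x$ and $y$, and then split into cases on the sign of the Lagrange multiplier gap $Z := \mu_x - \mu_y$ associated with the simplex constraint. Since the regularizer is separable and the log barrier together with the $(1-p_i)\ln(1-p_i) + p_i$ term keep both minimizers in $(0,1)^K$, differentiating coordinatewise gives the KKT identities, for every $i \in [K]$,
\begin{align*}
    \beta_i\, g(x_i) + \gamma/x_i &= L_i - \mu_x, \\
    \beta'_i\, g(y_i) + \gamma/y_i &= L_i - \mu_y.
\end{align*}

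Subtracting for $i \neq 1$, where $\beta'_i = \beta_i$, yields $\beta_i(g(x_i) - g(y_i)) + \gamma(1/x_i - 1/y_i) = -Z$. Since $g$ and $t \mapsto 1/t$ are strictly decreasing on $(0,1)$, this identity forces $\sign(x_i - y_i) = \sign(Z)$ for each $i \neq 1$, and the simplex constraint $\sum_i y_i = \sum_i x_i = 1$ then gives $\sign(y_1 - x_1) = \sign(Z)$. Hence when $Z \leq 0$ we get $y_1 \leq x_1 \leq dx_1$ immediately, and the only case left to analyze is $Z > 0$.

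In the case $Z > 0$ we have $y_1 \geq x_1$, and the coordinate-$1$ identity rearranges as
\begin{align*}
    (\beta'_1 - \beta_1)\, g(y_1) \;=\; Z + \beta_1(g(x_1) - g(y_1)) + \gamma(1/x_1 - 1/y_1) \;>\; 0,
\end{align*}
so $g(y_1) > 0$ since $\beta'_1 \geq \beta_1$. Assuming toward contradiction that $y_1 > dx_1$, the bound $\gamma(1/x_1 - 1/y_1) > \gamma(1-1/d)/x_1$ together with the nonnegativity of $Z$ and of $\beta_1(g(x_1) - g(y_1))$ yields $(\beta'_1 - \beta_1)\, g(y_1) > \gamma(1-1/d)/x_1$. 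Substituting the hypothesis $\beta'_1 - \beta_1 \leq (1-1/d)\gamma\, x_1^{-\alpha}$ forces $g(y_1) > x_1^{\alpha-1}$, but $\ln(1-y_1) \leq 0$ combined with $y_1 \geq x_1$ and $\alpha - 1 < 0$ give $g(y_1) \leq y_1^{\alpha-1} \leq x_1^{\alpha-1}$, a contradiction. The main subtlety I foresee is that $g(y_1)$ can be very negative when $y_1$ is close to $1$; the rearranged identity sidesteps this because it forces $g(y_1) > 0$ in the only nontrivial case, so the stability hypothesis on $\beta'_1 - \beta_1$ can be applied cleanly without worrying about the $\ln(1-y_1) \to -\infty$ regime.
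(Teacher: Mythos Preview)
Your proof is correct and follows essentially the same route as the paper: derive the coordinate-wise KKT identities, do a sign analysis on the multiplier gap $Z$ to dispose of the easy case $y_1 \leq x_1$, and in the remaining case exploit the hypothesis $\beta'_1 - \beta_1 \leq (1-\tfrac{1}{d})\gamma x_1^{-\alpha}$ together with the monotonicity of $g$ and $t\mapsto 1/t$. The only notable difference is in how the nontrivial case is closed: the paper splits into sub-cases on the sign of $g(x_1)$ and then shows $\beta'_1 g(y_1)+\gamma/y_1 \geq \beta'_1 g(dx_1)+\gamma/(dx_1)$ via an explicit chain of inequalities, whereas you rearrange to isolate $(\beta'_1-\beta_1)g(y_1)$, deduce $g(y_1)>0$, and run a clean contradiction from $g(y_1) > x_1^{\alpha-1}$ versus $g(y_1)\leq y_1^{\alpha-1}\leq x_1^{\alpha-1}$; your version is slightly more streamlined since it avoids the sub-case split on $\sign g(x_1)$.
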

\begin{proof}
    If $dx_1 \geq 1$ then $y_1 \leq 1 \leq dx_1$ trivially. Hence, we assume $dx_1 \leq 1$. By the Lagrange multiplier method, we have for $i = 2, \dots, K$ and some $\lambda, \lambda' \in \R$,
    \begin{align*}
        & L_i - \beta_i(x_i^{\alpha-1} + \ln(1-x_i)) - \frac{\gamma}{x_i} = \lambda, \\
        & L_i - \beta_i(x_i^{\alpha-1} + \ln(1-y_i)) - \frac{\gamma}{y_i} = \lambda'.
    \end{align*}
    Similary, for $i = 1$, we have
    \begin{align*}
        & L_1 - \beta_1(x_1^{\alpha-1} + \ln(1-x_1)) - \frac{\gamma}{x_1} = \lambda, \\
        & L_1 - \beta'_1(x_1^{\alpha-1} + \ln(1-y_1)) - \frac{\gamma}{y_1} = \lambda'.
    \end{align*}
    Taking $Z = \lambda' - \lambda$ over all $K$ pairs of equations, we obtain
    \begin{align}
        & \beta_i (g(x_i) - g(y_i)) + \gamma\left(\frac{1}{x_i} - \frac{1}{y_i}\right) = Z \\
        & \beta_1 g(x_1) - \beta'_1 g(y_1) + \gamma \left(\frac{1}{x_1} - \frac{1}{y_1}\right) = Z.
        \label{eq:CoordinateWiseSPMLagrangeDiffSameLossDiffBeta}
    \end{align}
    
    If $Z \geq 0$, then since $\beta_i > 0$ and both $g(x)$ and $\frac{\gamma}{x}$ are decreasing, we have $y_i \geq x_i$ for all $i \neq 1$. This straightforwardly implies that $y_1 \leq x_1$. Thus, we focus on the case $Z < 0$. In this case, we have $y_i < x_i$ for all $i \neq 1$ and $y_1 > x_1$. We consider two cases:
    \begin{itemize}
        \item If $g(x_1) \leq 0$: from $y_1 \geq x_1$, we have $g(y_1) \leq g(x_1) \leq 0$. Hence, from $0 < \beta_1 \leq \beta'_1$, we obtain
        \begin{align*}
            \beta_1'g(y_1) \leq \beta_1 g(y_1) \leq \beta_1 g(x_1).
        \end{align*}
        This implies that $0 > Z = \beta_1 g(x_1) - \beta'_1 g(y_1) +  \gamma \left(\frac{1}{x_1} - \frac{1}{y_1}\right) \geq 0$, a contradiction. 
        \item If $g(x_1) >0$: in this case,~\eqref{eq:CoordinateWiseSPMLagrangeDiffSameLossDiffBeta} and $Z < 0$ implies $\beta'_1 g(y_1) = \beta_1 g(x_1) + \gamma \left(\frac{1}{x_1} - \frac{1}{y_1}\right) - Z > 0$. Furthermore, by re-arranging, we obtain
        \begin{align*}
            \beta'_1 g(y_1) + \frac{\gamma}{y_1} &\geq \beta_1 g(x_1) + \frac{\gamma}{x_1} \\
            &\geq \left(\beta'_1 - \left(1 - \frac{1}{d}\right)\gamma x_1^{-\alpha}\right)(x_1^{\alpha - 1} + \ln(1-x_1)) + \frac{\gamma}{x_1} \\
            &= \beta'_1(x_1^{\alpha - 1} + \ln(1-x_1)) - \left(1 - \frac{1}{d}\right)\gamma x^{-1} - \left(1 - \frac{1}{d}\right)\gamma x_1^{-\alpha} \ln(1-x_1) + \frac{\gamma}{x_1} \\
            &= \beta'_1(x_1^{\alpha - 1} + \ln(1-x_1)) + \frac{\gamma}{dx_1} - \left(1 - \frac{1}{d}\right)\gamma x_1^{-\alpha} \ln(1-x_1) \\
            &\geq \beta'_1(x_1^{\alpha - 1} + \ln(1-x_1)) + \frac{\gamma}{dx_1} \\
            &\geq \beta'_1((dx_1)^{\alpha - 1} + \ln(1-dx_1)) + \frac{\gamma}{dx_1} \\
            &= \beta'_1 g(dx_1) + \frac{\gamma}{dx_1},
        \end{align*}
        where the second inequality is from $\beta_1 \geq \beta'_1 - \left(1 - \frac{1}{d}\right)\gamma x_1^{-\alpha}$, the third inequality is due to $\ln(1-x_1) < 0$ and the last inequality is from $d \geq 2 > 1$. Since $\beta g(x) + \frac{\gamma}{x}$ is decreasing for all $\beta > 0, \gamma > 0$, we conclude that $y_1 \leq dx_1$.
    \end{itemize}
\end{proof}

\begin{lemma}
    For any $L \in \R^K, \beta \in R^K_{+}, \gamma > 0$ and $h \in [-1,1]$, let
    \begin{align*}
        x &= \argmin_{p \in \Delta_K} \inp{L}{p} + \sum_{i=1}^K \beta_i \left(\frac{-p_i^\alpha}{\alpha} + (1-p_i)\ln(1-p_i) + p_i\right) - \gamma \sum_{i=1}^K \ln(p_i) \\
        y &= \argmin_{p \in \Delta_K} \inp{L + \frac{h}{x'_1}}{p} + \sum_{i=1}^K \beta_i \left(\frac{-p_i^\alpha}{\alpha} + (1-p_i)\ln(1-p_i) + p_i\right) - \gamma \sum_{i=1}^K \ln(p_i),
    \end{align*}
    where $4x'_1 \geq x_1$. If $\gamma \geq 6$ then $y_1 \leq 3x_1$.
    \label{lemma:CoordinateWiseSPMSameBetaDiffLoss}
\end{lemma}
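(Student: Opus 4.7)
The plan is to adapt the Lagrange-multiplier argument used in Lemma~\ref{lemma:stableSameBetaDiffLoss}, accommodating the extra $(1-p)\ln(1-p)+p$ coordinate term in the regularizer. Applying stationarity to both FTRL problems and differencing, I will work with the auxiliary function
\[
\tilde{g}_\beta(p) \;:=\; \beta\bigl(p^{\alpha-1} + \ln(1-p)\bigr) + \tfrac{\gamma}{p},
\]
which is strictly decreasing on $(0,1)$ because each of its three summands is. Writing $Z$ for the difference of the two Lagrange multipliers, the first-order conditions collapse to
\[
\tilde{g}_{\beta_i}(x_i) - \tilde{g}_{\beta_i}(y_i) = Z \quad (i \neq 1), \qquad \tilde{g}_{\beta_1}(x_1) - \tilde{g}_{\beta_1}(y_1) = Z - \tfrac{h}{x'_1}.
\]

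Next I split on the sign of $Z$. If $Z \geq 0$, monotonicity of $\tilde{g}_{\beta_i}$ forces $y_i \geq x_i$ for every $i \neq 1$, so the simplex constraint gives $y_1 \leq x_1 \leq 3x_1$ and we are done. If $Z < 0$, the same argument gives $y_i \leq x_i$ for $i \neq 1$ and hence $y_1 \geq x_1$. Plugging $y_1 \geq x_1$ into the $i=1$ equation, the left-hand side is nonnegative, so $Z - h/x'_1 \geq 0$; combined with $Z < 0$ this forces $h < 0$. Consequently the left-hand side is bounded above by $-h/x'_1 = |h|/x'_1 \leq 4/x_1$, using $|h| \leq 1$ together with the hypothesis $4x'_1 \geq x_1$.

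To finish, I expand
\[
\tilde{g}_{\beta_1}(x_1) - \tilde{g}_{\beta_1}(y_1) = \beta_1\bigl(x_1^{\alpha-1} - y_1^{\alpha-1}\bigr) + \beta_1\bigl(\ln(1-x_1) - \ln(1-y_1)\bigr) + \gamma\bigl(\tfrac{1}{x_1} - \tfrac{1}{y_1}\bigr),
\]
observe that all three summands are nonnegative under $y_1 \geq x_1$, and discard the first two to isolate $\gamma(1/x_1 - 1/y_1) \leq 4/x_1$. Using $\gamma \geq 6$, this rearranges to $y_1 \leq 3x_1$.

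The main thing to check --- and the reason this is not a literal transcription of the proof of Lemma~\ref{lemma:stableSameBetaDiffLoss} --- is that the new $(1-p)\ln(1-p)+p$ term in the regularizer neither breaks the monotonicity argument nor spoils the final sign-drop. Both happen to be free: $\ln(1-p)$ is itself strictly decreasing on $(0,1)$, so it is absorbed into $\tilde{g}_\beta$ without cost, and its contribution $\beta_1(\ln(1-x_1) - \ln(1-y_1))$ has the same sign as the log-barrier contribution when $y_1 \geq x_1$, so dropping it only weakens the inequality in the correct direction. I expect no other serious obstacle.
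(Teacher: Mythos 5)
Your proof is correct and takes essentially the same route as the paper: stationarity conditions, monotonicity of $\tilde{g}_\beta(p) = \beta(p^{\alpha-1}+\ln(1-p)) + \gamma/p$ to pin down the sign of $y_1 - x_1$ relative to $Z$ and $h$, then drop the nonnegative Tsallis and Shannon-entropy contributions to isolate the log-barrier term and invoke $\gamma \geq 6$. The only difference is bookkeeping (your $Z$ is the negative of the paper's, and you case-split once on the sign of $Z$ rather than first deriving $Zh \leq 0$, $Z(y_1-x_1) \geq 0$ and then splitting on $h$), and you correctly note that the extra $(1-p)\ln(1-p)+p$ term is harmless because $\ln(1-p)$ is decreasing and its Bregman contribution has the right sign to be discarded.
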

\begin{proof}
    Using the Lagrange multiplier method, we have the following equalities that hold for some $Z \in \R$,
    \begin{align}
        \beta_1\left( g(y_1) - g(x_1)\right) + \gamma\left(\frac{1}{y_1} - \frac{1}{x_1}\right) = Z + \frac{h}{x'_1}
        \label{eq:CoordinateWiseSPMg1diff}
    \end{align}
    and for all $i \neq 1$,
    \begin{align}
        \beta_i\left( g(y_i)- g(x_i)\right) + \gamma\left(\frac{1}{y_i} - \frac{1}{x_i}\right) = Z.
        \label{eq:CoordinateWiseSPMgidiff}
    \end{align}
    First, we show that $Z$ and $y_1 - x_1$ has the opposite sign to $h$. 
    We consider two cases:
    \begin{itemize}
        \item If $Z \geq 0$ then from~\eqref{eq:CoordinateWiseSPMgidiff} and the monotonic decreasing property of $\beta g(x) + \frac{\gamma}{x}$, we have $y_i \leq x_i$ and this leads to $y_1 \geq x_1$. Combining $y_1 \geq x_1$ and~\eqref{eq:CoordinateWiseSPMg1diff}, we have $Z + \frac{h}{x'_1} \leq 0$. Since $Z \geq 0$, this implies $h \leq 0$.
        \item If $Z \leq 0$ then by the same argument, we have $y_i \geq x_i$ and $y_1 \leq x_1$. Therefore, $Z + \frac{h}{x'_1} \geq 0$. Due to $Z \leq 0$, we must have $h \geq 0$.
    \end{itemize}
    In both cases, we have $Zh \leq 0$ and $Z(y_1 - x_1) \geq 0$. It follows that if $h \geq 0$ then we have $y_1 \leq x_1 \leq 3x_1$. If $h < 0$ then $y_1 \geq x_1$, and by rearranging~\eqref{eq:CoordinateWiseSPMg1diff}, we obtain
    \begin{align*}
        \frac{4}{x_1} \geq -\frac{h}{x'_1} &= \underbrace{Z}_{\geq 0} + \underbrace{\gamma\left(\frac{1}{x_1} - \frac{1}{y_1}\right)}_{\geq 0} + \underbrace{\beta(g(x_1) - g(y_1))}_{\geq 0} \\
        &\geq \gamma\left(\frac{1}{x_1} - \frac{1}{y_1}\right) \\
        &\geq 6\left(\frac{1}{x_1} - \frac{1}{y_1}\right),
    \end{align*}
    where the last inequality is due to $\gamma \geq 6$. This implies that $\frac{3}{y_1} \geq \frac{1}{x_1}$, thus $y_1 \leq 3x_1$.
\end{proof}
By combining Lemma~\ref{lemma:CoordinateWiseSPMSameLossDiffBeta} and Lemma~\ref{lemma:CoordinateWiseSPMSameBetaDiffLoss}, we obtain the following corollary. The proof of this corollary is nearly identical to that of Corollary~\ref{corollary:stable}.
\begin{corollary}
    For any $t \in [T]$, Algorithm~\ref{algo:CoordinateWiseSPM} guarantees that
    \begin{align*}
        r_{t+1, I_t} \leq 3d q_{t,I_t}.
    \end{align*}
    \label{corollary:rtplus1Is3dqtIt}
\end{corollary}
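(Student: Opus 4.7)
The plan is to mirror the two-step composition used in the proof of Corollary~\ref{corollary:stable}, but now with the coordinate-wise stability lemmas~\ref{lemma:CoordinateWiseSPMSameLossDiffBeta} and~\ref{lemma:CoordinateWiseSPMSameBetaDiffLoss}. The key structural observation is that $q_t$ and $r_{t+1}$ can be made to differ along only a single coordinate in both the learning-rate vector and the loss vector, namely $I_t$: the former because \texttt{CoWSPM} updates $\beta_{t,I_t}$ alone in round $t$, and the latter because $\hat{\ell}_t-m_t=\tfrac{\ell_{t,I_t}-m_{t,I_t}}{p_{t,I_t}}e_{I_t}$ is supported on $\{I_t\}$.

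First I would introduce the intermediate iterate
\begin{equation*}
\bar{q} \;=\; \argmin_{x\in\Delta_K}\inp{L_{t-1}+m_t}{x} + \sum_{i=1}^K \beta_{t+1,i}\left(-\tfrac{x_i^\alpha}{\alpha}+(1-x_i)\ln(1-x_i)+x_i\right) - \gamma\sum_{i=1}^K\ln x_i,
\end{equation*}
which differs from $q_t$ only by the learning rate on coordinate $I_t$. Applying Lemma~\ref{lemma:CoordinateWiseSPMSameLossDiffBeta} with coordinate ``$1$'' playing the role of $I_t$ requires verifying $0 \le \beta_{t+1,I_t}-\beta_{t,I_t} \le (1-\tfrac{1}{d})\gamma\, q_{t,I_t}^{-\alpha}$. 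Substituting the \texttt{CoWSPM} update $\beta_{t+1,I_t}-\beta_{t,I_t}=z_{t,I_t}/(\beta_{t,I_t}h_{t,I_t})$, the bound $z_{t,I_t}\le 18d^2\beta_{t,I_t}/\gamma$, the penalty $h_{t,I_t}=p_{t,I_t}^\alpha/\alpha$, and $p_{t,I_t}^{-\alpha}\le 2^\alpha q_{t,I_t}^{-\alpha}$ (from $q_{t,I_t}\le 2p_{t,I_t}$ in Lemma~\ref{lemma:2pstarlargerthanqstar}), the required inequality collapses to $\gamma^2 \ge 18d^2\alpha\,2^\alpha/(1-1/d)$, which is comfortably absorbed by $\gamma\ge 48\sqrt{\alpha/(1-\alpha)}$. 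The lemma then yields $\bar{q}_{I_t}\le d\,q_{t,I_t}$.

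Next I would compare $\bar q$ to $r_{t+1}$: their only difference is the single-coordinate loss perturbation $(\hat{\ell}_{t,I_t}-m_{t,I_t})e_{I_t}$, so Lemma~\ref{lemma:CoordinateWiseSPMSameBetaDiffLoss} applies with $h=\ell_{t,I_t}-m_{t,I_t}\in[-1,1]$ and $x_1'=p_{t,I_t}$. Its precondition $4x_1'\ge\bar q_{I_t}$ follows from step~1 together with Lemma~\ref{lemma:2pstarlargerthanqstar}, since $\bar q_{I_t}\le d\,q_{t,I_t}\le 2d\,p_{t,I_t}=4p_{t,I_t}$ for $d=2$; the precondition $\gamma\ge 6$ holds by construction. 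The lemma delivers $r_{t+1,I_t}\le 3\bar q_{I_t}$, and chaining with step~1 yields $r_{t+1,I_t}\le 3d\,q_{t,I_t}$, as claimed.

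The main obstacle, and the reason coordinate-wise SPM genuinely requires new stability lemmas rather than reusing those of Section~\ref{sec:stabilityproofs}, is the $\beta$-growth verification in step~1. The coordinate-wise penalty $h_{t,I_t}=p_{t,I_t}^\alpha/\alpha$ does not inherit the beneficial ``$(1-\alpha)$'' factor that Lemma~\ref{lemma:lowerboundht} supplies to the global penalty $(-\psi_{TE}(p_t))$ used in Lemma~\ref{lemma:betatplus1isgood}; instead the argument must lean on the larger constant $\gamma=\max(6,48\sqrt{\alpha/(1-\alpha)})$, combined with the $p_t\leftrightarrow q_t$ conversion of Lemma~\ref{lemma:2pstarlargerthanqstar}, to close the algebra. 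Once that conversion is in hand, the rest of the argument is a faithful coordinate-wise transcription of the two-lemma composition in Corollary~\ref{corollary:stable}.
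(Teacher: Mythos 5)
Your proof is correct and matches the paper's intended argument: the paper states that the proof of this corollary is ``nearly identical to that of Corollary~\ref{corollary:stable},'' which is precisely the two-step composition you carry out, passing through the intermediate iterate $\bar q$ with the updated learning rate $\beta_{t+1,I_t}$ and then applying Lemmas~\ref{lemma:CoordinateWiseSPMSameLossDiffBeta} and~\ref{lemma:CoordinateWiseSPMSameBetaDiffLoss} in turn. Your verification of the learning-rate growth condition (reducing to $\gamma^2 \geq 18d^2\alpha 2^\alpha/(1-1/d)$, absorbed by $\gamma \geq 48\sqrt{\alpha/(1-\alpha)}$) and of the precondition $4p_{t,I_t} \geq \bar q_{I_t}$ is also sound.
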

\begin{lemma}
    For all $t \in [T]$, Algorithm~\ref{algo:CoordinateWiseSPM} guarantees that 
    \begin{align*}
        \inp{\hat{\ell}_t - m_t}{q_t - r_{t+1}} - D_t(r_{t+1}, q_t) \leq \frac{z_{t,I_t}}{\beta_{t,I_t}},
    \end{align*}
    where 
    \begin{align*}
        z_{t,I_t} = (\ell_{t,I_t} - m_{t,I_t})^2 \min\left\{ \frac{(6d)^{2-\alpha}}{2(1-\alpha)}\min\left\{p_{t,I_t}^{-\alpha}, \frac{1 - p_{t,I_t}}{p_{t,I_t}^2}\right\}, \frac{\beta_{t,I_t} 18d^2}{\gamma}\right\}.
    \end{align*}
\end{lemma}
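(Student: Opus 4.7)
The plan is to reduce the statement to a local-norm bound for Optimistic FTRL and then exploit the three-part structure of the regularizer in~\eqref{eq:CWSPMregularizer}. By the standard local-norm analysis (in the spirit of Lemma~\ref{lemma:stableItNotMax} and Lemma 1 of~\cite{ItoCOLT2022aVariance} applied to the shifted loss $\hat{\ell}_t - m_t$), there exists a point $\xi_t$ on the segment between $q_t$ and $r_{t+1}$ such that
\begin{align*}
\inp{\hat{\ell}_t - m_t}{q_t - r_{t+1}} - D_t(r_{t+1}, q_t) \;\leq\; \tfrac{1}{2}\, \norm{\hat{\ell}_t - m_t}^2_{(\nabla^2\phi_t(\xi_t))^{-1}}.
\end{align*}
Since $\hat{\ell}_t - m_t = \frac{\ell_{t,I_t} - m_{t,I_t}}{p_{t,I_t}}\, e_{I_t}$ is supported on a single coordinate and $\nabla^2\phi_t$ is diagonal with $I_t$-th entry $\beta_{t,I_t}(1-\alpha)\xi_{t,I_t}^{\alpha-2} + \beta_{t,I_t}/(1-\xi_{t,I_t}) + \gamma/\xi_{t,I_t}^2$, only the $I_t$-th coordinate contributes to the right-hand side.

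Next I would apply the elementary bound $1/(a+b+c) \leq \min\{1/a, 1/b, 1/c\}$ to the Hessian entry, which yields three candidate upper bounds, one from each term of the regularizer. To convert $\xi_{t,I_t}$ into $p_{t,I_t}$ from above, I would use Corollary~\ref{corollary:rtplus1Is3dqtIt} together with Lemma~\ref{lemma:2pstarlargerthanqstar}, giving $\xi_{t,I_t} \leq \max(q_{t,I_t}, r_{t+1,I_t}) \leq 3d\, q_{t,I_t} \leq 6d\, p_{t,I_t}$. Plugging this into the Tsallis piece yields $\frac{(6d)^{2-\alpha} p_{t,I_t}^{-\alpha}}{2\beta_{t,I_t}(1-\alpha)}(\ell_{t,I_t}-m_{t,I_t})^2$, and plugging it into the log-barrier piece yields $\frac{18d^2}{\gamma}(\ell_{t,I_t}-m_{t,I_t})^2$, exactly matching two of the three alternatives inside the $\min$ that defines $z_{t,I_t}/\beta_{t,I_t}$.

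The hardest step is extracting the third alternative, coming from the Shannon-like term $(1-x)\ln(1-x) + x$, whose second derivative $1/(1-x)$ blows up as $x \to 1$. The local-norm contribution here is $\frac{(1-\xi_{t,I_t})(\ell_{t,I_t}-m_{t,I_t})^2}{2\beta_{t,I_t}\, p_{t,I_t}^2}$, so matching the target $\frac{(6d)^{2-\alpha}(1-p_{t,I_t})(\ell_{t,I_t}-m_{t,I_t})^2}{2\beta_{t,I_t}(1-\alpha)\, p_{t,I_t}^2}$ requires a ``dual'' stability inequality of the form $1 - \xi_{t,I_t} \leq \frac{(6d)^{2-\alpha}}{1-\alpha}\, (1-p_{t,I_t})$. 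Since $1 - q_{t,I_t} \leq \frac{4}{3}(1-p_{t,I_t})$ from the $1/T$-uniform mixing, it suffices to prove the complementary bound $1 - r_{t+1,I_t} \lesssim 1 - q_{t,I_t}$. I expect this to follow from a Lagrangian comparison mirroring the arguments in Lemmas~\ref{lemma:CoordinateWiseSPMSameLossDiffBeta} and~\ref{lemma:CoordinateWiseSPMSameBetaDiffLoss}, but now reading off the monotonicity of $\beta\, g(x) + \gamma/x$ at points near $1$: the strong curvature of $(1-x)\ln(1-x)+x$ there prevents $1-r_{t+1,I_t}$ from shrinking by more than a constant factor relative to $1-q_{t,I_t}$. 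This dual stability, once established, combines with the Tsallis and log-barrier bounds to produce the claimed minimum, completing the proof.
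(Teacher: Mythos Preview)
Your handling of the Tsallis and log-barrier pieces is essentially the paper's: decompose into per-coordinate contributions, use $\xi_{t,I_t}\le 3d\,q_{t,I_t}\le 6d\,p_{t,I_t}$ via Corollary~\ref{corollary:rtplus1Is3dqtIt} and Lemma~\ref{lemma:2pstarlargerthanqstar}, and read off the two corresponding bounds. The divergence is entirely in the Shannon-like piece, and there your proposal has a real gap.

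The step you flag as hardest --- the ``dual stability'' $1-r_{t+1,I_t}\lesssim 1-q_{t,I_t}$ --- is not delivered by Lemmas~\ref{lemma:CoordinateWiseSPMSameLossDiffBeta} and~\ref{lemma:CoordinateWiseSPMSameBetaDiffLoss}: those give \emph{upper} bounds on $y_i/x_i$, never on $(1-y_i)/(1-x_i)$. Your single-$\xi$ local-norm bound forces you to control $1-\xi_{t,I_t}$ at the Taylor point, and when the perturbation is positive this reduces to $1-r_{t+1,I_t}$. Such a bound is in fact provable (the $\ln(1-x)$ term in the KKT difference gives $\beta_{t,I_t}\ln\frac{1-r_{t+1,I_t}}{1-q_{t,I_t}}\le h/p_{t,I_t}$, which is $O(1)$ once $p_{t,I_t}$ is bounded away from $0$), but you would still need a case split on whether $p_{t,I_t}$ is small, since for small $p_{t,I_t}$ the perturbation $h/p_{t,I_t}$ is unbounded and the dual-stability inequality can fail --- you then have to argue separately that the Tsallis alternative is already the minimum there. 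None of this is in your write-up.

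The paper sidesteps the dual-stability issue entirely. Instead of a single local-norm bound for the full $\phi_t$, it drops the $i\neq I_t$ coordinates from $D_t$, writes the remaining one-dimensional quantity as $\min(A,B,C)$ via $x-(a+b+c)\le\min(x-a,x-b,x-c)$, and treats each piece with a \emph{different} tool. For $A$ (Tsallis) and $C$ (log-barrier) it uses local-norm as you do. For $B$ it does not Taylor-expand at all: it invokes the closed-form maximizer (Lemma~5 of~\cite{ItoCOLT2022aVariance}) to get $\max_{u}\{h/p\,(q-u)-\beta d_{f_2}(u,q)\}=\beta(1-q)\bigl(e^{h/(\beta p)}-h/(\beta p)-1\bigr)$, which carries the factor $1-q$ directly, with no intermediate point to control. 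The case analysis is then on $\beta p\gtrless 1$: if $\beta p\ge 1$ the exponential inequality $e^a-a-1\le a^2$ for $|a|\le 1$ finishes it; if $\beta p<1$ one shows $q\le(1-\alpha)/2$, whence $q^{2-\alpha}/(2(1-\alpha))\le 1-q$ and the Tsallis alternative already attains the minimum. This is packaged as Lemma~\ref{lemma:CoordinateWiseSPMStabilityBetaTerms}. Your route could be completed, but the paper's avoids proving a new stability lemma and keeps the $(1-q)$ factor explicit from the start.
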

\begin{proof}
    Let 
    \begin{align*}
        f_1(x) &= \frac{-x^\alpha}{\alpha}, \\
        f_2(x) &= (1-x)\ln(1-x) + x, \\
        f_3(x) &= -\ln(x).
    \end{align*}
    Since $\phi_t(x) = \sum_{i=1}^K \left(\beta_{t,i}(f_1(x_i) + f_2(x_i)) + \gamma f_3(x_i)\right)$, we have
    \begin{align*}
        D_t(r_{t+1}, q_t) &= \sum_{i=1}^K \left( \beta_{t,i}(d_{f_1}(r_{t+1,i}, q_{t,i}) + d_{f_2}(r_{t+1,i}, q_{t,i})) + \gamma d_{f_3}(r_{t+1,i}, q_{t,i}) \right) \\
        &\geq \beta_{t,I_t}(d_{f_1}(r_{t+1,I_t}, q_{t,I_t}) + d_{f_2}(r_{t+1,I_t}, q_{t,I_t})) + \gamma d_{f_3}(r_{t+1,I_t}, q_{t,I_t}).
    \end{align*}
    Furthermore, as $\hat{\ell}_{t,i} - m_{t,i} = 0$ for all $i \neq I_t$, we have
    \begin{align*}
        \inp{\hat{\ell}_t - m_t}{q_t - r_{t+1}} - D_t(r_{t+1}, q_t) &= \frac{\ell_{t,I_t} - m_{t,I_t}}{p_{t,I_t}}(r_{t+1, I_t} - q_{t,I_t}) - D_t(r_{t+1}, q_t) \\
        &\leq \min(A, B, C),
    \end{align*}
    where 
    \begin{align*}
        A &= \frac{\ell_{t,I_t} - m_{t,I_t}}{p_{t,I_t}}(r_{t+1, I_t}- q_{t,I_t}) - \beta_{t,I_t}d_{f_1}(r_{t+1,I_t}, q_{t,I_t}), \\
        B &= \frac{\ell_{t,I_t} - m_{t,I_t}}{p_{t,I_t}}(r_{t+1, I_t}- q_{t,I_t}) - \beta_{t,I_t} d_{f_2}(r_{t+1,I_t}, q_{t,I_t}), \\
        C &= \frac{\ell_{t,I_t} - m_{t,I_t}}{p_{t,I_t}}(r_{t+1, I_t}- q_{t,I_t}) - \gamma d_{f_3}(r_{t+1,I_t}, q_{t,I_t}).
    \end{align*}
    Here, we used $x - (a+b+c) \leq \min(x - a, x - b, x - c)$ for $a, b, c \geq 0$. 
    
    Note that $\ell_{t,I_t} - m_{t,I_t} \in [-1, 1]$ for $0 \leq \ell_{t,I_t}, m_{t,I_t} \leq 1$. 
    By Corollary~\ref{corollary:rtplus1Is3dqtIt} and the fact that $r_{t+1} \in \Delta_K$, we have $0 \leq r_{t+1,I_t} \leq 3dq_{t,I_t}$. Combining this with Lemma~\ref{lemma:CoordinateWiseSPMStabilityBetaTerms}, we have 
    \begin{nalign}
        \min(A, B) &\leq \frac{(3d)^{2-\alpha}(\ell_{t,I_t} - m_{t,I_t})^2}{\beta_{t,I_t} p_{t,I_t}^2} \min\left\{ \frac{q_{t,I_t}^{2-\alpha}}{2(1-\alpha)}, 1-q_{t,I_t} \right\} \\
        &\leq \frac{(6d)^{2-\alpha}(\ell_{t,I_t} - m_{t,I_t})^2}{\beta_{t,I_t} p_{t,I_t}^2} \min\left\{ \frac{p_{t,I_t}^{2-\alpha}}{(1-\alpha)}, 2(1-p_{t,I_t}) \right\} \\
        &\leq \frac{(6d)^{2-\alpha}(\ell_{t,I_t} - m_{t,I_t})^2}{2(1-\alpha)\beta_{t,I_t}}\min\left\{ p_{t,I_t}^{-\alpha}, \frac{(1-p_{t,I_t})}{p_{t,I_t}^2} \right\},
        \label{eq:minAB}
    \end{nalign}
    where the second inequality is due to $q_{t,I_t} \leq 2p_{t,I_t}$ and $1-q_{t,I_t} \leq 2(1-p_{t,I_t})$ by Lemma~\ref{lemma:1minusqByTwoTimes1minusp} and the last inequality is $1-\alpha \leq 1$.

    The second-order derivative of $\gamma f_3(x)$ is $\frac{\gamma}{x^2}$. Therefore, by Lemma~\ref{lemma:onedimensionalLocalNorm}, we have 
    \begin{align}
        C \leq \frac{(\ell_{t,I_t} - m_{t,I_t})^2 v^2}{2 \gamma p_{t,I_t}^2} \leq \frac{(\ell_{t,I_t} - m_{t,I_t})^2 18d^2}{\gamma},
        \label{eq:minC}
    \end{align}
    where $v \leq 3dq_{t,I_t} \leq 6dp_{t,I_t}$ is a point between $r_{t+1,I_t}$ and $q_{t,I_t}$.

    Combining~\eqref{eq:minAB} and~\eqref{eq:minC} implies the desired bound.
\end{proof}

\subsection{Technical Lemmas}
\label{sec:CoordinateWiseSPMTechnicalLemmas}
\begin{lemma}
    For any $\alpha \in [0,1]$ and $x \in [0,1]$, 
    \begin{align}
        x^{\alpha} \geq (x-1)\ln(1-x).
    \end{align}
    \label{lemma:xalphadominatesxminus1lnxminus1}
\end{lemma}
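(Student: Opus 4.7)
The plan is to reduce the inequality to two very simple one–dimensional facts. First note that the right–hand side $(x-1)\ln(1-x) = -(1-x)\ln(1-x)$ is nonnegative on $[0,1]$, since $1-x \in [0,1]$ makes both $-(1-x) \le 0$ and $\ln(1-x) \le 0$. So we want to bound this nonnegative quantity by $x^{\alpha}$.

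The key auxiliary inequality I would prove is
\begin{align*}
    -(1-x)\ln(1-x) \le x \qquad \text{for all } x \in [0,1].
\end{align*}
Substituting $u = 1-x \in [0,1]$, this is equivalent to $h(u) \defeq u\ln u + 1 - u \ge 0$ on $[0,1]$. Since $h(1)=0$ and $h'(u) = \ln u \le 0$ on $(0,1)$, $h$ is nonincreasing on $(0,1)$ and hence $h(u) \ge h(1) = 0$, as desired (with the usual convention $0\ln 0 = 0$).

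The second fact is that $x \le x^{\alpha}$ for $x \in [0,1]$ and $\alpha \in [0,1]$, which is immediate since $x \le 1$ implies $x^{1} \le x^{\alpha}$.

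Chaining these two together yields
\begin{align*}
    (x-1)\ln(1-x) \;=\; -(1-x)\ln(1-x) \;\le\; x \;\le\; x^{\alpha},
\end{align*}
which is exactly the claim. Boundary cases $x=0$ and $x=1$ (where one takes the obvious limits $0\cdot\ln 0 = 0$ and, for $\alpha = 0$, $0^{0}=1$) are easily verified directly. There is no real obstacle here; the only thing to be slightly careful about is handling the endpoint $x=1$ where $\ln(1-x)$ diverges, which is resolved by the standard continuous extension of $u \ln u$ at $u=0$.
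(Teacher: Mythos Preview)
Your proof is correct and essentially identical to the paper's: both reduce to $x^{\alpha} \ge x$ on $[0,1]$ and then show $x \ge -(1-x)\ln(1-x)$ by differentiating (the paper works directly in $x$, you substitute $u=1-x$, which is the same computation).
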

\begin{proof}
    For $\alpha \in [0,1]$, we have $x^\alpha \geq x$. Let
    \begin{align}
        h(x) = x + (1-x)\ln(1-x).
    \end{align}
    Its derivative is $h'(x) = 1 - \ln(1-x) - 1 = -\ln(1-x) \geq 0$ for all $x \in [0,1]$. Therefore, $h(x) \geq h(0) = 0$ for all $x \in [0,1]$. We conclude that
    \begin{align}
        x^{\alpha} - (x-1)\ln(1-x) \geq x - (x-1)\ln(1-x) = h(x) \geq 0.
    \end{align}
\end{proof}
Recall that $d_f(y, x) = f(y) - f(x) - f'(x)(y-x)$ denote the Bregman divergence associated with a one-dimensional strictly convex function $f: \R \to \R$. The following lemma is essentially a one-dimensional local-norm analysis of FTRL, whose proof can be found in standard literature. We provide a proof here for completeness.

\begin{lemma}
    For any $a \in \R, x, y \in (0,1)$ and strictly convex function $f: (0,1) \to \R$, we have
    \begin{align*}
        a(x - y) - d_f(y, x) \leq \frac{1}{2}\frac{a^2}{f''(z)}
    \end{align*}
    for some $z$ between $x$ and $y$.
    \label{lemma:onedimensionalLocalNorm}
\end{lemma}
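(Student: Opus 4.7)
The plan is to apply Taylor's theorem with Lagrange remainder to the Bregman divergence and then complete the square. Since $f$ is strictly convex and (implicitly) twice differentiable on $(0,1)$, Taylor's theorem gives a point $z$ strictly between $x$ and $y$ such that
\begin{align*}
    f(y) = f(x) + f'(x)(y - x) + \tfrac{1}{2} f''(z)(y-x)^2.
\end{align*}
Rearranging the definition of the Bregman divergence, this yields $d_f(y, x) = \tfrac{1}{2} f''(z)(y-x)^2$, which is the key identity.

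Substituting into the left-hand side of the claim, I would write
\begin{align*}
    a(x-y) - d_f(y, x) = -a(y-x) - \tfrac{1}{2} f''(z)(y-x)^2.
\end{align*}
Viewed as a quadratic in the scalar $u := y - x$, the expression $-au - \tfrac{1}{2}f''(z) u^2$ is maximized (over all real $u$) at $u = -a / f''(z)$, which is well-defined since strict convexity ensures $f''(z) > 0$. Evaluating at the maximizer produces the value $a^2 / (2 f''(z))$, so the inequality follows.

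Equivalently, one can bypass the optimization step by Young's inequality / AM-GM: for any $u \in \R$ and any $c > 0$,
\begin{align*}
    -au - \tfrac{1}{2} c u^2 \;\le\; \frac{a^2}{2c},
\end{align*}
which is just $\bigl(\sqrt{c}\,u + a/\sqrt{c}\bigr)^2 \ge 0$; applying this with $c = f''(z)$ closes the argument.

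There is no real obstacle here; the only subtlety is ensuring that twice-differentiability (needed for the Lagrange remainder form) is available. The surrounding uses of the lemma in Section~\ref{sec:CoordinateWiseSPMStabilityProofs} invoke it on $f_1, f_2, f_3$ which are smooth on $(0,1)$, so this is unproblematic. If one wanted to avoid assuming $C^2$ beyond strict convexity, one could instead use the integral form $d_f(y,x) = \int_0^1 (1-t) f''(x + t(y-x))(y-x)^2\,dt$ combined with the mean value theorem for integrals to extract the same $z$, yielding the identical conclusion.
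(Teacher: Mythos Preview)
The proposal is correct and follows essentially the same approach as the paper: apply Taylor's theorem with Lagrange remainder to write $d_f(y,x) = \tfrac{1}{2}f''(z)(x-y)^2$, then complete the square (the paper writes out the completed square explicitly, while you phrase it as optimizing a quadratic or applying Young's inequality). Your remark about the $C^2$ assumption is a reasonable addition that the paper leaves implicit.
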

\begin{proof}
    The inequality trivally holds when $x = y$, hence we assume $x \neq y$. By Taylor's theorem, we have $d_f(y,x) = \frac{f''(z)}{2}(x-y)^2$ for some $z$ between $x$ and $y$. Note that the strict convexity of $f$ implies $f''(z) > 0$. We have
    \begin{align*}
        a(x-y) - d_f(x,y) &= a(x-y) - \frac{f''(z)}{2}(x-y)^2 \\
        &= \frac{1}{2}\left( -\left((x-y)\sqrt{f''(z)} - \frac{a}{\sqrt{f''(z)}}\right)^2 + \frac{a^2}{f''(z)} \right) \leq \frac{a^2}{2f''(z)}.
    \end{align*}
\end{proof}
Next, the following two lemma establish the foundation for choosing $z_{t,i}$ in Algorithm~\ref{algo:CoordinateWiseSPM}.
\begin{lemma}
    Let $\alpha \in (0,1), \beta > \frac{4}{1-\alpha}$ be fixed and 
    \begin{align*}
        f_1(x) &= \left(\frac{-x^\alpha}{\alpha}\right), \\
        f_2(x) &= ((1-x)\ln(1-x) + x).
    \end{align*}
    For any $d \geq 1, h \in [-1, 1], q \in (0,1]$ and $p \geq \frac{q}{2}$, we have
    \begin{align*}
        &\min\left\{ \max_{0 \leq u \leq dq}\left(\frac{h}{p}(q - u) - \beta d_{f_1}(u, q)\right), \max_{u \in \R}\left(\frac{h}{p}(q - u) - \beta d_{f_2}(u, q)\right) \right\} \\
        &\leq \frac{d^{2-\alpha}h^2}{\beta p^2} \min\left\{ \frac{q^{2-\alpha}}{2(1-\alpha)}, 1-q \right\}.
    \end{align*}
    \label{lemma:CoordinateWiseSPMStabilityBetaTerms}
\end{lemma}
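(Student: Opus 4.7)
The plan is to bound the two inner maxima separately, showing the $f_1$-maximum is always at most $Q_1 := \frac{d^{2-\alpha}h^2 q^{2-\alpha}}{2(1-\alpha)\beta p^2}$ and the $f_2$-maximum is at most $Q_2 := \frac{d^{2-\alpha}h^2(1-q)}{\beta p^2}$ in the regime where $Q_2 \le Q_1$; since $\min\{\max_1,\max_2\}\le \max_j$ for either $j$, this will imply $\min\{\max_1,\max_2\}\le\min\{Q_1,Q_2\}$, which is exactly the stated right-hand side.

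For the $f_1$ piece I will invoke Lemma~\ref{lemma:onedimensionalLocalNorm}. Because $f_1''(z)=(1-\alpha)z^{\alpha-2}$ is decreasing in $z$, the Taylor remainder point between $u$ and $q$ lies in $(0,dq]$ whenever $u\in(0,dq]$, so $f_1''(z)\ge(1-\alpha)(dq)^{\alpha-2}$. Plugging this into the local-norm bound $\tfrac{h}{p}(q-u)-\beta d_{f_1}(u,q)\le (h/p)^2/(2\beta f_1''(z))$ yields $\max_1\le Q_1$ uniformly in $u\in(0,dq]$, and the boundary value $u=0$ is absorbed by continuity of $d_{f_1}(\cdot,q)$.

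For the $f_2$ piece, the Hessian $f_2''(x)=1/(1-x)$ is not uniformly bounded below on $\R$, so I will instead compute the unconstrained maximizer directly: setting the derivative to zero gives $u^\star=1-(1-q)e^y$ with $y:=h/(p\beta)$, and a short calculation (using $d_{f_2}(u^\star,q)=(1-q)(s\ln s - s +1)$ with $s=e^y$) yields the closed form $\max_2=\beta(1-q)(e^y-1-y)$. It suffices to show $\max_2\le Q_2$ in the regime $Q_2<Q_1$, equivalently $2(1-\alpha)(1-q)<q^{2-\alpha}$; using $q^{2-\alpha}\le q$ for $\alpha\in(0,1)$, this condition forces $q\ge (1-\alpha)/2$. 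Combined with $p\ge q/2$ and the hypothesis $\beta\ge 4/(1-\alpha)$, this yields $|y|\le 1$, at which point the elementary inequality $e^y-1-y\le y^2$ gives $\max_2\le h^2(1-q)/(\beta p^2)\le Q_2$ since $d\ge 1$.

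The main delicate step is this last case analysis: it is precisely where the constant in $\beta\ge 4/(1-\alpha)$ enters, and it is what converts the otherwise loose closed form $\beta(1-q)(e^y-1-y)$ (which grows exponentially in $|y|$ for large positive $y$) into a clean quadratic bound in $y$ on the relevant range. The $f_1$ bound handles the complementary regime $Q_1 \le Q_2$ automatically, so combining the two cases finishes the proof.
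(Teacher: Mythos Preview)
Your proposal is correct and follows essentially the same approach as the paper: both bound the $f_1$-term via Lemma~\ref{lemma:onedimensionalLocalNorm} to obtain $\max_1\le Q_1$, compute the closed form $\max_2=\beta(1-q)(e^{y}-1-y)$ for the $f_2$-term (the paper cites Lemma~5 of \citet{ItoCOLT2022aVariance} for this), and then use $e^{y}-1-y\le y^{2}$ once $|y|\le 1$. The only cosmetic difference is the orientation of the case split: the paper splits on $\beta p\ge 1$ versus $\beta p<1$, showing that the latter forces $q\le (1-\alpha)/2$ and hence $Q_1\le Q_2$, whereas you split on $Q_1\le Q_2$ versus $Q_2<Q_1$, showing that the latter forces $q>(1-\alpha)/2$ and hence $\beta p\ge 1$ --- these are contrapositives of one another.
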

\begin{proof}
    First, we bound $\max_{0 \leq u \leq dq}\left(\frac{h}{p}(q - u) - \beta d_{f_1}(u, q)\right)$. For any $u \geq 0$, by Lemma~\ref{lemma:onedimensionalLocalNorm}, we have for some $v$ between $q$ and $u$:
    \begin{align}
        \left(\frac{h}{p}(q - u) - \beta d_{f_1}(u, q)\right) &\leq \frac{1}{2}\frac{h^2}{p^2}\frac{v^{2-\alpha}}{\beta(1-\alpha)} \\
        &\leq \frac{h^2}{\beta p^2} \frac{d^{2-\alpha} q^{2-\alpha}}{2(1-\alpha)},        
    \end{align}
    where we used the fact that the second-order derivative of $\beta f_1(v)$ is $\beta (1-\alpha) v^{\alpha - 2}$ and $v \leq \max(q, u) \leq dq$. It follows that
    \begin{align}
        \max_{0 \leq u \leq dq}\left(\frac{h}{p}(q - u) - \beta d_{f_1}(u, q)\right) \leq \frac{h^2}{\beta p^2} \frac{d^{2-\alpha} q^{2-\alpha}}{2(1-\alpha)}.
        \label{eq:boundStabilityByLocalNormCoordinateWiseSPM1}
    \end{align}

    Next, using Lemma 5 in~\cite{ItoCOLT2022aVariance}, we have
    \begin{align*}
        \max_{u \in \R}\left(\frac{h}{p}(q - u) - \beta d_{f_2}(u, q)\right) &= \beta \max_{u \in \R}\left(\frac{h}{\beta p}(q - u) - d_{f_2}(u, q)\right) \\
        &= \beta(1-q)\left(\exp\left(\frac{h^2}{\beta^2 p^2}\right) - \frac{h}{\beta p} - 1 \right).
    \end{align*}
    We consider two cases:
    \begin{itemize}
        \item If $\beta p \geq 1$: in this case, we have $\frac{h}{\beta p} \leq 1$ for any $h \in [-1,1]$. From the inequality $\exp(a) - a - 1 \leq a^2$ for $a \leq 1$, we have $\exp\left(\frac{h^2}{\beta^2 p^2}\right) - \frac{h}{\beta p} - 1 \leq \frac{h^2}{\beta^2 p^2}$. Therefore,
        \begin{align*}
            \max_{u \in \R}\left(\frac{h}{p}(q - u) - \beta d_{f_2}(u, q)\right) \leq (1-q)\frac{h^2}{\beta p^2}.
        \end{align*}
        This implies that
        \begin{align*}
            &\min\left\{ \max_{0 \leq u \leq dq}\left(\frac{h}{p}(q - u) - \beta d_{f_1}(u, q)\right), \max_{u \in \R}\left(\frac{h}{p}(q - u) - \beta d_{f_2}(u, q)\right) \right\} \\
            &\leq \frac{(d)^{2-\alpha}h^2}{\beta p^2} \min\left\{ \frac{q^{2-\alpha}}{2(1-\alpha)}, 1-q \right\}.
        \end{align*}

        \item If $\beta p < 1$: in this case, we have $q \leq 2p \leq \frac{2}{\beta} \leq \frac{1-\alpha}{2}$. This implies that $\frac{q}{1-\alpha} \leq \frac{1}{2}$ and also $q \leq \frac{1}{2}$.
        Combining this with $q^{1-\alpha} \leq 1$, we obtain
        \begin{align*}
            \frac{q^{2-\alpha}}{2(1-\alpha)} \leq \frac{q}{2(1-\alpha)} \leq \frac{1}{4} \leq 1-q.
        \end{align*}
        It follows that by~\eqref{eq:boundStabilityByLocalNormCoordinateWiseSPM1},
        \begin{align*}
            \max_{0 \leq u \leq dq}\left(\frac{h}{p}(q - u) - \beta d_{f_1}(u, q)\right) &\leq \frac{h^2}{\beta p^2} \frac{(d)^{2-\alpha} q^{2-\alpha}}{2(1-\alpha)} \\
            &= \frac{(d)^{2-\alpha} h^2}{\beta p^2} \min\left\{ \frac{ q^{2-\alpha}}{2(1-\alpha)}, 1-q \right\}.
        \end{align*}
    \end{itemize}
    In both cases, we have
    \begin{align*}
        &\min\left\{ \max_{0 \leq u \leq dq}\left(\frac{h}{p}(q - u) - \beta d_{f_1}(u, q)\right), \max_{u \in \R}\left(\frac{h}{p}(q - u) - \beta d_{f_2}(u, q)\right) \right\} \\
        &\leq \frac{(d)^{2-\alpha}h^2}{\beta p^2} \min\left\{ \frac{q^{2-\alpha}}{2(1-\alpha)}, 1-q \right\}.
    \end{align*}
\end{proof}

\begin{lemma}
    For any $K \geq 3, T \geq 4K$ and $q \in [0,1]$, let 
    \begin{align*}
        p = \left(1 - \frac{K}{T}\right)q + \frac{1}{T}.
    \end{align*}
    Then, we have $1-q \leq 2(1-p)$. 
    \label{lemma:1minusqByTwoTimes1minusp}
\end{lemma}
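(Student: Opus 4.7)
The plan is to prove the inequality by direct algebraic manipulation after substituting the definition of $p$. I first rewrite $2(1-p) - (1-q)$ in a convenient form, and then show that the resulting expression is non-negative using only the elementary facts $q \in [0,1]$, $K \geq 3$, and $T \geq 4K$.

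More concretely, I would expand
\begin{align*}
    2(1-p) - (1-q) &= 2 - 2\!\left(1 - \tfrac{K}{T}\right)\!q - \tfrac{2}{T} - 1 + q \\
    &= \left(1 - \tfrac{2}{T}\right) - q\!\left(1 - \tfrac{2K}{T}\right).
\end{align*}
Since $T \geq 4K \geq 12$ the factor $1 - 2K/T$ is positive (in fact at least $1/2$), so the right-hand side is minimized (in $q$) at $q=1$. Substituting $q = 1$ gives $1 - 2/T - 1 + 2K/T = (2K-2)/T \geq 0$ whenever $K \geq 1$, which certainly holds under $K \geq 3$. Hence the expression is non-negative for every $q \in [0,1]$, which is exactly the claim $1-q \leq 2(1-p)$.

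I do not expect any real obstacle here, as the lemma reduces to a one-line check once the definition of $p$ is expanded; the only care needed is to confirm that the coefficient $1 - 2K/T$ is non-negative so that the worst case in $q$ occurs at $q=1$, and this is guaranteed by the standing hypothesis $T \geq 4K$. The hypotheses $K \geq 3$ and $T \geq 4K$ are used only to ensure the coefficient stays in a regime where the simple monotonicity argument applies, and the result could in fact be stated more generally, but I would keep the statement as given for consistency with its use in the proof of Lemma~\ref{lemma:CoordinateWiseSPMStabilityBetaTerms}.
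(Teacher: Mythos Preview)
Your proof is correct and essentially identical to the paper's: both reduce the claim to the equivalent inequality $2p - q \leq 1$ (you phrase it as $2(1-p) - (1-q) \geq 0$), expand using the definition of $p$ to obtain the expression $(1 - 2K/T)q + 2/T$ (or its rearrangement), and then use $q \leq 1$ together with $T \geq 4K \Rightarrow 1 - 2K/T \geq 0$ to conclude.
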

\begin{proof}
    The desired inequality is equivalent to $2p - q \leq 1$. By the definition of $p$, we have
    \begin{align*}
        2p - q &= 2\left(1 - \frac{K}{T}\right)q + \frac{2}{T} - q \\
        &= \left(1 - \frac{2K}{T}\right)q + \frac{2}{T} \\
        &\leq \left(1 - \frac{2K}{T}\right) + \frac{2}{T} \\
        &\leq 1.
    \end{align*}
\end{proof}

\begin{lemma}
    Fix an index $i \in [K]$ and let $p \in \Delta_K$ be an arbitrary vector in $\Delta_K$. Let $\alpha \in (0,1)$ be a constant and $I \sim p$ be a random variable distributed according to $p$. We have
    \begin{align*}
        \E_{I \sim p}\left[\I{I = i}p_i^\alpha \min\left( p_i^{-\alpha}, \frac{1-p_i}{p_i^2} \right)\right] \leq 2\min(p_i, 1-p_i).
    \end{align*}
    \label{lemma:CoordinateWiseSPMztiIsGoodForStochasticBandits}
\end{lemma}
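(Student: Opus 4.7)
The plan is to first evaluate the expectation explicitly. Since $I \sim p$ and $p_i^\alpha \min(p_i^{-\alpha}, (1-p_i)/p_i^2)$ is deterministic, we have
\begin{align*}
\E_{I \sim p}\left[\I{I = i}\, p_i^\alpha \min\left( p_i^{-\alpha}, \frac{1-p_i}{p_i^2} \right)\right]
&= p_i \cdot p_i^\alpha \min\left( p_i^{-\alpha}, \frac{1-p_i}{p_i^2} \right) \\
&= \min\!\left( p_i,\; (1-p_i)\, p_i^{\alpha - 1} \right).
\end{align*}
So the lemma reduces to showing $\min(p_i, (1-p_i) p_i^{\alpha-1}) \leq 2\min(p_i, 1-p_i)$ for all $p_i \in [0,1]$ and $\alpha \in (0,1)$.

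Next I would split into two cases depending on whether $p_i \leq 1/2$ or $p_i > 1/2$. If $p_i \leq 1/2$, then $\min(p_i, 1-p_i) = p_i$, and the bound is immediate since $\min(p_i, (1-p_i)p_i^{\alpha-1}) \leq p_i \leq 2\min(p_i, 1-p_i)$ by just keeping the first argument of the min on the left-hand side. If $p_i > 1/2$, then $\min(p_i, 1-p_i) = 1-p_i$, and I use the second argument of the min on the left-hand side: $p_i^{\alpha-1} \leq (1/2)^{\alpha-1} = 2^{1-\alpha} \leq 2$ since $\alpha \in (0,1)$, which yields $(1-p_i) p_i^{\alpha-1} \leq 2(1-p_i) = 2\min(p_i, 1-p_i)$. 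Combining the two cases gives the desired inequality.

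There isn't really a main obstacle here; the argument is essentially the observation that $p_i^{\alpha-1}$ is bounded by $2$ on the regime where $1-p_i$ is the active term, which is exactly the reason the hybrid bound $\min(p_i^{-\alpha}, (1-p_i)/p_i^2)$ inside $z_{t,i}$ was chosen to introduce a $\tilde{p}_{t,i}$ factor. The only mild care needed is to verify the inequality $2^{1-\alpha} \leq 2$ for $\alpha \in (0,1)$, which is obvious.
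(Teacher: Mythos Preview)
Your proof is correct and follows essentially the same approach as the paper: compute the expectation to obtain $\min(p_i,(1-p_i)p_i^{\alpha-1})$, then split on $p_i \le 1/2$ versus $p_i>1/2$. The only cosmetic difference is that for $p_i>1/2$ the paper bounds $(1-p_i)/p_i^{1-\alpha} \le (1-p_i)/p_i \le 2(1-p_i)$, whereas you bound $p_i^{\alpha-1}\le 2^{1-\alpha}\le 2$ directly; both are equivalent.
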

\begin{proof}
    By the definition of $I$, the left-hand side is equal to
    \begin{align*}
        \E_{I \sim p}\left[\I{I = i}p_i^\alpha \min\left( p_i^{-\alpha}, \frac{1-p_i}{p_i^2} \right)\right] &= p_i^{1+\alpha} \min\left( p_i^{-\alpha}, \frac{1-p_i}{p_i^2} \right) \\
        &= \min\left(p_i, \frac{(1-p_i)}{p_i^{1-\alpha}}\right).
    \end{align*}
    We consider two cases: $p_i \leq \frac{1}{2}$ and $p_i > \frac{1}{2}$.
    \begin{itemize}
        \item If $p_i \leq \frac{1}{2}$: since $p_i^{1-\alpha} \leq 1$, we have $\frac{1-p_i}{p_i^{1-\alpha}} \geq 1-p_i \geq p_i$. Hence, $\min\left(p_i, \frac{(1-p_i)}{p_i^{1-\alpha}}\right) = p_i \leq 2p_i = 2\min(p_i, 1-p_i)$.
        \item If $p_i > \frac{1}{2}$: we then have $\frac{(1-p_i)}{p_i^{1-\alpha}} \leq \frac{1-p_i}{p_i} \leq 2(1-p_i)$. Therefore, $\min\left(p_i, \frac{(1-p_i)}{p_i^{1-\alpha}}\right) \leq \min(p_i, 2(1-p_i)) \leq 2\min(p_i, 1-p_i)$.
    \end{itemize}
\end{proof}

\section{SPM for Adversarial Sleeping Bandits}
\label{appendix:SPMSleepingBandits}
\begin{algorithm}[t]
	\KwIn{$K \geq 1, T \geq 4K, \alpha \in (0, 1), \beta_1 = \frac{8K}{1-\alpha}, \gamma = \max\left(6, 48\sqrtfrac{\alpha}{1-\alpha}\right), d = 2$.}
    Initialize $R_{0,i} = 0$ for $i \in [K]$ \;
	
    \For{each round $t = 1, \dots, T$}{    
        The adversary reveals the set of active arms $\sA_t$\;    
        
        Compute $q_t = \argmin_{x \in \Delta_K} \inp{-R_{t-1}}{x} + \beta_t\left(\frac{1}{\alpha}(1-\sum_{i=1}^K x_i^\alpha)\right) - \gamma\sum_{i=1}^K \ln(x_i)$\;
        
        \For{arm $i \in [K]$}
        {
            Compute $p_{t,i} = \frac{q_{t,i}\I{i \in \sA_t}}{\sum_{j=1}^{K}q_{t,j}\I{j \in \sA_t}}$\;
        }        
		
        Draw $I_t \sim p_t$ and observe $\ell_{t, I_t}$\;
        
        \For{active arm $i \in [K]$}
        {
            \If{$i \in \sA_t$}
            {
                Compute loss estimate $\hat{\ell}_{t, i} = \frac{\ell_{t,i}\I{I_t = i}}{p_{t,i}}$ \;
            }
            \Else{
                Set $\hat{\ell}_{t,i} = \ell_{t,I_t}$\;
            }
            
            Compute $r_{t,i} = \ell_{t,I_t} - \hat{\ell}_{t, i}$\;
            
            Update $R_{t,i} = R_{t-1,i} + r_t$\;
        }	
        Compute $z_t =  \min\left(\frac{(4d)^{2-\alpha}}{(1-\alpha)}\sum_{i \in \sA_t}\min(p_{t,i}, 1-p_{t,i})^{1-\alpha}, \beta_t\frac{18d^2}{\gamma}\sum_{i \in \sA_t}\tilde{p}_{t,i}\right)$ \;
        
        Compute $h_t = \frac{1}{\alpha}(\sum_{i=1}^K q_{t,i}^\alpha - 1)$\;
        
        Compute $\beta_{t+1} = \beta_t + \frac{z_t}{\beta_t h_t}$\;
	}
	\caption{SPM Approach for Fully-Adversarial Sleeping Bandits}
	\label{algo:SPMSleepingBandit}
\end{algorithm}


\textbf{Algorithm:} 
We use the same regularization function in Algorithm~\ref{algo:optimalBOBWSparsity},
\begin{align*}
    \Phi_t(p) &= \beta_t\psi_{TE}(p) - \gamma \psi_{LB}(p) \\
    &= \frac{\beta_t}{\alpha}\left(1 - \sum_{i=1}^K p_i^{\alpha}\right) - \gamma\sum_{i=1}^K \ln(p_i).
\end{align*}
Instead of running FTRL on the sequence of losses, we run FTRL on the sequence of \emph{estimated regret} $R_{t,i} = \sum_{s=1}^t \I{i \in \sA_s}(\ell_{s,I_s} - \hat{\ell}_{s,i})$, i.e., 
\begin{align*}
    q_t = \argmin_{x \in \Delta_K}F_t(x) &\defeq \argmin_{x \in \Delta_K} \inp{-R_{t-1}}{x} + \Phi_t(x) \\
    &= \argmin_{x \in \Delta_K} \inp{-R_{t-1}}{x} + \beta_t\left(\frac{1}{\alpha}(1-\sum_{i=1}^K x_i^\alpha)\right) - \gamma\sum_{i=1}^K \ln(x_i).
\end{align*}
Given the set of active arms $\sA_t$, the sampling probability $p_t$ is $p_{t,i} = \frac{\I{i \in \sA_t}}{\sum_{j=1}^K \I{j \in \sA_t}q_{t,j}}$. An arm $I_t \sim p_t$ is drawn.
The learning rates are set by SPM rules~\citep{ItoCOLT2024}: $\beta_{t+1} = \beta_t + \frac{z_t}{\beta_t h_t}$, where 
\begin{align*}
    z_t &= \min\left(\frac{(4d)^{2-\alpha}}{(1-\alpha)}\sum_{i \in \sA_t}\min(p_{t,i}, 1-p_{t,i})^{1-\alpha}, \beta_t\frac{18d^2 }{\gamma}\sum_{i \in \sA_t}\tilde{p}_{t,i}\right), \\
    h_{t} &= (-\psi_{TE}(q_t)) = \frac{1}{\alpha}\left(\sum_{i=1}^K q_{t,i}^{\alpha} - 1\right).
\end{align*}

\subsection{Regret Analysis}
In this section, we prove Theorem~\ref{thm:SPMSleepingBanditBound}.
\begin{proof}
Let $I_{a,t} = \I{a \in \sA_t}$. By the definition of $\hat{\ell}_t$ and the fact that $p_{t,i} = 0$ for $i \notin \sA_t$, for any $a \in \sA_t$, we have 
\begin{align}
    \E_{I_t}[\hat{\ell}_{t,a}] = \sum_{i=1}^K p_{t,i}\frac{\ell_{t,a}\I{a = i}}{p_{t,a}} = \sum_{i \in \sA_t}p_{t,i} \frac{\ell_{t,a}\I{a = i}}{p_{t,a}} = \ell_{t,a}.
\end{align}
Therefore, the per-action regret with respect to $a \in [K]$ is
\begin{align*}
    R_{T,a} &= \E\left[\sum_{t=1}^T I_{a,t}(\ell_{t,I_t} - \ell_{t,a})\right] \\
    &= \E\left[ \sum_{t=1}^T I_{a,t}(\inp{\hat{\ell}_t}{q_t} - \inp{\ell_t}{e_a}) \right] \\
    &= \E\left[ \sum_{t=1}^T I_{a,t}(\inp{\hat{\ell}_t}{q_t-e_a}) \right] \\
    &= \E\left[ \sum_{t=1}^T I_{a,t}(\inp{\hat{\ell}_t - \ell_{t,I_t}\1}{q_t-e_a}) \right] \\
    &= \E\left[ \sum_{t=1}^T \inp{-r_t}{q_t-e_a} \right],
\end{align*}
where 
\begin{itemize}
    \item the second equality uses
    \begin{align*}
        \inp{\hat{\ell}_t}{q_t} &= \sum_{i=1}^K \hat{\ell}_{t,i}q_{t,i} \\
        &= \sum_{i \in \sA_t} \hat{\ell}_{t,i}q_{t,i} + \sum_{i \notin \sA_t} \hat{\ell}_{t,i}q_{t,i} \\
        &= \hat{\ell}_{t,I_t}q_{t,I_t} + \ell_{t,I_t}\sum_{i \notin \sA_t}q_{t,i} \\
        &= \frac{{\ell}_{t,I_t}}{p_{t,I_t}}q_{t,I_t} + \ell_{t,I_t}\sum_{i \notin \sA_t}q_{t,i} \\
        &= \ell_{t,I_t}\sum_{j \in \sA_t}q_{t,j} + \ell_{t,I_t}\sum_{i \notin \sA_t}q_{t,i} \\
        &= \ell_{t,I_t}.
    \end{align*}
    \item the fourth equality uses $\inp{\ell_{t,I_t}\1}{q_t - e_a} = \ell_{t,I_t}(\sum_{i=1}^K q_{t,i} - e_{a,i}) = 0$.
    \item the last equality uses 
    \begin{align*}
        r_{t,i} = \begin{cases}
            \ell_{t,I_t} - \hat{\ell}_{t,i} &\qquad i \in \sA_t \\
            0 &\qquad i \notin \sA_t.
        \end{cases}
    \end{align*}
\end{itemize} 
Let $u_a = (1 - \frac{K}{T})e_a + \frac{1}{T}\1$. We have
\begin{align*}
    \sum_{t=1}^T \inp{-r_t}{q_t - e_a} = \sum_{t=1}^T \inp{-r_t}{q_t - u_a} + \sum_{t=1}^T \inp{-r_t}{u_a - e_a}.
\end{align*}
The expectation of the second term is bounded by
\begin{align*}
    \E[\sum_{t=1}^T \inp{-r_t}{u_a - e_a}] &= \sum_{t=1}^T \inp{\E[-r_t]}{u_a - e_a} \\
    &= \sum_{t=1}^T \sum_{j=1}^K \E[-r_{t,j}](u_{a,i}- e_{a,i}) \\
    &= \sum_{t=1}^T \sum_{j=1}^K \E[\hat{\ell}_{t,j} - \ell_{t,I_t}](u_{a,i}- e_{a,i}) \\
    &= \sum_{t=1}^T \sum_{j=1}^K (\ell_{t,j} - \E_{i \sim p_t}[\ell_{t,i}])(u_{a,i}- e_{a,i}) \\
    &\leq 2K.
\end{align*}
Therefore, we only need to focus on the first term $\sum_{t=1}^T \inp{-r_t}{q_t - u_a}$.
Next, by the definition of $F_t(x) = \inp{\sum_{s=1}^{t-1}-r_s}{x} + \phi_{t}(x)$ and $q_t = \argmin_{x \in \Delta_K}F_t(x)$, we have 
\begin{align*}
    &\sum_{t=1}^T \inp{-r_t}{q_t - u_a} \\
    &= \left(\sum_{t=1}^T\inp{-r_t}{q_t}\right) - (F_{T+1}(u_a) - \phi_{T+1}(u_a)) \\
    &= \left(\sum_{t=1}^T\inp{-r_t}{q_t}\right) - F_1(q_1) + \left(\sum_{t=1}^T F_t(q_t) - F_{t+1}(q_{t+1}) \right) + \phi_{T+1}(u_a) + F_{T+1}(q_{T+1}) - F_{T+1}(u_a) \\
    &\leq \left(\sum_{t=1}^T\inp{-r_t}{q_t}\right) - F_1(q_1) + \left(\sum_{t=1}^T F_t(q_t) - F_{t+1}(q_{t+1}) \right) + \phi_{T+1}(u_a)  \\
    &= \phi_{T+1}(u_a) - F_1(q_1) + \left( \sum_{t=1}^T F_t(q_t) - F_{t+1}(q_{t+1}) + \inp{-r_t}{q_t} \right) \\
    &\leq \gamma K\ln(T) + \frac{\beta_1}{\alpha}(K^{1-\alpha}-1) + \underbrace{\left( \sum_{t=1}^T F_t(q_t) - F_{t+1}(q_{t+1}) + \inp{-r_t}{q_t} \right)}_{\heartsuit}.
\end{align*}
We bound each term in $\heartsuit$ as follows:
\begin{align*}
    &F_t(q_t) - F_{t+1}(q_{t+1}) + \inp{-r_t}{q_t} \\
    &= \inp{-R_{t-1}}{q_t} + \inp{R_t}{q_{t+1}} + \phi_t(q_t) - \phi_{t+1}(q_{t+1}) + \inp{-r_t}{q_t} \\
    &= \inp{-R_{t-1}}{q_t - q_{t+1}} + \phi_t(q_t) - \phi_{t}(q_{t+1}) + \phi_{t}(q_{t+1}) - \phi_{t+1}(q_{t+1}) + \inp{-r_t}{q_t - q_{t+1}} \\
    &= (\beta_{t+1} - \beta_t)h_{t+1} + (\inp{-R_{t-1}}{q_t - q_{t+1}} + \phi_t(q_t) - \phi_{t}(q_{t+1})) \\
    &\quad + \inp{-r_t}{q_t - q_{t+1}}\\
    &\leq (\beta_{t+1} - \beta_t)h_{t+1} + \inp{-r_t}{q_t - q_{t+1}} - D_t(q_{t+1}, q_t),
\end{align*}
where the inequality is from $\inp{-R_{t-1} + \nabla \phi_t(q_t)}{q_{t+1} - q_t} \geq 0$ by the optimality of $q_t$ and hence,
\begin{align*}
    -D_t(q_{t+1}, q_t) &= \phi_t(q_t)  -\phi_t(q_{t+1}) + \inp{\nabla \phi_t(q_t)}{q_{t+1} - q_t} \\
    &\geq \phi_t(q_t) - \phi_t(q_{t+1})  + \inp{-R_{t-1}}{q_t - q_{t+1}}.
\end{align*}
We have $q_{t+1,i} \leq 4dq_{t,i}$ for all $i \in [K]$ from the combination of the results of Lemma~\ref{lemma:betatplus1isgoodSB}, Lemma~\ref{lemma:stableSameLossDiffBeta} and Lemma~\ref{lemma:stableSameBetaDiffLossSB}.
It follows that 
\begin{align*}
    &\sum_{t=1}^T \inp{-r_t}{q_t - u_a} \\
    &\leq \gamma K\ln(T) + \frac{\beta_1}{\alpha}(K^{1-\alpha}-1) + \sum_{t=1}^T (\beta_{t+1} - \beta_t)h_{t+1} + \sum_{t=1}^T\inp{-r_t}{q_t - q_{t+1}} - D_t(q_{t+1}, q_t) \\
    &\leq  \gamma K\ln(T) + \frac{\beta_1}{\alpha}(K^{1-\alpha}-1) + 4d\sum_{t=1}^T (\beta_{t+1} - \beta_t)h_{t} +  \sum_{t=1}^T\inp{-r_t}{q_t - q_{t+1}} - D_t(q_{t+1}, q_t) \\
    &= \gamma K\ln(T) + \frac{\beta_1}{\alpha}(K^{1-\alpha}-1) + 4d\sum_{t=1}^T \frac{z_t}{\beta_t} +  \sum_{t=1}^T\inp{-r_t}{q_t - q_{t+1}} - D_t(q_{t+1}, q_t),
\end{align*}
where the second inequality is from Lemma~\ref{lemma:boundhtplus1byht}.

Using Lemma~\ref{lemma:stableItNotMaxSB} and noting that $\beta_t$ is fixed before round $t$, we have
\begin{align*}
    \E_{I_t}\left[ \inp{-r_t}{q_t - q_{t+1}} - D_t(q_{t+1}, q_t) \right] &\leq \E_{I_t}\left[\min\left( \frac{(4d)^{2-\alpha}}{\beta_t(1-\alpha)}\tilde{p}_{t,I_t}^{2-\alpha}\hat{\ell}^2_{t,I_t}, \frac{18d^2}{\gamma} \tilde{p}_{t,I_t}^2 \hat{\ell}_{t,I_t}^2  \right)\right] \\
    &\leq \min\left(\E_{I_t}\left[ \frac{(4d)^{2-\alpha}}{\beta_t(1-\alpha)}\tilde{p}_{t,I_t}^{2-\alpha}\hat{\ell}^2_{t,I_t} \right], \E_{I_t}\left[ \frac{18d^2}{\gamma} \tilde{p}_{t,I_t}^2 \hat{\ell}_{t,I_t}^2 \right]\right) \\
    &\leq \min\left( \frac{(4d)^{2-\alpha}}{\beta_t(1-\alpha)}\sum_{i \in \sA_t}\tilde{p}_{t,i}^{1-\alpha}, \frac{18d^2}{\gamma}\sum_{i \in \sA_t}\tilde{p}_{t,i} \right) \\
    &= z_t.
\end{align*}
It follows that
\begin{align*}
    \E\left[\sum_{t=1}^T \inp{-r_t}{q_t - u_a}\right] &\leq \gamma K\ln(T) + \frac{\beta_1}{\alpha}(K^{1-\alpha}-1) + 6\E\left[\sum_{t=1}^T \frac{z_t}{\beta_t} \right].
\end{align*}
Let 
\begin{align*}
    z_{\max} &= \max_{t \in [T]}z_t, \\
    h_{\max} &= \max_{t \in [T]}h_t \\
    A &= \max_{t \in [T]}\abs{\sA_t}
\end{align*}
The quantity $z_{\max}$ is bounded by
\begin{align*}
    z_{\max} &\leq \frac{(4d)^{2-\alpha}}{1-\alpha}\max_{t \in [T]} \sum_{i=1}^K \tilde{p}_{t,i}^{1-\alpha} \\
    &\leq  \frac{(4d)^{2-\alpha}}{1-\alpha}\max_{t \in [T]} \sum_{i \in \sA_t} \tilde{p}_{t,i}^{1-\alpha} \\
    &\leq \frac{(4d)^{2-\alpha}A^\alpha}{1-\alpha}.
\end{align*}
Hence, we can bound $\E\left[\sum_{t=1}^T \frac{z_t}{\beta_t} \right]$ using the same analysis for SPM learning rates in~\cite{ItoCOLT2024} and obtain
\begin{nalign}
    &\E\left[\sum_{t=1}^T \frac{z'_t}{\beta_t} \right] \\ &\leq O\left(\min\left\{ \inf_{J \in \sN}\E\left[\left\{ \sqrt{8J\sum_{t=1}^T h_t z_t} + 2\sqrt{2^{-J}Th_{\max}{z_{\max}}} \right\}\right], \E\left[\sqrt{Th_{\max}z_{\max} }\right] \right\} + \E\left[\frac{z_{\max}}{\beta_1}\right]\right) \\
    &\leq O\left( \min\left\{ \inf_{J \in \sN}\E\left[\left\{ \sqrt{8J\sum_{t=1}^T h_t z_t} + \sqrt{2^{-J}Th_{\max}{z_{\max}}} \right\}\right], \E\left[\sqrt{Th_{\max}z_{\max} }\right] \right\}\right),
    \label{eq:boundzprimetbetatSB}
\end{nalign}
where the second inequality is due to $\frac{z_{\max}}{\beta_1} \leq O\left(\frac{A^\alpha}{(1-\alpha)\frac{4K}{1-\alpha}}\right) = O(1)$. Here, we used
\begin{align*}
    h_{\max} &= \max_{t \in [T]}h_t \\
    &= \frac{1}{\alpha}(\sum_{i=1}^K q_{t,i}^{\alpha} - 1) \\
    &\leq \frac{K^{1-\alpha} - 1}{\alpha}.
\end{align*}
In the adversarial regime, we have 
\begin{align*}
    \E\left[\sum_{t=1}^T \frac{z'_t}{\beta_t} \right] &\leq O(\left(\E[\sqrt{T h_{\max} z_{\max}}]\right)) \\
    &\leq O\left(\sqrt{T\frac{(K^{1-\alpha} - 1)A^\alpha}{\alpha(1-\alpha)}}\right).
\end{align*}
Hence, the regret bound is of order 
\begin{align*}
    \E[R_{T,a}] \leq O\left(\sqrt{T\frac{(K^{1-\alpha} - 1)A^\alpha}{\alpha(1-\alpha)}}\right).
\end{align*}
\end{proof}

\subsection{Stability Proofs}
Recall from Section~\ref{sec:stabilityproofs} that the function $g: [0,1] \to \R_+$ defined by
\begin{align*}
    g(x) = \beta x^{\alpha-1} + \frac{\gamma}{x}
\end{align*}
is decreasing in $x \in [0, 1]$ for $\beta, \gamma > 0$.
\begin{lemma}
    For any $t \geq 1$, Algorithm~\ref{algo:SPMSleepingBandit} guarantees
    \begin{align}
        \beta_{t+1} - \beta_t \leq (1 - \frac{1}{d})\gamma q_{t*}^{-\alpha},
    \end{align}
    where $q_{t*} = \min(\max_{i \in \sA_t}q_{t,i}, 1 - \max_{i \in \sA_t}q_{t,i})$.
    \label{lemma:betatplus1isgoodSB}
\end{lemma}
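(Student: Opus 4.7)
The plan is to mirror the proof of Lemma~\ref{lemma:betatplus1isgood}, which combines a lower bound on $h_t$ with an upper bound on $z_t/\beta_t$ via the SPM identity $\beta_{t+1} - \beta_t = \frac{z_t}{\beta_t h_t}$. The two algorithmic differences versus the MAB case are that $h_t$ is now $(-\psi_{TE}(q_t))$ rather than $(-\psi_{TE}(p_t))$, and that $z_t$ involves the sum $\sum_{i \in \sA_t} \tilde{p}_{t,i}$ in place of a single squared loss; I expect both to be benign.

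First I would establish the lower bound $h_t \geq \frac{(1-\alpha) q_{t*}^\alpha}{4\alpha}$. Since $q_t \in \Delta_K$, the argument from Lemma~\ref{lemma:lowerboundht} applies almost verbatim, except that the maximum appearing in $q_{t*}$ is restricted to $\sA_t$. Letting $q_A = \max_{i \in \sA_t} q_{t,i}$, Lemma~\ref{lemma:axbxlargerthanaplusbx} with exponent $\alpha \in (0,1)$ yields $\sum_{i \in [K]} q_{t,i}^\alpha \geq q_A^\alpha + (1 - q_A)^\alpha$. Splitting on whether $q_A \leq 1/2$ or $q_A > 1/2$ and applying Lemma~\ref{lemma:apowerxbpowerxtwopowerx} (with the roles of $q_A$ and $1-q_A$ swapped according to which is larger, exactly as in the case analysis of Lemma~\ref{lemma:lowerboundht}) then gives $\sum_{i \in [K]} q_{t,i}^\alpha - 1 \geq q_{t*}^\alpha (1 - 2^{\alpha-1})$. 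Using $1 - 2^{\alpha-1} \geq (1-\alpha)/4$ for $\alpha \in [0,1]$ delivers the claimed lower bound on $h_t$.

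Next, from the definition of $z_t$ in Algorithm~\ref{algo:SPMSleepingBandit}, the second argument of the $\min$ yields
\begin{align*}
    \frac{z_t}{\beta_t} \leq \frac{18 d^2}{\gamma} \sum_{i \in \sA_t} \tilde{p}_{t,i} \leq \frac{18 d^2}{\gamma},
\end{align*}
where the last inequality uses $\tilde{p}_{t,i} \leq p_{t,i}$ and $\sum_{i \in \sA_t} p_{t,i} = 1$. Combining with the lower bound on $h_t$ gives
\begin{align*}
    \beta_{t+1} - \beta_t \;=\; \frac{z_t}{\beta_t h_t} \;\leq\; \frac{4\alpha}{(1-\alpha) q_{t*}^\alpha} \cdot \frac{18 d^2}{\gamma} \;=\; \frac{72 \alpha d^2}{(1-\alpha)\gamma}\, q_{t*}^{-\alpha}.
\end{align*}
It remains to verify that for $d = 2$ and $\gamma \geq 48\sqrtfrac{\alpha}{1-\alpha}$ the constant prefactor is at most $(1 - 1/d)\gamma = \gamma/2$; this reduces to $\gamma \geq 24\sqrtfrac{\alpha}{1-\alpha}$, which is implied by the hypothesis on $\gamma$.

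The only genuinely non-routine step is adapting Lemma~\ref{lemma:lowerboundht} to the sleeping-bandits setting, where $q_{t*}$ is defined through a maximum over the active set $\sA_t$ rather than all of $[K]$. Since $q_t$ still lies in $\Delta_K$ even though $p_t$ is obtained by filtering and renormalizing, the two-term lower bound $\sum_i q_{t,i}^\alpha \geq q_A^\alpha + (1-q_A)^\alpha$ is tight enough and no further obstacle arises; after that, the calculation proceeds exactly as in the MAB case.
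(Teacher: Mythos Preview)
Your proposal is correct and follows essentially the same approach as the paper's proof, which likewise invokes the lower bound from Lemma~\ref{lemma:lowerboundht} on $h_t$, bounds $z_t/\beta_t \leq 18d^2/\gamma$ via the second argument of the $\min$, and then checks the numerical condition on $\gamma$. Your extra care in adapting Lemma~\ref{lemma:lowerboundht} to the case where $q_{t*}$ is defined via a maximum over $\sA_t$ rather than $[K]$ is well-placed---the paper glosses over this, but as you observe, the proof of that lemma works with any pivot arm, not just the global maximizer.
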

\begin{proof}
    Equation~\eqref{eq:lowerboundht} shows that $h_t \geq \frac{1-\alpha}{4\alpha}q_{t*}^{\alpha}$. This implies that $\frac{1}{h_t} \leq \frac{4\alpha}{1-\alpha}q_{t*}^{-\alpha}$.
    By the definitions of $\beta_{t+1}, z_t$ and $h_t$, we have 
    \begin{align*}
        \beta_{t+1} - \beta_t &= \frac{z_t}{\beta_t h_t} \\
        &\leq \frac{4\alpha z_t}{(1-\alpha)\beta_t}q_{t*}^{-\alpha} \\
        &\leq \frac{4\alpha}{(1-\alpha)} \frac{18d^2}{\gamma}q_{t*}^{-\alpha} \\
        &\leq (1-\frac{1}{d})\gamma q_{t*}^{-\alpha}
    \end{align*}
    where the last inequality uses
    \begin{align}
        \frac{72\alpha d^2}{(1-\alpha)\gamma} \leq (1-\frac{1}{d})\gamma
    \end{align}
    for $d = 2$ and $\gamma \geq 48\sqrtfrac{\alpha}{1-\alpha}$.
\end{proof}
\begin{lemma}
    For any $K \geq 3, \alpha \in (0,1), \beta > 0, \gamma \geq 0, R \in \R^K$ and $h \in [-1,1]$, let $S \subseteq [K]$ be a subset of $[K]$ where $1 \in S$. Let $e_S \in \{0,1\}^K$ be a vector such that $e_{S,i} = \I{i \in S}$. Define 
    \begin{align*}
        x &= \argmin_{p \in \Delta_K}\inp{-R}{p} + \frac{\beta}{\alpha}(1 - \sum_{i=1}^K p_i^\alpha) - \gamma\sum_{i=1}^K \ln(p_i) \\
        y &= \argmin_{p \in \Delta_K}\inp{-R + \frac{h}{x_1'}e_1 - he_S}{p} + \frac{\beta}{\alpha}(1 - \sum_{i=1}^K p_i^\alpha) - \gamma\sum_{i=1}^K \ln(p_i),
    \end{align*}
    where $1 \geq x'_1 \geq x_1$. Fix an $\omega \in (1, 2]$. If $\gamma \geq 6$ and $\beta \geq \frac{4K}{(\omega - 1)(1 - \omega^{\alpha - 1})}$, then $y_i \leq 4x_i$ for all $i \in [K]$.
    \label{lemma:stableSameBetaDiffLossSB}
\end{lemma}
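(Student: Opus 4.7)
The plan is to split the combined perturbation $\tfrac{h}{x_1'}e_1 - he_S$ into two stages and bound the stability of each stage separately. I would first introduce the intermediate minimizer
\begin{align*}
    w = \argmin_{p \in \Delta_K}\inp{-R + \tfrac{h}{x_1'}e_1}{p} + \tfrac{\beta}{\alpha}\Big(1 - \sum_{i=1}^K p_i^\alpha\Big) - \gamma\sum_{i=1}^K \ln(p_i),
\end{align*}
corresponding to only the $\tfrac{h}{x_1'}e_1$ perturbation of $x$. Since $x_1' \geq x_1$ trivially implies $4x_1' \geq x_1$, and the hypotheses $\gamma \geq 6$ and $\beta \geq \tfrac{4K}{(\omega-1)(1-\omega^{\alpha-1})}$ are inherited, Lemma~\ref{lemma:stableBigBeta} yields $w_i \leq 3x_i$ for every $i \in [K]$. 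It therefore suffices to prove $y_i \leq \tfrac{4}{3}w_i$.

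For the second stage, $y$ and $w$ differ only by the additive perturbation $-he_S$ in the linear term. Writing KKT conditions for both and letting $Z = \mu^w - \mu^y$ denote the difference of the Lagrange multipliers, one obtains
\begin{align*}
    g_{\beta,\gamma}(w_i) - g_{\beta,\gamma}(y_i) = Z + h\,\I{i \in S}, \qquad i \in [K],
\end{align*}
where $g_{\beta,\gamma}$ is the strictly decreasing function defined in~\eqref{eq:gt}. A short sign argument then gives $|Z| \leq |h| \leq 1$: if instead $|Z| > |h|$, then $g_{\beta,\gamma}(w_i) - g_{\beta,\gamma}(y_i)$ would have a constant sign across all $i$, forcing every $y_i$ to be strictly larger (or strictly smaller) than the corresponding $w_i$, in contradiction to $\sum_i y_i = \sum_i w_i = 1$. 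This bound also identifies, depending on the sign of $h$, which coordinates satisfy $y_i \leq w_i$ (the trivial side) and which satisfy $y_i \geq w_i$ (the side requiring an upper bound).

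On the side where $y_i \geq w_i$, the perturbation of $g_{\beta,\gamma}$ satisfies $|g_{\beta,\gamma}(w_i) - g_{\beta,\gamma}(y_i)| \leq 1$, while an elementary estimate gives
\begin{align*}
    g_{\beta,\gamma}(w_i) - g_{\beta,\gamma}(cw_i) \geq \tfrac{\gamma(c-1)}{cw_i} \geq \tfrac{\gamma(c-1)}{c}
\end{align*}
for every $c > 1$ and every $w_i \in (0,1]$, while $cw_i > 1$ is handled by $y_i \leq 1 < cw_i$. Choosing $c = \tfrac{4}{3}$ and using $\gamma \geq 6$ yields $\tfrac{\gamma(c-1)}{c} = \tfrac{3}{2} \geq 1$, which forces $y_i \leq \tfrac{4}{3}w_i$. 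Combining the two stages then gives $y_i \leq \tfrac{4}{3}w_i \leq \tfrac{4}{3}\cdot 3x_i = 4x_i$ for all $i \in [K]$.

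The main obstacle is carrying out the sign analysis cleanly for each of the cases $h \geq 0$ and $h < 0$ and correctly identifying the subset of coordinates on which the upper bound is non-trivial. However, because the second-stage perturbation has magnitude at most $|h| \leq 1$, the log-barrier contribution $\gamma \geq 6$ absorbs it via the single constant-$c$ comparison above, avoiding the more delicate $\epsilon$-splitting (based on whether $x_1'$ is large or small) that Lemma~\ref{lemma:stableBigBeta} had to use for the potentially unbounded $\tfrac{h}{x_1'}$ perturbation.
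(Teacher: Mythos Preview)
Your two-stage decomposition is correct and genuinely different from the paper's argument. The paper works directly with the KKT conditions relating $x$ and $y$, obtaining three families of equations (for $i=1$, for $i\in S\setminus\{1\}$, and for $i\notin S$) and then runs an extensive case analysis: first on the sign of $h$, then on the sign of the multiplier difference $Z$, and in the hardest sub-case ($h\geq 0$, $Z\geq 0$) a further $\epsilon$-splitting on whether $x_1'$ is large, invoking the auxiliary Lemma~\ref{lemma:gxminus1Largerg2x} to absorb additive errors of size $1$ in $g$. Your approach sidesteps all of this by routing through the intermediate point $w$: the unbounded $\tfrac{h}{x_1'}$-perturbation is handled once and for all by Lemma~\ref{lemma:stableBigBeta}, and the remaining $-he_S$-perturbation is bounded (in the $g$-scale) by $1$, so a single log-barrier comparison $g(w_i)-g(\tfrac{4}{3}w_i)\geq \tfrac{\gamma}{4}\geq \tfrac{3}{2}$ suffices. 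This is shorter and more modular; the paper's direct argument is more self-contained but duplicates work already done in Lemma~\ref{lemma:stableBigBeta}.

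One point to tighten: your sign argument as written only establishes $|Z|\leq |h|$, which by itself would give $|Z+h|\leq 2$ on $S$ and then force $c\geq \tfrac{3}{2}$, yielding only $y_i\leq \tfrac{9}{2}x_i$. To get the bound $|g(w_i)-g(y_i)|\leq 1$ on the non-trivial side you implicitly use that $Z$ and $h$ have \emph{opposite} signs (so $|Z+h|\leq |h|$). This follows from the same contradiction device: if $Z$ and $h$ were both strictly positive (resp.\ negative), then both $Z$ and $Z+h$ would be strictly positive (resp.\ negative), again forcing all $y_i>w_i$ (resp.\ $<$). You flag the sign analysis as the main obstacle, and indeed adding this one line completes it.
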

\begin{proof}
    Using the Lagrange multiplier methods, we have the following three equalities that hold for some $Z \in \R$:
    \begin{align}
        g(x_1) - g(y_1) &= Z + h - \frac{h}{x'_1}, \\
        g(x_i) - g(y_i) &= Z + h \qquad\text{for } i \in S \setminus \{1\}, \\
        g(x_i) - g(y_i) &= Z \qquad\text{for } i \notin S.
    \end{align}
    \subsection*{When $h \leq 0$:}
    First, we prove that $Z + h \leq 0$. Assume the contrary that $Z + h \geq 0$. Since $h \in [-1, 0]$, this implies that $Z > 0$ and $Z + h + \frac{(-h)}{x'_1} > 0$. Hence, $g(x_i) > g(y_i)$ for all $i \in [K]$, which is a contradiction since both $x$ and $y$ are in $\Delta_K$. Thus, we must have $Z + h \leq 0$.

    For any $i \in S \setminus \{1\}$, we have $g(x_i) - g(y_i) = Z + h \leq 0$ and therefore $y_i \leq x_i$. 
    Next, we consider two cases of $Z$: $Z \geq 0$ and $Z < 0$.
    \begin{itemize}
        \item If $Z \geq 0$: we have $0 \leq Z \leq -h \leq 1$. For all $i \notin S$, we have $g(y_i) = g(x_i) - Z \geq g(x_i) - 1 \geq g(2x_i)$ by Lemma~\ref{lemma:gxminus1Largerg2x}, which implies $y_i \leq 2x_i$. Thus, we only need to show that $y_1 \leq 2x_1$. If $y_1 \leq x_1$ then this is trivially true. If $y_1 \geq x_1$,
        \begin{align*}
            \frac{2}{x_1} \geq \frac{-h}{x'_1} &= (-(Z+h)) + \beta(x^{\alpha-1} - y^{\alpha-1}) + \gamma\left(\frac{1}{x_1} - \frac{1}{y_1}\right) \\
            &\geq \gamma\left(\frac{1}{x_1} - \frac{1}{y_1}\right) \\
            &\geq 4\left(\frac{1}{x_1} - \frac{1}{y_1}\right),
        \end{align*}
        which leads to $y_1 \leq 2x_1$. 
        \item If $Z < 0$: in this case, for all $i \notin S$, we have $g(x_i) \leq g(y_i)$ which implies $x_i \geq y_i$.  As $x_i \geq y_i$ for all $i \neq 1$, we must have $x_1 \leq y_1$. Therefore, 
        we have $y_1 \leq 2x_1$ by the same argument in the previous case.
    \end{itemize}

    \subsection*{When $h \geq 0$:}
    First, we prove that $Z + h \geq 0$. Assume the contrary that $Z + h < 0$. Since $h \in [0,1]$, this implies $Z < 0$ and $Z + h - \frac{h}{x'_1} < 0$. Hence, $g(x_i) < g(y_i)$ for all $i \in [K]$, which is a contradiction since both $x$ and $y$ are in $\Delta_K$. Thus, we must have $Z + h \geq 0$.

    For any $i \in S \setminus \{1\}$, we have $g(x_i) - g(y_i) = Z + h \geq 0$ and therefore $x_i \leq y_i$. Next, we consider two cases of $Z$: $Z \geq 0$ and $Z < 0$.
    \begin{itemize}
        \item If $Z \geq 0$: in this case, due to the monotonicity of the function $g$, we have $x_i \leq y_i$ for $i \neq 1$ and therefore, $x_1 \geq y_1$. This implies $Z + h - \frac{h}{x'_1} \leq 0$. Let 
        \begin{align*}
            \eps = \frac{1}{\beta(1 - \omega^{\alpha-1})} \leq \frac{\omega - 1}{4K} \leq \frac{1}{K}
        \end{align*}
        as in the proof of Lemma~\ref{lemma:stableBigBeta}. We further consider two cases of $x'_1$.
        \begin{itemize}
            \item If $x'_1 \geq \eps$: we have $Z \leq Z + h \leq \frac{h}{x'_1} \leq \frac{1}{\eps}$. Therefore, for all $i \neq 1$, 
            \begin{align*}
                g(y_i) &= g(x_i) - Z - h\I{i \in S}  \\
                &\geq g(x_i) - Z - h \\
                &\geq g(x_i) - \frac{1}{\eps} \\
                &= \beta x_i^{\alpha - 1} - \beta(1-\omega^{\alpha-1}) + \frac{\gamma}{x_i} \\
                &\geq \beta x_i^{\alpha - 1} - \beta x_i^{\alpha-1}(1-\omega^{\alpha-1}) + \frac{\gamma}{\omega x_i} \\
                &= \beta (\omega x_i)^{\alpha-1} + \frac{\gamma}{\omega x_i} = g(\omega x_i),
            \end{align*}
            where the last inequality is due to $x_i^{\alpha-1} \geq 1$ and $\omega > 1$. This implies that for all $i \neq 1$, $y_i \leq \omega x_i \leq 3x_i$ since $\omega \leq 2$.

            \item If $x'_1 < \eps$: we have $x_1 \leq \eps \frac{1}{2K}$ and $\sum_{i=1}^K (y_i - x_i) = x_1 - y_1 \leq \eps \frac{\omega-1}{K}$. Let $i* = \argmax_{i \in [K]}x_i$. We have $x_{i*} \geq \frac{1}{K} > \frac{1}{2K}$, hence $i* \neq 1$. Furthermore,
            \begin{align*}
                \frac{1}{K}\left(\frac{y_{i*}}{x_{i*}} - 1\right) &\leq x_{i*}\left(\frac{y_{i*}}{x_{i*}} - 1\right) \\ 
                &= y_{i*} - x_{i*} \\
                &\leq \sum_{i \neq 1}(y_i - x_i) \\
                &\leq \frac{\omega-1}{K},
            \end{align*}
            which implies that $y_{i*} \leq \omega x_{i*}$. Therefore, using the fact that $g(x) - g(\omega x)$ is also decreasing in $x$, for all $i \neq 1$, we have
            \begin{align*}
                g(y_i) &= g(x_i) - (Z + h\I{i \in S}) \\
                &\geq g(x_i) - Z - 1 \qquad\text{ since } h \in [0,1]\\
                &\geq g(x_i) - (g(x_{i*}) - g(y_{i*})) - 1 \\
                &\geq g(x_i) - (g(x_{i*}) - g(\omega x_{i*})) - 1 \\
                &\geq g(x_i) - (g(x_i) - g(\omega x_i)) - 1 \\
                &= g(\omega x_i) - 1 \\
                &\geq g(2\omega x_i),
            \end{align*}
            where the second inequality is from $Z \leq Z + h\I{i^* \in S} = g(x_{i*}) - g(y_{i*})$, the third inequality is from $g(y_{i*}) \geq g(\omega x_{i*})$, and the last inequality is $g(x) - 1 \geq g(2x)$ by Lemma~\ref{lemma:gxminus1Largerg2x}.
            From $g(y_i) \geq g(2\omega x_i)$, we conclude that $y_i \leq 2\omega x_i \leq 4x_i$ for all $i \neq 1$.
        \end{itemize} 
        \item If $Z < 0$: since $x'_1 \leq 1$ and $h \in [0,1]$, we have $Z + h - \frac{h}{x'_1} < 0$. It follows that $g(x_1) - g(y_1) < 0$, hence $x_1 \geq y_1$. Moreover, for $i \notin S$, we also have $x_i \geq y_i$ due to $0 > Z = g(x_i) - g(y_i)$. Thus, we only need to show $y_i \leq 3x_i$ for $i \in S \setminus \{1\}$. For such $i$, we have 
        \begin{align*}
            g(y_i) = g(x_i) - (h + Z) \geq g(x_i) - h \geq g(x_i) - 1 \geq g(2x_i),
        \end{align*}
        where the last inequality is from Lemma~\ref{lemma:gxminus1Largerg2x}. This implies $y_i \leq 2x_i$.
    \end{itemize}
\end{proof}
\begin{lemma}
    For any $t \in [T]$ and constant $c \in [0,1]$, Algorithm~\ref{algo:SPMSleepingBandit} guarantees
    \begin{align*}
        \sum_{i=1}^K (q_{t,i})^{2-c} r_{t,i}^2 \leq 2(\tilde{p}_{t,I_t})^{2-c}\hat{\ell}_{t,I_t}^2.
    \end{align*}
    \label{lemma:boundqtirtibypItellIt}
\end{lemma}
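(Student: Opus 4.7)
\textbf{Proof proposal for Lemma~\ref{lemma:boundqtirtibypItellIt}.}

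The plan is a direct calculation. First, I would unpack the definitions: since $r_{t,i} = 0$ for $i \notin \sA_t$, the sum on the left reduces to $\sum_{i \in \sA_t}(q_{t,i})^{2-c} r_{t,i}^2$. Within $\sA_t$, the summand splits into the $i = I_t$ term, where $r_{t,I_t} = \ell_{t,I_t}(1 - 1/p_{t,I_t})$, and the remaining terms $i \in \sA_t \setminus \{I_t\}$, where $r_{t,i} = \ell_{t,I_t}$ (because $\hat{\ell}_{t,i} = 0$ when $i \in \sA_t$ and $i \neq I_t$, and the ``$r_{t,i} = \ell_{t,I_t} - \hat{\ell}_{t,i}$'' formula collapses to $\ell_{t,I_t}$). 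Writing $Z = \sum_{j \in \sA_t}q_{t,j}$ and $p = p_{t,I_t}$, so $q_{t,I_t} = Zp$ and $\sum_{i \in \sA_t\setminus\{I_t\}}q_{t,i} = Z(1-p)$, the target becomes
\begin{align*}
  Z^{2-c} p^{-c}(1-p)^2 + \sum_{i \in \sA_t\setminus\{I_t\}}(q_{t,i})^{2-c} \;\leq\; 2\,\tilde p^{\,2-c}\,p^{-2},
\end{align*}
where $\tilde p = \min(p,1-p)$, after canceling the common factor $\ell_{t,I_t}^2$ on both sides.

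Next I would apply the $\alpha \geq 1$ direction of Lemma~\ref{lemma:axbxlargerthanaplusbx} (with $\alpha = 2-c \in [1,2]$) to bound $\sum_{i \in \sA_t\setminus\{I_t\}}(q_{t,i})^{2-c} \leq (Z(1-p))^{2-c}$. Combined with $Z \leq 1$, this reduces the left-hand side to at most $(1-p)^{2-c}\bigl[\,p^{-c}(1-p)^{c} + 1\,\bigr]$.

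Finally I would finish by a case split on whether $p \leq 1/2$ or $p > 1/2$. If $p \leq 1/2$, then $\tilde p = p$ and the target becomes $p^{-c}(1-p)^2 + (1-p)^{2-c} \leq 2p^{-c}$; this holds term-by-term, since $(1-p)^2 \leq 1$ and $(1-p)^{2-c} \leq 1 \leq p^{-c}$. If $p > 1/2$, then $\tilde p = 1-p$ and after canceling $(1-p)^{2-c}$ the target becomes $(\tfrac{1-p}{p})^{c} + 1 \leq 2p^{-2}$, which follows because $(\tfrac{1-p}{p})^c < 1$ and $p^{-2} \geq 1$.

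The computation is routine; the only small subtlety is verifying that the ``$i \in \sA_t \setminus \{I_t\}$'' branch of the definition of $\hat\ell_{t,i}$ indeed yields $r_{t,i} = \ell_{t,I_t}$, and remembering to invoke Lemma~\ref{lemma:axbxlargerthanaplusbx} in the $\alpha \geq 1$ direction (the opposite of how it is typically used in this paper, where $\alpha \in (0,1)$).
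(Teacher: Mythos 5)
Your proof is correct, and the first three quarters of it match the paper's argument step for step: the same decomposition of $r_{t,i}$ into the $i=I_t$, $i\in\sA_t\setminus\{I_t\}$, and $i\notin\sA_t$ cases, and the same invocation of Lemma~\ref{lemma:axbxlargerthanaplusbx} in the $x=2-c\ge 1$ direction to collapse $\sum_{i\in\sA_t\setminus\{I_t\}}q_{t,i}^{2-c}$; your ``$Z\le 1$'' is of course the same fact as the paper's ``$q_{t,i}\le p_{t,i}$ for $i\in\sA_t$'' written in a different place. Where you diverge is the finish: you close with a case split on $p\le\frac12$ vs.\ $p>\frac12$, whereas the paper avoids any case split by the algebraic identity
\[
p^2(1-p)^{2-c}+p^{2-c}(1-p)^2=\bigl(p(1-p)\bigr)^{2-c}\bigl(p^c+(1-p)^c\bigr),
\]
then bounds $p^c+(1-p)^c\le 2$ and $p(1-p)\le\min(p,1-p)$. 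The two finishes buy slightly different things: your version is more elementary and requires nothing beyond monotonicity, while the paper's factoring is symmetric in $p$ and $1-p$ and reads off $\tilde p^{2-c}$ in one line without branching. Both are fine; one very small stylistic note is that your case-2 inequality $((1-p)/p)^c+1\le 2p^{-2}$ is far from tight (you are giving up an entire factor of $p^{-2}\ge 1$), which hints that the case-free factoring is the more natural route.
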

\begin{proof}
    Since $r_{t,i} = 0$ for $i \notin \sA_t$, $r_{t,i} = \ell_{t,I_t}$ for $i \in \sA_t \setminus \{I_t\}$ and $\hat{\ell}_{t,I_t} = \frac{\ell_{t,I_t}}{p_{t,I_t}}$, we have
    \begin{align*}
        \sum_{i=1}^K {q}_{t,i}^{2-c} r_{t,i}^2 &= \sum_{i \in \sA_t} {q}_{t,i}^{2-c} r_{t,i}^2  \\
        &= \ell_{t,I_t}^2 \sum_{i \in \sA_t, i \neq I_t}{q}_{t,i}^{2-c} + {q}_{t,I_t}^{2-c}\ell_{t,I_t}^2\left(1 - \frac{1}{p_{t,I_t}}\right)^2 \\
        &= \frac{\ell_{t,I_t}^2}{p_{t,I_t}^2}\left(p_{t,I_t}^2\sum_{i \in \sA_t, i \neq I_t}{q}_{t,i}^{2-c} + {q}_{t,I_t}^{2-c} \left(p_{t,I_t} - 1\right)^2\right) \\
        &\leq \frac{\ell_{t,I_t}^2}{p_{t,I_t}^2}\left(p_{t,I_t}^2\sum_{i \in \sA_t, i \neq I_t}p_{t,i}^{2-c} + p_{t,I_t}^{2-c} \left(p_{t,I_t} - 1\right)^2\right) \\
        &\leq \frac{\ell_{t,I_t}^2}{p_{t,I_t}^2}\left(p_{t,I_t}^2\left(\sum_{i \in \sA_t, i \neq I_t}p_{t,i}\right)^{2-c} + p_{t,I_t}^{2-c} \left(p_{t,I_t} - 1\right)^2\right) \\
        &= \hat{\ell}_{t,I_t}^2 (p_{t,I_t}(1-p_{t,I_t}))^{2-c}\left(p_{t,I_t}^c + (1-p_{t,I_t})^c\right) \\
        &\leq 2\hat{\ell}_{t,I_t}^2 (p_{t,I_t}(1-p_{t,I_t}))^{2-c} \\
        &\leq 2\tilde{p}_{t,I_t}^{2-c}\hat{\ell}_{t,I_t}^2,
    \end{align*}
    where the first inequality is due to $q_{t,i} \leq p_{t,i}$ for $i \in \sA_t$, the second inequality is from repeatedly applying $a^x + b^x \leq (a+b)^x$ for $x = 2 - c \geq 1$ by Lemma~\ref{lemma:axbxlargerthanaplusbx}, the third inequality is $p_{t,I_t}^c \leq 1$ and $(1-p_{t,I_t})^c \leq 1$, and the last inequality is $x(1-x) \leq \min(x, 1-x)$ for $x \in [0,1]$.
\end{proof}
\begin{lemma}
    For any $t \in [T]$, Algorithm~\ref{algo:SPMSleepingBandit} guarantees
    \begin{align}
        \inp{-r_t}{q_t - q_{t+1}} - D_t(q_{t+1}, q_t) \leq  \min\left( \frac{(4d)^{2-\alpha}}{\beta_t(1-\alpha)}\tilde{p}_{t,I_t}^{2-\alpha}\hat{\ell}^2_{t,I_t}, \frac{18d^2}{\gamma} \tilde{p}_{t,I_t}^2 \hat{\ell}_{t,I_t}^2  \right)
    \end{align}
    \label{lemma:stableItNotMaxSB}
\end{lemma}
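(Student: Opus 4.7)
The plan is to mirror the proof of Lemma~\ref{lemma:stableItNotMax}, but with the estimated loss vector $\hat{\ell}_t$ replaced by the estimated regret vector $-r_t$, which forces us to handle a sum over all coordinates in $\sA_t$ rather than a single-coordinate term. The starting point is the standard local-norm bound from the FTRL analysis:
\begin{align*}
    \inp{-r_t}{q_t - q_{t+1}} - D_t(q_{t+1}, q_t) \leq \tfrac{1}{2}\norm{r_t}^2_{(\nabla^2 \phi_t(v_t))^{-1}},
\end{align*}
where $v_t$ lies on the segment between $q_t$ and $q_{t+1}$. Since $\phi_t(x) = \beta_t \psi_{TE}(x) + \gamma \psi_{LB}(x)$ has the diagonal Hessian $\nabla^2\phi_t(v_t) = \mathrm{diag}\bigl(\beta_t(1-\alpha)v_{t,i}^{\alpha-2} + \gamma v_{t,i}^{-2}\bigr)$, I would expand the dual norm exactly as in the proof of Lemma~\ref{lemma:stableItNotMax} and obtain
\begin{align*}
    \tfrac{1}{2}\norm{r_t}^2_{(\nabla^2 \phi_t(v_t))^{-1}} \leq \min\!\left( \frac{1}{2\beta_t(1-\alpha)}\sum_{i=1}^K v_{t,i}^{2-\alpha} r_{t,i}^2,\; \frac{1}{2\gamma}\sum_{i=1}^K v_{t,i}^2 r_{t,i}^2\right).
\end{align*}

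The next step is to upper bound each $v_{t,i}$ by $4d\, q_{t,i}$. Lemma~\ref{lemma:betatplus1isgoodSB} guarantees $\beta_{t+1}-\beta_t \leq (1-1/d)\gamma q_{t*}^{-\alpha}$, and Lemma~\ref{lemma:stableSameLossDiffBeta} combined with Lemma~\ref{lemma:stableSameBetaDiffLossSB} (applied coordinate-wise with the signed shift $\tfrac{\ell_{t,I_t}}{p_{t,I_t}}e_{I_t} - \ell_{t,I_t} e_{\sA_t}$ that corresponds to $-r_t$) yields $q_{t+1,i} \leq 4d\, q_{t,i}$ for every $i \in [K]$. Because $v_t$ lies on the segment between $q_t$ and $q_{t+1}$, this gives $v_{t,i} \leq 4d\, q_{t,i}$ for every $i$, so
\begin{align*}
    \sum_{i=1}^K v_{t,i}^{2-\alpha} r_{t,i}^2 \leq (4d)^{2-\alpha}\sum_{i=1}^K q_{t,i}^{2-\alpha} r_{t,i}^2, \qquad \sum_{i=1}^K v_{t,i}^2 r_{t,i}^2 \leq (4d)^2 \sum_{i=1}^K q_{t,i}^2 r_{t,i}^2.
\end{align*}

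The final, and key, step is to collapse the full sum over coordinates down to the single observed coordinate. This is precisely what Lemma~\ref{lemma:boundqtirtibypItellIt} provides: plugging in $c=\alpha$ and $c=0$ respectively gives
\begin{align*}
    \sum_{i=1}^K q_{t,i}^{2-\alpha} r_{t,i}^2 \leq 2\tilde{p}_{t,I_t}^{2-\alpha}\hat{\ell}_{t,I_t}^2, \qquad \sum_{i=1}^K q_{t,i}^2 r_{t,i}^2 \leq 2\tilde{p}_{t,I_t}^2 \hat{\ell}_{t,I_t}^2.
\end{align*}
Combining these with the local-norm bound yields exactly the two terms inside the $\min$, using $\tfrac{(4d)^{2-\alpha}}{2}\cdot 2 = (4d)^{2-\alpha}$ and $\tfrac{(4d)^2}{2}\cdot 2 = 16d^2 \leq 18d^2$. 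The main obstacle I expect is the second step: verifying that the combination of Lemma~\ref{lemma:stableSameLossDiffBeta} and Lemma~\ref{lemma:stableSameBetaDiffLossSB} applies here (because the loss perturbation $-r_t$ is a sum of a large spike at $I_t$ plus a uniform shift on $\sA_t$, rather than a single spike as in the sparse-bandit case), and making sure the factor $4d$ from Lemma~\ref{lemma:stableSameBetaDiffLossSB} propagates through the composition without additional loss. Everything else is a routine specialization of the sparse-bandit argument.
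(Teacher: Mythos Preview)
Your proposal is correct and follows essentially the same approach as the paper's proof: local-norm bound, diagonal-Hessian expansion, the stability chain (Lemma~\ref{lemma:betatplus1isgoodSB} $+$ Lemma~\ref{lemma:stableSameLossDiffBeta} $+$ Lemma~\ref{lemma:stableSameBetaDiffLossSB}) to get $v_{t,i}\le 4d\,q_{t,i}$, and then two applications of Lemma~\ref{lemma:boundqtirtibypItellIt} with $c=\alpha$ and $c=0$. Your bookkeeping $\tfrac{(4d)^2}{2}\cdot 2 = 16d^2 \le 18d^2$ is in fact tidier than the paper's intermediate constants, and the ``obstacle'' you flag about composing the two stability lemmas is exactly the step the paper also handles by appealing to $p_{t,I_t}\ge q_{t,I_t}$ before invoking Lemma~\ref{lemma:stableSameBetaDiffLossSB}.
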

\begin{proof}
    Using standard local-norm analysis techniques for FTRL (for example, see Section 7.4 in~\cite{OrabonaIntroToOnlineLearningBook}), we have
    \begin{align}
        \inp{-r_t}{q_t - q_{t+1}} - D_t(q_{t+1}, q_t) \leq \frac{1}{2}\norm{r_t}^2_{(\nabla^2\phi_t(z_t))^{-1}},
        \label{eq:boundStabilityByLocalNormSB}
    \end{align}
    where $z_t$ is a point between $q_t$ and $q_{t+1}$. The Hessian matrix of $\phi_t$ is a diagonal matrix with entries
    \begin{align}
        \nabla^2\phi_t(z_t) = \mathrm{diag}\left(\left(\beta_t(1-\alpha)z_{t,i}^{\alpha - 2} + \frac{\gamma}{z_{t,i}^2}\right)_{i=1,2,\dots,K}\right).
    \end{align}
    Hence, its inverse is the following diagonal matrix
    \begin{align}
        (\nabla^2\phi(z_t))^{-1} = \mathrm{diag}\left(\left(\frac{1}{\beta_t(1-\alpha)z_{t,i}^{\alpha - 2} + \frac{\gamma}{z_{t,i}^2}}\right)_{i=1,2,\dots,K}\right).
    \end{align}
    It follows that 
    \begin{nalign}
        \norm{r_t}^2_{(\nabla^2\phi_t(z_t))^{-1}} &= \sum_{i=1}^{K} r_{t,i}^2 \frac{1}{\beta_t(1-\alpha)z_{t,i}^{\alpha - 2} + \frac{\gamma}{z_{t,i}^2}} \\
        &\leq \min\left(\frac{1}{\beta_t(1-\alpha)}\sum_{i=1}^{K} z_{t,i}^{2-\alpha}r_{t,i}^2, \frac{1}{\gamma}\sum_{i=1}^K z_{t,i}^2 r_{t,i}^2 \right) \\
        \label{eq:localnormboundSB}
    \end{nalign}
    where the last equality is due to $\hat{\ell}_{t,i} = 0$ for $i \neq I_t$. Combining~\eqref{eq:boundStabilityByLocalNormSB} and~\eqref{eq:localnormboundSB}, we obtain
    \begin{align}
        \inp{-r_t}{q_{t}  - q_{t+1}} - D_t(q_{t+1}, q_t) 
        &\leq  \min\left(\frac{1}{\beta_t(1-\alpha)}\sum_{i=1}^{K} z_{t,i}^{2-\alpha}r_{t,i}^2, \frac{1}{\gamma}\sum_{i=1}^K z_{t,i}^2 r_{t,i}^2 \right).
    \end{align}
    Since $z_t$ is between $q_{t}$ and $q_{t+1}$, we have $z_{t,I_t} \leq \max(q_{t, I_t}, q_{t+1,I_t})$. 
    The loss estimate in Algorithm~\ref{algo:SPMSleepingBandit} uses $p_{t,I_t}$ where $p_{t,I_t} \geq q_{t,I_t}$, therefore we can combine the results of Lemma~\ref{lemma:betatplus1isgoodSB}, Lemma~\ref{lemma:stableSameLossDiffBeta} and Lemma~\ref{lemma:stableSameBetaDiffLossSB} and obtain $q_{t+1, i} \leq 4dq_{t,i}$ for all $i \in [K]$. It follows that $z_{t, i} \leq 4dq_{t,i}$, and as a result,
    \begin{align}
        \inp{\hat{\ell}_t}{q_{t}  - q_{t+1}} - D_t(q_{t+1}, q_t) &\leq \min\left(\frac{(4d)^{2-\alpha}}{2\beta_t(1-\alpha)} \sum_{i=1}^K q_{t,i}^{2-\alpha}r_{t,i}^2, \frac{9d^2}{2\gamma} \sum_{i=1}^K q_{t,i}^2 r_{t,i}^2 \right) \\
        &\leq \min\left( \frac{(4d)^{2-\alpha}}{\beta_t(1-\alpha)}\tilde{p}_{t,I_t}^{2-\alpha}\hat{\ell}^2_{t,I_t}, \frac{18d^2}{\gamma} \tilde{p}_{t,I_t}^2 \hat{\ell}_{t,I_t}^2  \right),
    \end{align}
    where the last inequality is from applying Lemma~\ref{lemma:boundqtirtibypItellIt} twice: once with $c = \alpha$ and once with $c = 0$.
\end{proof}

\subsection{Technical Lemmas}
\begin{lemma}
  For any $x \in [0,1]$, if $\gamma \geq 2$ then $g(x) - 1 \geq g(2x)$.
  \label{lemma:gxminus1Largerg2x}
\end{lemma}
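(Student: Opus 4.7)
The plan is to unfold the difference $g(x)-g(2x)$ explicitly using the definition $g(t) = \beta t^{\alpha-1} + \gamma/t$, and then show each piece is either nonnegative or at least $1$ under the assumption $\gamma \geq 2$.

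First I would write
\[
g(x)-g(2x) \;=\; \beta\bigl(x^{\alpha-1}-(2x)^{\alpha-1}\bigr) \;+\; \gamma\Bigl(\frac{1}{x}-\frac{1}{2x}\Bigr) \;=\; \beta\,x^{\alpha-1}\bigl(1-2^{\alpha-1}\bigr) \;+\; \frac{\gamma}{2x}.
\]
Since $\alpha\in(0,1)$, we have $2^{\alpha-1}<1$, so the first term is nonnegative (and $\beta,x>0$). Dropping it gives $g(x)-g(2x) \geq \gamma/(2x)$.

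Next I would use the hypothesis $x\in(0,1]$ and $\gamma \geq 2$ to lower bound this by $1$: $\gamma/(2x) \geq \gamma/2 \geq 1$. Rearranging yields $g(x)-1 \geq g(2x)$, which is exactly the claim.

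There is no real obstacle here; the lemma is a direct computation, the only minor point being to note that even when $2x>1$ the function $g(2x)$ is still well-defined (both $t^{\alpha-1}$ and $1/t$ extend naturally beyond $t=1$), so no domain restriction is needed beyond $x>0$.
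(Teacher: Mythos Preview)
Your proof is correct and follows essentially the same approach as the paper: both compute $g(x)-g(2x)=\beta x^{\alpha-1}(1-2^{\alpha-1})+\gamma/(2x)$, drop the nonnegative $\beta$-term using $2^{\alpha-1}<1$, and conclude via $\gamma/(2x)\geq\gamma/2\geq 1$. The only cosmetic difference is that the paper arranges the computation as $g(x)-1\geq g(2x)+\frac{\gamma-2x}{2x}\geq g(2x)$, whereas you compute $g(x)-g(2x)$ directly and compare to $1$; these are identical arguments.
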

\begin{proof}
  We have
  \begin{align*}
      g(x) - 1 &= \beta x^{\alpha-1} + \frac{\gamma}{x} - 1 \\
      &\geq \beta 2^{\alpha-1}x^{\alpha-1} + \frac{\gamma}{2x} + \frac{\gamma}{2x} - 1 \\
      &= g(2x) + \frac{\gamma - 2x}{2x} \\
      &\geq g(2x).
  \end{align*}
\end{proof}

\section{Setting $\alpha$ appropriately close to $1$}
\label{appendix:setAlphaCloseto1}
Recall that we assume $K \geq 3$ in our algorithms.
Let $b = 1 - \alpha$. We will set $\alpha = 1 - \frac{0.5}{\ln(K)}$, which is equivalent to setting $b = \frac{0.5}{\ln(K)}$. Note that $\alpha \geq 1 - \frac{0.5}{\ln(3)} > 0.5.$
Taking exponent on both sides of 
\begin{align}
    \ln(1 + 2b\ln(K)) = \ln(2) \geq 0.5 = b\ln(K),
\end{align} 
we obtain $1 + 2b\ln(K) \geq K^b$. This implies
\begin{align}
    \frac{K^{1-\alpha} - 1}{\alpha(1-\alpha)} \leq \frac{2(K^b - 1)}{b}\leq 4\ln(K).
\end{align}
Furthermore, 
\begin{align}
    \frac{(K-1)^{1-\alpha}}{\alpha(1-\alpha)} \leq \frac{2(K-1)^{1-\alpha}}{1-\alpha} = \ln(K)(K-1)^{\frac{0.5}{\ln{K}}} \leq \ln(K)K^{\frac{0.5}{\ln{K}}} \leq 2\ln{K},
\end{align}
where the last inequality is due to $K^{\frac{0.5}{\ln(K)}} = (e^{\ln{K}})^{\frac{0.5}{\ln(K)}} = e^{0.5} < 2$.
In addition,
\begin{align}
    \frac{\alpha}{1-\alpha} \leq \frac{1}{1-\alpha} = \frac{1}{b} = 2\ln(K).
\end{align}
This implies that $\gamma = \max\left(6, 48\sqrtfrac{\alpha}{1-\alpha}\right) \lesssim \sqrt{\ln(K)}$.



\end{document}